\title{Greedy Pruning with Group Lasso Provably Generalizes for Matrix Sensing}
\pgfplotsset{compat=1.18}
\newtheorem{theorem}{Theorem}
\newtheorem{lemma}[theorem]{Lemma}
\newtheorem{remark}[theorem]{Remark}
\newtheorem{definition}[theorem]{Definition}
\newtheorem*{insight*}{\textbf{Observation}}
\newtheorem*{proposition*}{\textbf{Proposition}}
\newtheorem*{lemmai*}{\textbf{Lemma (informal)}}
\newtheorem{theorem*}{\indent \em Theorem}[section]
\newcommand{\tr}{\textsf{Tr}}
\newcommand{\Lsm}{\ell_2^\beta}
\newtheorem{assumption}{Assumption}
\newcommand{\cL}{\mathcal{L}}
\newcommand{\cR}{\mathcal{R}}
\newcommand{\vect}{\textsf{vec}}
\newcommand{\poly}[1]{\text{poly} (#1)}
\newcommand{\prob}[1]{\mathbb{P} \left( #1 \right)}
\crefname{assumption}{Assumption}{Assumptions}
\crefname{definition}{Definition}{Defs.}
\crefname{lemma}{Lemma}{Lemmas}
\crefname{corollary}{Corollary}{Corollaries}
\author{
Nived Rajaraman
\thanks{Dept. of Electrical Engineering and Computer Sciences, UC Berkeley}\\
University of California, Berkeley\\
\texttt{nived.rajaraman@eecs.berkeley.edu}  
\And Devvrit \thanks{Dept. of Computer Science, UT Austin}\\
UT Austin\\
\texttt{devvrit.03@gmail.com} \And
Aryan Mokhtari\thanks{Dept. of Electrical and Computer Engineering, UT Austin}\\
UT Austin\\ \texttt{mokhtari@austin.utexas.edu}
\And Kannan Ramchandran $^*$\\
University of California, Berkeley\\ \texttt{kannanr@eecs.berkeley.edu}
\vspace{3mm}
}
\begin{document}

\maketitle

\begin{abstract}%
Pruning schemes have been widely used in practice to reduce the complexity of trained models with a massive number of parameters. In fact, several practical studies have shown that if a pruned model is fine-tuned with some gradient-based updates it generalizes well to new samples. Although the above pipeline, which we refer to as pruning + fine-tuning, has been extremely successful in lowering the complexity of trained models, there is very little known about the theory behind this success. In this paper, we address this issue by investigating the pruning + fine-tuning framework on the overparameterized matrix sensing problem with the ground truth  $U_\star \in \mathbb{R}^{d \times r}$ and the overparameterized model $U \in \mathbb{R}^{d \times k}$ with $k \gg r$. We study the approximate local minima of the mean square error, augmented with a smooth version of a group Lasso regularizer, $\sum_{i=1}^k \| U e_i \|_2$. In particular, we provably show that pruning all the columns below a certain explicit $\ell_2$-norm threshold results in a solution $U_{\text{prune}}$ which has the minimum number of columns $r$, yet close to the ground truth in training loss. Moreover, in the subsequent fine-tuning phase, gradient descent initialized at $U_{\text{prune}}$ converges at a linear rate to its limit. While our analysis provides insights into the role of regularization in pruning, we also show that running gradient descent in the absence of regularization results in models which {are not suitable for greedy pruning}, i.e., many columns could have their $\ell_2$ norm comparable to that of the maximum. To the best of our knowledge, our results provide the first rigorous insights on why greedy pruning + fine-tuning leads to smaller models which also generalize well.
\end{abstract}

\section{Introduction}

Training overparameterized models with a massive number of parameters has become the norm in almost all machine learning applications. While these massive models are successful in achieving low training error and in some cases good generalization performance, they are hard to store or communicate. Moreover, inference with such large models is computationally prohibitive. To address these issues, a large effort has gone into compressing these overparameterized models via different approaches, such as quantization schemes \citep{krishnamoorthi_18,guo_18}, unstructured \citep{han2015} and structured \citep{he_2018,li_compression,kusupati_20} pruning mechanisms, and distillation techniques using student-teacher models \citep{https://doi.org/10.48550/arxiv.1503.02531,https://doi.org/10.48550/arxiv.1802.05668}. Among these approaches, \textit{greedy pruning}, in which we greedily eliminate the parameters of the trained model based on some measure (e.g., the norms of the weight vectors associated with individual neurons) has received widespread attention \citep{han2015,deep_compression,li_16,lebedev_15,wen_16,klivans}. This is mostly due to the fact that several practical studies have illustrated that training an overparameterized model followed by greedy pruning and fine-tuning leads to better generalization performance, compared to an overparameterized model trained without pruning \citep{prune_generalization}. Furthermore, a phenomenon that has been observed by several practical studies, is that different forms of regularization during training, such as $\ell_0$ or $\ell_1$ regularization   \citep{l0_reg,liu_slimming,ye_pruning} or $\ell_2$ regularization including group Lasso \citep{SCARDAPANE201781} lead to models that are better suited for pruning, and leading to better generalization post fine-tuning. 

While the greedy pruning framework has shown impressive results, there is little to no theory backing why this pipeline works well in practice, nor understanding of the role of regularization in helping generate models which are suitable for greedy pruning. In this work, we address the following questions:

\begin{quote}
\vspace{-2mm}
  {\textit{Does the greedy pruning + fine-tuning pipeline provably lead to a simple model with good generalization guarantees? What is the role of regularization in pruning?}}
\end{quote}
\vspace{-2mm}
In this paper, we use the symmetric matrix sensing problem \citep{li_17} as a test-ground for the analysis of greedy pruning framework, a model very closely related to shallow neural networks with quadratic activation functions \citep{quadratic_networks,li_17}. In this setting, the underlying problem for the  population loss (infinite samples) is defined as $\| UU^T - U_\star U_\star^T \|_F^2$, where $U_\star \in \mathbb{R}^{d \times r}$ is an unknown ground-truth rank-$r$ matrix, with $r$ also being unknown, and   $U \in \mathbb{R}^{d \times k}$ is the overparameterized learning model  with $k \gg r$.
As we discuss in the Appendix~\ref{sec:quad}, the columns of $U$ can be thought of as the weight vectors associated with individual neurons in a $2$-layer shallow neural network with quadratic activation functions, a connection first observed in \citep{nn-quadratic-ms}. Thus, the data generating model has $r$ neurons, while the learner trains an overparameterized model with $k$ neurons.

While the statistical and computational complexity of the overparameterized matrix sensing problem has been studied extensively, we use it as a model for understanding the efficacy of greedy pruning. In particular, we aim to answer the following questions:  Does there exist a simple pruning criteria for which we can \textit{provably} show that the pruned model generalizes well after fine-tuning \textit{while having the minimal necessary number of parameters}? What is the role of regularization during training in promoting models which are compatible with greedy pruning?
Finally, what generalization guarantees can we establish for the pruned model post fine-tuning?

\textbf{Contributions.} Our main contribution is to show that the discussed pruning pipeline not only recovers the correct ground-truth $U_\star U_\star^T$ approximately, but also automatically adapts to the correct number of columns $r$. In particular, we show that training an overparameterized model on the empirical mean squared error with an added group Lasso based regularizer to promote column sparsity, followed by a simple norm-based pruning strategy results in a model $U_{\text{prune}}$ having exactly the minimum number of columns, $r$. At the same time, we show that $\| U_{\text{prune}} U_{\text{prune}}^T - U_\star U_\star^T \|_F^2$ is small, but non-zero. Hence, the pruned model can subsequently be fine-tuned using a small number of gradient steps, and in this regime, $\| U_t U_t^T - U_\star U_\star^T \|_F^2$ shows linear convergence to its limit. Moreover, the pruned model can be shown to admit finite sample generalization bounds which are also statistically optimal for a range of parameters. In particular, we show that to obtain a model $ U_{\text{out}}$ that has exactly $r$ columns and its population error is at most  $\| U_{\text{out}} U_{\text{out}}^T - U_\star U_\star^T \|_F \leq \varepsilon$, our framework requires $O(dk^2r^5 +\frac{rd}{\varepsilon^2})$ samples, which is statistically optimal for sufficiently small $\varepsilon$. We should also add that our framework does not require any computationally prohibitive pre- or post-processing (such as SVD decomposition) for achieving this result.

While there are several works \citep{wainwright2015,li_17,constantine21} establishing that gradient descent in the ``exactly-parameterized'' setting requires $O(rd/\varepsilon^2)$ samples to achieve a generalization error of $\varepsilon$, and converges linearly to this limit, the picture is different in the overparameterized setting. In  \cite{constantine21}, the authors showed that in the overparametrized setting, vanilla gradient descent requires $O(kd/\varepsilon^2)$ samples to achieve a generalization error of $\| U_{\text{gd}} U_{\text{gd}}^T - U_\star U_\star^T \|_F \le \varepsilon$, degrading with the overparameterization of the model. Moreover the resulting solution does not have the correct column sparsity. In order to obtain a model which can be stored concisely, $U_{\text{gd}}$ has to be post-processed by computing its SVD, which is computationally expensive in the high dimensional regime.

As our second continuation, we show that use of explicit regularization to promote column sparsity while training is important to learn models suitable for greedy pruning. Specifically, we show that while implicit regularization \citep{li_17,ye_2021} suffices to learn models with the correct rank, these approaches learn solutions with a large number of columns having $\ell_2$-norms comparable to that of the maximum, even if $r=1$. Hence, it is unclear how to sparsify such models based on the norms of their columns.

\section{Setup and Algorithmic Framework}

Given $n$ observation matrices $\{A_i\}_{i=1}^n$, in the matrix sensing framework, the learner is provided measurements $y_i = \langle A_i, U_\star U_\star^T \rangle + \varepsilon_i$ where $\langle \cdot,\cdot \rangle$ indicates the trace inner product and $\varepsilon_i$ is measurement noise assumed to be distributed i.i.d. $\sim \mathcal{N} (0,\sigma^2)$\footnote{All proofs in the paper go through as long as the noise is i.i.d. sub-Gaussian with variance proxy $\sigma^2$. We study the Gaussian case for simplicity of exposition.}. Here $U_\star \in \mathbb{R}^{d \times r}$ is the unknown parameter, and the rank $r \le d$ is unknown. The goal of the matrix sensing problem is to learn a candidate matrix $X$ such that $X \approx U_\star U_\star^T$. For computational reasons, it is common to factorize $X$ as $UU^T$ for $U \in \mathbb{R}^{d \times k}$. In this paper, we study the factored model in the overparameterized setting, where $k \gg r$. The empirical mean squared error is,
\begin{equation} \label{eq:Lemp}
    \cL_{\text{emp}} (U) = \frac{1}{n} \sum_{i=1}^n \left(\langle A_i, UU^T \rangle - y_i \right)^2.
\end{equation}
For the case that $A_i$'s are sampled entry-wise i.i.d.  $\mathcal{N}(0,1/d)$ and as $n \rightarrow \infty$, up to additive constants which we ignore, the population mean square error can be written down as,
\begin{equation} \label{eq:Lpop}
    \cL_{\text{pop}} (U) = \|UU^T - U_\star U_\star^T\|_F^2.
\end{equation}
There is an extensive literature on how to efficiently learn the right product $U_\star U_\star^T$ in both finite sample and population settings \citep{seewong,park2017non}. In particular, there are several works on the efficiency of  gradient-based methods with or without regularization for solving this specific problem \citep{li_17,constantine21}. While these approaches guarantee learning the correct product $UU^T \approx U_\star U_\star^T$, in the overparameterized setting the obtained solutions are not column sparse and storing these models requires $\widetilde{\Theta}(kd)$ bits of memory (ignoring precision). As a result, in order to obtain a compressed solution $U_{\text{out}}\in \mathbb{R}^{d \times r}$ with the correct number of columns, one has to post-process $U$ and do a singular value decomposition (SVD), an operation which is costly and impractical in high-dimensional settings.

The goal of this paper is to overcome this issue and come up with an efficient approach to recover a solution $U_{\text{out}}$ which generalizes well, in that $\cL_{\text{pop}} (U)$ is small, while at the same time having only a few non-zero columns, i.e. is sparse. Specifically, we show that via some proper regularization, it is possible to obtain a model that approximately learns the right product $U_\star U_\star^T$, while having only a few significant columns. As a result, many of its columns can be eliminated by a simple $\ell_2$ norm-based greedy pruning scheme, without significantly impacting the training loss. In fact, post pruning we end up with a model $ U_{\text{prune}} \in \mathbb{R}^{d \times r}$ that has the correct dimensionality and its outer product $ U_{\text{prune}} U_{\text{prune}}^T$ is close to the true product $U_\star U_\star^T$. When the resulting ``exactly parameterized'' model is fine-tuned, the generalization loss $\cL_{\text{pop}} (U)$ can be shown to converge to $0$ at a linear rate.

To formally describe our procedure, we first introduce the regularization scheme that we study in this paper, and then we present the greedy pruning scheme and the fine-tuning procedure.

\paragraph{Regularization.} The use of regularization for matrix sensing (and matrix factorization) to encourage a  low rank solution  \citep{cai_10,meka_09,keshavan_09,ma_09,recht_09,recht_2010,toh_2010}, or to control the norm of the model for stability reasons \citep{ge_16, bhojanapalli_16}, or to improve the landscape of the loss by eliminating spurious local minima \citep{rong} has been well-studied in the literature. While these approaches implicitly or explicitly regularize for the rank of the learned matrix, the solution learned as a result is often not column sparse. Indeed, note that a matrix can be low rank and dense at the same time, if many columns are linear combinations of the others. We propose studying the following regularized matrix sensing problem with a group Lasso based regularizer \citep{grouplasso,SCARDAPANE201781}. In the population case, the loss is defined as
\begin{equation} \label{eq:regloss}
    \cL_{\text{pop}} (U) + \lambda \cR (U),\qquad  \text{where}\quad  \cR (U) = \sum_{i=1}^{k} \| U e_i \|_2.
\end{equation} 
Note that $\lambda>0$ is a properly selected regularization parameter. Imposing $\cR$ as a penalty on the layer weights of a neural network is a special case of a widely used approach commonly known as \textit{Structured Sparsity Learning} (SSL) \citep{wen_16}. The regularizer promotes sparsity across ``groups'' as discussed by \cite{grouplasso}. Here, the groups correspond to the columns of $U$. 

As we prove in \Cref{sec:pop}, the solution obtained by minimizing a smooth version of the regularized loss in \eqref{eq:regloss}, denoted by $U_{\text{out}}$, is approximately column-sparse, i.e., $k-r$ of its columns have small $\ell_2$ norm. As a result, it is suitable for the simple greedy pruning scheme which we also introduce in \Cref{alg:main}. Interestingly, in \Cref{sec:negative}, we will first show a negative result - the model obtained by minimizing the \textit{unregularized} loss, $\cL_{\text{pop}} (U)$ (\cref{eq:Lpop}) using gradient descent updates could fail to learn a solution which is suitable for greedy pruning. This shows the importance of adding some form of regularization during the training phase of the overparameterized model.

\paragraph{Greedy Pruning.} The greedy pruning approach posits training a model (possibly a large neural network) on the empirical loss, followed by pruning the resulting trained network greedily based on some criteria. The resulting model is often fine-tuned via a few gradient descent iterations before outputting. In the literature, various criteria have been proposed for greedy pruning. Magnitude-based approaches prune away the individual weights/neurons based on some measure of their size such as $\ell_1/\ell_2$ norm of the associated vectors \citep{liu_slimming,li_16}.

In this work, we also focus on the idea of greedy pruning and study a mechanism to prune the solution obtained by minimizing the regularized empirical loss. Specifically, once an approximate second-order stationary point of the loss in \Cref{sec:pop}, we only keep its columns whose $\ell_2$ norm are above a threshold (specified in \Cref{alg:main}) and the remaining columns with smaller norm are removed. We further show that post pruning, the obtained model  $U_{\text{prune}}$ continues to have a small empirical loss, i.e.,  small $ \cL_{\text{emp}} (U_{\text{prune}})$, while having exactly $r$ columns.

\paragraph{Post pruning fine-tuning.} As mentioned above, it is common to fine-tune the smaller pruned model with a few gradient updates before the evaluation process \citep{liu2018rethinking}. We show that the pruned model $U_{\text{prune}}$ has the correct rank and is reasonably close to the ground-truth model $U_{\star}$ in terms of its population loss. By running a few gradient updates on the mean square error, it can be ensured that $UU^T$ converges to $U_{\star} U_\star^T$ at a linear rate. \Cref{alg:main} summarizes the framework that we study.

\section{Implicit regularization does not lead to greedy pruning-friendly models}  \label{sec:negative}

In various overparameterized learning problems, it has been shown that first-order methods, starting from a small initialization, implicitly biases the model toward ``simple'' solutions, resulting in models that generalize well \citep{neyshabur_14,https://doi.org/10.48550/arxiv.1710.10345}, a phenomenon known as implicit regularization. In particular, for matrix sensing, \citep{gunasekar2017implicit,li_17,ye_2021} show that in the absence of any regularization, running gradient descent on the population loss \textit{starting from a small initialization} biases $UU^T$ to low rank solutions, and learns the correct outer product $UU^T \approx U_\star U_\star^T$. However, as discussed earlier, low-rank solutions are not necessarily column sparse, nor is it clear how to sparsify them without computing an SVD. It is unclear whether implicit regularization suffices to learn models that are amenable for greedy pruning. 

\begin{figure}[t!]
\begin{subfigure}
\centering
\vspace{-2mm}
    \centering
    \includegraphics[width=0.52\textwidth]{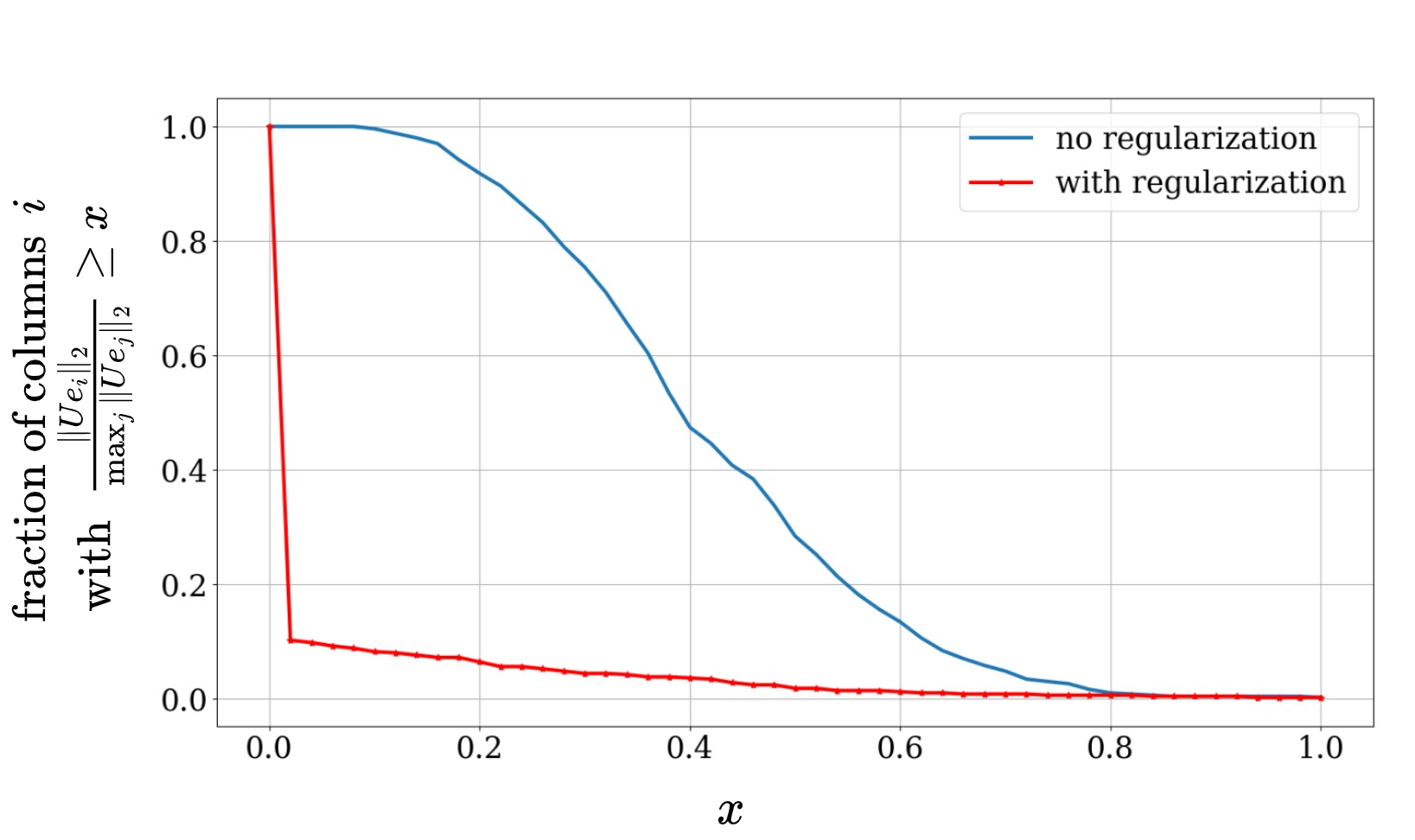}
\end{subfigure}
\begin{subfigure}
    \centering
    \!\!\!\!\!\!\!\!
    \includegraphics[width=0.54\textwidth]{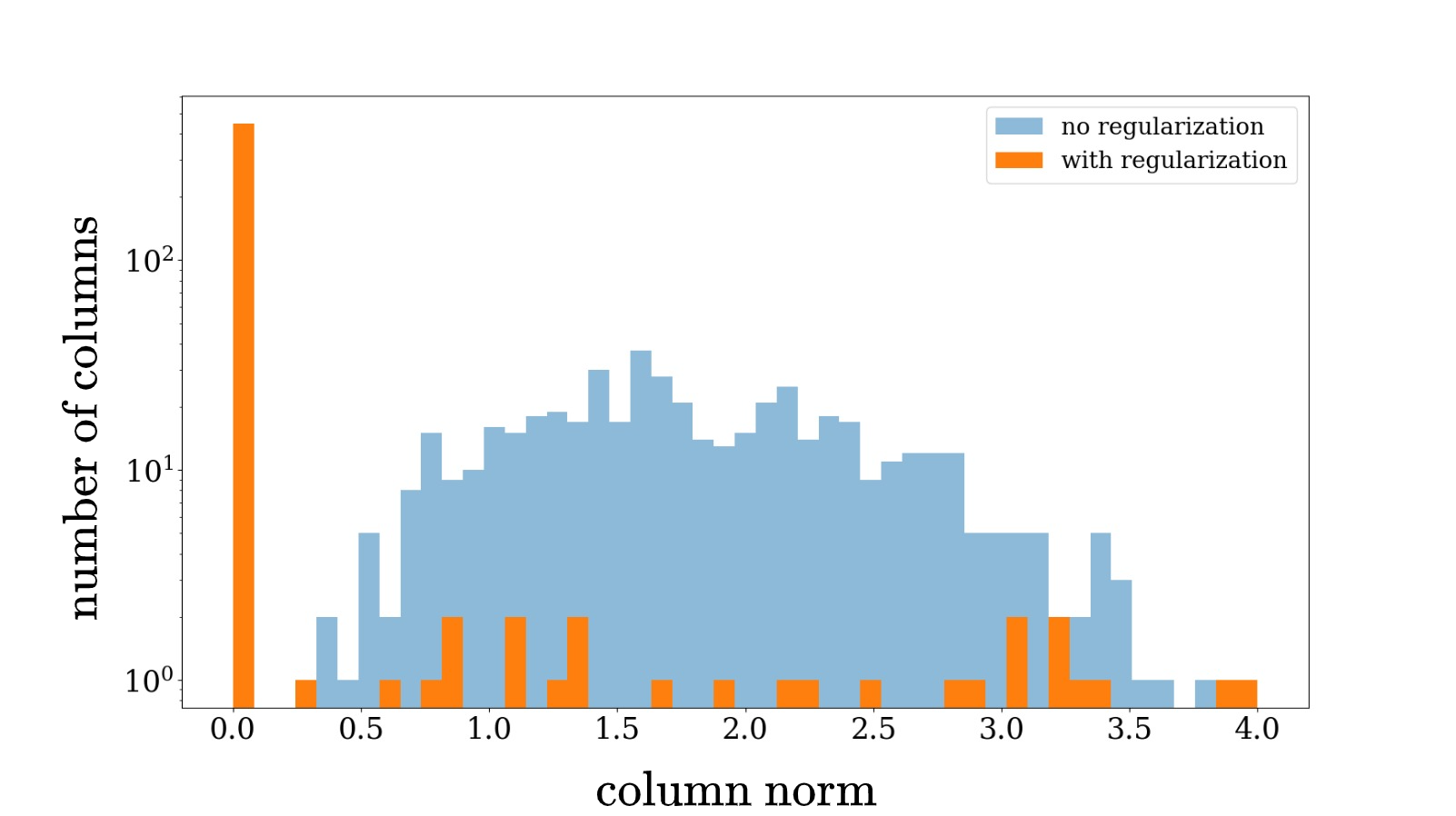}
\end{subfigure}
    \caption{\small We run gradient updates to minimize the loss in \cref{eq:Lpop} until convergence. The model is highly overparameterized with $d=k=500$ and $r=4$. 
    In the left figure, the vertical axis represents the fraction of columns having their norm at least $x$ times the largest norm across columns, with $x \in [0,1]$ on the horizontal axis. In the right figure, for the same experiment we plot a histogram of column norms. Using regularization leads to solutions with most of the columns having small $\ell_2$ norm, and only a few significant columns. Without regularization, a large number of columns have their $\ell_2$ norm comparable to the largest column norm. Thus, training with explicit regularization leads to models that are more suitable for pruning.}
    \label{fig:1}
\end{figure}

In this section, we address this question and show that minimizing the unregularized population loss $\cL_{\text{pop}}$ leads to models which are not suitable for greedy pruning, i.e., have many columns with large $\ell_2$ norm. Specifically, we show that by running gradient flow from a small random initialization, even if the ground truth $U_\star$ is just a single column and $r=1$, the learnt solution $U$ has a large number of columns that are ``active'', i.e. having $\ell_2$-norm comparable to that of the column with maximum norm. Thus, in the absence of the knowledge of $r$, it is unclear how to determine it from just observing the  columns $\ell_2$ norm. We thus claim that such trained models are not compatible with greedy pruning.
 In the following theorem, we formally state our result.

\begin{theorem}
\label{theorem:lb-implicit}
Consider the population loss in \cref{eq:Lpop}, for  $r=1$ and $k\gg 1$ and suppose $\| U_\star \|_{\text{op}} = 1$. Further, suppose the entries of the initial model $U_0$ are i.i.d. samples from $\mathcal{N} (0,\alpha^2)$, where $\alpha \le c_{\ref{theorem:lb-implicit}} /k^3 d \log(kd)$ for some constant $c_{\ref{theorem:lb-implicit}} > 0$. For another absolute constant $c_{\ref{theorem:lb-implicit}}' > 0$, as $t \to \infty$, the iterates of gradient flow  with probability $\ge 1 - O(1/k^{c_{\ref{theorem:lb-implicit}}'})$ converge to a model $U_{gd}$ where $\widetilde{\Omega} (k^{c_{\ref{theorem:lb-implicit}}'})$ active columns satisfy
\begin{equation} \label{eq:123121112}
    \frac{\| U_{gd} e_i \|_2}{\max_{j \in [k]} \| U_{gd} e_j \|_2} \ge 0.99.
\end{equation}
\end{theorem}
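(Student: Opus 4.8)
The plan is to exploit a structural fact about gradient flow on $\cL_{\text{pop}}$: \emph{all $k$ columns of $U$ obey the same time-varying linear ODE}. Writing $u_i(t) = U(t)e_i$ for the $i$-th column, the flow $\dot U = -\nabla\cL_{\text{pop}}(U) = -4\bigl(UU^T - U_\star U_\star^T\bigr)U$ becomes $\dot u_i(t) = A(t)\,u_i(t)$ with $A(t) := -4\bigl(U(t)U(t)^T - U_\star U_\star^T\bigr)$ not depending on $i$, so $u_i(t) = \Phi(t)u_i(0)$ for a common transition operator $\Phi$. Since $\cL_{\text{pop}}$ is real-analytic and the iterates remain bounded (the loss is non-increasing), gradient flow converges by the {\L}ojasiewicz inequality, and by the standard small-initialization / benign-landscape theory for matrix sensing \citep{li_17,ge_16,rong,ye_2021} it converges to a global minimizer $U_{gd}$ with $U_{gd}U_{gd}^T = U_\star U_\star^T$. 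As $U_\star U_\star^T$ has rank $1$, $\mathrm{range}(U_{gd}) = \mathrm{span}(U_\star)$, so each column is a scalar multiple of (the unit vector) $U_\star$: $U_{gd}e_i = x_i^\infty\,U_\star$ with $x_i^\infty := \langle U_\star, U_{gd}e_i\rangle$, and hence $\|U_{gd}e_i\|_2 = |x_i^\infty|$. Equivalently $x_i^\infty = \langle \phi, u_i(0)\rangle$ for the common vector $\phi := \Phi(\infty)^T U_\star$. If $\phi$ were independent of the $u_i(0)$'s we would be done immediately, since then the $x_i^\infty$ are i.i.d.\ centered Gaussians and the claim is pure Gaussian extreme-value theory; but $\phi$ depends on the whole trajectory, hence on all $u_i(0)$, so I would instead control $x_i^\infty$ directly through the ODE.

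The second step is to show $x_i^\infty = (1+o(1))\,c\,x_i(0)$ \emph{uniformly in $i$}, where $x_i(0) := \langle U_\star, u_i(0)\rangle \sim \mathcal{N}(0,\alpha^2)$ are i.i.d.\ and $c\in(0,\infty)$ is a common trajectory-dependent scalar. Decompose $u_i = x_i U_\star + w_i$ with $w_i\perp U_\star$, and set $\rho := \sum_i x_i^2 = U_\star^T UU^T U_\star$ and $W := [w_1,\dots,w_k]$. A short computation gives $\tfrac{d}{dt}\log|x_i| = 4(1-\rho) - 4\,\langle Wx, w_i\rangle / x_i$; the first term is identical for all $i$ and contributes only the common factor $c = \exp\!\bigl(\int_0^\infty 4(1-\rho(s))\,ds\bigr)$, finite because $1-\rho = O(\sqrt{\cL_{\text{pop}}})$ decays exponentially near the benign minimum. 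Two elementary facts control the ``cross'' term: (i) $\|W(t)\|_F$ is non-increasing along the flow since $\tfrac{d}{dt}\|W\|_F^2 = -8\,\mathrm{Tr}\!\bigl((xx^T + W^TW)\,W^TW\bigr)\le 0$, so $\|W(t)\|_{\text{op}}^2 \le \|W(0)\|_F^2 = O(\alpha^2 k d)$ for all $t$; and (ii) $\rho(t)$ is trapped in $[\rho(0),1]$ and grows at exponential rate until it is $\Theta(1)$, so $|x_i(t)|\asymp |x_i(0)|e^{4t}$ during this ``alignment phase''. Bounding $|\langle Wx, w_i\rangle / x_i|\le \|W\|_{\text{op}}^2\sqrt{\rho}/|x_i|$ and integrating (splitting at the end of alignment and at the time the flow enters the region of exponential contraction near the minimum), and using $\min_i|x_i(0)|\ge \alpha/k^2$ (a union bound over $k$ Gaussians, failing with probability $\le 1/k$), the accumulated perturbation to $\log|x_i|$ is $o(1)$ uniformly in $i$, the dominant contribution being $\lesssim \|W(0)\|_F^2/\min_i|x_i(0)| \lesssim \alpha^2 kd/(\alpha/k^2) = \alpha k^3 d$, which is $o(1)$ precisely by the hypothesis $\alpha \le c_{\ref{theorem:lb-implicit}}/(k^3 d\log(kd))$. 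Hence $x_i^\infty = (1+o(1))\,c\,x_i(0)$ for all $i$, and matching $\sum_i (x_i^\infty)^2 = \mathrm{Tr}(U_{gd}U_{gd}^T) = \|U_\star\|_2^2 = 1$ pins down $c = (1+o(1))/\|x(0)\|_2 = (1+o(1))/(\alpha\sqrt k)$; so $\|U_{gd}e_i\|_2 = (1+o(1))\,|g_i|/\sqrt k$ with $g_i := x_i(0)/\alpha \sim \mathcal{N}(0,1)$ i.i.d.

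The third step is a standard extreme-value estimate for the i.i.d.\ standard Gaussians $g_1,\dots,g_k$. Fix a small absolute constant $\epsilon>0$. With high probability $\max_j |g_j| \le (1+\epsilon)\sqrt{2\log k}$; on the other hand $\mathbb{P}\bigl(|g| \ge (1-\epsilon)\sqrt{2\log k}\bigr) = \widetilde{\Theta}\bigl(k^{-(1-\epsilon)^2}\bigr)$, so a Chernoff bound gives that $N := \#\{i : |g_i| \ge (1-\epsilon)\sqrt{2\log k}\}$ satisfies $N = \widetilde{\Omega}\bigl(k^{\,2\epsilon-\epsilon^2}\bigr)$ with probability $\ge 1 - e^{-k^{\Omega(1)}}$. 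Combining with the second step, each of these $N$ columns obeys $\|U_{gd}e_i\|_2 / \max_j\|U_{gd}e_j\|_2 = (1+o(1))\,|g_i| / \max_j |g_j| \ge (1+o(1))\,(1-\epsilon)/(1+\epsilon)$, which exceeds $0.99$ once $\epsilon$ is a small enough absolute constant and $k$ is large. Taking $c_{\ref{theorem:lb-implicit}}' := 2\epsilon-\epsilon^2$ yields $\widetilde{\Omega}(k^{c_{\ref{theorem:lb-implicit}}'})$ active columns satisfying \eqref{eq:123121112}, and the failure events above — Gaussian concentration of the maximum, the Chernoff bound for $N$, the $\min_i|x_i(0)|$ union bound, and the event that small-initialization gradient flow misses a global minimum — together have probability $O(1/k^{c_{\ref{theorem:lb-implicit}}'})$.

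\textbf{The main obstacle} is the uniform trajectory control in the second step: simultaneously bootstrapping $\rho(t)$, $\|W(t)\|$ and every individual $|x_i(t)|$ so that the accumulated cross-term perturbation to $\log|x_i|$ is $o(1)$ for \emph{all} $k$ columns — including the light ones, which is needed so that the maximum column norm is guaranteed to be attained by a column for which $x_i^\infty\propto x_i(0)$ — and in particular handling the regime near the benign global minimum, where one must import a quantitative exponential-convergence rate for small-initialization gradient flow (also needed to make sense of $\Phi(\infty)$ and of the constant $c$). This is exactly where the quantitative smallness $\alpha = O\!\bigl(1/(k^3 d\log(kd))\bigr)$ is spent: the $k^3$ pays for $\min_i|x_i(0)|\gtrsim\alpha/k^2$ together with the extra factor $k$ in $\|W(0)\|_F^2\asymp\alpha^2 kd$, the $d$ for that same bound, and the $\log(kd)$ for converting the resulting polynomial estimate into $o(1)$. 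Everything downstream — identifying the column space of $U_{gd}$ and running the Gaussian extreme-value count — is then routine.
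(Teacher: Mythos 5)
Your proposal follows essentially the same route as the paper's proof: you decompose each column into its signal component $x_i=\langle U_\star,u_i\rangle$ and orthogonal noise $w_i$ (the paper's $r(t)$ and $E(t)$), show the noise energy is non-increasing while the cross term $\langle Wx,w_i\rangle/x_i$ integrates to $o(1)$ uniformly over columns so that $\|U_{gd}e_i\|_2\approx |x_i(0)|/\|x(0)\|_2$, and finish with the same Gaussian extreme-value count yielding $\widetilde{\Omega}(k^{c})$ near-maximal columns, spending the budget $\alpha\lesssim 1/(k^3 d\log(kd))$ on exactly the same three items (the $\alpha/k^2$ anti-concentration of $\min_i|x_i(0)|$, the bound $\|W(0)\|_F^2\lesssim \alpha^2 kd$, and the logarithmic alignment time $T_0\approx-\log\|x(0)\|_2$) as the paper does in its Lemmas. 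The only substantive deviation is that you route through Łojasiewicz convergence and benign-landscape results to place the limit exactly at a global minimizer, whereas the paper never needs this step — it bounds the column norms directly for all large $t$ via the differential inequalities (its error-decay and refined-rate lemmas), which is also where the quantitative exponential convergence you flag as the main obstacle is actually carried out.
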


\Cref{theorem:lb-implicit} claims that the iterates of gradient descent converge to a solution for which a large number ($\Omega (k^{c_{\ref{theorem:lb-implicit}}'})$) of columns of $U_{\text{gd}}$ have their $\ell_2$ norm at least $0.99$ times the maximum. An inspection of the proof shows that the constant $0.99$ is arbitrary and can be extended to any constant bounded away from $1$. Therefore, the models learnt by vanilla gradient descent cannot be reduced to the right sparsity level by simply pruning away columns with $\ell_2$ norm below any fixed constant threshold.

We provide a general guarantee in \Cref{theorem:lb-implicit-restate} (\Cref{app:1}). It is worth noting that from \citep{li_17} that the learned model $U_{gd} \in \mathbb{R}^{d\times k}$ is guaranteed to achieve a small generalization error, i.e., $U_{gd}U_{gd}^T\approx U_\star U_\star^T$. Hence, the algorithm does learn the correct ground-truth product - what we show is that the learnt model is not ``pruning-friendly'' - it cannot be sparsified to the correct level by on any simple norm based pruning strategies. This is also observed in \Cref{fig:1}: the model trained with implicit regularization has many significant columns and it is unclear how many to prune. We provide a proof sketch below, and refer the reader to \Cref{app:1} for more details.

\paragraph{Proof sketch of \Cref{theorem:lb-implicit}.} Without loss of generality we assume in the proof that $\|U_\star\|_2 =1$. Defining $U(t)$ as the iterate at time $t$, gradient flow on $\cL_{\text{pop}}$ follows the below dynamic,
\begin{align} \label{eq:gradflow-main}
    \frac{dU}{dt} = -\nabla \cL_{\text{pop}} (U) = - (UU^T - U_\star U_\star^T) U,
\end{align}
where for ease of notation we drop the explicit dependence on $t$ in $U(t)$. In our proof, we exactly characterize the limiting point of gradient flow as a function of the initialization in a certain sense, which may be of independent interest. In particular, up to a small multiplicative error, we show,
\begin{align}
    \label{eq:similar-dist-main}
    \forall i \in [k],\ \| U_{\text{gd}} e_i \|_2 \approx \frac{|\langle U_\star, U(0) e_i \rangle|}{\| U_\star^T U(0) \|_2}.
\end{align}
Also, with Gaussian initialization, $\langle U_\star, U(0) e_i \rangle \overset{\text{i.i.d.}}{\sim} \mathcal{N} (0,\alpha^2)$ across different values of $i$. In particular, we show that for some constant $c_{\ref{theorem:lb-implicit}}' > 0$,\ $\widetilde{\Omega}(k^{c_{\ref{theorem:lb-implicit}}'})$ columns will have correlations comparable to the maximum, with $|\langle U_\star, U(0) e_i \rangle| \ge 0.99 \max_{j \in [k]} |\langle U_\star, U(0) e_j \rangle|$. For any of these columns, $i$,
\begin{align}
    \| U_{\text{gd}} e_i \|_2 \approx \frac{|\langle U_\star, U(0) e_i \rangle|}{\| U_\star^T U(0) \|_2} \ge 0.99 \max_{j \in [k]} \frac{|\langle U_\star, U(0) e_j \rangle|}{\| U_\star^T U(0) \|_2} \approx 0.99 \max_{j \in [k]} \| U_{\text{gd}} e_j \|_2,
\end{align}
which completes the proof sketch of \Cref{theorem:lb-implicit}.

\section{Explicit regularization gives pruning-friendly models: population analysis} \label{sec:pop}

In this section, we study the properties of the squared error augmented with group Lasso regularization in the population (infinite sample) setting. We show that second-order stationary points of the regularized loss are suitable for greedy pruning, while at the same time achieving a small but non-zero generalization error. Note that the $\ell_2$ norm is a non-smooth function at the origin, and therefore, the overall regularized loss is non-smooth and non-convex. While there are several notions of approximate stationary points for non-differentiable and non-convex functions, for technical convenience, we replace $\cR$ by a smooth proxy. In particular, for a smoothing parameter $\beta > 0$, define a smooth version of the $\ell_2$ norm, and the corresponding smooth regularizer $\cR_\beta$ as,
\begin{align} \label{eq:reg-smooth}
    \cR_\beta (U) = \sum_{i=1}^{k} \Lsm (Ue_i), \quad \text{ where} \quad \Lsm (v) = \frac{\| v \|_2^2}{\sqrt{\| v \|_2^2 + \beta}}.
\end{align}
Note that the smaller the value of $\beta$ is, the closer is $\ell_2^\beta (v)$ to $\| v \|_2$. Considering this definition, the overall regularized loss we study in the population setting is
\begin{align}
    &f_{\text{pop}} (U) = \cL_{\text{pop}} (U) + \lambda \cR_\beta (U), \label{eq:fregpop}
\end{align}
where $\cL_{\text{pop}} = \| UU^T - U_\star U_\star^T \|_F^2$ as defined in \cref{eq:Lpop}. The above optimization problem is nonconvex due to the structure of $\cL_{\text{pop}} (U)$, and finding its global minimizer can be computationally prohibitive. Fortunately, for our theoretical results, we do not require achieving global optimality and we only require an approximate second-order stationary point of the loss in \cref{eq:fregpop}, which is defined below. 

\begin{definition} \label{def:init-pop}
We say that $U$ is an $(\epsilon, \gamma)$-approximate second-order stationary point of $f$ if,
\begin{enumerate}
\vspace{-1mm}
    \item The gradient norm is bounded above by $\epsilon$, i.e.,  $\| \nabla f(U) \|_2 \le \epsilon$.
    \vspace{-1mm}
    \item The eigenvalues of the Hessian are larger than $-\gamma$, i.e., $\lambda_{\min}(\nabla^2 f(U)) \ge - \gamma$.
\end{enumerate}
\end{definition}

\noindent The full algorithmic procedure that we analyze in this section is summarized in \Cref{alg:main}. 
Once we find an $(\epsilon, \gamma)$-approximate second-order stationary point (SOSP) of \cref{eq:fregpop} for some proper choices of $\epsilon $ and $\gamma$, we apply greedy pruning on the obtained model $U$ and by eliminating all of its columns with $\ell_2$ norm below a specific threshold. As a result, the pruned model $U_{\text{prune}}$  has fewer columns than the trained model $U$. 
In fact, in our following theoretical results, we show that if the parameters are properly selected, $U_{\text{prune}}$ has exactly $r$ columns, which is the same as $U_\star$. Finally, we fine-tune the pruned solution by running gradient descent on the unregularized loss $\cL_{\text{pop}}$ (resp. $\cL_{\text{emp}}$). Next, we state the properties of the pruned model generated by \Cref{alg:main} in the infinite sample setting. 

\begin{algorithm}[t!]
\small
\hspace*{\algorithmicindent} \textbf{Inputs}: Measurements $\{ (A_i,y_i) \text{ where } y_i 
 = \langle A_i, U_\star U_\star^T \rangle + \varepsilon_i \}_{i=1}^n$ (in the population setting $n=\infty$); \\
\hspace*{\algorithmicindent} \textbf{Initialization}: Set parameters:  $\lambda, \beta,\epsilon,\gamma$ and $m_{\text{fine-tune}}$.

\hspace*{\algorithmicindent} \textbf{Greedy pruning phase:}
\begin{algorithmic}[1] 

\State Find an $(\epsilon,\gamma)$-approximate SOSP of $f_{\text{emp}}$ (resp. $f_{\text{pop}}$), $U$, satisfying $\| U \|_{\text{op}} \le 3$.

\State Let $S = \{ i \in [k] : \| U e_i \|_2 \le 2 \sqrt{\beta}$ denote the set of columns with small $\ell_2$ norm.

\noindent Create a new matrix $U_{\text{prune}}$ which only preserves the columns of $U$ in $[k] \setminus S$, deleting the columns in $S$.
\end{algorithmic}
\hspace*{\algorithmicindent} \textbf{Fine-tuning phase:}
\begin{algorithmic}[1]

\State Run $m_{\text{fine-tune}}$ iterations of gradient descent on $\mathcal{L}_{\text{pop}} (U)$ (resp. $\mathcal{L}_{\text{emp}} (U)$) initialized at $U_{\text{prune}}$ to get $U_{\text{out}}$.

\State \Return{$U_{\text{out}}$.}

\end{algorithmic}
\caption{Greedy pruning based on group-Lasso regularization}

\label{alg:main}
\end{algorithm}

\begin{theorem} \label{theorem:main-population}
Consider the population loss with regularization in \cref{eq:fregpop}, where $U_\star$ has rank $r$ and its smallest singular value is denoted by $\sigma_r^\star$.
Moreover, consider $U_{\text{prune}}$ as the output of the pruning phase in \Cref{alg:main} with  parameters $\beta, \lambda, \epsilon, \gamma$ satisfying the conditions\footnote{This style of result exists for LASSO as well (see \cite[Theorem 1]{loh2013regularized}), where the optimal choice of the regularization parameter, $\lambda^\star$, depends on the true sparsity $r$, but a general guarantee can be established as well, which degrades as $\lambda$ deviates from $\lambda^\star$. In practice $\lambda$ is chosen using cross-validation. For simplicity of presentation, we state the result when $r$ and $\sigma_r^\star$ are known up to constants.},
\begin{align} \label{eq:parameter}
    \beta = c_\beta \frac{(\sigma_r^\star)^2}{r},
    \qquad 
    \lambda = c_\lambda \frac{(\sigma_r^\star)^3}{\sqrt{kr}}, \qquad
    \gamma \le c_\gamma \frac{(\sigma^\star_r)^3}{\sqrt{k} r^{5/2}}, 
    \qquad
    \epsilon \le c_\epsilon \frac{(\sigma_r^\star)^{7/2}}{\sqrt{k} r^{5/2}},
\end{align}
for some absolute constants $c_\beta,c_\lambda,c_\epsilon, c_\gamma > 0$. Then, we have
\begin{enumerate}
\vspace{-1mm}
    \item $U_{\text{prune}}$ has exactly $r$ columns.
    \vspace{-1mm}
    \item $\| U_{\text{prune}} U_{\text{prune}}^T - U_\star U_\star^T \|_F \le \frac{1}{2} (\sigma_r^\star)^2$.
\end{enumerate}
\end{theorem}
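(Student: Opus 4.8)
The plan is to analyze the first-order stationarity condition together with the Hessian condition at an $(\epsilon,\gamma)$-approximate SOSP $U$ and extract structural information about the column norms of $U$. Write $g(v) = \Lsm(v)$, so $\nabla g(v) = \phi(\|v\|_2)\, v$ for an explicit scalar function $\phi$, and the regularizer contributes $\lambda\,\phi(\|Ue_i\|_2)\,Ue_i$ to the $i$-th column of the gradient. The gradient of $\cL_{\text{pop}}$ is $4(UU^T - U_\star U_\star^T)U$, so stationarity $\|\nabla f_{\text{pop}}(U)\|_2 \le \epsilon$ says that column-by-column, $4(UU^T-U_\star U_\star^T)Ue_i + \lambda\phi(\|Ue_i\|_2)Ue_i$ is small. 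First I would use this to show a dichotomy on column norms: every column either has $\|Ue_i\|_2 \lesssim \sqrt{\beta}$ (so it lies in the pruned set $S$), or it has $\|Ue_i\|_2$ bounded below by a quantity of order $\sigma_r^\star/\sqrt{r}$ — i.e. there is a genuine gap, so the threshold $2\sqrt\beta$ in \Cref{alg:main} cleanly separates the two groups. The lower bound on the "large" columns should come from the fact that if $\|Ue_i\|_2$ is of intermediate size, $\phi(\|Ue_i\|_2)$ is large (since $\phi(s) \approx 1/\sqrt\beta$ for $s \ll \sqrt\beta$), forcing $(UU^T-U_\star U_\star^T)Ue_i$ to have large norm, which one rules out using the Hessian lower bound to control $\|U\|_{\text{op}}$ and the spectrum of $UU^T - U_\star U_\star^T$.

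Next I would control the number of large columns. Let $r' = |[k]\setminus S|$ be the number of surviving columns and let $V = U_{\text{prune}} \in \mathbb{R}^{d\times r'}$. Since deleting the small columns changes $UU^T$ by at most $\sum_{i\in S}\|Ue_i\|_2^2 \le k\cdot 4\beta = O((\sigma_r^\star)^2)$ in, say, operator norm (and similarly in Frobenius norm up to $\sqrt{k}$ factors, which the parameter choices are calibrated to absorb), we get $\|VV^T - U_\star U_\star^T\|$ small — this already will give item~2 once we also know the perturbation is genuinely small in Frobenius norm, which requires being careful that at most $O(k)$ small columns each contribute $O(\beta)$; the choice $\beta = c_\beta (\sigma_r^\star)^2/r$ makes the Frobenius-norm error $\lesssim \sqrt{k\beta^2}$... here I'd need to double-check the bookkeeping, but the scaling of $\beta$, $\lambda$, $\epsilon$, $\gamma$ in~\eqref{eq:parameter} is evidently chosen precisely so that all these error terms come out below $\frac12(\sigma_r^\star)^2$. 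For item~1, since $VV^T$ is within $\frac12(\sigma_r^\star)^2$ of the rank-$r$ matrix $U_\star U_\star^T$ whose smallest nonzero eigenvalue is $(\sigma_r^\star)^2$, Weyl's inequality forces $\mathrm{rank}(V) \ge r$, hence $r' \ge r$. The reverse inequality $r' \le r$ is the crux: I would argue by contradiction that if $r' > r$, then the $r'$ large columns, each of norm $\gtrsim \sigma_r^\star/\sqrt r$, are "too many" to be consistent with simultaneously (a) first-order stationarity of the regularized loss and (b) the Hessian condition $\lambda_{\min}(\nabla^2 f_{\text{pop}}(U)) \ge -\gamma$. Concretely, if there were an extra large column beyond rank $r$, one could find a direction (e.g. perturbing along that column, or a combination exploiting that $UU^T-U_\star U_\star^T$ has a negative or near-zero eigenvalue in the relevant subspace) along which the second-order Taylor expansion of $f_{\text{pop}}$ decreases faster than $\gamma$ allows — the group-Lasso term is what penalizes keeping a column "alive" unnecessarily, and $\lambda$ is tuned so this penalty beats the curvature slack $\gamma$.

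I expect the main obstacle to be exactly this upper bound $r' \le r$: ruling out spurious "extra" large columns at an approximate SOSP. The delicate point is that the group-Lasso regularizer is nonconvex in its smoothed form $\Lsm$, so one cannot simply invoke convex subgradient optimality; instead one must use the second-order (Hessian) information, and the Hessian of $f_{\text{pop}}$ mixes the bilinear curvature of $UU^T$ with the curvature of $\cR_\beta$. The argument will need a carefully chosen test direction — likely a rank-one perturbation supported on an allegedly-extra column combined with a compensating perturbation in the column space of $U_\star$ — and then a somewhat involved calculation showing the quadratic form is more negative than $-\gamma\|\text{direction}\|^2$. Getting the constants $c_\lambda, c_\gamma, c_\epsilon$ to be mutually consistent, and tracking the $\sqrt k$ and $r^{5/2}$ factors through this computation, is where the real work lies; everything else (the column-norm dichotomy, the Frobenius perturbation bound, the rank lower bound via Weyl) is comparatively routine given the stated parameter regime and the $\|U\|_{\text{op}} \le 3$ assumption.
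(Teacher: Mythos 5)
Your outline correctly identifies the two end goals — showing the number of surviving columns is exactly $r$, and bounding $\|U_{\text{prune}}U_{\text{prune}}^T - U_\star U_\star^T\|_F$ — but it is missing the paper's key structural observation, and the mechanism you propose for the hard step ($r' \le r$) is not the one that works.

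The paper's central lemma is that at a bounded $(\epsilon,\gamma)$-approximate SOSP of $f_{\text{pop}}$, \emph{all columns with norm above $2\sqrt{\beta}$ are approximately pairwise orthogonal}. This is established not by perturbing a single "extra" column, as you suggest, but by testing the gradient and Hessian conditions with directions of the special form $Z = UW$ where $W = e_ie_j^T - e_je_i^T$ is skew-symmetric: such perturbations leave $UU^T$ unchanged (since $UZ^T + ZU^T = U(W+W^T)U^T = 0$), so the contribution of $\cL_{\text{pop}}$ vanishes and one isolates the group-Lasso regularizer's response. This yields (via Lemma~\ref{lemma:grad-lb-matrix-sensing} and Lemma~\ref{lemma:apx-orthogonal}) that either $\|Ue_i\|_2 \approx \|Ue_j\|_2$ or $Ue_i \perp Ue_j$, and second-order stationarity rules out the "equal norms but correlated" branch. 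Your proposed descent direction ("a rank-one perturbation supported on an allegedly-extra column combined with a compensating perturbation in the column space of $U_\star$") does not preserve $UU^T$, so the $\cL_{\text{pop}}$ term dominates and the group-Lasso signal you want to extract is swamped; you would not get the needed structural conclusion from that construction. Once approximate pairwise orthogonality of the large columns is in hand, the paper combines it with Lemma~\ref{lemma:proj-similar} (large columns nearly lie in $\mathrm{span}(V_r)$) and a Gershgorin-circle argument (Lemma~\ref{lemma:atmostr}) to conclude that at most $r$ columns can have nonzero projection onto the $r$-dimensional space $V_r$; Lemma~\ref{lemma:proj-0} then shows any column with zero $V_r$-projection has norm $\le 2\sqrt{\beta}$ and is pruned. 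There is no contradiction argument per se — it is a pigeonhole on near-orthogonal vectors in an $r$-dimensional subspace.

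Your column-norm "dichotomy" claim (every column is either $\lesssim \sqrt{\beta}$ or $\gtrsim \sigma_r^\star/\sqrt{r}$) is neither established in the paper nor needed: the threshold $2\sqrt{\beta}$ in \Cref{alg:main} simply defines the surviving set, and the argument is about how many columns can exceed it, not about a gap in the empirical distribution of norms. Finally, your Frobenius bookkeeping is off in a way that matters: a naive bound $\sum_{i \in S}\|Ue_i\|_2^2 \le 4k\beta = O(k(\sigma_r^\star)^2/r)$ scales with $k$ and would not give item~2. The paper avoids this by splitting the pruned columns into those with nonzero $V_r$-projection (at most $r$ of them, each contributing $\le 4\beta$, total $O(r\beta)$) and those with zero $V_r$-projection (whose $V_r^\perp$ mass is bounded by $\epsilon^2\sqrt{\beta}/\lambda^2$ via Lemma~\ref{lemma:proj-similar}); this is what makes the final bound $k$-free.
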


This result relies on showing that all bounded SOSPs of the regularized loss in \cref{eq:fregpop} are suitable for greedy pruning: removing the columns of $U$ below a certain $\ell_2$-norm threshold results in a solution $U_{\text{prune}}$ having exactly $r$ columns, while at the same time having a small generalization error. Hence, it can serve as a proper warm-start for the fine-tuning phase.

\paragraph{Proof sketch of \Cref{theorem:main-population}.} The key idea is to identify that if we have a matrix $U$ such that $UU^T = U_\star U_\star^T$, and the columns of $U$ are orthogonal to one another, then $U$ has exactly $r $ non-zero columns, where $r$ is the rank of $U_{\star}$. This statement can be shown to hold even when $UU^T \approx U_\star U_\star^T$ and the columns of $U$ are only approximately orthogonal.
The main observation we prove is that a bounded $(\epsilon,\gamma)$-approximate SOSPs of \cref{eq:fregpop} denoted by $U$ satisfies the following condition:
\begin{equation} \label{eq:apxorth}
    \forall i,j : \| U e_i \|_2, \| U e_j \|_2 \ge 2 \sqrt{\beta},\quad \frac{\langle U e_i, U e_j \rangle}{\| U e_i \|_2 \| U e_j \|_2} \approx 0.
\end{equation}
In other words, all the large columns of $U$ have their pairwise angle approximately $90^\circ$. Thus, by pruning away the columns of $U$ that have an $\ell_2$ norm less than $2 \sqrt{\beta}$, the remaining columns of $U$, i.e., the columns of $U_{\text{prune}}$, are now approximately at $90^\circ$ angles to one another. If $\beta$ is chosen to be sufficiently small, after deleting the low-norm columns, the approximation $U_{\text{prune}} U_{\text{prune}}^T \approx UU^T$ holds. By the second order stationarity of $U$, we also have that $UU^T \approx U_\star U_\star^T$. Together, this implies that $U_{\text{prune}} U_{\text{prune}}^T \approx U_\star U_\star^T$ and $U_{\text{prune}} U_{\text{prune}}^T$ is close to a rank $r$ matrix. Since $U_{\text{prune}}$ has orthogonal columns, this also means that it has exactly $r$ columns. Finally, to establish a bound on the approximation error, we simply use the triangle inequality that $\| U_{\text{prune}} U_{\text{prune}}^T - U_\star U_\star^T \|_F \le \| UU^T - U_\star U_\star^T \|_F + \| UU^T - U_{\text{prune}} U_{\text{prune}}^T \|_F$. The former is small by virtue of the fact that $U$ is an approximate second order stationary point of $f_{\text{pop}} \approx \mathcal{L}_{\text{pop}}$ when $\lambda$ is small; the latter term is small by the fact that only the small norm columns of $U$ were pruned away.

Now the only missing part that remains to justify is why the $\cR_\beta$ regularizer promotes orthogonality in the columns of approximate second order stationary points and the expression in \cref{eq:apxorth} holds. This is best understood by looking at the regularized loss for the case $\beta = 0$, which is equivalent to $\| UU^T - U_\star U_\star^T \|_F^2 + \lambda \sum_{i=1}^k \| U e_i \|_2$ and consider any candidate first-order stationary point $U$ of this objective. Let $Z \in \mathbb{R}^{d \times k}$ be a variable constrained to satisfy $Z Z^T = UU^T$. Stationarity of $U$ implies that the choice $Z = U$ must also be a first-order stationary point of the constrained optimization problem,
\begin{align}
    \text{Minimize: } \| ZZ^T - U_\star U_\star^T \|_F^2 + \lambda \sum_{i=1}^k \| Z e_i \|_2, \label{eq:constopt} \qquad \text{Subject to: } ZZ^T = U U^T. 
\end{align}
The first term in the objective is a constant under the constraint and we may remove it altogether. When $U$ is a full-rank stationary point, constraint qualification holds, and it is possible to write down the necessary KKT first-order optimality conditions, which reduce to,
\begin{align} \label{eq:kkt}
    \forall i \in [k], \quad - \lambda \frac{Z e_i}{\| Z e_i \|_2} + (\Lambda + \Lambda^T) Z e_i = 0
\end{align}
where $\Lambda \in \mathbb{R}^{d \times d}$ is the set of optimal dual variables. Since $Z = U$ is a first-order stationary point of the problem in \cref{eq:constopt} and it satisfies \cref{eq:kkt}, the above condition means that the columns $U e_i$ are the eigenvectors of the symmetric matrix $\Lambda + \Lambda^T$. If all the eigenvalues of $\Lambda + \Lambda^T$ were distinct, then this implies that the eigenvectors are orthogonal and $Ue_i \perp U e_j$ for all $i \ne j$.

While this analysis conveys an intuitive picture, there are several challenges in extending this further. It is unclear how to establish that the eigenvalues of $\Lambda + \Lambda^T$ are distinct. Moreover, this analysis only applies for full-rank stationary points and does not say anything about rank deficient stationary points $U$, where constraint qualification does not hold. Furthermore, it is even more unclear how to extend this analysis to approximate stationary points.
Our proof will circumvents each of these challenges by $(a)$ showing that at approximate SOSPs, even if the eigenvalues of $\Lambda + \Lambda^T$ are not distinct, the columns of $U$ are orthogonal, and $(b)$ directly bounding the gradient and Hessian of the regularized loss, rather than studying the KKT conditions to establish guarantees even for approximate stationary points which may be rank deficient. Having established guarantees for the pruning phase of the algorithm in the population setting, we next prove a result in the finite sample setting. $\hfill \blacksquare$

\section{Finite sample analysis} \label{sec:finite_sample}

Next, we extend the results of the previous section to the finite sample setting. Here, we also focus on the smooth version of the regularizer and study the following problem 
\begin{equation}
    f_{\text{emp}} (U) = \cL_{\text{emp}} (U) + \lambda \cR_\beta (U), \label{eq:fregemp}
\end{equation}
where the empirical loss $\cL_{\text{emp}} $ is defined in \cref{eq:Lemp} and the smooth version of the group Lasso regularizer $\cR_\beta (U)$ is defined in \cref{eq:reg-smooth}. In the finite sample setting, we assume that the measurement matrices satisfy the restricted isometry property (RIP) \cite{candes2011tight}, defined below.

\begin{assumption} \label{assump:RIP}
Assume that the measurement matrices $\{ A_1,\cdots,A_n\}$ are $(2k,\delta)$-RIP. In other words, for any $d \times d$ matrix with rank $\le 2k$,
\begin{equation}
    (1 - \delta) \| X \|_F^2 \le \frac{1}{n} \sum_{i=1}^n \langle A_i, X \rangle^2 \le (1 + \delta) \| X \|_F^2.
\end{equation}
This condition is satisfied, for instance, if the entries of the $A_i$'s were sampled $\mathcal{N}(0,1/d)$ (i.e. Gaussian measurements), as long as $n \gtrsim dk/\delta^2$ \cite{candes2011tight}.
\end{assumption}

\begin{theorem} \label{theorem:main_finite}
Consider the empirical loss with smooth regularization in \cref{eq:fregemp}, where $U_\star$ has rank $r$ and unit spectral norm, with its smallest singular value denoted $\sigma_r^\star$, and noise variance $\sigma^2$. Consider $U_{\text{prune}}$ as the output of the pruning phase in \Cref{alg:main} with parameters $\beta, \lambda, \epsilon, \gamma$ chosen as per \cref{eq:parameter}. If the number of samples is at least $n \ge C_{\ref{theorem:main_finite}} \frac{\sigma^2}{(\sigma^\star_r)^4} d k^2 r^5 \log(d/\eta)$ and $\delta \le c_\delta \frac{(\sigma_r^\star)^{3/2}}{\sqrt{k} r^{5/2}}$, where $C_{\ref{theorem:main_finite}} > 0$ is a sufficiently large constant, then with probability at least $ 1 - \eta$, 
\begin{enumerate}
    \item $U_{\text{prune}}$ has exactly $r$ columns.
    \item $U_{\text{prune}}$ satisfies the spectral initialization condition:
    $\| U_{\text{prune}} U_{\text{prune}}^T - U_\star U_\star^T \|_F \le \frac{1}{2} (\sigma_r^\star)^2$.
\end{enumerate}
\end{theorem}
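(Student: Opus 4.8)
The plan is to reduce the finite-sample statement to the population result \Cref{theorem:main-population} by a uniform perturbation argument. Concretely, I will show that with probability at least $1-\eta$ over the noise $\{\varepsilon_i\}$ (and using only \Cref{assump:RIP} for the $\{A_i\}$), every $U$ with $\|U\|_{\text{op}}\le 3$ satisfies $\|\nabla f_{\text{emp}}(U)-\nabla f_{\text{pop}}(U)\|_2\le\epsilon_0$ and $\|\nabla^2 f_{\text{emp}}(U)-\nabla^2 f_{\text{pop}}(U)\|_{\text{op}}\le\gamma_0$, where $\epsilon_0,\gamma_0$ are controlled by $\delta$ and $n$. Then, by the triangle inequality and Weyl's inequality, the bounded $(\epsilon,\gamma)$-approximate SOSP $U$ of $f_{\text{emp}}$ produced in \Cref{alg:main} is a bounded $(\epsilon+\epsilon_0,\gamma+\gamma_0)$-approximate SOSP of $f_{\text{pop}}$ in the sense of \Cref{def:init-pop}. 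Taking the absolute constants in \cref{eq:parameter} small enough, $c_\delta$ small enough, and $C_{\ref{theorem:main_finite}}$ large enough so that $\epsilon+\epsilon_0$ and $\gamma+\gamma_0$ still satisfy the thresholds required by \Cref{theorem:main-population}, that theorem applies verbatim to $U$ and yields both conclusions. Since the regularizer $\lambda\cR_\beta$ is deterministic and identical in $f_{\text{emp}}$ and $f_{\text{pop}}$, it suffices to compare $\cL_{\text{emp}}$ with $\cL_{\text{pop}}$.

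\paragraph{Pointwise bounds.} Write $\Delta(U):=UU^T-U_\star U_\star^T$, which has rank $\le k+r\le 2k$, and expand $\cL_{\text{emp}}(U)=\frac1n\sum_i\langle A_i,\Delta(U)\rangle^2-\frac2n\sum_i\varepsilon_i\langle A_i,\Delta(U)\rangle+\frac1n\sum_i\varepsilon_i^2$. The last term is constant in $U$ and contributes nothing to gradients or Hessians. For the ``design'' term, \Cref{assump:RIP} yields the polarized estimate $\big|\frac1n\sum_i\langle A_i,X\rangle\langle A_i,Y\rangle-\langle X,Y\rangle\big|\le\delta\|X\|_F\|Y\|_F$ for all $X,Y$ of rank $\le 2k$; applying it with $X\in\{\Delta(U),\,UV^T+VU^T\}$ and $Y$ ranging over the rank-$\le 2k$ directions appearing when one differentiates the quartic $\frac1n\sum_i\langle A_i,\Delta(U)\rangle^2$ once or twice shows its gradient and Hessian deviate from those of $\cL_{\text{pop}}$ by at most $\delta\cdot\poly(\|U\|_{\text{op}},\|U_\star\|_{\text{op}})$, i.e.\ by $O(\delta)$ on $\{\|U\|_{\text{op}}\le 3\}$. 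For the ``noise'' term, conditionally on $\{A_i\}$ the functional $\frac1n\sum_i\varepsilon_i\langle A_i,X\rangle$ is sub-Gaussian with variance proxy $\frac{\sigma^2}{n^2}\sum_i\langle A_i,X\rangle^2\le\frac{2\sigma^2}{n}\|X\|_F^2$ by \Cref{assump:RIP}; differentiating once (resp.\ twice) in $U$ produces inner products of $A_i$ against rank-$\le 2k$ matrices of Frobenius norm $O(\|U\|_{\text{op}})$ (resp.\ $O(1)$), so for any fixed $U$ and any fixed Hessian direction the noise contribution to the gradient gap and to the Hessian gap has sub-Gaussian tails of scale $O(\sigma/\sqrt n)$.

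\paragraph{From pointwise to uniform.} To upgrade these to hold simultaneously over $\{U:\|U\|_{\text{op}}\le 3\}$ (and over unit-norm Hessian directions), I would take an $\alpha$-net of this set of cardinality $\exp\!\big(O(dk\log(1/\alpha))\big)$, union-bound the sub-Gaussian estimates over the net, and extend to all $U$ using that $\nabla\cL_{\text{emp}}-\nabla\cL_{\text{pop}}$ and $\nabla^2\cL_{\text{emp}}-\nabla^2\cL_{\text{pop}}$ are Lipschitz on the bounded set with constant polynomial in $d,k$ and $1+\delta$; a polynomially small $\alpha$ then suffices. This produces $\epsilon_0,\gamma_0=O\big(\delta+\sigma\sqrt{dk\log(d/\eta)/n}\big)$ with probability at least $1-\eta$. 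Imposing $\delta\le c_\delta(\sigma_r^\star)^{3/2}/(\sqrt k\,r^{5/2})$ and $n\ge C_{\ref{theorem:main_finite}}\frac{\sigma^2}{(\sigma_r^\star)^4}dk^2r^5\log(d/\eta)$ forces $\epsilon_0\le\tfrac12 c_\epsilon(\sigma_r^\star)^{7/2}/(\sqrt k\,r^{5/2})$ and $\gamma_0\le\tfrac12 c_\gamma(\sigma_r^\star)^3/(\sqrt k\,r^{5/2})$, exactly what the reduction in the first paragraph needs.

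\paragraph{Main obstacle.} I expect the hard part to be the uniform control of the \emph{Hessian} gap. The empirical Hessian is a quadratic form whose value on a direction $\Delta$ mixes $\frac1n\sum_i\langle A_i,U\Delta^T+\Delta U^T\rangle^2$, $\frac1n\sum_i\langle A_i,\Delta(U)\rangle\langle A_i,\Delta\Delta^T\rangle$, and the noise piece $\frac1n\sum_i\varepsilon_i\langle A_i,\Delta\Delta^T\rangle$; one must check that every matrix fed to $A_i$ has rank $\le 2k$ (so RIP is applicable), track all constants' dependence on $\|U\|_{\text{op}}$, and control the supremum over the continuum of directions $\Delta$ \emph{jointly} with the supremum over $U$ — a two-level net with Lipschitz estimates that require care, especially for the heavier-tailed quartic design term, where a truncation or a chaining/Hanson--Wright-type argument is cleaner than naive sub-Gaussian tail bounds. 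Once the uniform gradient/Hessian closeness is established, \Cref{theorem:main-population} does the rest, and the claimed sample complexity and RIP parameter fall out of the bookkeeping above.
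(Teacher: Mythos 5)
There is a genuine gap in the reduction, and it is quantitative rather than cosmetic. Your plan hinges on a \emph{uniform} bound $\sup_{\|U\|_{\text{op}}\le 3}\|\nabla f_{\text{emp}}(U)-\nabla f_{\text{pop}}(U)\|\le\epsilon_0$ with $\epsilon_0=O\bigl(\delta+\sigma\sqrt{dk\log(d/\eta)/n}\bigr)$, but the RIP-induced part of the gradient gap along a direction $Z$ is of size $\delta\,\|UU^T-U_\star U_\star^T\|_F\,\|UZ^T+ZU^T\|_F$, and over the whole ball $\{\|U\|_{\text{op}}\le 3\}$ one has $\sup_U\|UU^T-U_\star U_\star^T\|_F=\Theta(\sqrt{k})$, so the uniform gap is $\Theta(\delta\sqrt{k})$, not $O(\delta)$ (same for the Hessian). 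Under the theorem's hypothesis $\delta\le c_\delta(\sigma_r^\star)^{3/2}/(\sqrt{k}\,r^{5/2})$ this gives $\epsilon_0\approx(\sigma_r^\star)^{3/2}/r^{5/2}$, which exceeds the threshold $c_\epsilon(\sigma_r^\star)^{7/2}/(\sqrt{k}\,r^{5/2})$ required by \Cref{theorem:main-population} by a factor $\sqrt{k}/(\sigma_r^\star)^{2}\gg 1$ (and similarly $\gamma_0$ exceeds the $\gamma$ threshold). Even granting your $O(\delta)$ claim, the bookkeeping step ``$\delta\le c_\delta(\sigma_r^\star)^{3/2}/(\sqrt{k}r^{5/2})$ forces $\epsilon_0\le\tfrac12 c_\epsilon(\sigma_r^\star)^{7/2}/(\sqrt{k}r^{5/2})$'' cannot hold with absolute constants, since it would require $c_\delta\lesssim c_\epsilon(\sigma_r^\star)^2$ with $\sigma_r^\star$ arbitrarily small. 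To push the black-box reduction through you would have to strengthen the RIP requirement to roughly $\delta\lesssim(\sigma_r^\star)^{7/2}/(k r^{5/2})$, which (via $n\gtrsim dk/\delta^2$) inflates the sample complexity beyond the $dk^2r^5$ rate you are asked to prove. You cannot patch this by evaluating the perturbation only at the returned SOSP, because then $\epsilon_0$ depends on $\|UU^T-U_\star U_\star^T\|_F$ at that point, which is exactly what you do not yet know is small — the argument becomes circular without an empirical-loss analogue of the curvature lemma.

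The paper avoids this by not reducing to the population theorem at all: it re-runs the population argument with direction-specific error terms. The orthogonality lemmas (\Cref{lemma:grad-lb-matrix-sensing,lemma:apx-orthogonal}) apply verbatim because $\cL_{\text{emp}}$ is still of the form $f(UU^T)$; the noise enters only through the single event $\|\tfrac1n\sum_i\varepsilon_iA_i\|_{\text{op}}\lesssim\sigma\sqrt{d\log(d/\eta)/n}$ (matrix Bernstein), which combined with a nuclear-norm/rank bound is automatically uniform in $U$ — no $\varepsilon$-nets or chaining are needed, since the design-term deviation is handled deterministically by the assumed RIP. The key self-bounding step is \Cref{lemma:negativeeig-finite}: it needs only $\delta\le 1/10$ to conclude $\|UU^T-U_\star U_\star^T\|_F\lesssim\max\{\epsilon^{2/3}k^{1/6},\lambda\sqrt{k/\beta}+\sigma\sqrt{kd\log(d/\eta)/n},\sqrt{k}\gamma\}$ at the SOSP, and only then is the fine condition on $\delta$ used, multiplied by this (small, $O((\sigma_r^\star)^2)$) distance inside the effective perturbation $\nu=\epsilon+\delta\|UU^T-U_\star U_\star^T\|_F+\sigma\sqrt{kd\log(d/\eta)/n}\le 3\epsilon^\star$, after which the pruning analysis of \Cref{theorem:prune-prebound} carries over with $\epsilon$ replaced by $\nu$. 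Your write-up is missing this coupling of the $\delta$-error to the distance at the stationary point, and without it the stated $\delta$ and $n$ do not suffice.
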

\noindent With Gaussian measurements, the overall sample requirement (including that from the RIP condition) in \Cref{theorem:main_finite} is satisfied when $n \ge \widetilde{\Omega} \left(\frac{1+\sigma^2}{(\sigma_r^\star)^4} d k^2 r^5 \right)$. The high level analysis of this result largely follows that of \Cref{theorem:main-population} in the population setting -- we approximate the finite-sample gradient and Hessian by their population counterparts and show that the approximation error decays with the number of samples as $O(1/\sqrt{n})$. 

\subsection{Fine-tuning phase: Realizing the benefits of pruning}

In the fine-tuning phase, the learner runs a few iterations of gradient descent on $\cL_{\text{emp}}$ initialized at the pruned solution $U_{\text{prune}}$. Since the model is no longer overparameterized after pruning (by \Cref{theorem:main_finite}), there are several works analyzing the generalization performance and iteration complexity of gradient descent. Here we borrow the local convergence result of \cite{wainwright2015} which requires the initial condition that $\|U_0U_0^T-U_\star U_\star^T\|_F \leq  c (\sigma_r^\star)^2$, where $c$ is any constant less than $1$. As shown in part (b) of  Theorem~\ref{theorem:main_finite}, this initial condition is satisfied by $U_{\text{prune}}$.

\begin{theorem}{\cite[Corollary 2]{wainwright2015}} \label{theorem:fine-tuning_finite}
Suppose $\| U_{\star} \|_{\text{op}} = 1$. If we use the output of the greedy pruning in \Cref{alg:main} denoted by $U_{\text{prune}} \in \mathbb{R}^{d \times r}$ as the initial iterate for the fine-tuning phase, then after 
$t \ge m_{\text{fine-tune}} = C_{\ref{theorem:fine-tuning_finite}} \left( \sigma_1^\star/\sigma_r^\star \right)^{10} \cdot \log \left(\sigma_r^\star/\sigma_1^\star \cdot n/d \right)$ iterations, for some sufficiently large absolute   $C_{\ref{theorem:fine-tuning_finite}}>0$, the iterates of factored gradient descent satisfy,
\begin{align}
    \| U_t U_t^T - U_\star U_\star^T \|_F \lesssim \frac{\sigma}{(\sigma_r^\star)^2} \sqrt{\frac{rd}{n}}.
\end{align}
\end{theorem}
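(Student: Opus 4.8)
The plan is to reduce the fine-tuning phase to a standard, well-understood regime and then quote an off-the-shelf convergence result. By \Cref{theorem:main_finite}, after pruning we are left with $U_{\text{prune}} \in \mathbb{R}^{d \times r}$ having \emph{exactly} $r$ columns, so running factored gradient descent on $\cL_{\text{emp}}$ from $U_{\text{prune}}$ is precisely an \emph{exactly}-parameterized (rank-$r$) matrix sensing instance. The statement we want is then the local linear convergence guarantee of \cite[Corollary 2]{wainwright2015}, whose hypotheses are: (i) the measurement matrices satisfy a $(c_0 r,\delta_0)$-RIP condition for a small absolute constant $\delta_0$; (ii) the sample size obeys $n \gtrsim r d \log d$; and (iii) the initialization $U_0$ lies in the basin $\| U_0 U_0^T - U_\star U_\star^T \|_F \le c (\sigma_r^\star)^2$ for some constant $c < 1$. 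So the argument amounts to checking (i)--(iii) and then reading off the conclusion.

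First I would check (iii): part (b) of \Cref{theorem:main_finite} gives $\| U_{\text{prune}} U_{\text{prune}}^T - U_\star U_\star^T \|_F \le \tfrac12 (\sigma_r^\star)^2$, so (iii) holds with $c = \tfrac12$. As a byproduct, Weyl's inequality gives $\sigma_r(U_{\text{prune}} U_{\text{prune}}^T) \ge (\sigma_r^\star)^2 - \tfrac12 (\sigma_r^\star)^2 > 0$, so $U_{\text{prune}}$ has full column rank $r$, which is what makes the exactly-parameterized analysis (which tracks $U_t$ up to an orthogonal factor) applicable. For (i) and (ii), \Cref{assump:RIP} already supplies $(2k,\delta)$-RIP with $\delta \le c_\delta (\sigma_r^\star)^{3/2}/(\sqrt{k} r^{5/2})$; since $r \le k$ this is stronger than needed both in order ($2k \ge c_0 r$) and in accuracy ($\delta \le \delta_0$), and the sample bound $n \ge C_{\ref{theorem:main_finite}} \frac{\sigma^2}{(\sigma_r^\star)^4} d k^2 r^5 \log(d/\eta)$ assumed in \Cref{theorem:main_finite} dominates $r d \log d$. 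Hence all hypotheses hold on the same high-probability event as \Cref{theorem:main_finite}.

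Granting (i)--(iii), \cite[Corollary 2]{wainwright2015} yields a geometric contraction of the form $\| U_t U_t^T - U_\star U_\star^T \|_F^2 \le (1-\kappa)^t \| U_{\text{prune}} U_{\text{prune}}^T - U_\star U_\star^T \|_F^2 + c' \frac{\sigma^2}{(\sigma_r^\star)^4} \frac{rd}{n}$, where $\kappa \asymp (\sigma_r^\star/\sigma_1^\star)^{c''}$ for an absolute constant $c''$ arising from the condition-number dependence of the admissible step size and the local curvature in the factored parameterization. Choosing $t \ge m_{\text{fine-tune}} = C_{\ref{theorem:fine-tuning_finite}} (\sigma_1^\star/\sigma_r^\star)^{10} \log(\sigma_r^\star/\sigma_1^\star \cdot n/d)$ drives the optimization term below the statistical term, and taking square roots gives $\| U_t U_t^T - U_\star U_\star^T \|_F \lesssim \frac{\sigma}{(\sigma_r^\star)^2} \sqrt{rd/n}$, as claimed.

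The main obstacle here is bookkeeping rather than conceptual: one must match conventions with \cite{wainwright2015}, which typically states convergence for $\min_{R \in \mathcal{O}(r)} \| U_t - U_\star R \|_F$ with a specific step-size normalization, so one has to translate to the $\| U_t U_t^T - U_\star U_\star^T \|_F$ metric (picking up factors of $\sigma_1^\star$ and $\sigma_r^\star$), verify that the local strong-convexity and smoothness constants combine into the stated $(\sigma_1^\star/\sigma_r^\star)^{10}$ iteration count, and confirm that the statistical floor for rank-$r$ sensing under $\sigma$-sub-Gaussian noise is $\Theta\big(\frac{\sigma}{(\sigma_r^\star)^2} \sqrt{rd/n}\big)$ in this metric. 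Crucially, no overparameterization-specific reasoning enters the fine-tuning phase at all; the work of landing in the benign exactly-parameterized basin was already done by the pruning guarantee of \Cref{theorem:main_finite}.
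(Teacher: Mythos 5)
Your proposal matches the paper's approach exactly: the paper does not reprove this statement but imports \cite[Corollary 2]{wainwright2015} wholesale, with the only verification being that the basin-of-attraction condition $\| U_{\text{prune}} U_{\text{prune}}^T - U_\star U_\star^T \|_F \le c (\sigma_r^\star)^2$, $c<1$, holds by part (b) of \Cref{theorem:main_finite} and that the pruned model is exactly parameterized by part (a). Your additional checks (RIP order, sample size, full column rank via Weyl) are consistent elaborations of the same citation-based argument, so there is nothing genuinely different to compare.
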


\label{sec:fine-tuning_finite_case}
\Cref{theorem:fine-tuning_finite} shows that in the fine-tuning phase, the iterates of gradient descent converges at a linear rate to the generalization error floor of $\sigma/(\sigma_r^\star)^2 \cdot \sqrt{rd/n}$, which is also known to be statistically (minimax) optimal \citep{https://doi.org/10.48550/arxiv.1009.2118,Koltchinskii2010}. This is possible because the learner is able to correctly identify the rank of the model in the pruning phase and operate in the exactly specified setting in the fine-tuning phase. In contrast, one may ask how this guarantee compares with running vanilla factored gradient descent in the overparameterized setting. This corresponds to the case where no pruning is carried out to reduce the size of the model. \cite{constantine21} presented a guarantee for the convergence of factored gradient descent from a warm start in the overparameterized setting. In comparison with the exactly specified case, the generalization error floor $\lim_{t \to \infty} \| U_t U_t^T - U_\star U_\star^T \|_F$ is shown to scale as $O(\sigma/(\sigma_r^\star)^2 \cdot \sqrt{kd/n})$\footnote{The result is often stated in terms of the smallest non-zero eigenvalue of $X_\star = U_\star U_\star^T$, which equals $(\sigma_r^\star)^2$.} which now depends on $k$. Furthermore, linear convergence can no longer be established because of the ill-conditioning of the objective. This is not just an artifact of the proof - experimentally too, the convergence slowdown was noticed in \cite[Figure 1]{constantine21}.

This discussion shows that greedy pruning the model \textit{first}, prior to running gradient descent, in fact generates solutions which generalize better and also converge much faster.

\begin{remark}
Based on \Cref{theorem:main_finite,theorem:fine-tuning_finite}, under Gaussian measurements, given
\begin{align}
    n \ge n_\varepsilon = O \left( \frac{\sigma^2}{(\sigma_r^\star)^4} \frac{rd}{\varepsilon^2} + \frac{1+\sigma^2}{(\sigma_r^\star)^4} dk^2 r^5 \log (d/\eta) \right)
\end{align}
samples, \Cref{alg:main} produces $U_{\text{out}}$ that has exactly $r$ columns and  satisfies $\| U_{\text{out}} U_{\text{out}}^T - U_\star U_\star^T \|_F \le \varepsilon$. Note that the sample complexity depends on the amount of overparameterization, $k$, only in the lower order (independent of $\varepsilon$) term. Note that $n_\varepsilon$ is non-zero even if $\sigma = 0$.
\end{remark}

\begin{figure}[t!]
    \centering
    \includegraphics[width=0.6\textwidth]{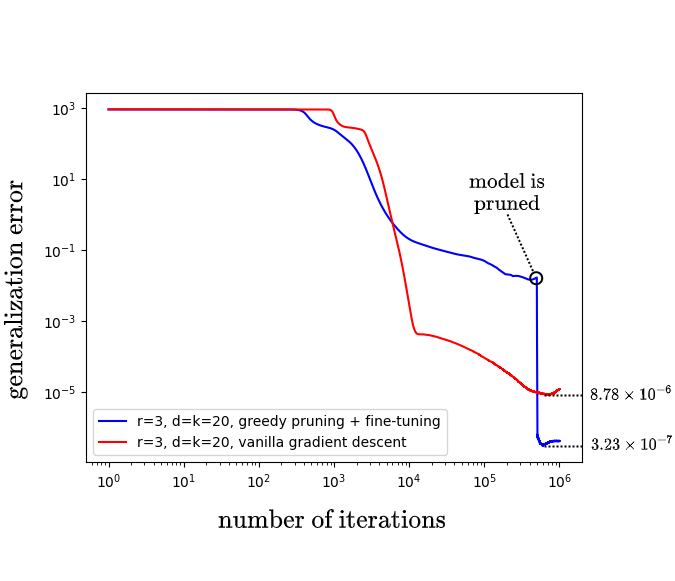}
    \caption{\small \textbf{Comparison of generalization of vanilla gradient descent vs. greedy pruning + fine-tuning.} Vanilla gradient descent is carried out to minimize the unregularized loss in \cref{eq:Lemp} (red). The empirical version of the group Lasso regularized loss, \cref{eq:regloss}, is used in the initial training phase of greedy pruning + fine-tuning (blue). The resulting solution is pruned based on the observed column norms, and fine-tuned using gradient descent on the unregularized loss. In this figure, $d=k=20$ and $r=3$ and the learner has access to $n=100$ Gaussian measurements with no added label noise. The figures plot the generalization error, $\| U_t U_t^T - U_\star U_\star^T \|_F^2$ as a function of the iteration number, $t$. The regularization parameter in \cref{eq:regloss} was tuned minimally for greedy pruning + fine-tuning objective. A sharp improvement in the generalization error is observed after pruning is carried out and the error floor is observed to be an order of magnitude better for the pruned + fine-tuned solution.
    }
    \label{fig:2}
\end{figure}

\section{Implementing \Cref{alg:main}: Smoothness and optimization oracles}
\label{sec:oracle}

In this section we instantiate the optimization oracle in \Cref{alg:main}, which outputs an approximate SOSP with bounded operator norm. First, we establish that the loss $f_{\text{emp}}$ is well behaved on the domain $\{ U : \| U \|_{\text{op}} \le 3 \}$, in that its gradient and Hessian are Lipschitz continuous. These conditions are required by many generic optimization algorithms which return approximate second order stationary points \citep{https://doi.org/10.48550/arxiv.1703.00887,NEURIPS2018_069654d5}. We establish these properties for the population loss for simplicity and leave extensions to the empirical loss for future work.

\begin{theorem} \label{theorem:lipgradhess}
Consider the population loss $f_{\text{pop}}$ in \cref{eq:fregpop}. Assume $\lambda \le \min \{ \beta, \sqrt{\beta} \}$ and $\| U_\star \|_{\text{op}} = 1$. The objective $f_{\text{pop}} (\cdot)$ defined in \cref{eq:fregpop} satisfies for any $U,V \in \mathbb{R}^{d \times k}$ such that $\| U \|_{\text{op}} , \| V \|_{\text{op}} \le 3$,
\begin{enumerate}
    \item[(a)] Lipschitz gradients: $\| \nabla f_{\text{pop}} (U) - \nabla f_{\text{pop}} (V) \|_F \lesssim \| U - V \|_F$,
    \item[(b)] Lipschitz Hessians: $\| \nabla^2 f_{\text{pop}} (U) - \nabla^2 f_{\text{pop}} (V) \|_{\text{op}} \lesssim \| U - V \|_F$.
\end{enumerate}
\end{theorem}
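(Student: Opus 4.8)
The plan is to write $f_{\text{pop}} = \cL_{\text{pop}} + \lambda\cR_\beta$ and bound the gradient- and Hessian-Lipschitz moduli of each summand separately on the convex set $\mathcal{B} = \{U : \|U\|_{\text{op}}\le 3\}$, then add. Write $M = U_\star U_\star^T$, so $\|M\|_{\text{op}} = 1$ and $\|UU^T - M\|_{\text{op}}\le 10$ on $\mathcal{B}$. The quartic term is pure polynomial algebra: from $\nabla\cL_{\text{pop}}(U) \propto (UU^T - M)U$ and the telescoping identity
\[
(UU^T - M)U - (VV^T - M)V = (UU^T-M)(U-V) + \bigl(U(U-V)^T + (U-V)V^T\bigr)V,
\]
bounding each factor on $\mathcal{B}$ gives $\|\nabla\cL_{\text{pop}}(U) - \nabla\cL_{\text{pop}}(V)\|_F \lesssim \|U-V\|_F$, which is part (a) for the first summand. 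For part (b) I would use that the Hessian operator $\nabla^2\cL_{\text{pop}}(U)$ is affine in the ``quadratic variable'' $U$: it has the form $\Delta \mapsto c_1(U\Delta^T + \Delta U^T)U + c_2(UU^T - M)\Delta$, i.e.\ $\mathcal{Q}(U,U) + \mathcal{C}$ with $\mathcal{Q}$ a fixed symmetric bilinear map and $\mathcal{C}$ the constant operator $\Delta\mapsto -c_2 M\Delta$. Hence $\nabla^2\cL_{\text{pop}}(U) - \nabla^2\cL_{\text{pop}}(V) = \mathcal{Q}(U-V,U) + \mathcal{Q}(V,U-V)$, whose operator norm is $\lesssim (\|U\|_{\text{op}} + \|V\|_{\text{op}})\|U-V\|_F \lesssim \|U-V\|_F$ on $\mathcal{B}$.

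All the real work, and the only use of the hypothesis $\lambda \le \min\{\beta,\sqrt\beta\}$, is in the regularizer. Since $\cR_\beta(U) = \sum_{i=1}^k \Lsm(Ue_i)$ is separable across columns, the matrix $\nabla^2\cR_\beta(U)$ and the tensor $\nabla^3\cR_\beta(U)$ are block-diagonal over columns, so their operator norms equal the maxima over $i$ of $\|\nabla^2_v\Lsm(Ue_i)\|_{\text{op}}$ and $\|\nabla^3_v\Lsm(Ue_i)\|_{\text{op}}$; it therefore suffices to control the single-vector map $\Lsm(v) = g(\|v\|_2^2)$ with $g(s) = s/\sqrt{s+\beta}$. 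I would compute $g'(s) = (s/2+\beta)(s+\beta)^{-3/2}$, $g''(s) = -(s/4+\beta)(s+\beta)^{-5/2}$, $g'''(s) = (3s/8+9\beta/4)(s+\beta)^{-7/2}$, and, bounding each numerator by $3(s+\beta)$, obtain the uniform estimates $|g'(s)|\le (s+\beta)^{-1/2}$, $|g''(s)|\le (s+\beta)^{-3/2}$, $|g'''(s)|\le 3(s+\beta)^{-5/2}$. Feeding these through the chain rule for $v\mapsto\|v\|^2$ (so $\nabla^2_v\Lsm = 4g''vv^T + 2g'I$ and $\nabla^3_v\Lsm = 8g'''\,v^{\otimes 3} + 4g''\,(\delta_{ik}v_j + v_i\delta_{jk} + v_k\delta_{ij})$) and using $\|v\|^2 = s \le s+\beta$ gives, uniformly in $v$,
\[
\|\nabla^2_v\Lsm(v)\|_{\text{op}} \le \frac{6}{\sqrt{s+\beta}} \le \frac{6}{\sqrt\beta}, \qquad \|\nabla^3_v\Lsm(v)\|_{\text{op}} \lesssim \frac{1}{s+\beta} \le \frac{1}{\beta}.
\]
Multiplying by $\lambda$ and invoking $\lambda \le \sqrt\beta$ for the first bound and $\lambda \le \beta$ for the second yields $\|\lambda\nabla^2\cR_\beta\|_{\text{op}}\lesssim 1$ and $\|\lambda\nabla^3\cR_\beta\|_{\text{op}}\lesssim 1$ everywhere; integrating these along the segment from $V$ to $U$ turns them into the claimed Lipschitz bounds for $\lambda\cR_\beta$. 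Summing the two contributions proves (a) and (b).

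I expect the main obstacle to be precisely the $\beta$-bookkeeping in the regularizer: the derivatives of $\Lsm$ genuinely blow up like $\beta^{-1/2}$ and $\beta^{-1}$ as $\beta\to 0$ — they must, since $\Lsm$ degenerates to the non-smooth $\|v\|_2$ — so the content of the theorem is that the chosen scaling $\lambda \le \min\{\beta,\sqrt\beta\}$ is exactly what cancels these divergent factors and leaves absolute constants. A secondary nuisance is carefully enumerating the terms of the third-derivative tensor $\nabla^3_v\Lsm$ and bounding its action on unit vectors by $|g'''(s)|\|v\|^3 + |g''(s)|\|v\|$, but each such term is elementary once the scalar estimates on $g', g'', g'''$ are in hand. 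The $\cL_{\text{pop}}$ part is entirely routine.
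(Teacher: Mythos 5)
Your proposal is correct, and its overall architecture matches the paper's: split $f_{\text{pop}}$ into $\cL_{\text{pop}}+\lambda\cR_\beta$, prove gradient-Lipschitzness of $\cL_{\text{pop}}$ by the same telescoping identity, bound $\|\nabla^2\cR_\beta\|_{\text{op}}\lesssim \beta^{-1/2}$ and the third derivative of $\cR_\beta$ by $\lesssim\beta^{-1}$ uniformly, and let $\lambda\le\sqrt{\beta}$ and $\lambda\le\beta$ absorb these factors. Where you genuinely diverge is in the two Hessian-Lipschitzness sub-steps. For $\cL_{\text{pop}}$ the paper simply cites an external result (\cite[Theorem 3]{hessianbound}, with a remark that the overparameterized case carries over), whereas you exploit that the Hessian operator $Z\mapsto 4(UU^T-U_\star U_\star^T)Z+4(UZ^T+ZU^T)U$ is a quadratic polynomial in $U$ plus a constant, so the difference telescopes bilinearly and is bounded by $(\|U\|_{\text{op}}+\|V\|_{\text{op}})\|U-V\|_F$ on the convex set $\{\|\cdot\|_{\text{op}}\le 3\}$; this makes the argument self-contained and is a clean substitute for the citation. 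For $\cR_\beta$, the paper bounds the directional derivative of the Hessian term by term (three separate estimates, each $\lesssim 1/\beta$), while you organize the same computation through the scalar function $g(s)=s/\sqrt{s+\beta}$, verify $|g'|\le(s+\beta)^{-1/2}$, $|g''|\le(s+\beta)^{-3/2}$, $|g'''|\lesssim(s+\beta)^{-5/2}$ (your explicit formulas are correct and consistent with the paper's $D(U)$, $G(U)$), and push them through the chain rule for $v\mapsto\|v\|_2^2$ together with column-separability of $\cR_\beta$; your symmetric bound $2|g'|+4|g''|\|v\|^2$ on $\|\nabla_v^2\Lsm\|_{\text{op}}$ even handles the sign of the rank-one term slightly more carefully than the paper's one-sided bound. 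The only bookkeeping point to make explicit when writing this up is that the block/column-separable structure lets you pass from the per-column bounds on $\nabla_v^2\Lsm$ and $\nabla_v^3\Lsm$ to the full matrix Hessian and third-derivative tensor (for the trilinear form, the max over columns gives an upper bound via $\sum_i\|Ze_i\|_2^2\|Z_Ue_i\|_2\le\|Z\|_F^2\|Z_U\|_F$), and that the final integration along the segment from $V$ to $U$ is legitimate because your regularizer bounds are global in $v$ and the operator-norm ball is convex for the $\cL_{\text{pop}}$ part.
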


Under the Lipschitz gradients and Lipschitz Hessian condition, a large number of algorithms in the literature show convergence to an approximate SOSP. A  common approach for finding such a point is using the noisy gradient descent algorithm \citep{chi_escaping}.
However, note that we establish these Lipschitzness properties on the bounded domain $\{ U : \| U \|_{\text{op}} \le 3 \}$, and it remains to verify whether these algorithms indeed approach such points. A similar concern is present with \Cref{alg:main}, which requires access to an optimization oracle which finds an $(\epsilon,\delta)$-approximate SOSP of $f_{\text{emp}}$ which are also bounded, in that $\| U \|_{\text{op}} \le 3$. 
The final step is to identify conditions under which these algorithms indeed output stationary points which are bounded, satisfying $\| U \|_{\text{op}} \le 3$. We establish this behavior for a wide family of perturbed gradient based methods.

\paragraph{Perturbed gradient descent:} We consider gradient descent with the following update rule: starting from the initialization $U_0$, for all $t \ge 0$,
\begin{align} \label{eq:gd}
    U_{t+1} \gets U_t - \alpha ( \nabla f_{\text{pop}} (U_t) + P_t)
\end{align}
where $P_t$ is a perturbation term, which for example, could be the explicit noise $(P_t \sim \mathrm{Unif} (\mathbb{B} (r))$ for appropriate $r$) added to escape strict saddle points in \cite{chi_escaping}. Over the course of running the update rule \cref{eq:gd}, we show that $\| U_t \|_{\text{op}}$ remains bounded under mild conditions if the algorithm is initialized within this ball. In combination with \Cref{theorem:lipgradhess}, this shows that the noisy gradient descent approach of \citep{chi_escaping} can be used to find the SOSPs required for \Cref{alg:main}.

\begin{theorem} \label{theorem:bounded}
Consider optimization of the regularized population loss using the update rule in \cref{eq:gd}. Assume that $\| U_\star \|_{\text{op}} = 1$ and suppose the parameters are selected as$\alpha \le 1/8$ and $\lambda \le \sqrt{\beta}$, and we have $\| P_t \|_{\text{op}} \le 1$ almost surely for each $t \ge 0$. Then, assuming that the condition $\| U_0 \|_{\text{op}} \le 3$ is satisfied at initialization, for every $t \ge 1$, we have $\| U_t \|_{\text{op}} \le 3$.
\end{theorem}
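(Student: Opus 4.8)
The plan is to prove $\|U_t\|_{\text{op}} \le 3$ by induction on $t$; the base case $t=0$ is the hypothesis, so it suffices to show that $\|U_t\|_{\text{op}} \le 3$ implies $\|U_{t+1}\|_{\text{op}} \le 3$. Using the normalization of the loss in \eqref{eq:gradflow-main}, the gradient of the regularized objective is $\nabla f_{\text{pop}}(U) = (UU^T - U_\star U_\star^T)U + \lambda\, U D(U)$, where $D(U)$ is the nonnegative diagonal matrix with $(i,i)$-entry $(\|Ue_i\|_2^2 + 2\beta)(\|Ue_i\|_2^2 + \beta)^{-3/2}$ --- this is exactly the gradient of the smoothed group-Lasso term $\cR_\beta$. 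Since $s \mapsto (s+2\beta)(s+\beta)^{-3/2}$ is decreasing on $[0,\infty)$ with value $2/\sqrt\beta$ at $s=0$, every diagonal entry of $D(U)$ lies in $(0, 2/\sqrt\beta]$. Substituting into \eqref{eq:gd} and rearranging gives the decomposition
\begin{equation*}
U_{t+1} = (I - \alpha U_t U_t^T)\,U_t \;+\; \alpha\, U_\star U_\star^T U_t \;-\; \alpha \lambda\, U_t D(U_t) \;-\; \alpha P_t,
\end{equation*}
and I would bound the operator norm of each of the four pieces.

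The last three pieces are controlled directly using the inductive hypothesis: $\alpha\|U_\star U_\star^T U_t\|_{\text{op}} \le \alpha \|U_\star\|_{\text{op}}^2 \|U_t\|_{\text{op}} \le 3\alpha$; $\alpha\lambda\|U_t D(U_t)\|_{\text{op}} \le \alpha\lambda\|U_t\|_{\text{op}}\,(2/\sqrt\beta) \le 6\alpha$, using $\lambda \le \sqrt\beta$; and $\alpha\|P_t\|_{\text{op}} \le \alpha$. The first piece is the heart of the argument. Writing a singular value decomposition $U_t = \sum_i \sigma_i a_i b_i^T$, one has $(I - \alpha U_t U_t^T)U_t = \sum_i \sigma_i(1 - \alpha \sigma_i^2)\, a_i b_i^T$, so $\|(I - \alpha U_t U_t^T)U_t\|_{\text{op}} = \max_i |\sigma_i - \alpha\sigma_i^3| \le \max_{\sigma \in [0,3]} |\sigma - \alpha\sigma^3|$, using $\sigma_i \le \|U_t\|_{\text{op}} \le 3$. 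An elementary analysis of the cubic $\sigma \mapsto \sigma - \alpha \sigma^3$ on $[0,3]$ --- whose only relevant candidate maximizers of $|\cdot|$ are the interior critical point $\sigma = 1/\sqrt{3\alpha}$ (when it is $\le 3$) and the endpoint $\sigma = 3$ --- shows that for $\alpha \le 1/8$,
\begin{equation*}
\max_{\sigma \in [0,3]} |\sigma - \alpha\sigma^3| \;\le\; 3 - 10\alpha.
\end{equation*}
Combining the four bounds by the triangle inequality yields $\|U_{t+1}\|_{\text{op}} \le (3 - 10\alpha) + 3\alpha + 6\alpha + \alpha = 3$, which closes the induction.

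The main obstacle is the cubic bound, and it is exactly where the step-size restriction $\alpha \le 1/8$ is needed: along the top singular direction of $U_t$, one gradient step scales the singular value $\sigma_1$ by $1 - \alpha\sigma_1^2$, which can become negative with large magnitude (an ``overshoot'') if $\alpha\sigma_1^2$ is too large, and since $\sigma_1^2$ can be as large as $9$ this is what forces $\alpha$ small. Taking $\alpha \le 1/8$ not only keeps this contraction factor from blowing up on $[0,3]$ but leaves a deficit of $10\alpha$ below $3$, which is precisely what is needed to absorb the additive drift $3\alpha + 6\alpha + \alpha$ coming from the ground-truth term, the regularizer, and the perturbation. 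Two features of the statement are used crucially here: the perturbation enters multiplied by $\alpha$ (an unscaled perturbation of norm $\le 1$ would break the bound), and the hypotheses $\|U_\star\|_{\text{op}} = 1$ and $\lambda \le \sqrt\beta$ --- together with the uniform bound $\|D(U_t)\|_{\text{op}} \le 2/\sqrt\beta$ --- are exactly what make the ground-truth and regularizer drift terms $O(\alpha)$ rather than larger.
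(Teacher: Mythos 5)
Your proof is correct, and it uses the same basic decomposition as the paper: a gradient step is split into the contraction piece $(I-\alpha U_tU_t^T)U_t$, the ground-truth piece $\alpha U_\star U_\star^T U_t$, the regularizer piece $\alpha\lambda U_tD(U_t)$ with $\|D(U_t)\|_{\text{op}}\le 2/\sqrt{\beta}$, and the perturbation $\alpha P_t$, each controlled in operator norm. Where you genuinely differ is in how the contraction piece is handled and in the induction structure. The paper bounds $\|(I-\alpha U_tU_t^T)U_t\|_{\text{op}}$ by $\|U_t\|_{\text{op}}(1-\alpha\|U_t\|_{\text{op}}^2)$ via a monotonicity claim for $x(1-\alpha x^2)$ and then runs a two-regime drift argument: when $\|U_t\|_{\text{op}}\le 2$ the norm grows by at most $O(\alpha)$ per step (so a single crossing of the level $2$ lands no higher than $3$), and when $\|U_t\|_{\text{op}}\in[2,4]$ the norm contracts by a factor $(1-\alpha)$. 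You instead prove the one-step invariance $\|U_t\|_{\text{op}}\le 3\Rightarrow\|U_{t+1}\|_{\text{op}}\le 3$ by maximizing $|\sigma-\alpha\sigma^3|$ over $[0,3]$ (interior critical point $1/\sqrt{3\alpha}$ versus the endpoint $\sigma=3$), obtaining the slack $3-10\alpha$ that exactly absorbs the $3\alpha+6\alpha+\alpha$ drift. Your route buys two things: it retains the absolute value and so correctly treats the overshoot case $1-\alpha\sigma_i^2<0$, which the paper's expression $\max_i\sigma_i(1-\alpha\sigma_i^2)$ and its monotonicity interval gloss over (the cubic is increasing only up to $1/\sqrt{3\alpha}$, which is below $3$ for $\alpha>1/27$); and it works directly from the stated hypothesis $\|U_0\|_{\text{op}}\le 3$, whereas the paper's closing argument is phrased for $\|U_0\|_{\text{op}}\le 2$. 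The one step you should spell out is the cubic bound itself: for $\alpha\le 1/27$ the maximum is $3-27\alpha\le 3-10\alpha$, while for $\alpha\in[1/27,1/8]$ one checks $2/(3\sqrt{3\alpha})\le 3-10\alpha$ and $|3-27\alpha|\le 3-10\alpha$, e.g.\ by noting $(3-10\alpha)\sqrt{3\alpha}$ is unimodal on this interval and exceeds $2/3$ at both endpoints; with that verification included, the argument is complete.
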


Thus, when perturbed gradient descent is carried out to optimize the regularized loss $f_{\text{pop}}$ \cref{eq:fregpop}, the algorithm always returns bounded iterates. In conjunction with \Cref{theorem:lipgradhess}, this implies, for example, that noisy gradient descent \cite{chi_escaping} converges to an approximate second-order stationary point of the regularized population objective.

\section{Conclusion}

In this paper, we studied the efficacy of the greedy pruning + fine-tuning pipeline in learning low-complexity solutions for the matrix sensing problem, as well as for learning shallow neural networks with quadratic activation functions. We showed that training on the mean squared error augmented by a natural group Lasso regularizer results in models which are suitable for greedy pruning. Given sufficiently many samples, after pruning away the columns below a certain $\ell_2$-norm threshold, we arrived at a solution with the correct column sparsity of $r$. Running a few iterations of gradient descent to fine-tune the resulting model, the population loss was shown to converge at a linear rate to an error floor of $O(\sqrt{rd/n})$, which is also statistically optimal. We also presented a negative result showing the importance of regularization while training the model. To the best of our knowledge, our results provide the first theoretical guarantee on the generalization error of the model obtained via the greedy pruning + fine-tuning framework.

\section*{Acknowledgements}
The research of A. Mokhtari is supported in part by NSF Grants 2007668 and 2127697, ARO Grant W911NF2110226, the National AI Institute for Foundations of Machine Learning (IFML), the Machine Learning Lab (MLL), and the Wireless Networking and Communications Group (WNCG) industrial affiliates program at UT Austin. Kannan Ramchandran would like to acknowledge support from NSF CIF-2002821 and ARO fund 051242-00.

{
\bibliographystyle{unsrt}
\bibliography{references}
}

\newpage
\appendix

\section*{\begin{center} Supplementary material \end{center}}
\startcontents[sections]
\printcontents[sections]{l}{0}{\setcounter{tocdepth}{2}}

\addtocontents{toc}{\protect\setcounter{tocdepth}{3}}

\section{Failure of implicit regularization in pruning: Proof of \Cref{theorem:lb-implicit}}
\label{app:1}

In this section, we prove the negative result showing that running gradient descent trained with implicit regularization results in dense networks that are not compatible with greedy pruning.

\noindent For scale reasons, and to simplify the exposition, we will assume that $\| U_\star \|_2 = 1$ through the remainder of this proof. The gradient flow update rule can be written as,
\begin{align} \label{eq:gradflow}
    \frac{dU}{dt} = -\nabla \cL (U) = - (UU^T - U_\star U_\star^T) U
\end{align}
where $U = U(t)$ is the weight matrix at time $t$.
Define the column vector $r(t) = U^T U_\star \in \mathbb{R}^{k}$ capturing the correlation/alignment between the weights on the neurons of $U_t$ with the ground truth $U_\star$, the signal component. Wherever convenient, we will drop the time argument in $r(t)$ and simply refer to it as $r$.

We will show that given that gradient flow is run starting from a small initialization, at convergence, the column norms $\| U e_i \|_2$'s approximately distribute themselves proportional to the alignment of the corresponding column of $U$ with $U_\star$ initially at $t=0$. Namely,
\begin{align} \label{eq:similar-dist}
    \lim_{t \to \infty} \| U (t) e_i \|_2 \propto |\langle r(0),e_i \rangle|
\end{align}
Given a random initialization where the entries of $U (0)$ are i.i.d., $|\langle r(0), e_i \rangle|$ for each $i$ are independent. Moreover, when $U (0)$ follows a Gaussian distribution, we expect no single coordinate $|\langle r(0), e_i \rangle| = |\langle (U(0))^T U_\star, e_i \rangle|$ to be much larger than the others, a statement which we formally prove in \Cref{lemma:unidist}. Combining this with \cref{eq:similar-dist} results in a proof of \Cref{theorem:lb-implicit}.\\

\noindent The rest of this section is devoted to proving a formal version of \cref{eq:similar-dist}, which we state below.

\begin{lemma} \label[lemma]{lemma:ratiopreserved}
Suppose the initialization scale parameter $\alpha \le c \eta^2 / k^3d \log(kd)$ for a sufficiently small absolute constant $c > 0$. With probability $\ge 1 - O(\frac{1}{k})$ over the initialization, for each column $i \in [k]$,
\begin{align} \label{eq:13}
    (1 - 5 \eta) \frac{|\langle r(0), e_i \rangle|}{\| r(0) \|_2} \le \lim_{t \to \infty} \| U(t) e_i \|_2 \le \frac{|\langle r(0), e_i \rangle|}{\| r(0) \|_2} (1 + 4 \sqrt{\eta})
\end{align}
\end{lemma}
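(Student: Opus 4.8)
The plan is to exploit the rank-one structure of $U_\star U_\star^\top$ to reduce the gradient flow \eqref{eq:gradflow} to a low-dimensional autonomous system in which the claim becomes transparent. Decompose every iterate as $U(t) = U_\star\, r(t)^\top + W(t)$, where $r(t) = U(t)^\top U_\star$ is the correlation vector and $W(t) = (I - U_\star U_\star^\top)U(t)$ collects the components of the columns orthogonal to $U_\star$, so that $W^\top U_\star = 0$. Substituting into \eqref{eq:gradflow} and using $\|U_\star\|_2 = 1$, a short computation yields the closed system
\[
\dot r \;=\; \big[(1 - \|r\|_2^2)\,I - N\big]\,r, \qquad \dot N \;=\; -\big(rr^\top N + Nrr^\top + 2N^2\big), \qquad N := W^\top W \succeq 0,
\]
which lives in dimension $O(k^2)$; the ambient dimension $d$ enters only through the initialization. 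Standard Gaussian tail bounds (cf.\ \Cref{lemma:unidist}) show that on an event of probability $\ge 1 - O(1/k)$ we have $\|r(0)\|_2 = (1\pm o(1))\alpha\sqrt{k}$, $\min_i|r_i(0)| \ge \alpha/k^2$, $\mathrm{tr}(N(0)) = \|W(0)\|_F^2 \lesssim \alpha^2 dk$, and $\|N(0)\|_{\mathrm{op}} \lesssim \alpha^2 d$; condition on this event henceforth.

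Two monotonicity facts organize the argument. First, $\tfrac{d}{dt}\mathrm{tr}(N) = -2r^\top N r - 2\|N\|_F^2 \le 0$ and $\tfrac{d}{dt}\|N\|_{\mathrm{op}} \le -2\|N\|_{\mathrm{op}}^2$, so $\|W(t)\|_F$ is non-increasing and $\|N(t)\|_{\mathrm{op}} \le \min\{\|N(0)\|_{\mathrm{op}}, 1/(2t)\} \to 0$. Second, $\tfrac{d}{dt}\|r\|_2^2 = 2(1-\|r\|_2^2)\|r\|_2^2 - 2r^\top N r$, which is $\le 0$ whenever $\|r\|_2^2 = 1$, so $\|r(t)\|_2^2 \le 1$ throughout; hence $\|U(t)\|_{\mathrm{op}} \le \|r(t)\|_2 + \|W(t)\|_{\mathrm{op}} \le 2$. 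Boundedness plus real-analyticity of $\mathcal{L}_{\mathrm{pop}}$ gives convergence $U(t)\to U_\infty$ by the {\L}ojasiewicz inequality, and since $N(t)\to 0$ we get $W_\infty = 0$, hence $U_\infty = U_\star r_\infty^\top$; feeding this into the stationarity equation $(U_\infty U_\infty^\top - U_\star U_\star^\top)U_\infty = 0$ forces $\|r_\infty\|_2 = 1$ (the alternative $r_\infty=0$ is ruled out because $\|r(t)\|_2^2$ increases away from $0$ out of the initialization). In particular $\lim_{t\to\infty}\|U(t)e_i\|_2 = |r_i(\infty)|$ \emph{exactly}, so the lemma is equivalent to $|r_i(\infty)| = (1\pm O(\eta))\,|r_i(0)|/\|r(0)\|_2$.

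To track ratios it is natural to follow the scale-invariant quantities $q_i(t) := r_i(t)^2/\|r(t)\|_2^2$, the fraction of signal mass on column $i$. For the idealized dynamics $\dot r = (1-\|r\|_2^2)r$ (i.e.\ $N\equiv 0$) every $q_i$ is constant and $\|r\|_2^2$ is a scalar logistic tending to $1$, which is exactly the heuristic \eqref{eq:similar-dist}. For the true system one computes $\tfrac{d}{dt}\log q_i = \tfrac{2\,r^\top N r}{\|r\|_2^2} - \tfrac{2\,(Nr)_i}{r_i}$, whence the crude bound $\big|\tfrac{d}{dt}\log q_i\big| \le 4\|N(t)\|_{\mathrm{op}}/\sqrt{q_i(t)}$. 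Since $q_i(\infty) = |r_i(\infty)|^2$ and $q_i(0) = (|r_i(0)|/\|r(0)\|_2)^2$, it suffices to show $\int_0^\infty \big|\tfrac{d}{dt}\log q_i\big|\,dt \ll \eta$ for every $i$; the requirement $\alpha \le c\eta^2/(k^3 d\log(kd))$, together with $q_i(0)\gtrsim 1/k^5$ on the good event, then turns the accumulated drift into the claimed $O(\eta)$ multiplicative error. I would establish the integral bound by a bootstrap on $q_i(t)\ge q_i(0)/2$ combined with a two-phase split: for $t$ up to the first time $T_1\asymp\log(1/\alpha)$ at which $\|r\|_2^2=\tfrac12$, use $\|N(t)\|_{\mathrm{op}}\le\|N(0)\|_{\mathrm{op}}\lesssim\alpha^2 d$; for $t>T_1$, use that $r^\top N r$ and $(Nr)_i$ decay exponentially because once $\|r\|_2^2$ is bounded away from $0$ the term $-(rr^\top N + Nrr^\top)$ in $\dot N$ exponentially suppresses the part of $N$ aligned with $r$ (equivalently, $r$ and its direction $\hat r$ converge exponentially fast to the fixed point).

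\emph{Main obstacle.} The one-line bound $\big|\tfrac{d}{dt}\log q_i\big| \lesssim \|N\|_{\mathrm{op}}/\sqrt{q_i}$ is \emph{not} integrable in time: generically $\|N(t)\|_{\mathrm{op}}$ decays only like $1/t$ along directions orthogonal to the eventual signal direction, so $\int_0^\infty\|N\|_{\mathrm{op}}\,dt = \infty$. The crux of the proof — its most delicate part — is to show that the components of $N$ that actually feed back into the $r$-dynamics, namely $r^\top N r$ and the coordinates $(Nr)_i$, decay \emph{exponentially} in the saturation/local phase, which requires simultaneously controlling $\|r\|_2^2$, the direction $\hat r$, and $Nr$ through the intermediate regime $\|r\|_2^2\in[\tfrac12,1)$. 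Everything else — the change of variables, the monotonicity inequalities, convergence via {\L}ojasiewicz, and the Gaussian concentration of the initialization — is comparatively routine bookkeeping, and the smallness of $\alpha$ is exactly what is needed to absorb the resulting error terms into $\eta$.
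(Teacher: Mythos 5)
Your decomposition is the same as the paper's, merely re-packaged: your $W$ is the paper's $E$, your $N = W^\top W$ is $E^\top E$, and your pair of evolution equations $\dot r = [(1-\|r\|_2^2)I - N]\,r$ and $\dot N = -(rr^\top N + N rr^\top + 2N^2)$ are exactly \cref{eq:rtdiff,eq:Etdiff} written in terms of $N$ rather than $E$. The scale-invariant quantities $q_i = r_i^2/\|r\|_2^2$ and the identity $\frac{d}{dt}\log q_i = \frac{2 r^\top N r}{\|r\|_2^2} - \frac{2(Nr)_i}{r_i}$ are a cleaner way to state what the paper tracks as $|\langle r(t), e_i\rangle|$ via its two-stage integrations in \cref{lemma:rti-lb,lemma:rti-ub}; the convergence argument via {\L}ojasiewicz plus $\mathrm{tr}(N)\to 0$ to conclude $\lim_t\|U(t)e_i\|_2 = |r_i(\infty)|$ exactly is also a cosmetic improvement over the paper's ``$\|U e_i\|_2 \approx |r_i|$ up to $\|E(0)\|_F$.'' None of this is a different route; the engine is identical.

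The genuine gap is the one you flag yourself. The entire technical content of the paper's proof --- \cref{lemma:linear-slow,lemma:Er-bound,lemma:rtlb,lemma:rtlb-old} and the two integration lemmas --- is precisely a proof that $r^\top N r = \|Er\|_2^2$ and $(Nr)_i$ decay exponentially once $\|r\|_2^2$ passes $1/2$, together with a bootstrap ensuring the pre-saturation drift does not already destroy the ratios. You write ``I would establish the integral bound by a bootstrap\ldots'' and then declare in the \emph{Main obstacle} paragraph that this is the crux, without supplying it. That is not a detail: the naive bound $|\frac{d}{dt}\log q_i| \lesssim \|N\|_{\mathrm{op}}/\sqrt{q_i}$ is, as you note, non-integrable because $\|N\|_{\mathrm{op}}$ decays only as $1/t$, so without the sharper estimate the argument proves nothing. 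The paper closes exactly this hole: differentiating $Er$ gives $\frac{d}{dt}(Er) = -2E(E^\top E)r + Er(1 - 2\|r\|_2^2)$, so once \cref{lemma:linear-slow} pins $\|r\|_2^2 \ge 3/4$, the coefficient $1 - 2\|r\|_2^2 \le -1/2$ furnishes exponential contraction of $\|Er\|_2$ (\cref{lemma:Er-bound}); this bound then has to be fed back \emph{simultaneously} with a refined lower bound on $\|r(t)\|_2^2$ (\cref{lemma:rtlb,lemma:rtlb-old}) to keep the denominators $r_i$ under control through the saturation phase. Until you actually carry out that bootstrap, the proposal is a correct outline of the paper's strategy, not a proof of the lemma.
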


As a counterpart to the signal component $r (t)$, the noise component is $E(t) = (I - U_\star U_\star^T ) U$. At a high level, the proof of \Cref{lemma:ratiopreserved} will be to establish that with a small initialization, the noise component satisfies $E \approx 0$ approximately, while the signal component $\| r (t) \|_2$ grows exponentially fast to $1$ with $t$, and $\frac{d\langle r(t), e_i \rangle}{dt} \approx  \langle r(t), e_i \rangle$ until $\| r(t) \|_2$ gets sufficiently close to $1$. This will imply that $\langle r(t), e_i \rangle \approx \langle r(0), e_i \rangle e^t$ until then, and by extension,
\begin{align} \label{eq:ratiopreserved}
    \frac{\langle r(t), e_i \rangle}{\langle r(t), e_j \rangle } \approx \frac{\langle r(0), e_i \rangle}{\langle r(0), e_j \rangle }.
\end{align}
Since the noise component $E \approx 0$, $\| U e_i \|_2 \approx |\langle r(t), e_i \rangle|$, plugging which into \cref{eq:ratiopreserved} results in the ratio form of \Cref{lemma:ratiopreserved}. By the gradient flow equation \cref{eq:gradflow}, it is a short calculation to see that $r$ and $E$ evolve per the following differential equations,

\begin{minipage}[t]{.47\linewidth}
\begin{align}
    \frac{d r}{dt} &= r(1 - \| r \|_2^2) - E^T E r. \label{eq:rtdiff}
\end{align}
\end{minipage}
\begin{minipage}[t]{.47\linewidth}
\begin{align}
    \frac{d E}{dt} &= - E U^T U. \label{eq:Etdiff}
\end{align}
\end{minipage}

\vspace{1.5em}
\noindent In the sequel, we will show that as a function of $t$, $E^T E r$ decays linearly. The resulting differential equation $\frac{dr}{dt} \approx r ( 1 - \| r \|_2^2)$ shows linear convergence of $r$ until the point when $\| r \|_2$ approaches $1$. We place these intuitions formally by discussing a formal proof of \Cref{lemma:ratiopreserved}.

\subsection{Proof outline of \Cref{lemma:ratiopreserved}} Firstly, as a function of the scale parameter $\alpha$, we bound the energy of the signal and noise components at initialization in \Cref{lemma:init}. In \Cref{lemma:Etdec} we first establish that $\| E (t) \|_2$ does not increase with time. In \Cref{lemma:rtub}, we establish an upper bound on the signal norm $\| r(t) \|_2$ as a function of time. This shows that the signal energy cannot grow to be $1$ too rapidly, which is necessary to show that the error term $E^T E r$ has sufficient time to decay to $0$. In \Cref{lemma:linear-slow} we show that the signal norm $\| r(t) \|_2^2$ does not fall below a threshold of $3/4$ after $t$ grows to be sufficiently large. This is essential in proving \Cref{lemma:Er-bound} which shows that the norm of the error term $\| E^T E r\|_2$ in \cref{eq:rtdiff} begins decaying after $\| r(t) \|_2^2$ becomes larger than $1/2$. Finally, in \Cref{lemma:rtlb,lemma:rtlb-old} we use these results to prove a refined bound on the rate at which $\|r (t) \|_2 \to 1$. These results are collectively used in \Cref{lemma:rti-lb,lemma:rti-ub} to prove the upper and lower bounds on $|\langle r(t), e_i \rangle| \approx \| U e_i \|_2$. \\

\subsection{Understanding gradient flow: Proof of \Cref{lemma:ratiopreserved}}

In this section we prove \Cref{lemma:ratiopreserved} formally.  First we establish bounds on the scale of parameters at initialization.
\begin{lemma} \label[lemma]{lemma:init}
With probability $\ge 1 - O(\frac{1}{k})$, at initialization,
\begin{align}
    &\| E(0) \|_F^2 , \| r(0) \|_2^2 \le 2 \alpha^2 k d \\
    & \| r(0) \|_2^2 \ge  \cdot \left[ \frac{k\alpha^2}{10}, 10 k\alpha^2 \right] \\
    &\forall i \in [k], |\langle r(0), e_i \rangle| \ge \frac{\alpha}{k^{2}}
\end{align}
\end{lemma}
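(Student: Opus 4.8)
The plan is to prove \Cref{lemma:init} by standard Gaussian concentration applied to the three quantities $\|E(0)\|_F^2$, $\|r(0)\|_2^2$, and the individual coordinates $\langle r(0), e_i\rangle$. Recall $U(0) \in \mathbb{R}^{d\times k}$ has i.i.d. $\mathcal{N}(0,\alpha^2)$ entries and $r(0) = U(0)^T U_\star \in \mathbb{R}^k$ with $\|U_\star\|_2 = 1$, while $E(0) = (I - U_\star U_\star^T) U(0)$.

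First I would handle the upper bounds. Since $\|E(0)\|_F^2 + \|r(0)\|_2^2 = \|U(0)\|_F^2$ when $\|U_\star\|_2=1$ (because $E(0)$ and $U_\star U_\star^T U(0)$ are orthogonal projections of $U(0)$ onto complementary subspaces), it suffices to bound $\|U(0)\|_F^2$, which is $\alpha^2$ times a $\chi^2$ random variable with $kd$ degrees of freedom. A Laurent–Massart / Bernstein-type tail bound gives $\|U(0)\|_F^2 \le 2\alpha^2 kd$ with probability $\ge 1 - e^{-\Omega(kd)}$, which dominates the claimed failure probability. This yields the first displayed inequality for both $\|E(0)\|_F^2$ and $\|r(0)\|_2^2$ simultaneously.

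Next, for the two-sided bound on $\|r(0)\|_2^2$: each coordinate $\langle r(0), e_i\rangle = \langle U(0)e_i, U_\star\rangle$ is, for fixed unit vector $U_\star$, distributed as $\mathcal{N}(0,\alpha^2)$, and these are independent across $i$ since the columns $U(0)e_i$ are independent. Hence $\|r(0)\|_2^2 / \alpha^2$ is $\chi^2_k$, and a Chernoff bound on a $\chi^2_k$ variable gives $\|r(0)\|_2^2 \in [k\alpha^2/10, 10 k\alpha^2]$ with probability $\ge 1 - e^{-\Omega(k)}$. For the per-coordinate lower bound, each $\langle r(0), e_i\rangle$ is a centered Gaussian with standard deviation $\alpha$, so $\Pr[|\langle r(0), e_i\rangle| \le \alpha/k^2] \le \frac{2}{\sqrt{2\pi}} \cdot \frac{1}{k^2} \le \frac{1}{k^2}$ using the anti-concentration bound $\Pr[|Z|\le t] \le t\sqrt{2/\pi}$ for $Z\sim\mathcal{N}(0,1)$; a union bound over $i \in [k]$ gives failure probability $\le 1/k$, matching the stated $O(1/k)$. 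The final step is to union bound all these events together.

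The routine calculations are the $\chi^2$ tail bounds and Gaussian anti-concentration, none of which is subtle. The only mild point of care is the first display: one must observe that with $\|U_\star\|_2 = 1$ the map $U \mapsto U_\star U_\star^T U$ is an orthogonal projection (onto the span of $U_\star$ in the column space), so $\|r(0)\|_2 = \|U_\star U_\star^T U(0)\|_F$ and the Pythagorean identity $\|U(0)\|_F^2 = \|E(0)\|_F^2 + \|r(0)\|_2^2$ holds exactly; this reduces both upper bounds to a single $\chi^2_{kd}$ estimate rather than requiring separate arguments. There is no real obstacle here — \Cref{lemma:init} is purely a concentration-at-initialization statement, and the work is bookkeeping the constants so that $\alpha \le c\eta^2/k^3 d\log(kd)$ is not even needed (the lemma's guarantees hold for any $\alpha$, since they are scale-invariant in the normalized quantities), with the smallness of $\alpha$ only entering later lemmas.
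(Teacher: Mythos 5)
Your proposal is correct and follows essentially the same route as the paper: a $\chi^2_{kd}$ tail bound on $\|U(0)\|_F^2$ for the upper bounds, $\chi^2_k$ concentration for the two-sided bound on $\|r(0)\|_2^2$, and per-coordinate Gaussian anti-concentration plus a union bound for the $\alpha/k^2$ lower bound. The Pythagorean identity you note is a slight refinement of the paper's simpler observation that both $\|E(0)\|_F^2$ and $\|r(0)\|_2^2$ are bounded by $\|U(0)\|_F^2$, but the argument is otherwise identical.
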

\begin{proof}
Recall that $\langle r, e_i \rangle$ for each $i \in [k]$ is distributed $\sim \mathcal{N} (0, \alpha^2)$. By Gaussian anti-concentration \cite{hdp} and union bounding, $\prob{ \min_{i \in [k]} |\langle r(0), e_i \rangle| \ge \frac{\alpha}{k^{2}}} \gtrsim \frac{1}{k}$ simultaneously for all $i \in [k]$. On the other hand, since every entry of $U(0)$ is iid distributed $\sim \mathcal{N} (0,\alpha^2)$, by tail bounds for $\chi^2$-distributed random variables \cite[Lemma 1]{10.1214/aos/1015957395}, $\| E(0) \|_F^2 \le \| U (0) \|_F^2 \le 2\alpha^2 k d$ with probability $\ge 1 - \exp (-kd) \ge 1 - O(\frac{1}{k})$. The same bound applies for $\| r(0) \|_2^2$ since it is also upper bounded by $\| U (0) \|_F^2$. Finally, the upper and lower bounds on $\| r(0) \|_2$ follows by noting that $\| r(0) \|_2^2 = \sum_{i=1}^{k} \langle U_\star, U e_i \rangle^2$, which concentrates around $k \alpha^2$. The result directly follows by concentration of $\chi^2$-random variables.
\end{proof}

\noindent As a corollary of \Cref{lemma:init} when the intialization parameter satisfies the upper bound in \Cref{lemma:ratiopreserved}, $\| E(0) \|_F$ and $ \| r(0) \|_2$ are upper bounded by small constants. We will use this fact several times in proving \Cref{lemma:ratiopreserved}. Next we establish that the error $\| E \|_F$ does not grow with time.

\begin{lemma} \label[lemma]{lemma:Etdec}
At any time $t \ge 0$, $\| E(t) \|_F \le \| E(0) \|_F$.
\end{lemma}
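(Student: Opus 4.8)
\textbf{Proof plan for \Cref{lemma:Etdec}.} The statement is a one-line Lyapunov argument built on the evolution equation \cref{eq:Etdiff}. The plan is to differentiate the squared Frobenius norm of the noise component along the flow and show it is non-increasing. Concretely, since $\| E(t) \|_F^2 = \tr ( E^T E )$ and everything in sight is smooth in $t$, I would compute
\begin{align}
    \frac{d}{dt} \| E(t) \|_F^2 = 2 \tr \! \left( E^T \frac{dE}{dt} \right) = - 2 \tr \! \left( E^T E U^T U \right) = - 2 \tr \! \left( (U^T U)(E^T E) \right),
\end{align}
using the evolution equation $\frac{dE}{dt} = - E U^T U$ from \cref{eq:Etdiff} in the second step and cyclicity of trace in the third.

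The key observation is then that both $U^T U$ and $E^T E$ are positive semidefinite matrices in $\mathbb{R}^{k \times k}$, and the trace of the product of two PSD matrices is nonnegative: writing $\tr((U^TU)(E^TE)) = \tr\big( (E^TE)^{1/2} (U^TU) (E^TE)^{1/2} \big) \ge 0$ since the argument is PSD. Hence $\frac{d}{dt} \| E(t) \|_F^2 \le 0$ for all $t \ge 0$, so $\| E(t) \|_F^2$ is a non-increasing function of $t$, which immediately gives $\| E(t) \|_F \le \| E(0) \|_F$ as claimed.

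There is essentially no hard step here; the only things to be careful about are (i) reconfirming the derivation of \cref{eq:Etdiff}, namely that $\frac{dE}{dt} = (I - U_\star U_\star^T)\frac{dU}{dt} = - E U^T U$, which uses $(I - U_\star U_\star^T) U_\star = 0$ (valid since $\| U_\star \|_2 = 1$ so $U_\star^T U_\star = 1$), and (ii) noting that the gradient flow solution exists and stays smooth for all $t \ge 0$ so that differentiating under the trace is legitimate — this is standard given that $\| U(t) \|$ remains bounded along the flow. If one prefers to avoid the PSD-trace fact, an equivalent route is to write $\tr(E^T E U^T U) = \| U^T E^T \|_F^2 \ge 0$ after one more application of cyclicity, which makes the non-negativity manifest.
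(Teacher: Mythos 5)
Your proposal is correct and matches the paper's proof essentially verbatim: differentiate $\| E(t) \|_F^2$, use $\frac{dE}{dt} = -EU^TU$ to get $\frac{d}{dt}\| E \|_F^2 = -2\tr(E^TEU^TU)$, and conclude non-positivity (the paper writes this as $-2\| UE^T \|_F^2 \le 0$, which is the same as your PSD-trace observation). One trivial slip in your closing remark: the expression $\| U^T E^T \|_F^2$ is dimensionally ill-formed for $U, E \in \mathbb{R}^{d \times k}$; it should read $\| E U^T \|_F^2$ (equivalently $\| U E^T \|_F^2$).
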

\begin{proof}
The proof follows by showing that the time derivative of $\| E(t) \|_F^2$ is non-positive. Indeed,
\begin{align}
    \frac{d \| E \|_F^2}{dt} = 2\left\langle E, \frac{dE}{dt} \right\rangle 
    = - 2\tr ( E^T E U^T U )  
    = - 2\| U E^T \|_F^2 
    \le 0.
\end{align}    
\end{proof}

\noindent Starting from a small initialization, this means that the error matrix $E$ remains small over the course of gradient flow. In for any coordinate $i \in [k]$, by \cref{eq:rtdiff}, the differential equation governing $\langle r, e_i \rangle = \langle r(t), e_i \rangle$ is,
\begin{align}
    \frac{\langle r, e_i \rangle}{dt}&= \langle r, e_i \rangle (1 - \| r \|_2^2) - e_i^T E^T E r.
\end{align}
By \Cref{lemma:Etdec} we expect the error term $E^T E r$ to be small (not necessarily decaying) in comparison with the first term as long as $\| r(t) \|_2$ is smaller than an absolute constant. In this regime, the differential form is easy to control since we expect $\langle r, e_i \rangle$ to grow linearly and across the different coordinates $i \in [k]$, we expect its value to be proportional to that at initialization, $\langle r(0), e_i \rangle$.

However, when $\| r \|_2$ eventually approaches $1$, the relative contribution of the signal term $\langle r, e_i \rangle (1 - \| r \|_2^2)$ and the error term $E^T E r$ become important. This is the main technical challenge in proving \Cref{lemma:ratiopreserved}. We will show that even as $t \to \infty$, the error term cannot change $\langle r, e_i \rangle$ by more than a constant factor. For a sufficiently large constant $C > 0$, define,
\begin{align}
    T_0 \triangleq - \frac{1}{2} \frac{\log ( \| r (0) \|_2^2) - 1}{1 - \| E(0) \|_F^2}.
\end{align}
As we will later show, $T_0$ controls the amount of time it takes the signal norm $\| r(t) \|_2$ to grow to an absolute constant. Firstly, we show that when the initialization scale $\alpha$ is sufficiently small, $T_0$ is approximately $- \log \| r(0) \|_2$.

\begin{lemma} \label[lemma]{lemma:T0bound}
When the initialization scale $\alpha \le c/k^3 d \log(kd)$ for a sufficiently small absolute constant $c > 0$,
    \begin{align} \label{eq:T0_bound_main}
        -\log (\| r (0) \|_2) + \frac{1}{2} \le T_0 \le -\log (\| r (0) \|_2) + \frac{3}{2}.
\end{align}
\end{lemma}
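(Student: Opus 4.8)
The plan is to carry out the whole argument on the high-probability event furnished by \Cref{lemma:init}, on which $\|E(0)\|_F^2 \le 2\alpha^2 k d$ and $\|r(0)\|_2^2 \in [k\alpha^2/10,\, 10k\alpha^2]$; on that event everything is deterministic. Rewriting the definition of $T_0$ using $\log(\|r(0)\|_2^2) = 2\log\|r(0)\|_2$, we have
\[
    T_0 = \frac{-\log\|r(0)\|_2 + \tfrac12}{1 - \|E(0)\|_F^2},
\]
so \cref{eq:T0_bound_main} is really a statement about how close the denominator is to $1$.

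The first step is to observe that the hypothesis $\alpha \le c/(k^3 d \log(kd))$ with $c$ small forces $\|E(0)\|_F^2 \le 2\alpha^2 k d \le 2c^2 \ll 1$; in particular $1 - \|E(0)\|_F^2 \in (0,1]$, and the numerator above is positive since $\|r(0)\|_2 \le \alpha\sqrt{2kd} < 1$, so $T_0$ is a well-defined positive quantity. The lower bound in \cref{eq:T0_bound_main} is then immediate: dividing the positive number $-\log\|r(0)\|_2 + \tfrac12$ by a quantity in $(0,1]$ can only increase it, giving $T_0 \ge -\log\|r(0)\|_2 + \tfrac12$.

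For the upper bound, I would set $a \triangleq -\log\|r(0)\|_2 + \tfrac12 > 0$ and $\epsilon \triangleq \|E(0)\|_F^2 \le \tfrac12$, so that $T_0 = a/(1-\epsilon) \le a(1+2\epsilon) = a + 2a\epsilon$, using $1/(1-\epsilon)\le 1+2\epsilon$ for $\epsilon \le \tfrac12$. It then suffices to check $2a\epsilon \le 1$. Using $\|r(0)\|_2 \ge \alpha\sqrt{k/10}$ from \Cref{lemma:init} one gets $a \le \log(1/\alpha) + 2$, while $\epsilon \le 2\alpha^2 kd$, so $2a\epsilon \le 4 k d\,\alpha^2(\log(1/\alpha)+2)$. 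Substituting the bound on $\alpha$ makes $\alpha^2$ beat the $k^6 d^2\log^2(kd)$ appearing in the denominator, while $\log(1/\alpha) = O(\log(kd))$, so $2a\epsilon = O\!\big(c^2/(k^5 d\log(kd))\big) \le 1$ once $c$ is a sufficiently small absolute constant. This yields $T_0 \le a + 1 = -\log\|r(0)\|_2 + \tfrac32$, the upper bound in \cref{eq:T0_bound_main}.

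There is no genuine obstacle here; the only care required is the bookkeeping of polylogarithmic factors — specifically, verifying that $\alpha^2\log(1/\alpha)$ is dominated by $1$ throughout the prescribed range of $\alpha$ — and keeping track that the conclusion holds on the event of \Cref{lemma:init}, i.e. with probability $\ge 1 - O(1/k)$, consistent with the rest of the analysis of \Cref{lemma:ratiopreserved}.
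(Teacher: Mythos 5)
Your proof is correct and follows essentially the same route as the paper: rewrite $T_0$ via $\log(\|r(0)\|_2^2)=2\log\|r(0)\|_2$, get the lower bound from $1-\|E(0)\|_F^2\in(0,1]$, and get the upper bound by showing the correction term is at most $1$ using the bounds on $\|E(0)\|_F^2$ and $\|r(0)\|_2$ from \Cref{lemma:init} under the smallness condition on $\alpha$. The only cosmetic difference is that you bound $-\log\|r(0)\|_2$ by $\log(1/\alpha)+2$ (noting $\alpha^2(\log(1/\alpha)+2)$ is increasing on the admissible range), whereas the paper uses the cruder bound $-\log x\le 1/x$; both suffice.
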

\begin{proof}
This follows from the fact that,
\begin{align}
    T_0 &= - \log (\| r (0) \|_2) - \frac{\| E(0) \|_F^2}{1 - \| E(0) \|_F^2} \log(\| r(0) \|_2) + \frac{1}{2(1 - \| E(0) \|_F^2)} \label{eq:62-1-1}
\end{align}
The lower bound on $T_0$ follows readily, noting that $\| r(0) \|_2, \| E(0) \|_F < 1$ from the initialization bounds in \Cref{lemma:init} when $\alpha \le c/k^3 d \log(kd)$. For the upper bound on the other hand, note that the last term of \cref{eq:62-1-1} is upper bounded by,
\begin{align}
    \label{eq:1E0}
    \frac{1}{2(1 - \| E(0) \|_F^2)} \le 1
\end{align}
And the middle term of \cref{eq:62-1-1} is upper bounded by,
\begin{align} \label{eq:E0r0_bound}
    -\frac{\| E(0) \|_F^2}{1 - \| E(0) \|_F^2} \log(\| r(0) \|_2) \le \frac{\| E(0) \|_F^2}{1 - \| E(0) \|_F^2} \frac{1}{\| r(0 ) \|_2} \overset{(i)}{\le} \frac{2 \alpha^2 kd}{(1 - 2 \alpha^2 kd)} \frac{10}{\sqrt{k} \alpha} \le \frac{1}{2},
\end{align}
where the last inequality assumes that $\alpha \le c_{\ref{eq:E0r0_bound}} / \sqrt{k} d$ for a small constant $c_{\ref{eq:E0r0_bound}} > 0$, and inequality $(i)$ uses the bounds on $\| E(0) \|_F$ and $\| r(0) \|_2$ proved in \Cref{lemma:init}. Combining \cref{eq:1E0,eq:E0r0_bound} with \cref{eq:62-1-1} results in the proof of \Cref{lemma:T0bound}.
\end{proof}

\noindent The next result we establish is an upper bound on the signal norm $\| r (t) \|_2$. The proof follows by upper bounding the rate of change of $\| r (t) \|_2$ and integrating the resulting bound.

\begin{lemma} \label[lemma]{lemma:rtub}
At any time $t \ge 0$, the signal norm is upper bounded by,
\begin{align}
    \| r(t) \|_2^2 \le \frac{e^{2(t-T_0)+1+\eta)}}{1 + e^{2(t - T_0)+1+\eta)}} \le 1.
\end{align}
\end{lemma}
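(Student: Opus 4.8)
The plan is to control the growth rate of $\|r(t)\|_2^2$ along the flow, compare it against the logistic ODE $\dot y = 2y(1-y)$, and then recast the resulting estimate in the stated form using the explicit definition of $T_0$.

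First I would differentiate $\|r\|_2^2$ using \eqref{eq:rtdiff}:
\[
    \frac{d}{dt}\|r\|_2^2 \;=\; 2\Big\langle r,\tfrac{dr}{dt}\Big\rangle \;=\; 2\|r\|_2^2\big(1-\|r\|_2^2\big) - 2\|Er\|_2^2 \;\le\; 2\|r\|_2^2\big(1-\|r\|_2^2\big),
\]
since $\|Er\|_2^2\ge 0$. We work on the $1-O(1/k)$ probability event of \Cref{lemma:init}, which in particular gives $\|r(0)\|_2^2\in[k\alpha^2/10,\,10k\alpha^2]\subset(0,1)$ for the small values of $\alpha$ under consideration. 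Writing $x(t):=\|r(t)\|_2^2$, the scalar comparison principle applied to $\dot x\le 2x(1-x)$ with initial value $x(0)=\|r(0)\|_2^2<1$ yields
\[
    \|r(t)\|_2^2 \;\le\; \frac{\|r(0)\|_2^2\,e^{2t}}{1-\|r(0)\|_2^2+\|r(0)\|_2^2\,e^{2t}} \;=\; \frac{z_t}{1+z_t}, \qquad z_t:=\frac{\|r(0)\|_2^2}{1-\|r(0)\|_2^2}\,e^{2t}.
\]

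Since $z\mapsto z/(1+z)$ is increasing on $[0,\infty)$ and bounded by $1$, the final bound $\le 1$ is immediate, and it suffices to prove $z_t\le e^{2(t-T_0)+1+\eta}$; dividing through by $e^{2t}$, this is the time-independent inequality $\frac{\|r(0)\|_2^2}{1-\|r(0)\|_2^2}\le e^{-2T_0+1+\eta}$. Taking logarithms and substituting $-2T_0=\frac{\log\|r(0)\|_2^2-1}{1-\|E(0)\|_F^2}$, a short algebraic simplification reduces this to
\[
    -\log\big(1-\|r(0)\|_2^2\big) \;+\; \frac{\|E(0)\|_F^2\big(1-\log\|r(0)\|_2^2\big)}{1-\|E(0)\|_F^2} \;\le\; \eta .
\]
I would then bound both (positive) terms using \Cref{lemma:init}: from $\|r(0)\|_2^2\le 10k\alpha^2$ the first term is $O(k\alpha^2)$; from $\|r(0)\|_2^2\ge k\alpha^2/10$ and the assumed upper bound on $\alpha$ one gets $1-\log\|r(0)\|_2^2=O(\log(kd/\eta))$, and with $\|E(0)\|_F^2\le 2\alpha^2 kd\le 1/2$ the second term is $O(\alpha^2kd\log(kd/\eta))$. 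Both are at most $\eta$ once $\alpha\le c\eta^2/(k^3 d\log(kd))$ with $c$ small enough, which is the standing assumption.

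The computation is mostly bookkeeping; the one delicate point is the reduction in the previous paragraph. One must keep the exact expression for $T_0$ (including the $1-\|E(0)\|_F^2$ denominator) rather than the looser two-sided estimate of \Cref{lemma:T0bound}, because $\frac{\|r(0)\|_2^2}{1-\|r(0)\|_2^2}$ is only a constant factor away from $\|r(0)\|_2^2$ while the exponent carries slack just $e^{1+\eta}$, so the correction terms hidden in $T_0$ are exactly what make the inequality go through.
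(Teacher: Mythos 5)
Your proof is correct and follows essentially the same route as the paper: derive $\frac{d}{dt}\|r\|_2^2 \le 2\|r\|_2^2(1-\|r\|_2^2)$, integrate via the comparison principle to get the logistic bound with constant $\|r(0)\|_2^2/(1-\|r(0)\|_2^2)$, and then convert that constant into the $T_0$ form using the smallness of $\|r(0)\|_2$ and $\|E(0)\|_F$ at initialization. Your closing observation is also apt: the paper cites \Cref{lemma:T0bound} at the conversion step, but the stated $+3/2$ slack there is too loose to produce the exponent $+1+\eta$, and the correct justification is precisely your calculation from the exact definition of $T_0$ combined with the $\eta$-dependent bound on $\alpha$.
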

\begin{proof}
From the differential equation governing $r$ we can infer that,
\begin{align} \label{eq:rtub}
    \frac{d \| r \|_2^2}{dt} = 2 \| r \|_2^2 (1 - \| r \|_2^2) - 2 \| E r \|_2^2 \le 2 \| r \|_2^2 (1 - \| r \|_2^2)
\end{align}
By a theorem of \cite{Petrovitch1901} on differential inequalities, the trajectory of $\| r (t) \|_2^2$ as a function of $t$ starting from some reference point $\| r(0) \|_2^2$ is pointwise lower bounded by the trajectory of $\| r (t) \|_2^2$ obtained when the inequality is set to be an equality. In particular, by integrating the differential equation this results in the lower bound,
\begin{align}
    \log \frac{\| r(t) \|_2^2}{1 - \| r(t) \|_2^2} - \log \frac{\| r(0) \|_2^2}{1 - \| r(0) \|_2^2} \le 2 t
\end{align}
Observe that $- \log \frac{\| r(0) \|_2^2}{1 - \| r (0) \|_2^2} \ge - \log \| r(0) \|_2^2 - \eta \ge 2 T_0 - 1 - \eta$ where the first inequality uses the upper bound on $\| r (0) \|_2^2 \le \eta$ when $\alpha \le c / k^3 d \log(kd)$ and the second by \Cref{lemma:T0bound}. Therefore by rearranging the terms we have,
\begin{align}
    \| r(t) \|_2^2 \le \frac{e^{2t-2T_0+1+\eta}}{1 + e^{2t - 2T_0+1+\eta}}.
\end{align}
\end{proof}

As it turns out, the error term $E^T E r (t)$ in \cref{eq:rtdiff} can be shown to decrease linearly once $\| r (t) \|_2^2$ hits the critical threshold of $1/2$. In the next lemma we show that the signal norm $\| r(t) \|_2^2$ never drops below the threshold of $3/4$ for all $t \ge T_0 + 2$. Thus, after a sufficiently large amount of time, we expect the differential equation for $r$ to behave as $\frac{dr}{dt} \approx r (1 - \| r \|_2^2)$.

\begin{lemma} \label[lemma]{lemma:linear-slow}
For $t \ge T_0$, 
\begin{align}
    \| r(t) \|_2^2 \ge \frac{1 - \| E(0) \|_F^2}{1 + e^{- 2 \left( 1 - \| E(0) \|_F^2 \right) (t - T_0) + 1}}
\end{align}
As an implication, for any $t \ge T_0 + 2$, under the small initialization $\alpha \le c/k^3 d \log(kd)$, $ \| r (t) \|_2^2 \ge 3/4$.
\end{lemma}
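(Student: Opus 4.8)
The plan is to reduce the statement to a one-dimensional logistic differential inequality for $\|r(t)\|_2^2$ and then integrate it. First I would differentiate and use \cref{eq:rtdiff}:
\[
\frac{d}{dt}\|r\|_2^2 = 2\bigl\langle r, \tfrac{dr}{dt}\bigr\rangle = 2\|r\|_2^2\bigl(1 - \|r\|_2^2\bigr) - 2\|Er\|_2^2 .
\]
Since $\|Er\|_2 \le \|E\|_{\text{op}}\,\|r\|_2 \le \|E(t)\|_F\,\|r\|_2$ and $\|E(t)\|_F \le \|E(0)\|_F$ by \Cref{lemma:Etdec}, writing $a := 1 - \|E(0)\|_F^2$ gives the clean inequality
\[
\frac{d}{dt}\|r\|_2^2 \;\ge\; 2\|r\|_2^2\bigl(a - \|r\|_2^2\bigr).
\]
By \Cref{lemma:init}, $\|E(0)\|_F^2 \le 2\alpha^2 kd$, which is $o(1)$ for $\alpha \le c/k^3 d\log(kd)$; hence $a$ is a constant as close to $1$ as desired, and $\|r(0)\|_2^2 \le 2\alpha^2 kd < a$, so the trajectory starts strictly below its ``carrying capacity'' $a$.

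Next I would integrate this inequality in the same way as in the proof of \Cref{lemma:rtub}, via the comparison theorem for differential inequalities: $\|r(t)\|_2^2$ is then pointwise lower bounded by the solution $z(t)$ of the logistic ODE $\dot z = 2z(a - z)$ with $z(0) = \|r(0)\|_2^2$, i.e.
\[
\|r(t)\|_2^2 \;\ge\; z(t) \;=\; \frac{a}{1 + \bigl(a/\|r(0)\|_2^2 - 1\bigr)e^{-2at}} .
\]
(Equivalently, the substitution $y = 1/\|r\|_2^2$ linearizes the inequality to $\dot y \le -2ay + 2$, and Grönwall gives the same bound; this substitution is legitimate because $\|r(t)\|_2^2 \ge \|r(0)\|_2^2 e^{-2t} > 0$ for all finite $t$.) It then only remains to substitute the definition of $T_0$. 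Since $2aT_0 = 1 - \log\|r(0)\|_2^2$, we have $e^{-2aT_0} = \|r(0)\|_2^2/e$, so
\[
\Bigl(\tfrac{a}{\|r(0)\|_2^2} - 1\Bigr)e^{-2at} = \Bigl(\tfrac{a}{\|r(0)\|_2^2} - 1\Bigr)e^{-2aT_0}e^{-2a(t-T_0)} = \frac{a - \|r(0)\|_2^2}{e}\,e^{-2a(t-T_0)} \le e^{-2a(t-T_0)+1},
\]
using $0 \le a - \|r(0)\|_2^2 \le a \le 1$. Plugging this into the bound for $z(t)$ yields $\|r(t)\|_2^2 \ge a/\bigl(1 + e^{-2a(t-T_0)+1}\bigr)$ for all $t \ge 0$, which in particular gives the claim for $t \ge T_0$.

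For the stated implication, I would take $t \ge T_0 + 2$, so that $e^{-2a(t-T_0)+1} \le e^{-4a+1}$. Because $\alpha \le c/k^3 d\log(kd)$ forces $\|E(0)\|_F^2 \le 2\alpha^2 kd \le 1/100$ (for $c$ small and $k \gg 1$), we have $a \ge 0.99$, hence $e^{-4a+1} \le e^{-2.96} < 1/10$ and $\|r(t)\|_2^2 \ge 0.99/(1 + 1/10) > 3/4$. I do not expect a substantive obstacle: the argument is a single scalar ODE comparison. The only delicate points are (i) checking the regime conditions $\|r(0)\|_2^2 < a$ and $\|r(t)\|_2 \ne 0$, both immediate from \Cref{lemma:init} together with the crude bound $\|r(t)\|_2^2 \ge \|r(0)\|_2^2 e^{-2t}$, and (ii) the arithmetic involving $T_0$ that cancels cleanly, which is precisely what guarantees that the error term $E^T E r$ costs nothing worse than replacing the carrying capacity $1$ by $a = 1 - \|E(0)\|_F^2$.
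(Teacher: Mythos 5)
Your proposal is correct and follows essentially the same route as the paper: differentiate $\|r\|_2^2$, absorb the error via $\|Er\|_2 \le \|E(0)\|_F\|r\|_2$ into a logistic differential inequality with carrying capacity $1-\|E(0)\|_F^2$, integrate it (the paper by separation of variables, you via the equivalent comparison/Grönwall argument), and then use the definition of $T_0$ together with the initialization bounds of \Cref{lemma:init} to obtain the stated bound and the $3/4$ implication.
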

\begin{proof}
From the differential equation governing $r$, we may infer that,
\begin{align}
    \frac{d \| r \|_2^2}{dt} = 2 \left\langle r, \frac{dr}{dt} \right\rangle = 2 \| r \|_2^2 (1 - \| r \|_2^2) - 2 \| E r \|_2^2. \label{eq:r2-rate}
\end{align}
Note that $\| E r \|_2 \le \| E \|_F \| r \|_2 \le \| E(0) \|_F \| r \|_2$ and therefore,
\begin{align}
    2 \| r \|_2^2 (1 - \| E(0) \|_F^2 - \| r \|_2^2)\le \frac{d \| r \|_2^2}{dt}
\end{align}
Rearranging and integrating both sides,
\begin{align}
    2 (1 - \| E(0) \|_F^2) t &\le \log \left( \frac{\| r(t) \|_2^2}{1 - \| E (0) \|_F^2 - \| r(t) \|_2^2} \right) - \log \left( \frac{\| r(0) \|_2^2}{1 - \| E (0) \|_F^2 - \| r(0) \|_2^2} \right) \\
    &\le \log \left( \frac{\| r(t) \|_2^2}{1 - \| E (0) \|_F^2 - \| r(t) \|_2^2} \right) + 2 (1 - \| E(0) \|_F^2) T_0 + 1
\end{align}
where the last inequality uses the choice of the initialization scale $\alpha \le c\eta^2 /k^3 d \log (kd)$, and the bounds on $\| E(0) \|_F^2, \| r(0) \|_2^2$ in \Cref{lemma:init}. Therefore,
\begin{align}
    &(1 - \| E(0) \|_F^2 - \| r (t) \|_2^2) e^{ 2 \left(1 - \| E(0) \|_F^2 \right) (t - T_0) - 1} \le \| r (t) \|_2^2 \\
    &\implies \frac{1 - \| E(0) \|_F^2}{1 + e^{ - 2 \left(1 - \| E(0) \|_F^2 \right) (t - T_0) + 1}} \le \| r (t) \|_2^2
\end{align}
\end{proof}

Having established that $\| r(t) \|_2^2$ does not decay below the threshold of $3/4$ beyond time $T_0 + 2$, we establish that this condition is sufficient for the error term $E^T Er$ to begin decaying to $0$. Below we instead bound $\| E r \|_F$, and a bound on $\| E^T E r \|_F$ is obtained by upper bounding it as $\| E (t) \|_F \| Er (t) \|_F \le \| E (0) \|_F \| E r \|_F$ by \Cref{lemma:Etdec}.

\begin{lemma} \label[lemma]{lemma:Er-bound}
At any time $t \ge 0$, the error term,
\begin{align}
    \| Er (t) \|_2 &\le \frac{\| E r (0) \|_2 e^{t}}{1 + e^{2 \left( 1 - \| E (0) \|_F^2 \right) (t-T_0) - 1}}
\end{align}
\end{lemma}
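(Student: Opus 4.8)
The plan is to reduce the coupled dynamics of the signal vector $r$ and the error matrix $E$ to a single scalar differential inequality for $\|Er(t)\|_2$, and then integrate it against the logistic lower bound on $\|r(t)\|_2^2$ supplied by \Cref{lemma:linear-slow}. First I would compute the dynamics of the product $Er$. Because $E = (I - U_\star U_\star^T) U$ we have $U_\star^T E = 0$, and since $r = 1$ and $\| U_\star \|_{\text{op}} = 1$ the decomposition $U = U_\star r^T + E$ holds; squaring it gives the identity $U^T U = r r^T + E^T E$, and hence $U^T U r = \|r\|_2^2\, r + E^T E r$. Plugging this together with \cref{eq:rtdiff} and \cref{eq:Etdiff} into $\tfrac{d(Er)}{dt} = \tfrac{dE}{dt} r + E \tfrac{dr}{dt}$ and collecting terms, the two copies of $E E^T E r$ combine and I get
\begin{align*}
    \frac{d(Er)}{dt} = (1 - 2 \|r\|_2^2)\, Er - 2\, E E^T E r .
\end{align*}
Taking the inner product with $Er$ and using $\langle Er, E E^T E r \rangle = \| E^T E r \|_2^2 \ge 0$ yields $\tfrac{d}{dt} \| Er \|_2^2 \le 2 (1 - 2 \|r\|_2^2) \| Er \|_2^2$. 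If $Er(0) = 0$ then $Er \equiv 0$ and the claim is trivial; otherwise the crude reverse bound $\tfrac{d}{dt} \| Er \|_2^2 \ge -(2 + 4 \| E(0) \|_F^2) \| Er \|_2^2$ (using $\|r\|_2 \le 1$ and \Cref{lemma:Etdec}) keeps $\| Er(t) \|_2$ strictly positive for finite $t$, so $\tfrac{d}{dt} \log \| Er(t) \|_2 \le 1 - 2 \|r(t)\|_2^2$.

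Integrating this from $0$ to $t$ gives
\begin{align*}
    \| Er(t) \|_2 \le \| Er(0) \|_2 \, e^{t} \exp\!\left( - 2 \int_0^t \| r(s) \|_2^2 \, ds \right),
\end{align*}
so the remaining task is to lower bound $\int_0^t \| r(s) \|_2^2\, ds$. Here I would substitute the bound $\| r(s) \|_2^2 \ge \tfrac{a}{1 + e^{-2a(s - T_0) + 1}}$ with $a = 1 - \|E(0)\|_F^2$ from \Cref{lemma:linear-slow} (whose proof in fact yields this estimate for all $s \ge 0$, being obtained by integrating a logistic differential inequality from time $0$). After the change of variables $\tau = s - T_0 - \tfrac{1}{2a}$ the integrand becomes $\tfrac{a}{1 + e^{-2a\tau}}$, which integrates in closed form to $\tfrac12 \log (1 + e^{2a(t - T_0) - 1})$ minus its value at the lower limit, $\tfrac12 \log(1 + e^{-2aT_0 - 1})$. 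Exponentiating produces the denominator $1 + e^{2(1 - \|E(0)\|_F^2)(t - T_0) - 1}$ from the statement, up to the stray prefactor $1 + e^{-2aT_0 - 1}$; by \Cref{lemma:T0bound}, $T_0 \ge -\log \|r(0)\|_2 + \tfrac12$ and $\|r(0)\|_2$ is polynomially small in $1/(kd)$ under the assumed initialization scale, so this prefactor is $1 + o(1)$ and is absorbed.

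The conceptual content is entirely in the first step: recognizing the splitting $U = U_\star r^T + E$ and the identity $U^T U = r r^T + E^T E$ so that the two error contributions arising from $\tfrac{dE}{dt} r$ and $E \tfrac{dr}{dt}$ merge into a single manifestly non-positive term, which is what makes the scalar inequality $\tfrac{d}{dt}\log\|Er\|_2 \le 1 - 2\|r\|_2^2$ clean enough to integrate in closed form. After that the argument is a routine one-dimensional ODE comparison; the main bookkeeping care needed is to track the additive $\pm 1$ offsets in the exponents consistently — they are precisely what make the lower-endpoint contribution $\tfrac12 \log(1 + e^{-2aT_0 - 1})$ of the integral negligible — and to note that the logistic lower bound of \Cref{lemma:linear-slow}, being an ODE comparison rooted at time $0$, is valid on the whole interval $[0, t]$ and not merely for $t \ge T_0$.
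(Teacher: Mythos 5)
Your proposal is correct and follows essentially the same route as the paper's proof: derive $\frac{d(Er)}{dt} = (1-2\|r\|_2^2)\,Er - 2\,E E^T E r$ via the identity $U^TU = rr^T + E^TE$, discard the non-positive term after taking the inner product with $Er$, and integrate the logistic lower bound on $\|r(s)\|_2^2$ from \Cref{lemma:linear-slow} in closed form. Your explicit handling of the lower integration endpoint (the $1+e^{-2(1-\|E(0)\|_F^2)T_0-1}$ prefactor) and of the validity of \Cref{lemma:linear-slow} on all of $[0,t]$ is, if anything, slightly more careful than the paper, which passes over both points silently.
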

\begin{proof}
By explicit computation,
\begin{align}
    \frac{d Er}{dt} &= - E U^T U r + E r ( 1 - \| r \|_2^2) - E E^T E r \\
    &= - E (E^T E + r r^T) r + E r ( 1 - \| r \|_2^2) - E E^T E r \\
    &= - 2 E (E^T E) r  + E r ( 1 - 2 \| r \|_2^2) \label{eq:Erdiff}
\end{align}
By taking an inner product with $E r$, we get that,
\begin{align}
    \frac{d \| Er \|_2^2}{dt} = -4 \| (E^T E) r \|_2^2 + 2 \| Er \|_2^2 (1 - 2 \| r \|_2^2).
\end{align}
Using the lower bound on $\| r \|_2^2$ in \Cref{lemma:linear-slow},
\begin{align}
    \frac{d \| Er \|_2^2}{dt} \le 2 \| Er \|_2^2 \left( 1 - \frac{ 2(1 - \| E(0) \|_F^2)}{1 + e^{-2 \left( 1 - \| E (0) \|_F^2 \right) (t-T_0) + 1}} \right)
\end{align}
Rearranging and integrating both sides from time $0$ to $t$ using the fact that $\int \frac{dx}{1+e^{-x}} = \log (1 + e^x)$,
\begin{align}
    \log \| Er (t) \|_2^2 - \log \| Er (0) \|_2^2 &\le 2 \left( t - \log \left( 1 + e^{2 \left( 1 - \| E (0) \|_F^2 \right) (t-T_0) - 1} \right) \right) \\
    \implies \| Er (t) \|_2 &\le \frac{\| E r (0) \|_2 e^{t}}{1 + e^{2 \left( 1 - \| E (0) \|_F^2 \right) (t-T_0) - 1}},
\end{align}
where the last inequality follows by exponentiating both sides and rearranging.
\end{proof}

Using the fact that $\| E r \|_F$ rapidly decays to $0$ after time $T_0$, we can in fact establish a more refined lower bound on the rate at which $\| r(t) \|_2$ approaches $1$. The decay of the error term is not captured in the prior lower bound on $\| r(t) \|_2$ in \Cref{lemma:linear-slow}. The error decay established in \Cref{lemma:Er-bound} is essential to proving such a result, especially as $\| r(t) \|_2$ approaches $1$. In this regime, the error term $\| E r \|_2^2$ in \cref{eq:r2-rate} becomes comparable with the leading order term $\| r \|_2^2 ( 1 - \| r \|_2^2)$.

\begin{lemma} \label[lemma]{lemma:rtlb}
At any time $t \le 3T_0/2$, for some absolute constant $C_{\ref{eq:991244}} > 0$,
\begin{align} \label{eq:991244}
    \| r(t) \|_2^2 \ge \frac{e^{2t}}{\| r (0) \|_2^{-2} + e^{2t} - 1} - C_{\ref{eq:991244}} \frac{\| E (0) \|_F^2 T_0}{\|r(0)\|_2}.
\end{align}
\end{lemma}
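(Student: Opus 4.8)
}
The plan is a perturbation/comparison argument against the ``clean'' logistic dynamics. Let $\rho(t) := \|r(t)\|_2^2$, which by \cref{eq:r2-rate} obeys $\dot\rho = 2\rho(1-\rho) - 2\|Er\|_2^2$, and let $\bar\rho$ solve the noiseless logistic ODE $\dot{\bar\rho} = 2\bar\rho(1-\bar\rho)$ with $\bar\rho(0) = \|r(0)\|_2^2$; its closed form is exactly $\bar\rho(t) = e^{2t}/(\|r(0)\|_2^{-2} + e^{2t}-1)$, i.e.\ the first term on the right of \cref{eq:991244}. Setting $\Delta := \bar\rho - \rho$ one computes $\Delta(0)=0$ and $\dot\Delta = 2\Delta(1-\bar\rho-\rho) + 2\|Er\|_2^2$, so it suffices to prove the upper bound $\Delta(t) \lesssim \|E(0)\|_F^2\, T_0/\|r(0)\|_2$ for all $t \le 3T_0/2$.

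First I would dispose of the case $\Delta(t) \le 0$, where \cref{eq:991244} is immediate because $\rho(t) \ge \bar\rho(t)$. So fix $t$ with $\Delta(t) > 0$ and let $t_0 = \sup\{s \le t : \Delta(s) = 0\}$, which is well defined since $\Delta(0) = 0$; on $(t_0,t]$ we have $\Delta \ge 0$, and since $1 - \bar\rho - \rho \le 1$ the differential identity gives $\dot\Delta \le 2\Delta + 2\|Er\|_2^2$ there. Multiplying by the integrating factor $e^{-2s}$, integrating from $t_0$ to $t$, and using $\Delta(t_0)=0$ and nonnegativity of the integrand to extend the range to $[0,t]$ yields
\[
\Delta(t) \;\le\; \int_{0}^{t} e^{2(t-s)}\, 2\|Er(s)\|_2^2 \, ds .
\]
The next step is to control $\|Er(s)\|_2$ using the already-established decay estimate of \Cref{lemma:Er-bound}. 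Combining $\|Er(0)\|_2 \le \|E(0)\|_F \|r(0)\|_2$ with $\|r(0)\|_2^2 \le e\cdot e^{-2T_0}$ and $\|E(0)\|_F^2 \le 1/2$ (both from \Cref{lemma:init,lemma:T0bound} under the assumed initialization scale, so that $1 - \|E(0)\|_F^2 \ge 3/4$), \Cref{lemma:Er-bound} gives $\|Er(s)\|_2^2 \lesssim \|E(0)\|_F^2\, e^{-2(T_0 - s)}$ for $s \le T_0$ and $\|Er(s)\|_2^2 \lesssim \|E(0)\|_F^2\, e^{-(s-T_0)}$ for $s \ge T_0$. Substituting into the integral, the portion over $[0,T_0]$ contributes $\lesssim \|E(0)\|_F^2\, T_0\, e^{2(t-T_0)}$ (a plateau of constant height and length $T_0$) and the portion over $[T_0,t]$ contributes a geometrically convergent $\lesssim \|E(0)\|_F^2\, e^{2(t-T_0)}$, so altogether $\Delta(t) \lesssim \|E(0)\|_F^2\, T_0\, e^{2(t-T_0)}$. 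Finally, for $t \le 3T_0/2$ we have $e^{2(t-T_0)} \le e^{T_0} \lesssim 1/\|r(0)\|_2$ by the bound $T_0 \le -\log\|r(0)\|_2 + 3/2$ from \Cref{lemma:T0bound}, which gives $\Delta(t) \lesssim \|E(0)\|_F^2\, T_0/\|r(0)\|_2$ and hence \cref{eq:991244} after absorbing constants into $C_{\ref{eq:991244}}$.

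The crux is the middle step: one must use the post-$T_0$ decay of $\|Er\|_2$ from \Cref{lemma:Er-bound} rather than the crude bound $\|Er(s)\|_2^2 \le \|E(0)\|_F^2 \rho(s)$, which would only reproduce the weaker \Cref{lemma:linear-slow}. Getting the stated form requires seeing that the time-weighted error integral is dominated by the ``plateau'' near $s = T_0$ — which is what produces the linear-in-$T_0$ factor — while both tails sum geometrically. A minor point to check is that replacing $1 - \bar\rho - \rho$ by $1$ in the first step is lossy only after time $T_0$ (where that quantity is negative), but since the conclusion is only needed up to $t = 3T_0/2$ this costs at most the benign factor $e^{2(t - T_0)} \le e^{T_0}$, exactly the $1/\|r(0)\|_2$ blow-up already present in the claimed bound.
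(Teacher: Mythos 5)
Your proposal is correct, and it reaches the paper's bound by the same overall strategy — compare $\|r(t)\|_2^2$ to the explicit noiseless logistic trajectory and control the accumulated effect of the $\|Er\|_2^2$ term via \Cref{lemma:Er-bound}, with \Cref{lemma:T0bound} and the restriction $t \le 3T_0/2$ absorbing the exponential factor at the end — but the comparison mechanism is genuinely different. The paper changes variables to $y = e^{-2t}\|r(t)\|_2^2$, which turns the logistic equation into $\dot y = -2y^2e^{2t} - (\text{error})$, and then runs a two-stage comparison: first $y \le \tilde y$ for the noiseless solution $\tilde y$, then re-substituting to get $\dot y \ge \dot{\tilde y} - (\text{error})$ and integrating the error term $\int_0^t (1+e^{2(s-T_0)})^{-2}\,ds \lesssim T_0$ directly. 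You instead stay in the original variable, set $\Delta = \bar\rho - \rho$, use the crude linearization $1-\bar\rho-\rho \le 1$ together with the last-zero-crossing trick and an integrating factor to get $\Delta(t) \le \int_0^t 2e^{2(t-s)}\|Er(s)\|_2^2\,ds$, and then split the convolution at $T_0$ (plateau of length $T_0$ before $T_0$, geometric tail after). Your route is more elementary — it avoids the Riccati substitution and handles the sign of $\Delta$ cleanly — at the cost of the lossy factor $e^{2(t-s)}$ in the Grönwall step; the paper's substitution keeps the comparison exact and only pays the $e^{2t}$ factor once at the very end. Both yield the same error structure $\|E(0)\|_F^2\,T_0\,e^{2(t-T_0)} \lesssim \|E(0)\|_F^2 T_0/\|r(0)\|_2$. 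Two small points: your parenthetical "$\|E(0)\|_F^2 \le 1/2$ so that $1-\|E(0)\|_F^2 \ge 3/4$" is arithmetically off — the claimed $e^{-(s-T_0)}$ decay for $s \ge T_0$ needs $1-\|E(0)\|_F^2 \ge 3/4$, i.e.\ $\|E(0)\|_F^2 \le 1/4$ — but this is harmless since the initialization scale in \Cref{lemma:init} makes $\|E(0)\|_F^2$ far smaller; and your closing remark that the crude bound $\|Er\|_2^2 \le \|E(0)\|_F^2\rho$ could not give the result is overstated (combined with the upper bound of \Cref{lemma:rtub} inside your convolution it would also produce a $T_0 e^{2(t-T_0)}$ bound on $[0,3T_0/2]$), though this side comment does not affect the validity of the argument you actually give.
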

\begin{proof}
From \Cref{lemma:Er-bound}, note that,
\begin{align}
    \| Er (t) \|_2^2 &\le \frac{\| E (0) \|_F^2 \| r(0) \|_2^2 e^{2t}}{ \left( 1 + e^{2 \left( 1 - \| E (0) \|_F^2 \right) (t-T_0) - 1}\right)^2} \\
    &\lesssim \frac{\| E r (0) \|_F^2 e^{2t}}{ \left( 1 + e^{2 (t-T_0)}\right)^2}, \label{eq:r2-rate-1}
\end{align}
where the last inequality uses the fact that when $t \le 4 T_0$, $(1 - \| E(0) \|_F^2) (t - T_0) \ge t - T_0 - 4$ (see the analysis in \cref{eq:62-1-1,eq:E0r0_bound}). This inequality also uses \Cref{lemma:T0bound} to bound $\| r(0) \|_2^2$. From \Cref{lemma:Er-bound}, therefore, denoting $x = \| r \|_2^2$, for some absolute constant $C_{\ref{eq:r2-rate-1}} > 0$,
\begin{align}
    \frac{d x}{dt} &= 2 x(1-x) - C_{\ref{eq:r2-rate-1}} \frac{e^{2t} \| E r (0) \|_F^2}{\left( 1 + e^{2 (t-T_0)} \right)^2}\\
    \implies e^{-2t} \frac{dx}{dt} &= 2 e^{-2t} x - 2e^{-2t} x^2 - C_{\ref{eq:r2-rate-1}} \frac{\| E r (0) \|_F^2}{\left( 1 + e^{2 (t-T_0)} \right)^2} \\
    \implies \frac{d}{dt} \left\{ e^{-2t} x \right\} &= -2 e^{-2t} x^2 - C_{\ref{eq:r2-rate-1}} \frac{\| E r (0) \|_F^2}{\left( 1 + e^{2 (t-T_0)} \right)^2} \\
    \implies \frac{dy}{dt} &= -2 y^2 e^{2t} - C_{\ref{eq:r2-rate-1}} \frac{\| E r (0) \|_F^2}{\left( 1 + e^{2 (t-T_0)} \right)^2}, \label{eq:dyt-bound}
\end{align}
where $y = e^{-2t} x$. Define a new variable $\tilde{y} (t)$ with $\tilde{y} (0) = y(0)$ and satisfying the following differential equation corresponding to just the first term on the RHS of \cref{eq:dyt-bound},
\begin{align}
    \frac{1}{\tilde{y}^2} \frac{d\tilde{y}}{dt} = -2 e^{2t}
    \iff \tilde{y}(t) = \frac{1}{1/y(0) + e^{2t}-1}
\end{align}
On the other hand, since $\frac{dy}{dt} \le -2y^2 e^{2t}$, rearranging and integrating both sides, we get $y(t) \le 1/(y(0) + e^{2t} - 1) = \tilde{y}(t)$. Plugging this back into \cref{eq:dyt-bound},
\begin{align}
    \frac{dy}{dt} \ge - 2 (\tilde{y})^2 e^{2t} - C_{\ref{eq:r2-rate-1}} \frac{\| E r (0) \|_F^2}{\left( 1 + e^{2 (t-T_0)} \right)^2}.
\end{align}
However, by definition, $\frac{d \tilde{y}}{dt} = - 2 (\tilde{y})^2 e^{2t}$ and therefore, integrating both sides, for some absolute constant $C_{\ref{eq:r2-rate-1}}' > 0$, we get,
\begin{align}
    y(t) - y(0) \ge \tilde{y} (t) - \tilde{y} (0) - C_{\ref{eq:r2-rate-1}}' \| E r (0) \|_F^2 T_0
\end{align}
Plugging in $\tilde{y} (t)$ and subsequently $y(t)$ and $x(t)$ results in the equations,
\begin{align}
    \| r (t) \|_2^2 \ge \frac{e^{2t}}{\| r(0) \|_2^{-2} + e^{2t} - 1} - C_{\ref{eq:r2-rate-1}}' \| E r (0) \|_F^2 T_0 e^{2t}
\end{align}
When $t \le 3T_0/2$, by using the upper bound on $T_0$ from \Cref{lemma:T0bound}, the second term on the RHS itself is upper bounded by $C_{\ref{eq:r2-rate-1}}'' \| E (0) \|_F^2 T_0 /\|r(0)\|_2$ for another absolute constant $C_{\ref{eq:r2-rate-1}}'' > 0$. This completes the proof.
\end{proof}

While the refined convergence lemma in \Cref{lemma:rtlb} applies for the case when $t \le 3T_0/2$, we also prove a lemma for the case when $t \ge 3T_0/2$.

\begin{lemma} \label[lemma]{lemma:rtlb-old}
At any $t \ge 3T_0/2$,
\begin{align}
    e^{3t/4} (1 - \| r (t) \|_2^2) \lesssim 1 + e^{11T_0/4} \| E r (0) \|_2^2 + e^{3T_0/4}.
\end{align}
\end{lemma}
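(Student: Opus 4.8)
The plan is to track the scalar quantity $u(t) \triangleq 1 - \| r(t) \|_2^2$ and show that it decays like $e^{-3t/4}$, up to the error terms appearing in the statement. Rewriting the differential equation \eqref{eq:r2-rate} in terms of $u$ gives $\frac{du}{dt} = -2\|r\|_2^2(1-\|r\|_2^2) + 2\|Er\|_2^2 = -2u(1-u) + 2\|Er(t)\|_2^2$. Since $3T_0/2 \ge T_0 + 2$ once $T_0$ is larger than an absolute constant --- which holds under the small-initialization hypothesis $\alpha \le c/k^3 d\log(kd)$ by \Cref{lemma:init,lemma:T0bound} --- \Cref{lemma:linear-slow} gives $\| r(t) \|_2^2 \ge 3/4$, equivalently $u(t) \le 1/4$, for every $t \ge 3T_0/2$. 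On this range $-2u(1-u) \le -\tfrac{3}{4} u$, so $\frac{du}{dt} \le -\tfrac{3}{4} u + 2\|Er(t)\|_2^2$ holds for all $t \ge 3T_0/2$ (in particular all the way to $t\to\infty$).

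Next I would integrate this linear differential inequality. Multiplying by the integrating factor $e^{3t/4}$ and integrating from $3T_0/2$ to $t$ yields
\begin{align} \label{eq:prop-rtlb-old}
    e^{3t/4} u(t) \le e^{9T_0/8}\, u(3T_0/2) + 2\int_{3T_0/2}^{t} e^{3s/4}\, \| Er(s) \|_2^2 \, ds.
\end{align}
For the integral, I would plug in the exponential decay of $\| Er(s) \|_2$ from \Cref{lemma:Er-bound}: using $\|E(0)\|_F^2 \le 2\alpha^2 kd \ll 1$ (\Cref{lemma:init}), that bound gives $\| Er(s) \|_2^2 \lesssim \| Er(0) \|_2^2\, e^{4T_0 - 2\mu s}$ for $\mu \triangleq 1 - 2\|E(0)\|_F^2$, which is within a constant of $1$. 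Since $2\mu > 3/4$, the integral is dominated by its value at the lower endpoint $3T_0/2$, and keeping track of the exponents ($4T_0 + (\tfrac{3}{4} - 2\mu)\tfrac{3T_0}{2} = (\tfrac{41}{8} - 3\mu)T_0 \le \tfrac{11}{4}T_0$ for $\mu$ close enough to $1$) gives $\int_{3T_0/2}^{t} e^{3s/4}\| Er(s) \|_2^2\,ds \lesssim \| Er(0) \|_2^2\, e^{11T_0/4}$.

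It remains to control the boundary term $e^{9T_0/8}\,u(3T_0/2)$. Here I would apply the refined lower bound of \Cref{lemma:rtlb} at $t = 3T_0/2$ (which lies in its range of validity), obtaining $u(3T_0/2) \le \frac{\|r(0)\|_2^{-2}-1}{\|r(0)\|_2^{-2} + e^{3T_0} - 1} + C\frac{\|E(0)\|_F^2 T_0}{\|r(0)\|_2}$. Using $e^{T_0}\asymp \|r(0)\|_2^{-1}$ from \Cref{lemma:T0bound}, the first summand is $\lesssim \|r(0)\|_2$, so $e^{9T_0/8} u(3T_0/2) \lesssim e^{9T_0/8}\|r(0)\|_2 + e^{9T_0/8}\|r(0)\|_2^{-1}\|E(0)\|_F^2 T_0 \lesssim e^{3T_0/4} + e^{17T_0/8}\|E(0)\|_F^2 T_0$, again invoking $e^{T_0}\asymp\|r(0)\|_2^{-1}$. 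Plugging the bounds $\|E(0)\|_F^2 \le 2\alpha^2 kd$, $\|r(0)\|_2^2 \asymp k\alpha^2$ and $T_0 \lesssim \log(1/(k\alpha^2))$ into the last term, the hypothesis $\alpha \le c/k^3 d\log(kd)$ makes $e^{17T_0/8}\|E(0)\|_F^2 T_0 \lesssim 1 + e^{3T_0/4}$, so this residual is absorbed. Combining this with \eqref{eq:prop-rtlb-old} and the bound on the integral establishes the claim.

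The step I expect to be the main obstacle is the regime where $\|r(t)\|_2^2$ is very close to $1$: there the error term $\|Er\|_2^2$ is no longer small compared with the leading term $\|r\|_2^2(1-\|r\|_2^2)$, so one must argue that the error accumulated through the integration in \eqref{eq:prop-rtlb-old} does not overwhelm the $e^{-3t/4}$ decay being tracked. This is exactly what the exponential decay of $\|Er(t)\|_2$ (at rate $\approx 1$, hence $\|Er(t)\|_2^2$ at rate $\approx 2 > 3/4$) from \Cref{lemma:Er-bound} provides --- it guarantees the error integral is controlled by its value near $t = 3T_0/2$ rather than growing. A secondary, purely mechanical difficulty is verifying that all the $\|E(0)\|_F$-dependent lower-order terms --- both the one carried by \Cref{lemma:rtlb} and the one hidden in the exponent $\mu$ --- are dominated by the stated right-hand side under the small-initialization condition.
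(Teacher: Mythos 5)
Your overall skeleton is the same as the paper's: derive the linear differential inequality $\frac{d}{dt}(1-\|r\|_2^2) \le -\tfrac34(1-\|r\|_2^2) + 2\|Er(t)\|_2^2$ in the regime where \Cref{lemma:linear-slow} applies, multiply by the integrating factor $e^{3t/4}$, integrate, and control the forcing integral via \Cref{lemma:Er-bound}; your bound on that integral (giving at most $e^{11T_0/4}\|Er(0)\|_2^2$ up to constants) is correct. The gap is in your treatment of the boundary term, which arises because you start the integration at $3T_0/2$ instead of near $T_0$. Having given up the trivial bound (which at $3T_0/2$ would produce $e^{9T_0/8}$, too large), you invoke \Cref{lemma:rtlb} and must then absorb the residual $e^{9T_0/8}\cdot C\|E(0)\|_F^2T_0/\|r(0)\|_2 \asymp e^{17T_0/8}\|E(0)\|_F^2T_0$ into $1+e^{3T_0/4}$. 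That absorption claim is false in general under the stated hypothesis. Indeed, with high probability $\|E(0)\|_F^2 \asymp \alpha^2 kd$, $\|r(0)\|_2 \asymp \alpha\sqrt{k}$, and $e^{T_0}\asymp \|r(0)\|_2^{-1}$ by \Cref{lemma:init,lemma:T0bound}, so
\begin{align}
\frac{e^{17T_0/8}\|E(0)\|_F^2T_0}{e^{3T_0/4}} \asymp e^{11T_0/8}\,\alpha^2 kd\,T_0 \asymp k^{5/16}\alpha^{5/8}\, d \log\!\left(\tfrac{1}{\alpha\sqrt{k}}\right),
\end{align}
which at the largest admissible initialization scale $\alpha \asymp 1/(k^3 d\log(kd))$ evaluates to $\asymp k^{-25/16} d^{3/8}\log^{3/8}(kd)$. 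Since the theorem places no upper bound on $d$ in terms of $k$, this is unbounded once $d \gg k^{25/6}$, so the residual dominates $1+e^{3T_0/4}$; it also cannot be hidden in the $e^{11T_0/4}\|Er(0)\|_2^2$ term, because $\|Er(0)\|_2$ can be as small as $\asymp \alpha^2\sqrt{kd}$ and that term is then of strictly smaller order.

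The fix is simple and is exactly what the paper does: the differential inequality is already valid from $T_0+2$ onward (indeed the paper integrates from $T_0$), and starting the integration there the crude bound $1-\|r(T_0+2)\|_2^2 \le 1$ makes the boundary contribution at most $e^{3(T_0+2)/4}\lesssim e^{3T_0/4}$, which is precisely the $e^{3T_0/4}$ term in the statement; the forcing integral over $[T_0,t]$ is still $\lesssim e^{11T_0/4}\|Er(0)\|_2^2$. In other words, \Cref{lemma:rtlb} is not needed here, and routing the boundary term through it is what introduces the uncontrolled $\|E(0)\|_F^2$-dependent factor.
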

\begin{proof}
Consider any time $t \ge 3T_0/2$ which is greater than $ T_0+2$ by the small initialization and \Cref{lemma:init}. By \Cref{lemma:linear-slow}, $\| r \|_2^2 \ge 3/4$. From \cref{eq:r2-rate} and \Cref{lemma:Er-bound},
\begin{align}
\forall t \ge T_0,\ \frac{d\| r \|_2^2}{dt} &\ge \frac{3}{4} (1 - \| r \|_2^2) - 2\frac{\| E r (0) \|_2^2 e^{2t}}{ \left( 1 + e^{2 \left( 1 - \| E (0) \|_F^2 \right) (t-T_0) - 1}\right)^2}
\end{align}
Multiplying both sides by $e^{3t/4}$ and rearranging,
\begin{align}
     & e^{3t/4} \frac{d(1 - \| r \|_2^2)}{dt} + \frac{3}{4} e^{3t/4} (1 - \| r \|_2^2) \le 2\frac{\| E r (0) \|_2^2 e^{2t} e^{3t/4}}{ \left( 1 + e^{2 \left( 1 - \| E (0) \|_F^2 \right) (t-T_0) - 1}\right)^2} \\
     \implies & \frac{d}{dt} \left( e^{3t/4} (1 - \| r \|_2^2) \right) \le 2 \int \frac{\| E r (0) \|_2^2 e^{2t} e^{3t/4}}{ \left( 1 + e^{2 \left( 1 - \| E (0) \|_F^2 \right) (t-T_0) - 1}\right)^2} dt. \label{eq:ubEr0}
\end{align}
Integrating from $T_0$ to $t$, we can upper bound \cref{eq:ubEr0} as,
\begin{align}
    e^{3t/4} (1 - \| r (t) \|_2^2)- e^{3T_0/4} (1 - \| r (T_0) \|_2^2) \lesssim \int_{T_0}^t \frac{\| E r (0) \|_2^2 e^{2t} e^{3t/4}}{ e^{4 \left( 1 - \| E (0) \|_F^2 \right) (t-T_0)}} dt,
\end{align}
By the small initialization, $\| E (0) \|_F^2 \le 1/8$ and the denominator in the integral is lower bounded by $e^{7/2 (t - T_0)}$. Therefore,
\begin{align}
    e^{3t/4} (1 - \| r (t) \|_2^2)- e^{3T_0/4} (1 - \| r (T_0) \|_2^2) &\lesssim e^{7T_0/2} \int_{T_0}^t \| E r (0) \|_2^2 e^{-3t/4} dt \\
    &\lesssim e^{7T_0/2} \| E r (0) \|_2^2 e^{-3T_0/4} \\
    &= 2 e^{11T_0/4} \| E r (0) \|_2^2 \label{eq:b2}
\end{align}
Therefore,
\begin{align}
    e^{3t/4} (1 - \| r (t) \|_2^2) \lesssim 1 + e^{11T_0/4} \| E r (0) \|_2^2 + e^{3T_0/4}.
\end{align}
\end{proof}

\noindent Finally we are ready to prove the lower and upper bounds on the limiting value of column norms of $U_t$ as $t \to \infty$. \Cref{lemma:rti-lb} establishes the lower bound, while \Cref{lemma:rti-ub} establishes the upper bound.

\begin{lemma} \label[lemma]{lemma:rti-lb}
When the initialization parameter $\alpha \le c/k^3 d \log(kd)$, then with probability $\ge 1 - O(1/k)$ over the initialization, for any $t \ge 4T_0$,
\begin{align}
    \| U (t) e_i \|_2 \ge (1 - \eta) \frac{|\langle r (0), e_i \rangle|}{\| r(0) \|_2}.
\end{align}    
\end{lemma}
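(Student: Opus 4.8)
The plan is to reduce the claim to a lower bound on the signal coordinate $|\langle r(t),e_i\rangle|$ and then control its evolution relative to $\|r(t)\|_2$. Since $r=1$ and $\|U_\star\|_2=1$, the matrix $U_\star U_\star^T$ is the rank-one orthogonal projection onto $\mathrm{span}(U_\star)$, so $\|U(t)e_i\|_2^2 = \langle r(t),e_i\rangle^2 + \|E(t)e_i\|_2^2 \ge \langle r(t),e_i\rangle^2$; hence it suffices to prove $|\langle r(t),e_i\rangle| \ge (1-\eta)\,|\langle r(0),e_i\rangle|/\|r(0)\|_2$ for every $t \ge 4T_0$. Throughout I would condition on the good event of \Cref{lemma:init} (probability $\ge 1 - O(1/k)$), which gives $\|E(0)\|_F^2 \lesssim \alpha^2 kd$, $\|r(0)\|_2 \asymp \alpha\sqrt{k}$, and $|\langle r(0),e_i\rangle| \ge \alpha/k^2$. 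The key step is to track the \emph{ratio} $|\langle r(t),e_i\rangle|/\|r(t)\|_2$ rather than $|\langle r(t),e_i\rangle|$ directly: combining \cref{eq:rtdiff} with $\tfrac{d}{dt}\|r\|_2^2 = 2\|r\|_2^2(1-\|r\|_2^2) - 2\|Er\|_2^2$, the dominant factor $(1-\|r\|_2^2)$ cancels and
\begin{align}
    \frac{d}{dt}\log\frac{|\langle r,e_i\rangle|}{\|r\|_2} = \frac{\|Er\|_2^2}{\|r\|_2^2} - \frac{e_i^T E^T E r}{\langle r,e_i\rangle} \ge -\frac{\|E(0)\|_F\,\|Er(t)\|_2}{|\langle r(t),e_i\rangle|},
\end{align}
where the first term is nonnegative and I used $|e_i^T E^T E r| \le \|Ee_i\|_2\|Er\|_2 \le \|E(0)\|_F\|Er(t)\|_2$ with \Cref{lemma:Etdec}. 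Since $\|r(t)\|_2 \to 1$, integrating this and bounding the right-hand side gives the claim.

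First I would establish a crude \emph{a priori} bound $|\langle r(t),e_i\rangle| \ge \tfrac12|\langle r(0),e_i\rangle|$ for all $t \ge 0$ — this rules out sign changes (so $\log|\langle r,e_i\rangle|$ is well defined) and controls the denominator above. From \cref{eq:rtdiff} and $\|r(t)\|_2 \le 1$ (\Cref{lemma:rtub}), $\tfrac{d}{dt}|\langle r,e_i\rangle| \ge -|e_i^T E^T E r| \ge -\|E(0)\|_F\|Er(t)\|_2$. By \Cref{lemma:Er-bound}, $\|Er(t)\|_2 \lesssim \|Er(0)\|_2 e^t$ for $t \le T_0$ and $\|Er(t)\|_2 \lesssim \|Er(0)\|_2\,e^{T_0-\frac34(t-T_0)}$ for $t \ge T_0$ (using $\|E(0)\|_F^2 \le 1/8$), so together with $e^{T_0} \lesssim 1/\|r(0)\|_2$ (\Cref{lemma:T0bound}) and $\|Er(0)\|_2 \le \|E(0)\|_F\|r(0)\|_2$ we obtain $\int_0^\infty \|Er(s)\|_2\,ds \lesssim \|E(0)\|_F$. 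Integrating, $|\langle r(t),e_i\rangle| \ge |\langle r(0),e_i\rangle| - O(\|E(0)\|_F^2) \ge |\langle r(0),e_i\rangle| - O(\alpha^2 kd) \ge \tfrac12|\langle r(0),e_i\rangle|$, since $|\langle r(0),e_i\rangle| \ge \alpha/k^2 \gg \alpha^2 kd$ by the assumed scale of $\alpha$.

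Plugging the a priori bound into the integrated ratio inequality and reusing $\int_0^\infty\|Er(s)\|_2\,ds \lesssim \|E(0)\|_F$,
\begin{align}
    \log\frac{|\langle r(t),e_i\rangle|}{\|r(t)\|_2} &\ge \log\frac{|\langle r(0),e_i\rangle|}{\|r(0)\|_2} - \frac{2\|E(0)\|_F}{|\langle r(0),e_i\rangle|}\int_0^t\|Er(s)\|_2\,ds \\
    &\ge \log\frac{|\langle r(0),e_i\rangle|}{\|r(0)\|_2} - O\!\left(\frac{\|E(0)\|_F^2}{|\langle r(0),e_i\rangle|}\right).
\end{align}
Because $\|E(0)\|_F^2/|\langle r(0),e_i\rangle| \lesssim \alpha^2 kd/(\alpha/k^2) = \alpha k^3 d \lesssim \eta^2/\log(kd)$ under the hypothesis on $\alpha$, the subtracted term is at most $\eta \le -\log(1-\eta)$, so $|\langle r(t),e_i\rangle|/\|r(t)\|_2 \ge (1-\eta)\,|\langle r(0),e_i\rangle|/\|r(0)\|_2$ for every $t \ge 0$. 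Finally, for $t \ge 4T_0$, \Cref{lemma:rtlb-old} gives $1-\|r(t)\|_2^2 \lesssim e^{-3t/4}\big(1 + e^{11T_0/4}\|Er(0)\|_2^2 + e^{3T_0/4}\big) \le e^{-3T_0}\big(1 + e^{11T_0/4}\|Er(0)\|_2^2 + e^{3T_0/4}\big) \ll \eta$ under the small initialization (using $e^{-T_0} \asymp \|r(0)\|_2$), so $\|r(t)\|_2 \ge 1-\eta$ and hence $\|U(t)e_i\|_2 \ge |\langle r(t),e_i\rangle| = \|r(t)\|_2\cdot\frac{|\langle r(t),e_i\rangle|}{\|r(t)\|_2} \ge (1-\eta)^2\,|\langle r(0),e_i\rangle|/\|r(0)\|_2$; since both $(1-\eta)$ losses are in fact of order $\eta^2$ they combine to at least $(1-\eta)$, giving the stated bound (alternatively, run the whole argument with $\eta/3$ in place of $\eta$, which only shrinks the absolute constant).

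I expect the main obstacle to be the control of $\int_0^t \|Er(s)\|_2/|\langle r(s),e_i\rangle|\,ds$: in the early phase $s \lesssim T_0$ (of length $\approx \log(1/\|r(0)\|_2)$) the numerator $\|Er(s)\|_2$ is only \emph{small}, not decaying — it grows like $e^s$ in lockstep with $|\langle r(s),e_i\rangle|$ — while the denominator can be as small as $\alpha/k^2$ by Gaussian anti-concentration; it is the interplay of this early-phase accumulation against the rapid exponential decay of $\|Er(s)\|_2$ for $s \gtrsim T_0$ (\Cref{lemma:Er-bound}) that pins down the required polynomially small initialization scale $\alpha = \widetilde{O}(\eta^2/k^3 d)$.
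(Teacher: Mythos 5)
Your proposal is correct and takes a genuinely different (and arguably cleaner) route from the paper's. The paper works directly with the scalar $|\langle r(t),e_i\rangle|$: it substitutes the explicit envelope $\|r(t)\|_2^2 \le e^{2(t-T_0)+1+\eta}/(1+e^{2(t-T_0)+1+\eta})$ from \Cref{lemma:rtub} into the scalar ODE, multiplies by the integrating factor $\sqrt{1+q}$ with $q=e^{-2(t-T_0)+1+\eta}$, and splits the analysis into two phases $[0,4T_0]$ and $[4T_0,\infty)$, relying on the refined lower bound of \Cref{lemma:rtlb} inside the first phase. You instead track $\log\bigl(|\langle r,e_i\rangle|/\|r\|_2\bigr)$ and notice that the $(1-\|r\|_2^2)$ drift cancels exactly between numerator and denominator, leaving only the error term $\|Er\|_2^2/\|r\|_2^2 - e_i^T E^TEr/\langle r,e_i\rangle$; after the a~priori floor $|\langle r(t),e_i\rangle|\ge\tfrac12|\langle r(0),e_i\rangle|$ (which you prove first, and which the paper never needs to state explicitly because its ODE is autonomous in $|\langle r,e_i\rangle|$), a single integration using $\int_0^\infty\|Er(s)\|_2\,ds\lesssim\|E(0)\|_F$ closes the ratio bound uniformly over all $t$, and then $\|r(t)\|_2\ge 1-\eta$ at $t\ge 4T_0$ (from \Cref{lemma:rtlb-old}, a lemma the paper reserves for \Cref{lemma:rti-ub}) converts the ratio bound into the absolute bound. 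The trade-off: the paper's integrating-factor argument stays entirely within \Cref{lemma:rtub,lemma:Er-bound,lemma:rtlb,lemma:T0bound} and avoids the denominator $|\langle r(s),e_i\rangle|$ in any integrand, so no a priori floor or bootstrap is needed; your cancellation trick avoids the integrating factor and the case split but must pre-establish that floor and borrow \Cref{lemma:rtlb-old}. Both proofs share the same small and easily repaired gap of implicitly ruling out a sign change of $\langle r(t),e_i\rangle$ when dividing the $\langle r,e_i\rangle^2$-ODE by $|\langle r,e_i\rangle|$ (resp.\ taking $\log$), which your floor argument in fact fixes via continuity. One cosmetic point: as stated the lemma requires $\alpha\le c/k^3 d\log(kd)$, but both your proof and the paper's (and the parent \Cref{lemma:ratiopreserved}) genuinely use the sharper $\alpha\lesssim\eta^2/k^3d\log(kd)$ scale — you correctly identify this.
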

\begin{proof}
From the differential equation governing $r$, \cref{eq:rtdiff}, for any coordinate $i \in [k]$,
\begin{align}
\frac{d \langle r, e_i \rangle^2}{dt} &= 2\langle r, e_i \rangle^2 (1 - \| r \|_2^2) - 2 \langle r, e_i \rangle e_i E^T E r \label{eq:reider-2} \\
&\ge 2\langle r, e_i \rangle^2 (1 - \| r \|_2^2) - 2 |\langle r, e_i \rangle| \| E \|_F \| E r \|_2
\end{align}
From \Cref{lemma:rtub} and \Cref{lemma:Er-bound}, and since $\| E (t) \|_F \le \| E (0) \|_F$, at any time $t \ge 0$,
\begin{align}
&\frac{d \langle r, e_i \rangle^2}{dt} \ge \frac{2\langle r, e_i \rangle^2}{e^{2(t-T_0)+1+\eta)}+ 1} - \frac{2 |\langle r, e_i \rangle| \| E(0) \|_F \| Er (0) \|_F e^t}{1 + e^{2 ( 1- \| E(0) \|_F^2) (t - T_0) - 1}} \nonumber\\
\implies &\frac{d |\langle r, e_i \rangle|}{dt} - \frac{|\langle r, e_i \rangle|}{e^{2(t-T_0)+1+\eta}+ 1} \ge - \frac{\| E(0) \|_F^2 e^t}{1 + e^{2 ( 1- \| E(0) \|_F^2) (t - T_0) - 1}} \label{eq:reider}
\end{align}
Define $q (t) = e^{- 2(t - T_0) + 1 + \eta}$ (which we abbreviate simply as $q$) and multiply both sides by $\sqrt{1+q}$,
\begin{align}
\sqrt{1+q} \frac{d |\langle r, e_i \rangle|}{dt} - |\langle r, e_i \rangle| \frac{q}{\sqrt{1+q}} &\ge - \frac{2 \| E(0) \|_F^2 e^t \sqrt{1+q}}{1 + e^{2 ( 1- \| E(0) \|_F^2) (t - T_0) - 1}}
\end{align}
Noting that $\frac{d}{dt} \sqrt{1 + q} = -\frac{q}{\sqrt{1+q}}$, we get that,
\begin{align}
\implies \frac{d}{dt} \left( |\langle r, e_i \rangle| \sqrt{1+q} \right) &\ge  - 2 \| E(0) \|_F^2 e^{- (t - T_0) + T_0 - 2} \cdot \frac{\sqrt{1 + e^{2(t - T_0 + 1)}}}{1 + e^{2 ( 1- \| E(0) \|_F^2) (t - T_0) - 1}}.
\end{align}
We further lower bound the RHS for the case when $t \le 4 T_0$. Later we will give a different proof to show that when $t \ge 4T_0$, $|\langle r, e_i \rangle|$ does not change significantly then onward.\\

For $t \le 4 T_0$, $(1 - \| E (0) \|_F^2) (t - T_0) \ge t-T_0 - 4$. This follows from the same analysis as \cref{eq:62-1-1,eq:E0r0_bound}. In this regime, we therefore have that,
\begin{align}
    \frac{\sqrt{1 + e^{2(t - T_0) + 1 + \eta}}}{1 + e^{2 ( 1- \| E(0) \|_F^2) (t - T_0) - 1}} \lesssim \frac{1}{\sqrt{1 + e^{2 (t-T_0)+1+\eta}}} \label{eq:403002}
\end{align}
On the other hand,
\begin{align}
    e^{-(t - T_0) + T_0 - 2} \lesssim e^{T_0} \sqrt{1 + e^{-2(t-T_0) + 1 + \eta}} \label{eq:403003}
\end{align}
Multiplying \cref{eq:403002,eq:403003} results in the lower bound,
\begin{align}
    \frac{d}{dt} \left( |\langle r, e_i \rangle| \sqrt{1+q} \right) &\gtrsim - \| E (0) \|_F^2 e^{T_0}
\end{align}
Integrating both sides from $t = 0$ to $4 T_0$, and since $T_0 \ge 1$ by the small initialization bound,
\begin{align} \label{eq:00192555}
    |\langle r (4T_0),e_i| \ge | \langle r(0), e_i \rangle| \sqrt{\frac{1 + e^{2T_0-1 - \eta}}{1 + e^{-4T_0}}} - \frac{c_{\ref{eq:00192555}} \| E (0) \|_F^2 T_0 e^{T_0}}{\sqrt{1 + e^{-4T_0}}}
\end{align}
where $c_{\ref{eq:00192555}} > 0$ is an absolute constant. Since $e^{T_0} \ge - \log(\| r (0) \|_2) + 1/2$ from \Cref{lemma:T0bound}, and since $\| r(0) \|_2^2 \le \eta$ by the small initialization, the term multiplying $|\langle r(0), e_i \rangle|$ on the RHS is upper bounded by $\sqrt{\frac{1 + e^{2T_0-1 - \eta}}{1 + e^{-T_0}}} \ge (1 - 2\eta) e^{T_0-1/2-\eta/2} \ge (1 - 3 \eta) e^{T_0 - 1/2}$. On the other hand, noting again by \Cref{lemma:T0bound} that $T_0 \le - \log(\| r (0) \|_2) + 3/2$, and since $\| r (0) \|_2^2 \gtrsim k \alpha^2$ and $\| E(0) \|_F^2 \lesssim \alpha^2 kd$ and furthermore $|\langle r(0), e_i \rangle| \ge \alpha/k^2$ from \Cref{lemma:init}, as long as $\alpha \le c\eta/ k^3 d \log (kd)$ for a sufficiently small constant $c > 0$,
\begin{align}
    |\langle r(0), e_i \rangle| \ge \frac{c_{\ref{eq:00192555}} \| E (0) \|_F^2 T_0}{\eta \sqrt{1 + e^{-4T_0}}}. \label{eq:62}
\end{align}
And by implication, \cref{eq:00192555} gives,
\begin{align} \label{eq:63}
    |\langle r (4T_0),e_i| \ge (1 - 4 \eta) | \langle r(0), e_i \rangle| e^{T_0-1/2},
\end{align}
where the last inequality follows from the lower bound on $T_0$ in \Cref{lemma:T0bound}. \\

\noindent Finally we show that, from time $4T_0$ onward, $\langle r(t), e_i \rangle$ does not change much. From \cref{eq:reider},
\begin{align}
    \frac{d |\langle r, e_i \rangle|}{dt} &\ge - \frac{\| E(0) \|_F^2 e^t}{1 + e^{2 ( 1- \| E(0) \|_F^2) (t - T_0) - 1}} \gtrsim -\frac{\| E(0) \|_F^2 e^t}{e^{(9/5) (t - T_0)}}.
\end{align}
where the last inequality uses the fact that $\| E (0) \|_F^2 \le 1/10$ by \Cref{lemma:init} and the upper bound on the initialization scale $\alpha \le c\eta^2 /k^3 d \log (kd)$. Integrating both sides from $4T_0$ to $t$,
\begin{align}
    |\langle r (t), e_i \rangle| - |\langle r (4T_0), e_i \rangle| &\gtrsim - \| E (0) \|_F^2 e^{9T_0/5} \int_{4 T_0}^\infty e^{-4t/5} dt \\
    &\gtrsim - \| E (0) \|_F^2 e^{-7T_0/5} \\
    &\overset{(i)}{\gtrsim} - |\langle r(0), e_i \rangle| e^{- 7 T_0/5}
\end{align}
where $(i)$ follows from \cref{eq:62}. Finally, combining with \cref{eq:63},
\begin{align} \label{eq:134}
    |\langle r(t), e_i \rangle| \ge (1 - 4 \eta) |\langle r(0), e_i \rangle| \left( e^{T_0-1/2} - c_{\ref{eq:134}} e^{-7T_0/5} \right)
\end{align}
for some absolute constant $c_{\ref{eq:134}} > 0$. By noting that $T_0 \ge -\log \| r (0) \|_2 + 1/2$ from \Cref{lemma:T0bound} and the fact that $\| r (0) \|_2 \lesssim \alpha^2 k$, by choosing $\alpha \le c_{\ref{eq:134}}'/\eta \sqrt{k}$ for some absolute constant $c_{\ref{eq:134}}' > 0$ results in the inequality,
\begin{align}
    \forall t \ge T_0,\ |\langle r(t), e_i \rangle| \ge (1 - 5 \eta) |\langle r(0), e_i \rangle| e^{T_0 - 1/2} \ge (1 - 5 \eta) \frac{|\langle r(0), e_i \rangle|}{\| r(0) \|_2}.
\end{align}
Since $\| U e_i \|_2 \ge |\langle r(t), e_i \rangle|$, this completes the proof.
\end{proof}

\begin{lemma} \label[lemma]{lemma:rti-ub}
Suppose $\alpha \le c \eta^2 / \eta k^{3} d \log (kd)$ for a sufficiently small absolute constant $c > 0$. Then with probability $\ge 1 - O(1/k)$ over the initialization, for any $t \ge 3T_0/2$,
\begin{align}
    \| U(t) e_i \|_2 \le (1 + 4\sqrt{\eta}) \frac{|\langle r (0), e_i \rangle|}{\| r(0) \|_2}.
\end{align}    
\end{lemma}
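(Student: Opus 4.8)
}
The plan is to track the \emph{normalized} signal coordinate $\rho_i(t) := \langle r(t), e_i\rangle / \|r(t)\|_2$ instead of $\langle r(t), e_i\rangle$ itself. The idea is that although $\langle r(t), e_i\rangle$ grows over gradient flow (roughly from $\langle r(0), e_i\rangle$ up to $\langle r(0), e_i\rangle/\|r(0)\|_2$), the \emph{direction} of $r(t)$ barely moves: differentiating $\rho_i$ using the ODE \eqref{eq:rtdiff} for $r(t)$, the leading logistic drift $r(1-\|r\|_2^2)$ cancels, leaving an ODE driven purely by the noise component $E$,
\begin{align}
    \frac{d}{dt}\,\frac{r}{\|r\|_2} = -\frac{1}{\|r\|_2}\Big( I - \tfrac{rr^T}{\|r\|_2^2}\Big) E^T E r, \qquad \text{hence}\qquad \Big|\frac{d\rho_i}{dt}\Big| \le \frac{\|E^T E r\|_2}{\|r\|_2} \le \frac{\|E(t)\|_F\, \|E r(t)\|_2}{\|r(t)\|_2}.
\end{align}
Here $\rho_i$ is well defined since $\|r(t)\|_2 > 0$ throughout: from $\tfrac{d}{dt}\|r\|_2^2 \ge -2\|Er\|_2^2 \ge -2\|E(0)\|_F^2\|r\|_2^2$ (using $\|r(t)\|_2 \le 1$ from \Cref{lemma:rtub} and $\|E(t)\|_F \le \|E(0)\|_F$ from \Cref{lemma:Etdec}), $\|r(t)\|_2$ stays positive.

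The next step is to bound the total variation $\int_0^\infty |\tfrac{d\rho_i}{dt}|\,dt$, splitting the time axis at $T_0 + 2$. For $t \le T_0 + 2$, using $\|Er(t)\|_2 \le \|E(t)\|_F \|r(t)\|_2 \le \|E(0)\|_F\|r(t)\|_2$ gives $|\tfrac{d\rho_i}{dt}| \le \|E(0)\|_F^2$, contributing at most $\|E(0)\|_F^2(T_0+2)$. For $t \ge T_0 + 2$, \Cref{lemma:linear-slow} gives $\|r(t)\|_2^2 \ge 3/4$, while \Cref{lemma:Er-bound} gives geometric decay of $\|Er(t)\|_2$; integrating this decay and using $\|Er(0)\|_2 \le \|E(0)\|_F\|r(0)\|_2$ together with $e^{T_0} \lesssim 1/\|r(0)\|_2$ from \Cref{lemma:T0bound}, one gets $\int_{T_0+2}^\infty \|Er(t)\|_2\,dt \lesssim \|E(0)\|_F$, so this portion contributes $\lesssim \|E(0)\|_F^2$. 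Altogether $\int_0^\infty |\tfrac{d\rho_i}{dt}|\,dt \lesssim \|E(0)\|_F^2 (1 + T_0)$, which shows $\rho_i(t)$ converges and that for every $t \ge 0$,
\begin{align}
    |\rho_i(t)| \;\le\; |\rho_i(0)| + C\,\|E(0)\|_F^2\, T_0 \;=\; \frac{|\langle r(0), e_i\rangle|}{\|r(0)\|_2} + C\,\|E(0)\|_F^2\, T_0
\end{align}
for an absolute constant $C>0$.

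The final step is bookkeeping to absorb the error terms. Since $\|r(t)\|_2 \le 1$ (\Cref{lemma:rtub}), $|\langle r(t), e_i\rangle| = |\rho_i(t)|\,\|r(t)\|_2 \le |\rho_i(t)|$; and since $\|E(t)e_i\|_2 \le \|E(t)\|_F \le \|E(0)\|_F$ (\Cref{lemma:Etdec}), the column decomposition $\|U(t)e_i\|_2^2 = \langle r(t), e_i\rangle^2 + \|E(t)e_i\|_2^2$ gives $\|U(t)e_i\|_2^2 \le |\rho_i(t)|^2 + \|E(0)\|_F^2$. On the event of \Cref{lemma:init} (probability $\ge 1 - O(1/k)$) we have $|\langle r(0), e_i\rangle| \ge \alpha/k^2$, $\|r(0)\|_2^2 \le 10k\alpha^2$ and $\|E(0)\|_F^2 \le 2\alpha^2 kd$, so that $|\langle r(0), e_i\rangle|/\|r(0)\|_2 \gtrsim k^{-5/2}$; feeding in the small-initialization hypothesis on $\alpha$ and the bound $T_0 \lesssim \log(kd)$ from \Cref{lemma:T0bound,lemma:init}, one checks that both $C\|E(0)\|_F^2 T_0$ and $\|E(0)\|_F$ are at most $\sqrt\eta$ times $|\langle r(0), e_i\rangle|/\|r(0)\|_2$ (indeed much smaller). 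Substituting these into the displayed bounds and simplifying yields $\|U(t)e_i\|_2 \le (1+4\sqrt\eta)\,|\langle r(0), e_i\rangle|/\|r(0)\|_2$, which in fact holds for all $t \ge 0$, a fortiori for $t \ge 3T_0/2$.

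The main obstacle is the total-variation estimate of the second paragraph: the driving term $E^T E r$ does \emph{not} decay while $t \lesssim T_0$ (the phase where $\|r(t)\|_2$ is still climbing towards $1$), so one must treat the window $[0, T_0+2]$ --- over which $\rho_i$ may genuinely drift by $\Theta(\|E(0)\|_F^2 T_0)$ --- separately from $[T_0+2,\infty)$, where \Cref{lemma:linear-slow,lemma:Er-bound} kick in and supply the geometric decay. One then has to verify that this accumulated drift, of size $\|E(0)\|_F^2 T_0$, is negligible against the target scale $|\langle r(0), e_i\rangle|/\|r(0)\|_2 \gtrsim k^{-5/2}$ --- which is exactly what the polynomially small choice of $\alpha$ buys.
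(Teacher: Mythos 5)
Your proposal is correct, but it takes a genuinely different route from the paper. The paper tracks $|\langle r(t),e_i\rangle|$ directly: on $[0,3T_0/2]$ it feeds the refined signal lower bound of \Cref{lemma:rtlb} into the coordinate ODE and integrates with the factor $\sqrt{p e^{-2t}+1}$, and on $[3T_0/2,\infty)$ it uses the convergence rate of \Cref{lemma:rtlb-old} to show the coordinate moves only by a lower-order amount, before adding back the noise column via \Cref{lemma:Etdec}. You instead pass to the normalized coordinate $\rho_i = \langle r,e_i\rangle/\|r\|_2$ and exploit the exact cancellation of the logistic drift in the ODE for $r/\|r\|_2$ (your displayed identity is correct, and the projection bound $|d\rho_i/dt|\le \|E\|_F\|Er\|_2/\|r\|_2$ follows), reducing the whole upper bound to a total-variation estimate on the noise-driven term: $O(\|E(0)\|_F^2)$ per unit time on $[0,T_0+2]$, and a geometrically decaying tail on $[T_0+2,\infty)$ via \Cref{lemma:linear-slow,lemma:Er-bound} together with $e^{T_0}\lesssim 1/\|r(0)\|_2$ from \Cref{lemma:T0bound}. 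This needs only \Cref{lemma:init,lemma:Etdec,lemma:rtub,lemma:linear-slow,lemma:Er-bound,lemma:T0bound} and bypasses \Cref{lemma:rtlb,lemma:rtlb-old} entirely; it also yields a slightly cleaner drift term ($\|E(0)\|_F^2 T_0$ versus the paper's $\|E(0)\|_F^2 T_0^2$ in its intermediate bound), and gives the conclusion for all $t\ge 0$ rather than only $t\ge 3T_0/2$. The final bookkeeping — $|\rho_i(0)|\gtrsim k^{-5/2}$ on the event of \Cref{lemma:init}, while $\|E(0)\|_F$ and $\|E(0)\|_F^2 T_0$ are polynomially smaller under the stated smallness of $\alpha$ — matches the paper's level of rigor, and your positivity argument for $\|r(t)\|_2$ (needed for $\rho_i$ to be well defined) is a necessary detail you correctly supplied. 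The one thing your route does not provide is the phase-by-phase control of $\|r(t)\|_2$ that the paper reuses for the matching lower bound (\Cref{lemma:rti-lb}), so it simplifies this lemma without replacing that machinery.
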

\begin{proof}
Following the proof of the lower bound in \cref{lemma:rti-lb}, from the differential equation governing $r$, for any coordinate $i \in [k]$,
\begin{align}
\frac{d \langle r, e_i \rangle^2}{dt} &= 2\langle r, e_i \rangle^2 (1 - \| r \|_2^2) - 2 \langle r, e_i \rangle e_i E^T E r \\
&\le 2\langle r, e_i \rangle^2 (1 - \| r \|_2^2) + 2 |\langle r, e_i \rangle| \| E \|_F \| E r \|_2 \\
\implies \frac{d |\langle r, e_i \rangle|}{dt} &\le |\langle r, e_i \rangle| (1 - \| r \|_2^2) + \frac{\| E (0) \|_F \| E r (0) \|_2 e^t}{1 + e^{2 (1 - \| E(0)\|_F^2) (t-T_0) - 1}} \label{eq:temp1}
\end{align}
where in the last inequality, we bound $\| E \|_F \le \| E (0) \|_F$ and apply \Cref{lemma:Er-bound} to upper bound the error term $\| E r (t) \|_2$. Akin to \cref{lemma:rti-lb}, we carry out the analysis of $|\langle r(t) , e_i \rangle|$ in two parts, we first analyze its growth from time $0$ to $3T_0/2$. From time $3T_0/2$ to $t$ we show that $|\langle r (t),e_i \rangle|$ does not change significantly.

\noindent In particular, at any time $t \le 3T_0/2$, since $(1 - \| E(0) \|_F^2) (t - T_0) \ge t-T_0-4$ (see the analysis in \cref{eq:62-1-1,eq:E0r0_bound}), for some absolute constant $C_{\ref{eq:00192111}}$,
\begin{align} \label{eq:00192111}
\frac{d |\langle r, e_i \rangle|}{dt} &\le |\langle r, e_i \rangle| (1 - \| r \|_2^2) + C_{\ref{eq:00192111}} \frac{ \| E (0) \|_F \| E r(0) \|_2 e^{t}}{1 + e^{2 (t-T_0)}} \\
&\le |\langle r, e_i \rangle| (1 - \| r \|_2^2) + C_{\ref{eq:00192111}} \| E (0) \|_F^2 \| r(0) \|_2 e^{T_0},
\end{align}
where the last inequality uses the fact that $e^t / (1 + e^{2(t-T_0)})$ is maximized at $t = T_0$. Plugging in the lower bound on $\| r \|_2^2$ in \Cref{lemma:rtlb},
\begin{align}
    \frac{d |\langle r, e_i \rangle|}{dt} &\le |\langle r, e_i \rangle| \left( \frac{\| r(0) \|_2^{-2} - 1}{\| r (0) \|_2^{-2} + e^{2t} - 1} + C_{\ref{eq:991244}} \frac{\| E (0) \|_F^2 T_0}{\|r(0)\|_2}\right) + C_{\ref{eq:00192111}} \| E (0) \|_F^2 \| r(0) \|_2 e^{T_0} \nonumber\\
    &\le |\langle r, e_i \rangle| \left( \frac{p}{p + e^{2t}} \right) + C_{\ref{eq:00192111-1}} \| E (0) \|_F^2 T_0. \label{eq:00192111-1}
\end{align}
where $C_{\ref{eq:00192111-1}} >0$ is a sufficiently large absolute constant, and $p = \| r(0) \|_2^{-2} - 1$. Multiplying both sides by $y = \sqrt{p e^{-2t}+1}$ and noting that $\frac{dy}{dt} = - \left( \frac{p}{p + e^{2t}} \right) y$, we get,
\begin{align}
    \frac{d}{dt} \left( y |\langle r, e_i \rangle| \right) &= C_{\ref{eq:00192111-1}} \| E (0) \|_F^2 T_0.
\end{align}
Integrating both sides from $0$ to $3T_0/2$,
\begin{align}
    &|\langle r (3T_0/2), e_i \rangle| \sqrt{p e^{-3T_0} + 1} - |\langle r (0), e_i \rangle| \sqrt{1+p}  = C_{\ref{eq:00192111-1}}' \| E(0) \|_F^2 T_0^2 \\
    \implies & |\langle r (3T_0/2), e_i \rangle|  \le C_{\ref{eq:00192111-1}}' \| E(0) \|_F^2 T_0^2 + \frac{|\langle r(0), e_i \rangle|}{\| r(0) \|_2} \label{eq:102}
\end{align}
By \Cref{lemma:init}, at initialization, $\| r (0) \|_2^2 \lesssim \alpha \sqrt{k}$ and $|\langle r(0), e_i \rangle| \ge \alpha/k^2$. Therefore, by the small initialization condition, $C_{\ref{eq:00192111-1}}' \| E (0) \|_F^2 T_0^2 \le \eta |\langle r(0), e_i \rangle| / \| r(0) \|_2$ and,
\begin{align} \label{eq:r3T0/2}
    |\langle r (3T_0/2), e_i \rangle| \le (1 + \eta) \frac{|\langle r(0), e_i \rangle|}{\| r(0) \|_2}.
\end{align}
Next we show that from $3T_0/2$ to $t$, $|\langle r(t), e_i \rangle|$ does not change significantly. By the same analysis as \cref{eq:temp1},
\begin{align}
    \frac{d |\langle r, e_i \rangle|}{dt} \le |\langle r, e_i \rangle| (1 - \| r \|_2^2) + \frac{2 \| E(0) \|_F^2 e^{t}}{1 + e^{2 \left( 1 - \| E (0) \|_F^2 \right) (t-T_0) - 1}}
\end{align}
Plugging in the upper bound on $1 - \| r (t) \|_2^2$ from \Cref{lemma:rtlb-old} and simplifying,
\begin{align}
    \frac{d |\langle r, e_i \rangle|}{dt} &\lesssim |\langle r, e_i \rangle| e^{-3t/4} \left( e^{3T_0/4} + e^{11T_0/4} \| E r (0) \|_2^2 \right) + \frac{2 \| E(0) \|_F^2 e^{t}}{e^{2 \left( 1 - \| E (0) \|_F^2 \| r (0) \|_2 \right) (t-T_0)}}, \\
    &\overset{(i)}{\le} |\langle r, e_i \rangle| e^{-3t/4} \left( e^{3T_0/4} + e^{11T_0/4} \| E r (0) \|_2^2 \right) + \frac{2 \| E(0) \|_F^2 \| r (0) \|_2 e^{t}}{e^{7/4 (t-T_0)}} \\
    &= |\langle r, e_i \rangle| e^{-3t/4} \left( e^{3T_0/4} + e^{11T_0/4} \| E r (0) \|_2^2 \right) + 2 \| E(0) \|_F^2 \| r (0) \|_2 e^{-3t/4} e^{7T_0/4},\\
    &= \left( |\langle r, e_i \rangle| \left( 1 + e^{2T_0} \| E r (0) \|_2^2 \right) + 2 \| E(0) \|_F^2 \| r (0) \|_2 e^{T_0} \right) e^{-3(t-T_0)/4}, \label{eq:0044562}
\end{align}
where $(i)$ invokes the fact that $\| E(0) \|_F^2 \le 1/8$ from \Cref{lemma:init}, by the bound on the initialization scale $\alpha \le c\eta/k^3 d \log (kd)$. Simplifying the terms in \cref{eq:0044562},
\begin{align}
    e^{2T_0} \| E r(0) \|_2^2 \le e^{2T_0} \| E (0) \|_2^2 \| r(0) \|_2^2 \overset{(i)}{\lesssim} \| E (0) \|_2^2 \le 1, \label{eq:87}
\end{align}
where $(i)$ is by \Cref{lemma:T0bound} and the last equation follows from the scaling of $\alpha$ and \Cref{lemma:init}. Likewise, $\| E(0) \|_F^2 \| r(0) \|_2 e^{T_0} \lesssim \| E(0) \|_F^2$, and plugging this and \cref{eq:87} into \cref{eq:0044562},
\begin{align}
    \frac{d |\langle r, e_i \rangle|}{dt} &\lesssim \left( |\langle r, e_i \rangle| + 2 \| E(0) \|_F^2 \right) e^{-3(t-T_0)/4}
\end{align}
Integrating both sides,
\begin{align}
    \log \frac{|\langle r (t), e_i \rangle| + 2 \| E(0) \|_F^2}{|\langle r (3T_0/2), e_i \rangle| + 2 \| E(0) \|_F^2} \lesssim \int_{3T_0/2}^\infty e^{-3(t-T_0)/4} dt \lesssim e^{-3 T_0/8}.
\end{align}
Therefore, for some absolute constant $C_{\ref{eq:116}} > 0$,
\begin{align}
    |\langle r (t), e_i \rangle| &\le \left( |\langle r (3T_0/2), e_i \rangle| + 2 \| E(0) \|_F^2 \right) (1 + C_{\ref{eq:116}} e^{-3T_0/8}) \label{eq:116} \\
    &\overset{(i)}{\le} \left( |\langle r (3T_0/2), e_i \rangle| + 2 \| E(0) \|_F^2 \right) (1 + \sqrt{\eta}) \\
    \implies \| U (t) e_i \|_2 &\overset{(ii)}{\le} \| E(0) \|_F + \left( |\langle r (3T_0/2), e_i \rangle| + 2 \| E(0) \|_F^2 \right) (1 + \sqrt{\eta}), \label{eq:777776}
\end{align}
where $(i)$ lower bounds $T_0$ using \Cref{lemma:T0bound} and uses the fact that at initialization $\| r(0) \|_2 \gtrsim \alpha \sqrt{k}$ and therefore when $\alpha \le c \eta^{4/3}$ for a sufficiently small constant $c>0$, $C_{\ref{eq:116}} e^{-3T_0/8} \le \sqrt{\eta}$. On the other hand, $(ii)$ uses triangle inequality to bound $\| U(t) e_i \|_2 \le |\langle r (t), e_i \rangle| + \| E(t) e_i \|_2 \le |\langle r (t), e_i \rangle| + \| E(t) \|_F \le |\langle r (t), e_i \rangle| + \| E(0) \|_F$ by \Cref{lemma:Etdec}. Using the bound on $|\langle r(3T_0/2), e_i \rangle|$ in \cref{eq:r3T0/2},
\begin{align}
    \| U (t) e_i \|_2 &\le \| E(0) \|_F + \left( \frac{| \langle r(0), e_i\rangle|}{\| r(0) \|_2} + 2 \| E(0) \|_F^2 \right) (1 + \sqrt{\eta}) \\
    &\le \frac{| \langle r(0), e_i\rangle|}{\| r(0) \|_2} (1 + 4 \sqrt{\eta}).
\end{align}
The last inequality uses the fact that at initialization, $|\langle r (0),e_i \rangle| \ge \alpha/k^2$ and $\| r(0) \|_2 \lesssim \alpha \sqrt{k}$. Therefore, $|\langle r (0),e_i \rangle| / \| r(0) \|_2 \ge 1/k^{5/2}$ and $\| E (0) \|_F \le \alpha \sqrt{kd}$. Therefore, by the small initialization condition on $\alpha$, $\| E(0) \|_F, \| E(0) \|_F^2 \le \eta |\langle r (0),e_i \rangle| / \| r(0) \|_2$.
\end{proof}

\noindent Combining the statements of \Cref{lemma:rti-lb} and \Cref{lemma:rti-ub} shows that, with probability $\ge 1 - O(1/k)$ over the initialization, at any time $t \ge 4 T_0$,
\begin{align}
    \frac{|\langle r (0), e_i \rangle|}{\| r(0) \|_2} (1 - 5 \eta) \le \| U (t) e_i \|_2 \le \frac{|\langle r (0), e_i \rangle|}{\| r(0) \|_2} (1 + 4 \sqrt{\eta}).
\end{align}
This completes the proof of \Cref{lemma:ratiopreserved}.

\subsection{Behavior at initialization: many columns are ``active''}

\Cref{lemma:ratiopreserved} establishes the limiting behavior of gradient flow as a function of the initialization, specifically that the norm of a column grows to a value proportional to the alignment of the column with $U_\star$ at initialization, $|\langle r(0), e_i \rangle|$. In this section, we show that at initialization, $\langle r(0), e_i \rangle$ is significant for many $i \in [d]$ compared to the maximum among them.

At time $t=0$, each coordinate of $r(0)$ is distributed as the inner product of a Gaussian vector $\sim \mathcal{N} (0, \alpha^2 I)$ with a fixed vector $U_\star$. By Gaussian concentration, we therefore expect each coordinate of $r(0)$ to concentrate around $\alpha$, and no one coordinate of $r(0)$ to be significantly larger than the others. The next lemma makes this claim precise.

\begin{lemma}{Many columns are ``active" at initialization}
\label[lemma]{lemma:unidist}
For any $\eta > 0$, let $S_{\max} (\eta)$ denote the set \\
$\left\{ i \in [k] : \frac{|\langle r(0), e_i \rangle|}{\max_{j \in [k]} |\langle r(0), e_j \rangle|} \ge 1 - \sqrt{\eta} \right\}$. For any $\eta \le 1 - \frac{1}{\log(k)}$,
\begin{align} \label{eq:unidist}
    \prob{  | S_{\max} (\eta)| \le \frac{k^\eta}{2\sqrt{2(1-\eta) \log(k)}} } \le \frac{c_{\ref{eq:unidist}} \sqrt{\log(k)}}{k^\eta}.
\end{align}
for a sufficiently small absolute constant $c_{\ref{eq:unidist}} > 0$. In other words, with high probability, $\widetilde{\Omega} (k^\eta)$ of the column indices $i \in [k]$ are significantly correlated with $U_\star$ compared to the maximum among all columns.
\end{lemma}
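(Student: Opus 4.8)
The plan is to treat this as a classical extreme-value computation for i.i.d.\ Gaussians. Writing $X_i := \langle r(0), e_i \rangle$, these are i.i.d.\ $\mathcal{N}(0,\alpha^2)$ — each is the inner product of a column of $U(0)$, distributed $\mathcal{N}(0,\alpha^2 I_d)$, with the fixed unit vector $U_\star$ — and since $S_{\max}(\eta)$ depends only on the ratios $|X_i|/\max_{j}|X_j|$, we may rescale so that $\alpha = 1$. Put $Y_i = |X_i|$ and $M = \max_{i\in[k]} Y_i$. I would fix a deterministic level $T$ slightly below the typical value of $M$ and work on the intersection of two favorable events: (i) $(1-\sqrt\eta)M \le T$, which forces every index with $Y_i \ge T$ to lie in $S_{\max}(\eta)$; and (ii) $N := |\{i : Y_i \ge T\}|$ is large. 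On this intersection $|S_{\max}(\eta)| \ge N$, so it remains to lower bound $N$ and control the two bad events.

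Concretely, take $T = \sqrt{2(1-\eta)\log k}$, the level at which the expected number of exceedances is $\Theta\!\left(k^\eta/\sqrt{(1-\eta)\log k}\right)$, matching the target $\tfrac{k^\eta}{2\sqrt{2(1-\eta)\log k}}$ up to constants. For event (i): a union bound with a standard Gaussian tail estimate gives $\mathbb{P}(M > s) \lesssim k\,e^{-s^2/2}$, and choosing $s = T/(1-\sqrt\eta)$ together with the elementary identity $(1-\sqrt\eta)^2 = \tfrac{1-\sqrt\eta}{1+\sqrt\eta}(1-\eta) < 1-\eta$ — the algebraic fact giving the argument its slack — yields $s^2 = \tfrac{1+\sqrt\eta}{1-\sqrt\eta}(2\log k)$ and hence $\mathbb{P}(M > s) \lesssim k^{-2\sqrt\eta/(1-\sqrt\eta)} \le k^{-\eta}$; on $\{M\le s\}$ we have $(1-\sqrt\eta)M \le T$. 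For event (ii): $N\sim\mathrm{Bin}(k,p)$ with $p = 2(1-\Phi(T))$, and the lower Mills-ratio bound $1-\Phi(T) \ge \tfrac{1}{\sqrt{2\pi}}\tfrac{T}{T^2+1}e^{-T^2/2}$ with $e^{-T^2/2} = k^{-(1-\eta)}$ gives $\mathbb{E}[N] = kp \gtrsim k^\eta/\sqrt{(1-\eta)\log k}$; a Chernoff lower-tail bound for the binomial then shows $N \ge \tfrac{k^\eta}{2\sqrt{2(1-\eta)\log k}}$ except with probability $e^{-\Omega(\mathbb{E}[N])}$. A final union bound over the two bad events gives the lemma, the dominant contribution being the $k^{-\eta}$ from the max step.

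The Gaussian tail inequalities and the binomial Chernoff bound are routine; the step needing genuine care is matching the constants to the exact threshold $\tfrac{k^\eta}{2\sqrt{2(1-\eta)\log k}}$ and failure probability $c\sqrt{\log k}/k^\eta$ \emph{uniformly} over $\eta \le 1 - \tfrac{1}{\log k}$. At the upper end $T$ can be as small as $\sqrt 2$, so $1-\tfrac{1}{T^2}$ is bounded away from $1$ and $\mathbb{E}[N]$ is only a constant multiple of the target; there one either invokes the exact value of $2(1-\Phi(T))$ in place of the crude Mills bound, or simply observes that $1-\sqrt\eta = O(1/\log k)$ forces $(1-\sqrt\eta)M \to 0$, so $S_{\max}(\eta)$ then contains all but an $O(1/\sqrt{\log k})$ fraction of $[k]$. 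At the lower end, if $\tfrac{k^\eta}{2\sqrt{2(1-\eta)\log k}} < 1$ the statement is vacuous since the index attaining the maximum always lies in $S_{\max}(\eta)$, and when it is $O(1)$ the claimed bound $c\sqrt{\log k}/k^\eta$ is itself bounded below by a constant, so only a crude estimate is needed. The main obstacle is therefore bookkeeping across these regimes rather than any conceptual difficulty.
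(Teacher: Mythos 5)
Your proposal is correct and follows essentially the same route as the paper's proof: both fix the threshold $T = \sqrt{2(1-\eta)\log k}$, lower bound the number of exceedances $N=|\{i:|\langle r(0),e_i\rangle|\ge T\alpha\}|$ via a Mills-ratio/anti-concentration estimate together with binomial concentration, and upper bound $\max_j|\langle r(0),e_j\rangle|$ so that every index exceeding $T$ lands in $S_{\max}(\eta)$. The only differences are bookkeeping — you place the cutoff for the maximum directly at $T/(1-\sqrt\eta)$ and use $(1-\sqrt\eta)^2=\tfrac{1-\sqrt\eta}{1+\sqrt\eta}(1-\eta)$, whereas the paper bounds the maximum by $\sqrt{2\log k}(1+\sqrt\eta)$ and closes with the equivalent inequality $\tfrac{\sqrt{1-\eta}}{1+\sqrt\eta}\ge 1-\sqrt\eta$.
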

\begin{proof}
By rotation invariance of Gaussians and since $\| U_\star \|_2 = 1$, for each $i$, $\langle r(0), e_i \rangle \overset{\text{i.i.d.}}{\sim} \mathcal{N} (0, \alpha^2)$. By Gaussian anti-concentration, for each $i \in [k]$, from \cite[Proposition 2.1.2]{hdp}, for $t > 0$,
\begin{align} \label{eq:04}
    \prob{ |\langle r(0), e_i \rangle| \ge t \alpha} \ge \left( \frac{1}{t} - \frac{1}{t^3} \right) \cdot \frac{1}{\sqrt{2 \pi}} e^{-t^2/2} \triangleq p(t).
\end{align}
Therefore roughly we expect $k p(t)$ of the columns $i \in [k]$ to satisfy the condition $\{ |\langle r(0), e_i \rangle| \ge t \alpha \}$. In particular, by the independence across $i$, the number of successes follows a binomial distribution with number of trials $k$ and probability of success $p(t)$. Denote $S (t) = \{ i \in [k] : |\langle r(0), e_i \rangle| \ge t\alpha \}$. By binomial concentration \citep{hdp},
\begin{align} \label{eq:05}
    \prob { |S (t)| \le \frac{k p(t)}{2}} \le e^{- \frac{k p(t)}{4}}.
\end{align}
On the other hand, for i.i.d standard normal random variables, $X_1,\cdots,X_k \sim \mathcal{N} (0,1)$, the supremum satisfies the following concentration inequality 
\begin{align}
    \prob{\max_{1 \le i \le k} X_i \ge \sqrt{2 \log (k)} + \eta} \le e^{-\eta^2/2}
\end{align}
Therefore,
\begin{align} \label{eq:06}
    \prob{ \max_{i \in [k]} |\langle r(0), e_i \rangle| \le \alpha \sqrt{2\log(k)} + \alpha \sqrt{2\eta \log(k)}} \ge 1 - k^{-\eta}.
\end{align}
For each $t > 0$, define,
\begin{align}
    S_{\max} (t) = \left\{ i \in [k] : \frac{|\langle r(0), e_i \rangle|}{\max_{i \in [k]} |\langle r(0), e_i \rangle|} \ge \frac{t}{\sqrt{2\log (k)} (1 + \sqrt{\eta})} \right\},
\end{align}
and combining \cref{eq:04,eq:05,eq:06} results in the inequality,
\begin{align} \label{eq:0012919}
    \prob{ |S_{\max} (t)| \le \frac{k p(t)}{2} } \le e^{- \frac{k p(t)}{4}} + k^{-1/8} \le \frac{4}{k p(t)} + k^{-\eta}.
\end{align}
With the choice of $t = \sqrt{2(1-\eta)\log(k)}$ and for any $\eta \le 1 - \frac{1}{\log(k)}$, we obtain that  $p(t) \ge k^{-(1 - \eta)} / 4\sqrt{(1 - \eta) \log(k)}$. Plugging into \cref{eq:0012919} and using the  results in the bound,
\begin{align}
    \prob{ |S_{\max} (\sqrt{2 ( 1-\eta)\log(k)})| \le \frac{k^\eta}{4 \sqrt{ ( 1 - \eta) \log (k)}}} \le \frac{1 + 16 \sqrt{2 (1 - \eta) \log(k)}}{k^\eta}
\end{align}
Using the definition of $S_{\max} (\sqrt{2 ( 1-\eta) \log(k)})$ and the fact that $\frac{\sqrt{1 - \eta}}{1 + \sqrt{\eta}} \ge 1 - \sqrt{\eta}$ completes the proof.
\end{proof}

\subsection{Putting it all together: Proof of \Cref{theorem:lb-implicit}}
Below we state and prove a stronger version of \Cref{theorem:lb-implicit} which follows by combining \Cref{lemma:ratiopreserved,lemma:unidist}.

\begin{theorem}
\label{theorem:lb-implicit-restate}
Consider the population loss defined in \eqref{eq:Lpop}, for the case that $r=1$ and $k\gg 1$. Moreover, assume that the entries of the initial model $U_0$ are i.i.d. samples from $\mathcal{N} (0,\alpha^2)$, where $\alpha \le \eta^2/k^3 d \log (kd)$. Then, for any constant $\eta \in (0,1)$, the iterates generated by gradient descent converge to a model $U_{gd}$ where $\widetilde{\Omega} (k^{\eta})$ columns of which are active and satisfy, 
\begin{align}
    \frac{\| U_{gd} e_i \|_2}{\max_{j \in [k]} \| U_{gd} e_j \|_2} \ge 1 - C_{\ref{eq:123121112}} \sqrt{\eta},
\end{align}
 with probability $\ge 1 - \poly {1/d,1/k^{\eta}}$, where $C_{\ref{eq:123121112}} > 0$ is a sufficiently large constant.
\end{theorem}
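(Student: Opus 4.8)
The plan is to combine the two structural results established above. \Cref{lemma:ratiopreserved} pins down the limiting column norms of gradient flow in terms of the correlations $\langle r(0), e_i \rangle$ at initialization, and \Cref{lemma:unidist} shows that a polynomial number of these correlations are nearly as large as the maximum. First I would fix a constant $\eta \in (0,1)$ and take $k$ large enough that $\eta \le 1 - 1/\log(k)$, so that \Cref{lemma:unidist} applies; I would also note that the hypothesis $\alpha \le \eta^2/k^3 d\log(kd)$ meets, up to absolute constants, the small-initialization requirements of both \Cref{lemma:ratiopreserved,lemma:unidist} (the fixed constant $\eta$ only affects the implied constant in the bound on $\alpha$). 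Let $U_{gd} = \lim_{t\to\infty} U(t)$ denote the gradient-flow limit, whose existence is part of \Cref{lemma:ratiopreserved}.

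Next I would intersect the two high-probability events over the initialization. On the event of \Cref{lemma:ratiopreserved} (probability $\ge 1 - O(1/k)$), \emph{every} column satisfies
\[
(1-5\eta)\,\frac{|\langle r(0), e_i\rangle|}{\|r(0)\|_2} \;\le\; \|U_{gd} e_i\|_2 \;\le\; (1 + 4\sqrt\eta)\,\frac{|\langle r(0), e_i\rangle|}{\|r(0)\|_2},
\]
so, taking the maximum over the upper bound, $\max_{j} \|U_{gd} e_j\|_2 \le (1+4\sqrt\eta)\,\max_j |\langle r(0), e_j\rangle|/\|r(0)\|_2$. On the event of \Cref{lemma:unidist} (probability $\ge 1 - \widetilde{O}(1/k^\eta)$), there are $\widetilde\Omega(k^\eta)$ indices $i \in S_{\max}(\eta)$ with $|\langle r(0), e_i\rangle| \ge (1-\sqrt\eta)\max_j|\langle r(0), e_j\rangle|$. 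For any such $i$, dividing the lower bound on $\|U_{gd} e_i\|_2$ by the upper bound on the maximum column norm gives
\[
\frac{\|U_{gd} e_i\|_2}{\max_j \|U_{gd} e_j\|_2} \;\ge\; \frac{1-5\eta}{1+4\sqrt\eta}\cdot\frac{|\langle r(0), e_i\rangle|}{\max_j|\langle r(0), e_j\rangle|} \;\ge\; \frac{(1-5\eta)(1-\sqrt\eta)}{1+4\sqrt\eta} \;\ge\; 1 - C_{\ref{eq:123121112}}\sqrt\eta
\]
for a sufficiently large absolute constant $C_{\ref{eq:123121112}} > 0$; the last inequality is elementary using $\eta \le \sqrt\eta$ for $\eta \le 1$, and for $\eta$ not small the right-hand side is negative, making the bound vacuously true. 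A union bound over the two failure events yields the claimed probability $1 - \poly{1/d, 1/k^\eta}$, completing the proof of \Cref{theorem:lb-implicit-restate}; \Cref{theorem:lb-implicit} then follows by choosing $\eta$ to be a small enough absolute constant so that $C_{\ref{eq:123121112}}\sqrt\eta \le 0.01$.

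Essentially all of the analytic work has already been absorbed into \Cref{lemma:ratiopreserved}, so there is no real obstacle here; the one point requiring care is that the column attaining $\max_j \|U_{gd} e_j\|_2$ need not be the one attaining $\max_j |\langle r(0), e_j\rangle|$. This is precisely why I would invoke the \emph{uniform-over-$i$}, two-sided estimate of \Cref{lemma:ratiopreserved} rather than a single-column bound: it lets me evaluate the numerator at a favorable index $i \in S_{\max}(\eta)$ while controlling the denominator by the worst case over all columns. The only remaining bookkeeping is matching the constants and the $\eta$-dependence across the small-initialization hypotheses of \Cref{lemma:ratiopreserved,lemma:unidist}, which, since $\eta$ is constant, is harmless.
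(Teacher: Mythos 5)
Your proposal is correct and follows essentially the same route as the paper: the paper's proof of \Cref{theorem:lb-implicit-restate} likewise consists of combining the two-sided column-norm characterization of \Cref{lemma:ratiopreserved} with the "many active columns at initialization" bound of \Cref{lemma:unidist}, taking a union bound over the two initialization events. Your explicit handling of the ratio $\frac{(1-5\eta)(1-\sqrt{\eta})}{1+4\sqrt{\eta}} \ge 1 - C\sqrt{\eta}$, and the observation that the uniform two-sided estimate is what lets you control the denominator at a possibly different maximizing column, is exactly the (largely implicit) bookkeeping behind the paper's one-line deduction.
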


In particular, if for some $i$,
\begin{align}
    \frac{|\langle r(0), e_i \rangle|}{\max_{j \in [k]} |\langle r(0), e_j \rangle|} \ge 1 - \sqrt{\eta}
\end{align}
Then, by \Cref{lemma:ratiopreserved},
\begin{align} \label{eq:0012955419}
    \lim_{t \to \infty} \frac{\| U (t) e_i \|_2}{\max_{j \in [k]} \| U (t) e_j \|_2} \ge 1 - C_{\ref{eq:0012955419}} \sqrt{\eta}.
\end{align}
where $C_{\ref{eq:0012955419}} > 0$ is a sufficiently large constant. Invoking \Cref{lemma:unidist} completes the proof of \Cref{theorem:lb-implicit-restate}.

\section{Population analysis of the regularized loss: Proof of \Cref{theorem:main-population}} \label{app:2}

\noindent In this section, we study the second-order stationary points of loss $f_{\text{pop}} (U) = \cL_{\text{pop}} (U) + \lambda \cR_\beta (U)$ (\eqref{eq:fregpop}) which is the regularized loss in the population (infinite sample) setting. The main result we prove in this section is about approximate second-order stationary points of the regularized loss $f_{\text{pop}} (U)$. When $\lambda$ and $\beta$ are chosen appropriately, we show that such points $(i)$ are ``pruning friendly'' in that greedily pruning the columns of $U$ based on their $L_2$ norm results in a solution $U_{\text{prune}}$ having exactly $r$ columns, and $(ii)$ the resulting solution $U_{\text{prune}}$ satisfies $\| U_{\text{prune}} U_{\text{prune}}^T \|_F^2 \le c (\sigma_r^\star)^2$ where $c > 0$ is a small constant, and serves essentially as a ``spectral initialization'' in the sense of \cite{constantine21} for the subsequent fine-tuning phase.

The proof of \Cref{theorem:main-population} relies on two main observations: $(i)$ showing that at approximate second-order stationary points $U U^T \approx U_\star U_\star^T$, where the error is small when $\lambda$ is small, $(ii)$ the regularizer ensures that the columns of $U$ that are not too small in $\ell_2$-norm are all approximately orthogonal to one another, in that the angle between pairs of vectors is $\approx 90^\circ$. Since the columns are orthogonal, the rank of $U$ equals the number of non-zero columns. However, $U$ is close to a rank $r$ matrix since $UU^T \approx U_\star U_\star^T$. This will imply that $U$ also has approximately $r$ non-zero columns. Moreover, pruning away the columns of $U$ at the correct threshold will result in a model having exactly $r$ columns remaining, while at the same time not affecting the population loss significantly.

The intuition behind $UU^T \approx U_\star U_\star^T$ at second order stationary points is straightforward - prior work \cite{rong} characterizes the behavior of such points for matrix sensing in the absence of any regularization, showing that $UU^T = U_\star U_\star^T$ at second-order stationary points, and establishing a strict saddle condition for the population loss $\cL_{\text{pop}}$. In the presence of regularization, as long as $\lambda$ is small, we do not expect the behavior to change significantly. As we will discuss in more detail in \Cref{app:oracle}, the regularizer $\cR_\beta$ satisfies gradient and Hessian-Lipschitzness as long as $\beta$ is not too small. This will suffice in showing that the locations of first and second order stationary points do not change significantly when $\lambda$ is small.

As introduced earlier in \cref{eq:apxorth}, the main result we prove in this section is that the ``bounded'' (in operator norm) $(\epsilon,\gamma)$-approximate second-order stationary points of $f_{\text{pop}}$ returned by the optimization oracle $\mathcal{O}$, in \Cref{alg:main} satisfies the following condition,
\begin{equation} \nonumber
    \forall i,j : \| U e_i \|_2, \| U e_j \|_2 \ge 2 \sqrt{\beta},\quad \frac{\langle U e_i, U e_j \rangle}{\| U e_i \|_2 \| U e_j \|_2} \approx 0.
\end{equation}
In other words, all the large columns of $U$ have their pairwise angle approximately $90^\circ$. By pruning away the columns of $U$ that have an $\ell_2$ norm less than $2 \sqrt{\beta}$, the remaining columns of $U$, i.e., which are a superset of the columns of $U_{\text{prune}}$, are now approximately at $90^\circ$ angles to one another. But since $UU^T \approx U_\star U_\star^T$, we know that there can be at most $r$ significant columns of $U$. Therefore, after pruning the resulting model has at most $r$ columns.

Note that the above discussion does not absolve the risk of over-pruning the model. If the pruning threshold is not chosen carefully, it might be possible to end up with a model having fewer than $r$ columns. Such a model cannot generalize to a vanishing error even in the population setting. As we will show, it is possible to establish that error of the pruned model, $\| U_{\text{prune}} U_{\text{prune}}^T - U_\star U_\star^T \|_F$ is also small, which comes to the rescue. We can show that it is at most $\frac{1}{2} (\sigma_r^\star)^2$. On the other hand if $U_{\text{prune}}$ indeed had fewer than $r$ columns, the approximation error must be at least $(\sigma_r^\star)^2$. This shows that the greedy pruning strategy we employ is not too aggressive.

\subsection{Proof outline of \Cref{theorem:main-population}}

In \Cref{lemma:grad-lb-matrix-sensing} we begin with a lower bound on the gradient norm of the reulgarized loss. The lower bound  that at exact stationary points, every pair of columns $i,j \in [d]$ will either have $\| Ue_i \|_2 = \| U e_j \|_2$, or will be orthogonal to each other. Next we use this calculation to show that at approximate second order stationary points of the regularized loss, every pair of columns which are not too small in $\ell_2$ norm are approximately orthogonal to each other. Next, in \Cref{lemma:negativeeig} we show that at approximate second order stationary points of the loss, we also expect $\| UU^T - U_\star U_\star^T \|_F$ to be small when $\lambda$ is small. In \Cref{lemma:ZFbound,lemma:proj-similar} we show that first-order stationary points are $U$ are aligned with the correct subspace induced by the columns of $U_\star$. Finally, we combine these results in \Cref{theorem:prune-prebound} to show bounds on the generalization loss of the pruned solution as a function of the problem parameters. By instantiating the problem parameters properly, we prove \Cref{theorem:main-population} in \Cref{app:B2}.

Note that the gradient and Hessian of $\cR_\beta (U)$ is calculated in \Cref{sec:regularizer-auxiliary-calc} and are in terms of diagonal matrices $D(U)$ and $G(U)$ defined in \cref{eq:DU-def,eq:GU-def}.

\begin{lemma}[A lower bound on the gradient norm] \label[lemma]{lemma:grad-lb-matrix-sensing}
Consider any loss function $\cL$ of the form $f(UU^T)$ where $f : \mathbb{R}^{d \times d} \to \mathbb{R}$ is a differentiable function. For any candidate matrix $U \in \mathbb{R}^{d \times k}$,
    \begin{align}
        \| \nabla (\cL + \lambda \cR_\beta) (U) \|_F^2 \ge \lambda^2 \max_{i \ne j \in [k]} (D (U)_{ii} - D (U)_{jj})^2 \frac{\langle U e_i, U e_j \rangle^2}{\| U e_i \|_2^2 + \| U e_j \|_2^2}
    \end{align}
\end{lemma}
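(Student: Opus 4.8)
The plan is to exploit the fact that the two terms of the gradient have complementary algebraic structure. Writing $\mathcal{L} = f(UU^T)$, the chain rule gives $\nabla \mathcal{L}(U) = S U$ where $S = \nabla f(UU^T) + \nabla f(UU^T)^T \in \mathbb{R}^{d\times d}$ is \emph{symmetric}; and since the smoothed group-Lasso regularizer acts columnwise, $\nabla \mathcal{R}_\beta(U) = U D(U)$ where $D(U)$ is the diagonal matrix computed in \Cref{sec:regularizer-auxiliary-calc}. Hence $\nabla(\mathcal{L} + \lambda \mathcal{R}_\beta)(U) = SU + \lambda U D(U)$, with the data term of the form ``symmetric times $U$'' and the regularizer term of the form ``$U$ times diagonal''.

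The key step is to test this gradient against a matrix that annihilates the data term. Fix $i \ne j \in [k]$ and set $C = e_i e_j^T - e_j e_i^T$, the elementary antisymmetric matrix, and $B = UC$. Then $\langle SU, B\rangle = \langle S, U C U^T\rangle = 0$, because $C$ antisymmetric forces $UCU^T$ antisymmetric while $S$ is symmetric. Consequently only the regularizer contributes, and a short trace computation (using that $D(U)$ is diagonal and $U^T U$ symmetric) yields
\[
\langle \nabla(\mathcal{L} + \lambda \mathcal{R}_\beta)(U), B\rangle = \lambda\, \mathrm{tr}\!\big(D(U)\, U^T U\, C\big) = \lambda\big(D(U)_{jj} - D(U)_{ii}\big)\langle U e_i, U e_j\rangle.
\]

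To conclude, I would compute $\|B\|_F^2 = \|U C\|_F^2 = \|U e_i\|_2^2 + \|U e_j\|_2^2$ (the only nonzero columns of $UC$ are $\pm U e_i$ and $\mp U e_j$), and apply Cauchy--Schwarz in the form $\|\nabla(\mathcal{L}+\lambda\mathcal{R}_\beta)(U)\|_F^2 \ge \langle \nabla(\mathcal{L}+\lambda\mathcal{R}_\beta)(U), B\rangle^2/\|B\|_F^2$. This gives $\lambda^2(D(U)_{ii} - D(U)_{jj})^2 \langle U e_i, U e_j\rangle^2 / (\|U e_i\|_2^2 + \|U e_j\|_2^2)$ for the fixed pair $(i,j)$; since $i \ne j$ was arbitrary, taking the maximum over all such pairs gives the claim.

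I do not anticipate a genuine obstacle: the content is entirely in the observation that testing against $B = UC$ with $C$ antisymmetric projects out the unregularized part of the gradient, after which everything reduces to the two elementary computations of $\langle \nabla(\mathcal{L}+\lambda\mathcal{R}_\beta)(U), B\rangle$ and $\|B\|_F$. The only points requiring care are verifying $\nabla\mathcal{L}(U) = SU$ with $S$ symmetric (immediate from differentiating $f(UU^T)$ and symmetrizing) and the bookkeeping in the trace identity above.
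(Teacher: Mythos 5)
Your proposal is correct and follows essentially the same route as the paper: the paper likewise tests the gradient against $Z = UW$ with $W = e_ie_j^T - e_je_i^T$ skew-symmetric, observes that the data term vanishes because $UZ^T + ZU^T = U(W+W^T)U^T = 0$ (your symmetric-versus-antisymmetric pairing is the same fact), and then computes the regularizer contribution $\tr(D(U)U^TUW) = (D(U)_{ii}-D(U)_{jj})\langle Ue_i,Ue_j\rangle$ and divides by $\|Z\|_F^2 = \|Ue_i\|_2^2 + \|Ue_j\|_2^2$ exactly as you do.
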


\begin{proof}
Note that $\| \nabla (\cL + \lambda \cR_\beta) (U) \|_F^2 \ge \langle Z , \nabla (\cL + \lambda \cR_\beta) (U) \rangle^2$ for any candidate $Z \in \mathbb{R}^{d \times k} : \| Z \|_F \le 1$. The rest of this proof will be dedicated to finding such a $Z$.\\

\noindent Note from \Cref{lemma:genL-grad-hessian} that $\langle \cL (U), Z \rangle = \langle (\nabla f) (UU^T), UZ^T + ZU^T \rangle$. Suppose the perturbation $Z$ is chosen as $U W$ where $W$ is a skew-symmetric matrix. Then, $U Z^T + Z U^T = U (W + W^T) U^T = 0$. Therefore,
\begin{align}
    \langle \nabla \cL (U), Z \rangle = 0
\end{align}
On the other hand, from \Cref{lemma:R-grad-hessian},
\begin{align}
    \langle \nabla \cR_\beta (U), Z \rangle &= \tr ( D(U) U^T Z ) = \tr ( D(U) U^T U W ).
\end{align}
where $D(U)$ is defined in \cref{eq:DU-def}.
Define $(i, j)$ as an arbitrary pair of distinct indices in $[k]$. Suppose $W = e_{i} e_{j}^T - e_{j} e_{i}^T$. Then,
\begin{align}
    \langle \nabla \cR_\beta (U), Z \rangle^2
    &= ( e_{i}^T (D (U) U^T U - U^T U D(U)) e_{j} )^2 \\
    &= (D (U)_{i i} - D (U)_{j j})^2 \langle U e_{i}, U e_{j} \rangle^2.
\end{align}
Dividing throughout by $\| Z \|_F^2 = \| UW \|_F^2 = \| U e_{i} \|_F^2 + \| U e_{j} \|_F^2$ and noting that $i \ne j$ are arbitrary completes the proof.
\end{proof}

\begin{remark}
The interpretation of the lower bound on the gradient norm in \Cref{lemma:grad-lb-matrix-sensing} is best understood by looking at its behavior at first-order stationary points, where the gradient is $0$. At a first-order stationary point, for any $i,j \in [k]$ such that $i \ne j$, either $D(U)_{ii} = D(U)_{jj}$ or $U e_i \perp U e_j$. The former condition is true iff $\| U e_i \|_2 = \| U e_j \|_2$.
\end{remark}

Next we prove a result which establishes near-orthogonality of the columns of $U$ at approximate second-order stationary points.

\begin{lemma} \label[lemma]{lemma:apx-orthogonal}
Consider any loss $\cL (U)$ of the form $f (UU^T)$ where $f : \mathbb{R}^{d \times d} \to \mathbb{R}$ is doubly differentiable. Consider any $\epsilon$-approximate first-order stationary point of $\cL + \lambda \cR_\beta$, denoted $U$. Consider any $i \ne j \in [k]$ and define $C_{ij} = \frac{\langle U e_i, U e_j \rangle}{\sqrt{\| U e_i \|_2^2 + \| U e_j \|_2^2}}$. Then, if
\begin{enumerate}
    \item $\min \{ \| U e_i\|_2, \| U e_j \|_2 \} \le 2\sqrt{\beta}$, then, $|C_{ij}| \le 2\sqrt{\beta}$.
    \item In the complement case, defining $\| Z \|_F = \sqrt{\| U e_i \|_2^2 + \| U e_j \|_2^2}$, if,
    \begin{align} \label{eq:corrlb}
        |C_{ij}| \ge 5 \sqrt{\frac{\epsilon + \gamma \min \{ \| Ue_i \|_2, \| Ue_i \|_2 \}}{\lambda}} \cdot \min \{ \| U e_i \|_2 , \| U e_j \|_2 \}.
    \end{align}
    then the Hessian of the regularized loss at $U$ has a large negative eigenvalue,
    \begin{align} \label{eq:mineig}
        \lambda_{\min} (\nabla^2 (\cL + \lambda \cR_\beta) (U)) < - \gamma.
    \end{align}
\end{enumerate}
In summary, at second order stationary points the columns are approximately orthogonal in the sense that $|C_{ij}|$ is small.
\end{lemma}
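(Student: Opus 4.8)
Part~1 is immediate from Cauchy--Schwarz: if, say, $\|Ue_j\|_2\le 2\sqrt\beta$, then $|C_{ij}|=|\langle Ue_i,Ue_j\rangle|/\sqrt{\|Ue_i\|_2^2+\|Ue_j\|_2^2}\le \|Ue_i\|_2\|Ue_j\|_2/\sqrt{\|Ue_i\|_2^2+\|Ue_j\|_2^2}\le \|Ue_j\|_2\le 2\sqrt\beta$, and symmetrically. So all the work is in Part~2. I may assume $C_{ij}\ne 0$ and, by symmetry of the statement, that $\alpha:=\|Ue_i\|_2^2\le \mu:=\|Ue_j\|_2^2$, so $m=\sqrt\alpha$ and, in the complement case, $\alpha>4\beta$. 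Write $\phi(x)=x/\sqrt{x+\beta}$, so $\Lsm(v)=\phi(\|v\|_2^2)$; the explicit formulas give $\phi'>0$, $\phi''<0$, $\phi'(x)\asymp x^{-1/2}$ and $|\phi''(x)|\asymp x^{-3/2}$ on $x\ge 4\beta$.

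The idea is to exhibit an explicit direction certifying $\lambda_{\min}(\nabla^2(\cL+\lambda\cR_\beta)(U))<-\gamma$. I take the skew-symmetric ``rotation'' $Z=UW$ with $W=e_ie_j^T-e_je_i^T$; this is exactly the $Z$ named in the statement, since $\|Z\|_F^2=\|Ue_i\|_2^2+\|Ue_j\|_2^2=\alpha+\mu$. As $UZ^T+ZU^T=U(W+W^T)U^T=0$, along this slice $\cL(U+sZ)=f(UU^T+s^2ZZ^T)$, hence $\nabla^2\cL(U)[Z,Z]=2\langle\nabla f(UU^T),ZZ^T\rangle=2(Ue_i)^T\nabla f(UU^T)(Ue_i)+2(Ue_j)^T\nabla f(UU^T)(Ue_j)$; and since $Ze_i=-Ue_j$, $Ze_j=Ue_i$, we have $\cR_\beta(U+sZ)=\mathrm{const}+\phi(\|Ue_i-sUe_j\|_2^2)+\phi(\|Ue_j+sUe_i\|_2^2)$, which on differentiating twice at $s=0$ gives $\nabla^2\cR_\beta(U)[Z,Z]=4\langle Ue_i,Ue_j\rangle^2(\phi''(\alpha)+\phi''(\mu))+2\mu\phi'(\alpha)+2\alpha\phi'(\mu)$. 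Finally, reading the $\ell$-th column of the first-order stationarity identity $\nabla(\cL+\lambda\cR_\beta)(U)=(\nabla f(UU^T)+\nabla f(UU^T)^T)U+2\lambda U\diag(\phi'(\|Ue_\ell\|_2^2))$ yields $(Ue_\ell)^T\nabla f(UU^T)(Ue_\ell)=-\lambda\phi'(\|Ue_\ell\|_2^2)\|Ue_\ell\|_2^2+O(\epsilon\|Ue_\ell\|_2)$. Substituting, everything collapses to
\[
\nabla^2(\cL+\lambda\cR_\beta)(U)[Z,Z]=4\lambda\langle Ue_i,Ue_j\rangle^2\big(\phi''(\alpha)+\phi''(\mu)\big)+2\lambda(\mu-\alpha)\big(\phi'(\alpha)-\phi'(\mu)\big)+O\big(\epsilon(\sqrt\alpha+\sqrt\mu)\big).
\]
The first term is large and negative when $|C_{ij}|$ is large (as $|\phi''|\gtrsim m^{-3}$), while the middle ``norm-mismatch'' term is nonnegative since $\phi'$ is decreasing; it suffices to show the right-hand side is below $-\gamma(\alpha+\mu)=-\gamma\|Z\|_F^2$.

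The hard part is that the nonnegative norm-mismatch term $2\lambda(\mu-\alpha)(\phi'(\alpha)-\phi'(\mu))$ is not small a priori. To control it I invoke \Cref{lemma:grad-lb-matrix-sensing} (with $D(U)_{\ell\ell}=2\phi'(\|Ue_\ell\|_2^2)$), which gives $2\lambda|\phi'(\alpha)-\phi'(\mu)|\,|C_{ij}|\le \|\nabla(\cL+\lambda\cR_\beta)(U)\|_F\le \epsilon$. Together with the hypothesis $|C_{ij}|\ge 5\sqrt{(\epsilon+\gamma m)/\lambda}\,m$ this forces $|\phi'(\alpha)-\phi'(\mu)|\lesssim m^{-1}\sqrt{\epsilon/\lambda}$; since $\phi'(x)\asymp x^{-1/2}$ on $x\ge 4\beta$, the two column norms are then within an absolute constant factor, $\mu\le K\alpha$. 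Feeding the gradient bound into the norm-mismatch term a second time (via the mean value theorem and $|\phi''(\mu)|\gtrsim\mu^{-3/2}$) bounds it by $\lesssim \mu^{3/2}\epsilon^2/(\lambda|C_{ij}|^2)$, while the squared hypothesis $|C_{ij}|^2\ge 25(\epsilon+\gamma m)m^2/\lambda$ makes the negative term at most $-\Theta\big((\epsilon+\gamma m)(\alpha+\mu)/m\big)$. Because $\mu/\alpha=O(1)$, the negative term then dominates the norm-mismatch term, the $O(\epsilon(\sqrt\alpha+\sqrt\mu))$ error, and $\gamma(\alpha+\mu)$ combined, giving $\nabla^2(\cL+\lambda\cR_\beta)(U)[Z,Z]<-\gamma\|Z\|_F^2$, i.e.\ $\lambda_{\min}(\nabla^2(\cL+\lambda\cR_\beta)(U))<-\gamma$. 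No separate treatment of the regime where $\epsilon/\lambda$ is not small is needed, since $|C_{ij}|<\min\{\|Ue_i\|_2,\|Ue_j\|_2\}$ always, so the hypothesis becomes vacuous once $5\sqrt{(\epsilon+\gamma m)/\lambda}\ge 1$.
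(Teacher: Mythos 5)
Your proposal is correct, and at its core it uses the same construction as the paper: certify negative curvature along the rotation $Z=UW$, $W=e_ie_j^T-e_je_i^T$, exploit $UZ^T+ZU^T=0$ to annihilate the $\nabla^2 f$ contribution, and invoke \Cref{lemma:grad-lb-matrix-sensing} to turn approximate stationarity into $\lambda\,|D(U)_{ii}-D(U)_{jj}|\,|C_{ij}|\le\epsilon$. Where you genuinely diverge is in how the troublesome nonnegative term is closed out. The paper retains $\langle \nabla(\cL+\lambda\cR_\beta)(U),WZ^T\rangle\le\epsilon\|Z\|_F$ and isolates the commutator term $\lambda\,\tr((WL-LW)Z^TU)=\lambda\Delta(\|Ue_j\|_2^2-\|Ue_i\|_2^2)$, controlling it via the two-case \Cref{lemma:WLLW} and an explicit small-$\Delta$/large-$\Delta$ dichotomy (the large-$\Delta$ branch showing the correlation falls below the threshold rather than exhibiting curvature). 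You instead substitute the per-column stationarity identity $(Ue_\ell)^T\nabla f(UU^T)(Ue_\ell)\approx-\lambda\phi'(\|Ue_\ell\|_2^2)\|Ue_\ell\|_2^2$ directly into the quadratic form, which collapses it into the negative correlation term plus exactly the same norm-mismatch term $2\lambda(\mu-\alpha)\bigl(\phi'(\alpha)-\phi'(\mu)\bigr)$ (in your notation $\alpha=\|Ue_i\|_2^2\le\mu=\|Ue_j\|_2^2$, $m=\sqrt{\alpha}$), and you tame that term by using the gradient bound twice — once directly and once through the mean value theorem with $|\phi''|$ decreasing — together with the observation that \cref{eq:corrlb} is vacuous unless $\epsilon+\gamma m\lesssim\lambda$, which also forces $\mu\asymp\alpha$. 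I checked the slack: with the constant $5$ in \cref{eq:corrlb} the negative term is at least of order $(\epsilon+\gamma m)\|Z\|_F^2/m$ with a comfortable constant, which dominates the mismatch term, the $O(\epsilon(\sqrt{\alpha}+\sqrt{\mu}))$ stationarity error, and $\gamma\|Z\|_F^2$, so the argument closes. The net trade-off: your route gives a cleaner closed-form decomposition that avoids the $\Delta$-dichotomy and the separate commutator lemma, at the cost of needing the vacuousness/comparable-norms step, whereas the paper's version additionally yields the standalone fact that a large norm gap $\Delta$ by itself forces near-orthogonality.
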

\begin{proof}
Note that, regardless of whether $U$ is approximately stationary or not, for any $i \ne j \in [k]$, if $\min \{ \| U e_i\|_2, \| U e_j \|_2 \} \le 2\sqrt{\beta}$, then,
\begin{align}
    \frac{| \langle Ue_i, U e_j \rangle}{\sqrt{\| U e_i \|_2^2 + \| U e_j \|_2^2}} \le \frac{\| U e_i \|_2 \| U e_j \|_2}{\max_{i \in \{ 1,2\}} \| U e_i \|_2^2} = \min_{i \in \{ 1,2\}} \| U e_i \|_2 \le 2\sqrt{\beta}.
\end{align}
This proves the first part of the theorem. Henceforth, we will assume that $\min \{ \| U e_i\|_2, \| U e_j \|_2 \} \ge 2\sqrt{\beta}$. This condition will turn out to lower bound the Frobenius norm of a matrix $Z$ which we will require later for the proof of the second part.

The approximate first-order stationarity of $U$ implies that
\begin{align}
    \| \nabla (\cL + \lambda \cR_\beta) (U) \|_F \le \epsilon.
\end{align}
As in the proof of \Cref{lemma:grad-lb-matrix-sensing}, the proof strategy is to construct explicit directions (matrices) capturing the directions of negative curvature of the regularized loss. From \Cref{lemma:grad-lb-matrix-sensing}, the approximate first-order stationarity of $U$ implies,
\begin{align} \label{eq:initbound}
    \frac{|\langle U e_i, U e_j \rangle|}{\sqrt{\| U e_i \|_2^2 + \| U e_j \|_2^2}} \le \frac{\epsilon}{\lambda} \cdot \frac{1}{|\Delta|}.
\end{align}
where $\Delta = D(U)_{ii} - D(U)_{jj}$.  Consider any tuple $i \ne j \in [k]$. Without loss of generality we will work with $i=1, j=2$.

Consider a perturbation $Z \in \mathbb{R}^{d \times k}$ which takes the form $Z = U W$ where $W$ is a skew-symmetric matrix which satisfies the ``support condition'',
\begin{align} \label{eq:supp_condition}
    \forall i' \ge 3,\ W e_{i'} = 0, e_{i'}^T W = 0
\end{align}
In general, when $\Delta$ is small in absolute value, by the support condition \cref{eq:supp_condition} of $W$, one would expect $W D(U) \approx D(U) W$ in case $\Delta$ is small. In particular, defining the diagonal matrix $L = \textsf{diag} \left( \frac{\Delta}{2}, -\frac{\Delta}{2}, 0, \cdots \right)$, we have that,
\begin{align} \label{eq:77}
    W D(U) - D(U) W = W L - L W.
\end{align}
From the gradient and Hessian computations in \Cref{lemma:R-grad-hessian},
\begin{align}
    &\vect(Z)^T [\nabla^2 \cR_\beta (U)] \vect (Z) \nonumber\\
    &= \tr (D(U) \cdot Z^T Z) - \sum_{i = 1}^{k} G(U)_{ii} \langle Z e_i, U e_i \rangle^2 \\
    &= \tr ( W D(U) Z^T U) - \sum_{i = 1}^{k} G(U)_{ii} \langle Z e_i, U e_i \rangle^2 \\
    &= \langle \nabla \cR_\beta(U), WZ^T \rangle + \tr ( (WL-LW) \cdot Z^T U) - \sum_{i = 1}^{k} G(U)_{ii} \langle Z e_i, U e_i \rangle^2 
    \label{eq:80}
\end{align}
where the last equation uses the gradient computation in \Cref{lemma:R-grad-hessian} and the fact that $W$ and $D(U)$ approximately commute per \cref{eq:77}. Likewise, analyzing the second order behavior of $\cL$ using the Hessian computations in \Cref{lemma:genL-grad-hessian} results in the following set of equations,
\begin{align}
    &\vect (Z)^T [\nabla^2 \cL (U)] \vect (Z) \\
    &= \vect (UZ^T + ZU^T) [(\nabla^2 f ) (UU^T) ] \vect(UZ^T + ZU^T) + 2 \langle (\nabla f) (UU^T), ZZ^T \rangle \label{eq:81}\\
    &\overset{(i)}{=} 2 \langle (\nabla f) (UU^T), ZZ^T \rangle \\
    &= \langle (\nabla f) (UU^T), UW Z^T + Z W^T U^T \rangle \\
    &= \langle \nabla \cL(U) , W Z^T \rangle \label{eq:83},
\end{align}
where $(i)$ uses the fact that $UZ^T + ZU^T = U (W^T + W) U^T = 0$ and the last equation uses the uses the gradient computations in \Cref{lemma:genL-grad-hessian}.

\noindent Summing up \cref{eq:80,eq:83}, we get,
\begin{align}
    &\vect (Z)^T [ \nabla^2 (\cL + \lambda \cR_\beta) (U)] \vect(Z) \nonumber\\
    &\overset{(i)}{=} \langle \nabla (\cL + \lambda \cR_\beta) (U), W Z^T \rangle + \lambda \tr ((W L - L W ) \cdot Z^T U) - \lambda \sum_{i=1}^{k} G(U)_{ii} \langle Z e_i, U e_i \rangle^2
\end{align}
Noting that $U$ is an $\epsilon$-approximate stationary point of $\cL + \lambda \cR_\beta$, by Cauchy–Bunyakovsky–Schwarz inequality,
\begin{align}
    &\vect (Z)^T [ \nabla^2 (\cL + \lambda \cR_\beta) (U)] \vect(Z) \nonumber\\
    &\le \epsilon \| W Z^T \|_F + \lambda \tr ( (W L - L W ) \cdot Z^T U ) - \lambda \sum_{i=1}^{k} G(U)_{ii} \langle Z e_i, U e_i \rangle^2 \label{eq:middleterm}
\end{align}
Now we are ready to choose $W$. Suppose $W$ is chosen as $( e_1 e_2^T - e_2 e_1^T)$. With this choice of $W$, we have that,
\begin{align}
    \| Z \|_F = \| UW \|_F = \sqrt{\| U e_1 \|_2^2 + \| U e_2 \|_2^2}
\end{align}
Likewise,
\begin{align}
    \| WZ^T \|_F &= \| W W^T U^T \|_F \\
    &= \| U (e_1 e_1^T + e_2 e_2^T) \|_F \\
    &= \sqrt{\| U e_1 \|_2^2 + \| U e_2 \|_2^2} = \| Z \|_F.
\end{align}
Simplifying \cref{eq:middleterm}, the last term can be evaluated to,
\begin{align}
    \sum_{i=1}^{k} G(U)_{ii} \langle Z e_i, U e_i \rangle^2 &= (G(U)_{11} + G(U)_{22} ) \langle U e_1, U e_2 \rangle^2.
\end{align}
With this choice of parameters, \cref{eq:middleterm} simplifies to,
\begin{align}
    &\vect (Z)^T [ \nabla^2 (\cL + \lambda \cR_\beta) (U)] \vect(Z) \\
    &\le \epsilon \| Z \|_F + \lambda \tr ( (W L - L W ) \cdot Z^T U ) - \lambda (G(U)_{11} + G(U)_{22} ) \langle U e_1, U e_2 \rangle^2 \label{eq:middleterm-2}
\end{align}

Next we prove a lemma upper bounding the middle term in \cref{eq:middleterm}.

\begin{lemma} \label[lemma]{lemma:WLLW}
The error term $\lambda \tr ( (W L - L W ) \cdot Z^T U )$ can be upper bounded by,
\begin{align}
\tr ((W L - L W ) \cdot Z^T U) \le 40 \Delta^2 \max_{i \in \{ 1,2 \}} \| U e_i \|_2^2 \min_{i \in \{1,2\}} \| U e_i \|_2.
\end{align}
\end{lemma}
The proof is deferred to \Cref{app:B2}. Combining \cref{eq:middleterm-2} with \Cref{lemma:WLLW},
\begin{align}
    \vect (Z)^T [ \nabla^2 (\cL + \lambda \cR_\beta) (U)] \vect(Z) &\le \epsilon \| Z \|_F + 40 \lambda \Delta^2 \| Z \|_F^2 \min_{i \in \{ 1,2\}} \| U e_i \|_2 \nonumber\\
    &\qquad - \lambda (G(U)_{11} + G(U)_{22}) \langle Ue_1, U e_2 \rangle^2
\end{align}
Therefore, considering any approximate stationary point $U$ that satisfies,
\begin{align} \label{eq:C1}
    \lambda (G(U)_{11} + G(U)_{22}) \frac{\langle Ue_1, Ue_2 \rangle^2}{\| Z \|_F^2 } \ge \frac{\epsilon}{\| Z \|_F} + 40 \lambda \Delta^2 \min_{i \in \{1,2\}} \| U e_i \|_2 + \gamma \tag{C1}
\end{align}
the Hessian $\nabla^2 (\cL + \lambda \cR_\beta) (U)$ has an eigenvalue which is at most $- \gamma$. Note by a similar analysis as \cref{eq:middleterm-5},
\begin{align}
    G(U)_{11} + G(U)_{22} \ge \max_{i \in \{ 1,2 \}} G(U)_{ii} &\ge \max_{i \in \{ 1,2 \}} \frac{\| U e_i \|_2^2 + 4 \beta}{(\| U e_i \|_2^2 + \beta)^{5/2}} \\
    &\ge \max_{i \in \{ 1,2 \}} \frac{1}{3 \| U e_i \|_2^3 + 12 \beta^{3/2}} \\
    &\ge \max_{i \in \{ 1,2 \}} \frac{1}{5 \| U e_i \|_2^3}.
\end{align}
where the second-to-last inequality uses the fact that $\min_{i \in \{ 1,2 \}} \| U e_i \|_2 \ge 2\sqrt{\beta}$.
Therefore, a sufficient condition to guarantee \cref{eq:C1} is,
\begin{align}
    \max_{i \in \{ 1,2\} } \frac{\lambda}{5 \| U e_i \|_2^3} \cdot \frac{\langle U e_1, U e_2 \rangle^2}{\| Z \|_F^2} \ge \frac{\epsilon}{\| Z \|_F} + 40 \lambda  \Delta^2 \min_{i \in \{1,2\}} \| U e_i \|_2 + \gamma. \label{eq:largehessian}
\end{align}
From \cref{eq:initbound}, we have that,
\begin{align} \label{eq:nearlyorthogonal}
    \frac{\langle Ue_1, Ue_2 \rangle^2}{\| Z \|_F^2 } \le \frac{\epsilon^2}{\lambda^2 \Delta^2}
\end{align}
When $\Delta$ is large, the columns are nearly orthogonal by \cref{eq:nearlyorthogonal}. When $\Delta$ is small, the barrier to setting up a negative eigenvalue of the Hessian of $\cL + \lambda \cR_\beta$ is small by \cref{eq:largehessian}. In particular, under the ``small-$\Delta$'' condition,
\begin{align}
    \Delta \le \frac{\epsilon}{4 \lambda^{1/2} \cdot \min_{i \in \{ 1,2\}} \| U e_i \|_2}
\end{align}
$\nabla^2 (\cL + \lambda \cR_\beta)(U)$ has a negative eigenvalue taking value at most $-\gamma$ under the sufficient condition,
\begin{align}
    &\max_{i \in \{ 1,2\} } \frac{\lambda}{5 \| U e_i \|_2^3} \cdot \frac{\langle U e_1, U e_2 \rangle^2}{\| Z \|_F^2} \ge \frac{\epsilon}{\| Z \|_F} + 40 \lambda  \Delta^2 \min_{i \in \{1,2\}} \| Ue_i \|_2 + \gamma\\
    \impliedby &\max_{i \in \{ 1,2\} } \frac{\lambda}{5 \| U e_i \|_2^3} \cdot \frac{\langle U e_1, U e_2 \rangle^2}{\| Z \|_F^2} \ge \frac{\epsilon}{\| Z \|_F} +  \frac{40 \epsilon \min_{i \in \{1,2\}} \| Ue_i \|_2 }{16 \min_{i \in \{1,2\}} \| U e_i \|_2^{2}} + \gamma\\
    \impliedby &\frac{\langle U e_1, U e_2 \rangle^2}{\| Z \|_F^2} \ge \frac{18 \epsilon \cdot \min_{i \in \{1,2\}} \| U e_i \|_2^2}{\lambda} + \frac{5\gamma}{\lambda} \cdot \min_{i \in \{ 1,2\}} \| U e_i \|_2^3
\end{align}
From \cref{eq:initbound},  under the large-$\Delta$ condition, we have that,
\begin{align}
    \frac{\langle Ue_1, Ue_2 \rangle^2}{\| Z \|_F^2 } &\le \frac{\epsilon^2}{\lambda^2} \frac{16 \lambda}{\epsilon} \cdot \min_{i \in \{ 1,2\} } \| U e_i \|_2^2 \le \frac{16 \epsilon}{\lambda} \cdot \min_{i \in \{ 1,2\} } \| U e_i \|_2^{2}.
\end{align}
This completes the proof.
\end{proof}

\Cref{lemma:apx-orthogonal} establishes the role of the regularizer - in making the large columns of $U$ nearly orthogonal at approximate second order stationary points. Next we show that at second order stationary points, $UU^T \approx U_\star U_\star^T$.

\newpage

\begin{lemma} \label[lemma]{lemma:negativeeig}
Consider an $\epsilon$-approximate first-order stationary point of $\cL + \lambda \cR_\beta$, $U$. If $\| UU^T - U_\star U_\star^T \|_F \ge 8 \max \left\{ \epsilon^{2/3} k^{1/6}, \lambda \sqrt{\frac{2k}{\beta}} \right\}$, then the Hessian of $\cL + \lambda \cR_\beta$ at $U$ has a large negative eigenvalue,
\begin{align} \label{eq:181}
    \lambda_{\min} [\nabla^2 (\cL + \lambda \cR_\beta) (U)] \vect(Z) \le - \frac{1}{\sqrt{2k}} \| UU^T - U_\star U_\star^T \|_F.
\end{align}
In other words, at an $(\epsilon,\gamma)$-approximate second order stationary point of $\cL + \lambda \cR_\beta$, $U$,
\begin{align}
    \| UU^T - U_\star U_\star^T \|_F \lesssim \max \left\{ \epsilon^{2/3} k^{1/6}, \lambda \sqrt{\frac{2k}{\beta}}, \gamma \sqrt{k} \right\} \label{eq:182}
\end{align}
\end{lemma}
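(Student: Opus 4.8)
The plan is to exhibit, at any $\epsilon$-approximate first-order stationary point $U$ of $\cL + \lambda \cR_\beta$ (here $\cL(U)=\|UU^T - U_\star U_\star^T\|_F^2$, i.e.\ $\cL = f(UU^T)$ with $f(X) = \|X - U_\star U_\star^T\|_F^2$) for which $\Delta := UU^T - U_\star U_\star^T$ has large Frobenius norm, an explicit direction $Z$ along which the Hessian of $\cL + \lambda\cR_\beta$ has curvature below $-\|\Delta\|_F/\sqrt{2k}$; the bound \eqref{eq:182} then follows by contraposition, following the landscape analysis of \cite{rong} for the unregularized loss. The starting point is the exact Hessian expansion: since $\nabla f \equiv 2(\,\cdot\, - U_\star U_\star^T)$ and $\nabla^2 f \equiv 2\Id$, combining \Cref{lemma:genL-grad-hessian} and \Cref{lemma:R-grad-hessian} gives, for every $Z$,
\[
\vect(Z)^T \nabla^2 (\cL + \lambda\cR_\beta)(U)\, \vect(Z) = 2\|UZ^T + ZU^T\|_F^2 + 4\langle \Delta, ZZ^T\rangle + \lambda\Big(\tr(D(U)Z^TZ) - \sum_{i=1}^k G(U)_{ii}\langle Ze_i, Ue_i\rangle^2\Big).
\]
The regularizer's contribution is benign: the subtracted sum is nonnegative, and a one-variable maximization of $s\mapsto(s+2\beta)/(s+\beta)^{3/2}$ shows $\tr(D(U)Z^TZ) \le \|D(U)\|_{\text{op}}\|Z\|_F^2 \le (2/\sqrt\beta)\|Z\|_F^2$. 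So it suffices to drive $2\|UZ^T + ZU^T\|_F^2 + 4\langle\Delta, ZZ^T\rangle$ below $-\tfrac{3}{\sqrt{2k}}\|Z\|_F^2\|\Delta\|_F$ while treating the residual $2\lambda\|Z\|_F^2/\sqrt\beta$ as lower order --- this is precisely what puts the term $\lambda\sqrt{2k/\beta}$ into the threshold, since we will need $\|\Delta\|_F \gtrsim \lambda\sqrt{2k/\beta}$.

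The key intermediate claim is that the positive part of $\Delta$ is small at an approximate stationary point. Since $\nabla\cL(U) = 4\Delta U$ and $\|\nabla\cR_\beta(U)\|_F \le 2\sqrt k$ (each column of $\nabla\cR_\beta$ has norm $D(U)_{ii}\|Ue_i\|_2 \le 2$), approximate stationarity gives $\|\Delta U\|_F \lesssim \epsilon + \lambda\sqrt k$. Let $\Delta = \sum_j \mu_j v_j v_j^T$ be the spectral decomposition; by Weyl's inequality $\mu_j \le \lambda_j(UU^T)$, so at most $\mathrm{rank}(U)\le k$ of the $\mu_j$ are positive, and each positive $\mu_j$ satisfies $\|U^T v_j\|_2^2 = v_j^T UU^T v_j = v_j^T U_\star U_\star^T v_j + \mu_j \ge \mu_j$. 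Hence $\|\Delta U\|_F^2 = \sum_j \mu_j^2\|U^T v_j\|_2^2 \ge \sum_{\mu_j>0}\mu_j^3$, and Hölder's inequality yields $\|\Delta_+\|_F \le k^{1/6}\|\Delta U\|_F^{2/3} \lesssim \epsilon^{2/3}k^{1/6} + \lambda\sqrt{2k/\beta}$ (the cube-root balance here is the source of the exponent $\tfrac23$ in the statement). Therefore, under the hypothesis $\|\Delta\|_F \ge 8\max\{\epsilon^{2/3}k^{1/6}, \lambda\sqrt{2k/\beta}\}$ we get $\|\Delta_+\|_F \le \tfrac14\|\Delta\|_F$, so the negative part obeys $\|\Delta_-\|_F \ge \tfrac34\|\Delta\|_F$; since $\mathrm{rank}(\Delta)\le\mathrm{rank}(UU^T)+\mathrm{rank}(U_\star U_\star^T)\le k+r\le 2k$, this gives $\lambda_{\min}(\Delta) = -\|\Delta_-\|_{\text{op}} \le -\|\Delta_-\|_F/\sqrt{2k} \le -\tfrac{3}{4\sqrt{2k}}\|\Delta\|_F$.

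The last step converts a very negative eigenvalue of $\Delta$ into a very negative eigenvalue of the Hessian, and this is where the $2\|UZ^T + ZU^T\|_F^2$ term must be controlled; I expect it to be the main obstacle. For a rank-one direction $Z = aw^T$ with unit $a\in\mathbb{R}^d$ one has $\langle\Delta, ZZ^T\rangle = \|w\|_2^2\, a^T\Delta a$ and $\|UZ^T+ZU^T\|_F^2 = 2\|Uw\|_2^2 + 2\langle Uw,a\rangle^2 \le 4\|Uw\|_2^2$, so choosing $a$ to be the bottom eigenvector of $\Delta$ settles the curvature, but one still needs $\|Uw\|_2$ genuinely small relative to $\|w\|_2$ --- i.e.\ $w$ must lie along a right singular direction of $U$ of small singular value. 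When $k > \mathrm{rank}(U)$ (in particular when $k > d$) such a direction lies in $\ker U$ and the term vanishes outright; the complementary case, in which $U$ has full column rank $k$ and is well conditioned, must be treated separately, and here the bound $\|\Delta_+\|_F \lesssim \epsilon^{2/3}k^{1/6}+\lambda\sqrt{2k/\beta}$ is again doing work: a well-conditioned rank-$k$ factor would force $UU^T$, hence (since its mass cannot sit on the $r$-dimensional column space of $U_\star$ when $k\gg r$) the positive part $\Delta_+$, to be large, contradicting the hypothesis --- so this case essentially does not arise, and one may always pick $w$ with $\|Uw\|_2 \le \tau\|w\|_2$ for $\tau^2 = \|\Delta\|_F/(8\sqrt{2k})$. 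With this choice the $\cL$-part of the Hessian quadratic form is at most $\|Z\|_F^2\big(8\tau^2 + 4\lambda_{\min}(\Delta)\big) \le \|Z\|_F^2\big(\tfrac{1}{\sqrt{2k}}\|\Delta\|_F - \tfrac{3}{\sqrt{2k}}\|\Delta\|_F\big)$, and adding back $2\lambda\|Z\|_F^2/\sqrt\beta \le \tfrac{1}{\sqrt{2k}}\|Z\|_F^2\|\Delta\|_F$ (using $\|\Delta\|_F \ge 8\lambda\sqrt{2k/\beta}$) yields $\lambda_{\min}(\nabla^2(\cL+\lambda\cR_\beta)(U)) \le -\tfrac{1}{\sqrt{2k}}\|\Delta\|_F$, which is \eqref{eq:181}. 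Finally, at an $(\epsilon,\gamma)$-approximate SOSP we have $\lambda_{\min}(\nabla^2) \ge -\gamma$, so either the hypothesis of the first part fails, giving $\|\Delta\|_F < 8\max\{\epsilon^{2/3}k^{1/6},\lambda\sqrt{2k/\beta}\}$, or $-\gamma \le -\|\Delta\|_F/\sqrt{2k}$, giving $\|\Delta\|_F \le \gamma\sqrt{2k}$; the two cases together give \eqref{eq:182}.
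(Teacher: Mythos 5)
Your proposal takes a genuinely different route from the paper (which takes the single global direction $Z = U - U_\star R_\star$ and invokes the Hessian bound of \cite{rong} together with \Cref{lemma:26} to absorb the regularizer, never needing to control $\sigma_{\min}(U)$ or the positive spectrum of $\Delta$), but as written it has two genuine gaps. First, the ``positive part is small'' step does not produce the threshold in the lemma. From $\|4\Delta U + \lambda U D(U)\|_F \le \epsilon$ and $\|UD(U)\|_F \lesssim \sqrt{k}$ you get $\|\Delta U\|_F \lesssim \epsilon + \lambda\sqrt{k}$, and your Weyl/H\"older argument then gives $\|\Delta_+\|_F \lesssim \epsilon^{2/3}k^{1/6} + \lambda^{2/3}k^{1/2}$. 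The claimed domination $\lambda^{2/3}k^{1/2} \lesssim \lambda\sqrt{2k/\beta}$ is equivalent to $\lambda \gtrsim \beta^{3/2}$, which fails precisely in the regime the paper cares about: with $\lambda \asymp (\sigma_r^\star)^3/\sqrt{kr}$ and $\beta \asymp (\sigma_r^\star)^2/r$ one has $\lambda/\beta^{3/2} \asymp r^{3/2}/\sqrt{kr} \ll 1$ whenever $k \gg r^2$. So your argument proves the lemma only with an extra term $\lambda^{2/3}\sqrt{k}$ in the threshold, and under the paper's parameter choices that term is $\asymp (\sigma_r^\star)^2 k^{1/6} r^{-1/3}$, which grows with $k$ and would break the downstream bound $\|U_{\text{prune}}U_{\text{prune}}^T - U_\star U_\star^T\|_F \le \tfrac12(\sigma_r^\star)^2$ in \Cref{theorem:main-population}.

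Second, the case $\sigma_{\min}(U) > \tau$ (full column rank, $k \le d$) is not actually closed, and with your own constants it cannot be closed as stated. The dimension-counting fact you allude to gives, on a subspace of dimension $\ge k-r$, $x^T\Delta x \ge \tau^2\|x\|_2^2$, hence $\|\Delta_+\|_F \ge \sqrt{k-r}\,\tau^2$; with $\tau^2 = \|\Delta\|_F/(8\sqrt{2k})$ this is only about $\|\Delta\|_F/(8\sqrt{2}) \approx 0.09\,\|\Delta\|_F$, which does not contradict the bound $\|\Delta_+\|_F \le \tfrac14\|\Delta\|_F$ you extracted from stationarity, so ``this case essentially does not arise'' is not established. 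One could try to re-tune $\tau$ (smaller coefficient) and sharpen the constant in the $\|\Delta_+\|_F$ bound, but the needed window is tight ($8\tau^2$ must stay well below $3\|\Delta\|_F/\sqrt{2k}$ while $\sqrt{k-r}\,\tau^2$ must exceed whatever fraction of $\|\Delta\|_F$ stationarity allows), it requires $k$ sufficiently larger than $r$, and in any case the first gap above remains. The paper's choice of $Z = U - U_\star R_\star$ avoids both issues: the regularizer enters only through $\langle \nabla\cR_\beta(U), Z\rangle$ and $\vect(Z)^T[\nabla^2\cR_\beta(U)]\vect(Z)$, which \Cref{lemma:26} bounds by $O(\sqrt{k/\beta})\,\|\Delta\|_F$ directly, yielding exactly the $\lambda\sqrt{2k/\beta}$ term, and the negative curvature comes from the cited inequality $2\|ZZ^T\|_F^2 - 6\|\Delta\|_F^2 \le -2\|\Delta\|_F^2$ rather than from the bottom eigenvector of $\Delta$.
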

\begin{proof}
In the rest of this proof we expand $U^\star$ to a $\mathbb{R}^{d \times k}$ matrix by appending with $0$ columns. This allows us to define $R_\star \in \arg\min_{R : RR^T = R^T R = I} \| U - U_\star R \|_F^2$, breaking ties arbitrarily. Define $Z = U - U_\star R_\star$. With this choice of $Z$, we invoke 3 results from \cite{rong} (Lemmas 7, 40 and 41),
\begin{align} \label{eq:L40}
        \| Z Z^T \|_F^2 \le 2 \| UU^T - U_\star U_\star^T \|_F^2.
\end{align}
And,
\begin{align} \label{eq:L41}
    \| Z U^T \|_F^2 \le \frac{1}{2\sqrt{2}-1} \| UU^T - U_\star U_\star^T \|_F^2.
\end{align}
And finally a bound on the Hessian,
\begin{align}
\vect(Z)^T [\nabla^2 (\cL + \lambda \cR_\beta) (U)] \vect(Z) &\le 2\| ZZ^T \|_F^2 - 6\| UU^T - U_\star U_\star^T \|_F^2 + 4 \langle \nabla (\cL + \lambda \cR_\beta) (U), Z \rangle \nonumber\\
&\qquad +\lambda \left[\vect(Z)^T [\nabla^2 \cR_\beta (U)] \vect (Z) - 4 \lambda \langle \nabla \cR_\beta (U), Z \rangle \right]. \label{eq:presimple}
\end{align}
Plugging \cref{eq:L40} into \cref{eq:presimple}, and using the fact that $U$ is an $\epsilon$-approximate first order stationary point,
\begin{align}
    \vect(Z)^T [\nabla^2 (\cL + \lambda \cR_\beta) (U)] \vect(Z) &\le - 2 \| UU^T - U_\star U_\star^T \|_F^2 + 4 \epsilon \| Z \|_F \nonumber\\
    &\qquad +\lambda \left[\vect(Z)^T [\nabla^2 \cR_\beta (U)] \vect (Z) - 4 \langle \nabla \cR_\beta (U), Z \rangle \right]. \label{eq:21112}
\end{align}
In the next lemma we upper bound the last term.

\begin{lemma} \label[lemma]{lemma:26}
The contribution from the regularizer in the last term of \cref{eq:21112} can be bounded by,
\begin{align}
    \vect(Z)^T [\nabla^2 \cR_\beta (U)] \vect(Z) - 4 \langle \cR_\beta (U), \vect(Z) \rangle \le 8 \sqrt{\frac{2k}{\beta}} \| U U^T - U_\star U_\star^T \|_F.
\end{align}
\end{lemma}
The proof of this result is deferred to \Cref{app:B2}

Finally, combining \Cref{lemma:26} with \cref{eq:21112},
\begin{align}
    \vect(Z)^T [\nabla^2 (\cL + \lambda \cR_\beta) (U)] \vect(Z) &\le - 2 \| UU^T - U_\star U_\star^T \|_F^2 + 4 \epsilon \| Z \|_F \nonumber\\
    &\qquad + 8 \lambda \sqrt{\frac{2k}{\beta}} \| U U^T - U_\star U_\star^T \|_F \label{eq:1821}
\end{align}
Yet again using the inequality $\| Z \|_F^4 \le k \| Z Z^T \|_F^2\le 2 k \| UU^T - U_\star U_\star^T \|_F^2$, this results in the bound,
\begin{align}
    \vect(Z)^T [\nabla^2 (\cL + \lambda \cR_\beta) (U)] \vect(Z) &\le - 2 \| UU^T - U_\star U_\star^T \|_F^2 + 8 \epsilon k^{1/4} \| UU^T - U_\star U_\star^T \|_F^{1/2} \nonumber\\
    &\qquad + 8 \lambda \sqrt{\frac{2k}{\beta}} \| U U^T - U_\star U_\star^T \|_F  \label{eq:1822}
\end{align}
If $\| UU^T - U_\star U_\star^T \|_F \ge 8 \max \left\{ \epsilon^{2/3} k^{1/6}, \lambda \sqrt{\frac{2k}{\beta}} \right\}$, the RHS is upper bounded by $- \| UU^T - U_\star U_\star^T \|_F^2$. Finally, noting that $\| Z \|_F^4 \le k \| Z Z^T \|_F^2 \le 2k \| UU^T - U_\star U_\star^T \|_F^2$, we have that,
\begin{align}
    \lambda_{\min} [ \nabla^2 (\cL + \lambda \cR_\beta) (U) ] &\le \frac{\vect(Z)^T [\nabla^2 (\cL + \lambda \cR_\beta) (U)] \vect(Z)}{\| Z \|_F^2} \\
    &\le - \frac{1}{\sqrt{2k}} \| UU^T - U_\star U_\star^T \|_F \label{eq:1823}
\end{align}
\end{proof}

Let $V_r$ denote the matrix with columns as the non-zero eigenvectors of $U_\star U_\star^T$. In the next lemma, we show that for any stationary point $U$, all of its columns are almost entirely contained in the span of $V_r$, in that the angle between $U e_i$ and its projection onto $V_r^\perp$ is almost 90$^\circ$. In other words, the columns of $U$ approximately lie in the correct subspace.

\begin{lemma} \label[lemma]{lemma:ZFbound}
Consider an $\epsilon$-approximate first order stationary point of $\cL + \lambda \cR_\beta$, $U$, satisfying $\| U \|_{\text{op}} \le 3$. Let $V_r$ denote the matrix with columns as the non-zero eigenvectors of $U_\star U_\star^T$. Then, assuming $\beta < 1$,
\begin{align}
    \| V_r^\perp (V_r^\perp)^T U \|_F \le 3\epsilon / \lambda.
\end{align}
\end{lemma}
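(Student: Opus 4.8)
The plan is to read off the estimate directly from the first-order stationarity condition after left-multiplying the gradient identity by the orthogonal projector $P := V_r^\perp (V_r^\perp)^T$ onto the orthogonal complement of $\mathrm{colspan}(U_\star)$, which satisfies $P U_\star = 0$. First I would record the gradient of $f_{\text{pop}}$: the chain rule applied to $\cL_{\text{pop}}(U) = \| UU^T - U_\star U_\star^T \|_F^2$ gives $\nabla \cL_{\text{pop}}(U) = c_0 (UU^T - U_\star U_\star^T) U$ for an absolute constant $c_0 > 0$, and \Cref{lemma:R-grad-hessian} gives $\nabla \cR_\beta(U) = U D(U)$, where $D(U)$ is the diagonal matrix of \cref{eq:DU-def} with entries $D(U)_{ii} = (\| U e_i \|_2^2 + 2\beta)/(\| U e_i \|_2^2 + \beta)^{3/2} > 0$; thus $\nabla f_{\text{pop}}(U) = c_0 (UU^T - U_\star U_\star^T) U + \lambda U D(U)$.

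Then I would left-multiply by $P$. Since $P U_\star = 0$ the data-fitting contribution $P U_\star U_\star^T U$ vanishes, and the remaining terms all carry the common left factor $PU$, giving $P \nabla f_{\text{pop}}(U) = (PU) M$ with $M := c_0 U^T U + \lambda D(U)$. The matrix $M$ is symmetric and, since $c_0 U^T U \succeq 0$, satisfies $M \succeq \lambda D(U)$. A short one-variable calculation — the map $t \mapsto (t+2\beta)/(t+\beta)^{3/2}$ is decreasing on $t \ge 0$, so over $t = \| U e_i \|_2^2 \le \| U \|_{\text{op}}^2 \le 9$ its minimum is at $t = 9$, where, using $\beta < 1$, it is at least $9/9^{3/2} = 1/3$ — yields $\lambda_{\min}(D(U)) \ge 1/3$, hence $M \succeq (\lambda/3) I$ and $\| M^{-1} \|_{\text{op}} \le 3/\lambda$. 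Since $P$ is a contraction and $U$ is an $\epsilon$-approximate first-order stationary point, this gives
\[
\| V_r^\perp (V_r^\perp)^T U \|_F = \| PU \|_F = \big\| P \nabla f_{\text{pop}}(U)\, M^{-1} \big\|_F \le \| \nabla f_{\text{pop}}(U) \|_F \, \| M^{-1} \|_{\text{op}} \le \frac{3\epsilon}{\lambda}.
\]

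The computation is short once the project-and-factor step is identified, so the one place that needs care is the lower bound $\lambda_{\min}(D(U)) \ge 1/3$, and this is exactly where the hypotheses enter: without the operator-norm bound $\| U \|_{\text{op}} \le 3$ a column of very large norm would force $D(U)_{ii}$ — and hence $\lambda_{\min}(M)$ — arbitrarily close to $0$, breaking the estimate, while $\beta < 1$ prevents the regularizer from degenerating on all columns simultaneously. The other steps (the gradient formula, the cancellation $P U_\star U_\star^T U = 0$, and the submultiplicativity bound) are routine.
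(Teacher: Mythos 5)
Your proposal is correct and takes essentially the same route as the paper: both arguments rest on $V_r^\perp (V_r^\perp)^T U_\star = 0$, so the projected gradient reduces to $V_r^\perp (V_r^\perp)^T U$ multiplied by a positive semidefinite matrix that is bounded below by $(\lambda/3) I$ thanks to $D(U)_{ii} \ge 1/3$ under $\| U \|_{\text{op}} \le 3$ and $\beta < 1$. The paper packages this by testing the gradient against $Z = V_r^\perp (V_r^\perp)^T U$ and applying Cauchy--Schwarz, whereas you factor $P\nabla f = (PU)M$ and invert $M$; the content is the same.
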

\begin{proof}
By \cref{eq:plass1} for $Z = V_r^\perp (V_r^\perp)^T U$, by the approximate stationarity of $U$,
\begin{align} \label{eq:plass2}
    2\langle UU^T - U_\star U_\star^T, U Z^T + ZU^T \rangle + \lambda \tr (U D(U) Z^T) \le \epsilon \| Z \|_F
\end{align}
The LHS is lower bounded by,
\begin{align}
    &4\tr (UU^T V_r^\perp (V_r^\perp)^T U U^T ) + \lambda \tr (U D(U) U^T V_r^\perp (V_r^\perp)^T ) \label{eq:cp1} \\
    &= 4\| V_r^\perp (V_r^\perp)^T U U^T  \|_F^2 + \lambda \sum_{i=1}^k (D(U))_{ii} \| V_r^\perp (V_r^\perp)^T U e_i \|_2^2 \\
    &\overset{(i)}{\ge} \frac{\lambda}{3} \sum_{i=1}^k \| V_r^\perp (V_r^\perp)^T U e_i \|_2^2 \\
    &= \frac{\lambda}{3} \| Z \|_F^2. \label{eq:cp2}
\end{align}
where $(i)$ uses the fact that since $\| U \|_{\text{op}} \le 3$ and $\beta < 1$, $(D(U))_{ii} = \frac{2 \beta + \| U e_i \|_2^2}{(\| Ue_i \|_2^2 + \beta )^{3/2}} \ge \frac{2 \beta + 9}{(9 + \beta)^{3/2}} \ge 1/3$.
Combining with \cref{eq:plass2} completes the proof.
\end{proof}

\begin{lemma} \label[lemma]{lemma:proj-similar}
Consider an $\epsilon$-approximate first order stationary point $U$ of $\cL + \lambda \cR_\beta$. Let $V_r$ be as defined in \Cref{lemma:ZFbound}. Let $S$ denote the set of columns $i \in [k]$ such that $\| U e_i \|_2 \ge 2\sqrt{\beta}$. Then, for any $i \in S$,
    \begin{align} \label{eq:192}
        \frac{\| V_r^\perp (V_r^\perp)^T U e_i \|_2}{\| U e_i \|_2} \le \frac{2\epsilon}{\lambda \beta^{1/4}}.
    \end{align}
    Note that the LHS is the cosine of the angle between $U e_i$ and its projection onto $V_r^\perp$, (or the sine of the angle between $U e_i$ and its projection onto $V_r$). Likewise, for the remaining columns,
    \begin{align} \label{eq:192-0}
        \sum_{i \in [k] \setminus S} \| V_r^\perp (V_r^\perp)^T U e_i \|_2^2 \le \frac{2\epsilon^2 \sqrt{\beta}}{\lambda^2}.
    \end{align}
\end{lemma}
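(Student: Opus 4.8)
The plan is to read off both bounds from the first-order stationarity condition $\|\nabla(\cL+\lambda\cR_\beta)(U)\|_F\le\epsilon$ by testing it against perturbation directions supported on a single column (resp.\ on all columns outside $S$), exactly in the style of the proof of \Cref{lemma:ZFbound}, while reusing the global estimate $\|V_r^\perp(V_r^\perp)^T U\|_F\le 3\epsilon/\lambda$ established there. Throughout write $P=V_r^\perp(V_r^\perp)^T$, $p_i=PUe_i$, $a_i=\|p_i\|_2$, $b_i=\|Ue_i\|_2$, and recall $(D(U))_{ii}=(b_i^2+2\beta)/(b_i^2+\beta)^{3/2}$.

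For \eqref{eq:192}: fix $i\in S$ and project the $i$-th column of the gradient onto $\mathrm{range}(P)$. Since $\nabla\cL(U)e_i=4(UU^T-U_\star U_\star^T)Ue_i$ and $PU_\star U_\star^T=0$, this gives $\|4PUU^TUe_i+\lambda(D(U))_{ii}p_i\|_2\le\epsilon$. The regularizer term is clean: for $b_i\ge 2\sqrt\beta$ one has $(D(U))_{ii}\ge 1/((5/4)^{3/2}b_i)\ge 1/(2b_i)$. For the loss term $PUU^TUe_i$, I would first extract from the proof of \Cref{lemma:ZFbound} the sharper byproduct $4\|PUU^T\|_F^2\le\epsilon\|PU\|_F\le 3\epsilon^2/\lambda$, so that $\|PUU^TUe_i\|_2\le\|PUU^T\|_{\text{op}}\,b_i\lesssim(\epsilon/\sqrt\lambda)\,b_i$ using $\|U\|_{\text{op}}\le 3$. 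A triangle inequality then yields $\lambda(D(U))_{ii}a_i\lesssim\epsilon(1+b_i/\sqrt\lambda)$, and dividing by $b_i$ and inserting the lower bound on $(D(U))_{ii}$ controls $a_i/b_i$ by a multiple of $\epsilon/\lambda$ times a factor that one checks against the claimed $1/\beta^{1/4}$ (splitting $Ue_i=V_rV_r^TUe_i+p_i$ where a tighter estimate is needed).

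For \eqref{eq:192-0}: use the single test matrix $Z=PU\Pi$, where $\Pi$ is the coordinate projection onto $[k]\setminus S$, so $\|Z\|_F^2=\sum_{i\notin S}a_i^2$. As in \Cref{lemma:ZFbound}, stationarity gives $\lambda\sum_{i\notin S}(D(U))_{ii}a_i^2\le\epsilon\|Z\|_F+|\langle\nabla\cL(U),Z\rangle|$, and the gain here is that for $i\notin S$ (so $b_i<2\sqrt\beta$) one has $(D(U))_{ii}\ge 2\beta/(5\beta)^{3/2}\gtrsim 1/\sqrt\beta$, which replaces the constant lower bound of \Cref{lemma:ZFbound} and sharpens the conclusion by the corresponding $\sqrt\beta$. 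Bounding the loss cross-term by $|\langle\nabla\cL(U),Z\rangle|\le\|PUU^TU\|_F\|Z\|_F\lesssim(\epsilon/\sqrt\lambda)\|Z\|_F$ (again via the refined Frobenius bound and $\|U\|_{\text{op}}\le 3$) and rearranging then produces the bound on $\sum_{i\notin S}a_i^2$.

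The main obstacle is controlling the loss contribution $PUU^TUe_i$ (resp.\ $\langle\nabla\cL(U),Z\rangle$): unlike in the second-order analysis of \Cref{lemma:negativeeig}, at a mere first-order stationary point we have no direct handle on $\|UU^T-U_\star U_\star^T\|_F$, so the only available leverage is $\|V_r^\perp(V_r^\perp)^T U\|_F\le 3\epsilon/\lambda$ from \Cref{lemma:ZFbound} and its quadratic refinement; squeezing the correct powers of $\lambda$ and $\beta$ out of it — together with the decomposition of each column into its $\mathrm{range}(V_r)$ and $\mathrm{range}(V_r)^\perp$ parts — is the delicate step. The remainder is bookkeeping with the explicit formula for $(D(U))_{ii}$ and the bound $\|U\|_{\text{op}}\le 3$.
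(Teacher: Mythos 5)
Your setup (testing first-order stationarity against directions supported in $\mathrm{range}(V_r^\perp)$, the gradient formulas, and the lower bounds on $D(U)_{ii}$) is the right starting point, but there is a genuine gap in how you handle the loss cross-term, and it prevents you from reaching either claimed inequality. The paper never bounds that term at all: testing against the single global direction $Z = V_r^\perp(V_r^\perp)^T U$, the loss contribution is $4\,\tr\!\left(UU^T V_r^\perp(V_r^\perp)^T UU^T\right) = 4\|V_r^\perp(V_r^\perp)^T UU^T\|_F^2 \ge 0$ (because $V_r^\perp$ annihilates $U_\star$), so it is simply dropped, leaving $\lambda\sum_{i} D(U)_{ii}\|V_r^\perp(V_r^\perp)^T Ue_i\|_2^2 \le \epsilon\|Z\|_F \le 3\epsilon^2/\lambda$ via \Cref{lemma:ZFbound}; since every summand is nonnegative, a single summand with $i\in S$ and $D(U)_{ii}\ge 1/\|Ue_i\|_2$, $\|Ue_i\|_2\ge 2\sqrt{\beta}$ gives \eqref{eq:192}, and restricting the sum to $i\notin S$ with $D(U)_{ii}\ge 1/(2\sqrt{\beta})$ gives \eqref{eq:192-0}. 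With your per-column (resp.\ $\Pi$-restricted) test directions the loss term no longer has a sign and must be estimated, and the only handle you have, $\|V_r^\perp(V_r^\perp)^T UU^T\|_F \lesssim \epsilon/\sqrt{\lambda}$, is quantitatively too weak.

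Concretely, in your notation ($a_i=\|V_r^\perp(V_r^\perp)^TUe_i\|_2$, $b_i=\|Ue_i\|_2$), your part-one argument yields $a_i/b_i \lesssim \epsilon/\lambda + \epsilon\, b_i/\lambda^{3/2}$, which exceeds the target $2\epsilon/(\lambda\beta^{1/4})$ by a factor of order $b_i\beta^{1/4}/\sqrt{\lambda}$; under the paper's parameter regime ($\lambda \le \sqrt{\beta}$, e.g.\ $\lambda\sim(\sigma_r^\star)^3/\sqrt{kr}$ and $\beta\sim(\sigma_r^\star)^2/r$, with surviving columns of norm on the order of $\sigma_r^\star$ or larger and allowed up to $3$) this factor is $\gg 1$ (roughly $k^{1/4}$), so the claimed bound is not recovered; the splitting of $Ue_i$ into its $V_r$ and $V_r^\perp$ parts that you flag as "the tighter estimate needed" does not help, since the dominant piece $\|V_r^\perp(V_r^\perp)^TUU^T V_rV_r^T Ue_i\|_2$ is still only controlled by $\|V_r^\perp(V_r^\perp)^TUU^T\|_{\text{op}}\, b_i$. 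Similarly, in part two your rearrangement gives $\sum_{i\notin S} a_i^2 \lesssim \beta\epsilon^2/\lambda^3$, worse than the claimed $2\epsilon^2\sqrt{\beta}/\lambda^2$ by the factor $\sqrt{\beta}/\lambda \ge 1$ (again large under the stated parameters). The missing idea is precisely the paper's: keep the unrestricted $Z=V_r^\perp(V_r^\perp)^TU$ so that the loss term is a complete square and can be discarded, and only afterwards specialize to a single column, or to the columns outside $S$, using nonnegativity of the remaining summands.
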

\begin{proof}
At an $\epsilon$-approximate first order stationary point the gradient is upper bounded in $L_2$-norm by $\epsilon$. From the gradient and Hessian computations in \Cref{lemma:L-grad-hessian} and \Cref{lemma:R-grad-hessian},
\begin{align} \label{eq:plass1}
    2\langle UU^T - U_\star U_\star^T, U Z^T + ZU^T \rangle + \lambda \tr ( D(U) Z^T U) \le \epsilon \| Z \|_F
\end{align}
Choosing $Z = V_r^\perp (V_r^\perp)^T U$, and noting that $U_\star$ has rank $r$ and is orthogonal to $V_r^\perp$, the LHS simplifies as,
\begin{align}
    \epsilon \| Z \|_F
    &\ge 4\tr (UU^T V_r^\perp (V_r^\perp)^T U U^T ) + \lambda \tr (U D(U) U^T V_r^\perp (V_r^\perp)^T ) \label{eq:00000}\\
    &\ge \lambda \sum_{i=1}^{k} D(U)_{ii} \| V_r^\perp (V_r^\perp)^T U e_i \|_2^2 \label{eq:0002922231} \\
    &\ge \lambda \sum_{i \in S} \frac{\| V_r^\perp (V_r^\perp)^T U e_i \|_2^2}{\| U e_i \|_2}, \label{eq:000001}
\end{align}
where the last inequality uses the fact that for any column $i$ such that $\| U e_i \|_2 \ge 2\sqrt{\beta}$, since we have that $D(U)_{ii} \ge \frac{1}{\| U e_i \|_2}$ by \Cref{lemma:DUbound}. Putting everything together, we get that,
\begin{align}
    \lambda \sum_{i \in S} \| U e_i \|_2 \cdot \frac{\| V_r^\perp (V_r^\perp)^T U e_i \|_2^2}{\| U e_i \|_2^2} \le \epsilon \| Z \|_F \label{eq:00001}
\end{align}
Since $\| U e_i \|_2 \ge 2\sqrt{\beta}$, rearranging the terms around and finally upper bounding $\| Z \|_F$ using \Cref{lemma:ZFbound} completes the proof of \cref{eq:192}. To prove \cref{eq:192-0}, notice from \cref{eq:0002922231} that,
\begin{align}
    \epsilon \| Z \|_F &\ge \lambda \sum_{i \in [k] \setminus S} D(U)_{ii} \| (V_r^\perp) (V_r^\perp)^T U e_i \|_2^2 \\
    &\ge \frac{\lambda}{2\sqrt{\beta}} \sum_{i \in [k] \setminus S} \| (V_r^\perp) (V_r^\perp)^T U e_i \|_2^2
\end{align}
which uses the fact that when $\| U e_i \|_2 \le 2 \sqrt{\beta}$, then $D(U)_{ii}= \frac{\| U e_i \|_2^2 + 2\beta}{(\| U e_i \|_2^2 + \beta)^{3/2}} \ge \frac{1}{2 \beta^{1/2}}$. Rearranging and substituting the bound on $\| Z \|_F$ from \Cref{lemma:ZFbound} completes the proof of \cref{eq:192-0}.
\end{proof}

\begin{lemma} \label{lemma:proj-0}
Consider any $\epsilon$-approximate first-order stationary point of $\cL + \lambda \cR_\beta$, $U$, and let $V_r$ be as defined in \Cref{lemma:ZFbound}. Under the assumption that $\epsilon \le \lambda/2$, for any column $i \in [k]$ such that $\| V_r V_r^T U e_i \|_2 \le \| V_r^\perp (V_r^\perp)^T U e_i \|_2 $, $\| U e_i \|_2 \le 2 \sqrt{\beta}$.
\end{lemma}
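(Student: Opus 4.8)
The plan is to derive this statement as an immediate corollary of \Cref{lemma:proj-similar}, arguing by contradiction. Suppose the conclusion fails for some index $i$ obeying the hypothesis, i.e. $\|Ue_i\|_2 > 2\sqrt{\beta}$; then $i$ lies in the set $S$ of columns with $\ell_2$-norm at least $2\sqrt{\beta}$ that appears in \Cref{lemma:proj-similar}.

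First I would record the Pythagorean decomposition: since $V_r V_r^T$ and $V_r^\perp (V_r^\perp)^T$ are complementary orthogonal projections, $U e_i = V_r V_r^T U e_i + V_r^\perp(V_r^\perp)^T U e_i$ is an orthogonal splitting, so $\|U e_i\|_2^2 = \|V_r V_r^T U e_i\|_2^2 + \|V_r^\perp(V_r^\perp)^T U e_i\|_2^2$. The hypothesis $\|V_r V_r^T U e_i\|_2 \le \|V_r^\perp(V_r^\perp)^T U e_i\|_2$ then forces the ``wrong-subspace'' component to carry at least half the energy of the column, namely
\[ \frac{\|V_r^\perp(V_r^\perp)^T U e_i\|_2}{\|U e_i\|_2} \ge \frac{1}{\sqrt{2}}. \]
On the other hand, because $i \in S$, estimate \eqref{eq:192} of \Cref{lemma:proj-similar} bounds this exact ratio from above by $2\epsilon/(\lambda\beta^{1/4})$; equivalently, chaining the sharper intermediate bound $\|V_r^\perp(V_r^\perp)^T U e_i\|_2^2/\|U e_i\|_2 \le 3\epsilon^2/\lambda^2$ established inside the proof of that lemma with the Pythagorean identity gives $\|U e_i\|_2 \le 6\epsilon^2/\lambda^2$. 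Either way, combining the lower and upper bounds on the ratio produces a numerical inequality ($1/\sqrt{2} \le 2\epsilon/(\lambda\beta^{1/4})$, resp. $2\sqrt{\beta} < 6\epsilon^2/\lambda^2$) that is violated once $\epsilon$ is sufficiently small relative to $\lambda$ and $\beta$ --- in particular under the standing hypothesis $\epsilon \le \lambda/2$ together with the parameter regime of \eqref{eq:parameter}, in which $\epsilon \ll \lambda\sqrt{\beta}$. This contradiction establishes $\|U e_i\|_2 \le 2\sqrt{\beta}$, as claimed.

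There is no genuinely hard step here: the entire content is pushed into \Cref{lemma:proj-similar} (and, through it, the gradient-norm lower bound \Cref{lemma:grad-lb-matrix-sensing} and the subspace-alignment bound \Cref{lemma:ZFbound}), so this lemma is essentially bookkeeping. The only points requiring care are invoking the correct branch of \Cref{lemma:proj-similar} --- the $i \in S$ case rather than the small-column case --- and verifying that the resulting inequality relating $\epsilon$, $\lambda$, and $\beta$ is strict in the regime in which \Cref{alg:main} is actually run, so that the contradiction is genuine.
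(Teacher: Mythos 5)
There is a real gap between what you prove and what the lemma asserts. The lemma's only quantitative hypothesis is $\epsilon \le \lambda/2$; no relation between $\epsilon$ and $\beta$ is assumed. Your contradiction, however, only materializes when $2\epsilon/(\lambda\beta^{1/4}) < 1/\sqrt{2}$ (or, via the intermediate bound, when $6\epsilon^2/\lambda^2 \le 2\sqrt{\beta}$), i.e.\ when $\epsilon \lesssim \lambda\beta^{1/4}$ — a condition that does not follow from $\epsilon \le \lambda/2$ when $\beta$ is small, and which you explicitly import from the parameter regime \cref{eq:parameter}. In addition, by routing through \cref{eq:192} of \Cref{lemma:proj-similar} you silently inherit the hypotheses of \Cref{lemma:ZFbound} ($\| U \|_{\text{op}} \le 3$ and $\beta < 1$), which this lemma does not assume. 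So what you establish is a weaker statement with extra hypotheses; it would suffice at the one place the lemma is invoked (\Cref{theorem:prune-prebound} does assume $\epsilon \le c\lambda\beta^{1/4}/r$ and $\| U \|_{\text{op}} \le 3$), but it is not a proof of the lemma as stated.

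The paper's argument is different in kind and avoids all of this: it is per-column and $\beta$-free. One plugs the single-column test direction $Z = V_r^\perp (V_r^\perp)^T U e_i e_i^T$ into the stationarity inequality \cref{eq:plass1}, uses the hypothesis $\| V_r V_r^T U e_i \|_2 \le \| V_r^\perp (V_r^\perp)^T U e_i \|_2$ to absorb the cross term coming from $\langle UU^T - U_\star U_\star^T, UZ^T + ZU^T\rangle$, and is left with $\epsilon \| V_r^\perp (V_r^\perp)^T U e_i \|_2 \ge \lambda D(U)_{ii} \| V_r^\perp (V_r^\perp)^T U e_i \|_2^2$. If $\| U e_i \|_2 \ge 2\sqrt{\beta}$, then $D(U)_{ii} \ge 1/\| U e_i \|_2 \ge 1/(2\| V_r^\perp (V_r^\perp)^T U e_i \|_2)$ by \Cref{lemma:DUbound} together with the same hypothesis, which forces $\epsilon \ge \lambda/2$ unless $V_r^\perp (V_r^\perp)^T U e_i = 0$, and the latter (again by the hypothesis) forces $U e_i = 0$, contradicting $\| U e_i \|_2 \ge 2\sqrt{\beta}$. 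If you want to keep your shorter reduction, you should restate the lemma with the strengthened hypothesis $\epsilon \lesssim \lambda \beta^{1/4}$ and $\| U \|_{\text{op}} \le 3$; otherwise, switch to the single-column test direction so that $\epsilon \le \lambda/2$ alone suffices.
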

\begin{proof}
Consider any $i$ such that $\| V_r V_r^T U e_i \|_2 \le \| V_r^\perp (V_r^\perp)^T U e_i \|_2$. Therefore, from \cref{eq:plass1}, choosing $Z = V_r^\perp (V_r^\perp)^T U e_i e_i^T$,
\begin{align}
    &\epsilon \| V_r^\perp (V_r^\perp)^T U e_i \|_2 \nonumber\\
    &\ge 4\tr (UU^T V_r^\perp (V_r^\perp)^T U e_i e_i^T U^T ) + \lambda \tr (U D(U) e_i e_i^T U^T V_r^\perp (V_r^\perp)^T ) \nonumber\\
    &= 4\| U^T V_r^\perp (V_r^\perp)^T U e_i \|_2^2 + 4 \tr (e_i^T U^T V_r V_r^T UU^T V_r^\perp (V_r^\perp)^T U e_i ) + \lambda (D(U))_{ii} \| V_r^\perp (V_r^\perp)^T U e_i \|_2^2 \nonumber\\
    &\ge 4\| U^T V_r^\perp (V_r^\perp)^T U e_i \|_2^2 - 4 \| U^T V_r V_r^T U e_i \|_2 \| U^T V_r^\perp (V_r^\perp)^T U e_i \|_2 + \lambda (D(U))_{ii} \| V_r^\perp (V_r^\perp)^T U e_i \|_2^2 \nonumber\\
    &\ge \lambda (D(U))_{ii} \| V_r^\perp (V_r^\perp)^T U e_i \|_2, \label{eq:203}
\end{align}
where the last inequality uses the assumption that $\| V_r V_r^T U e_i \|_2 \le \| V_r^\perp (V_r^\perp)^T U e_i \|_2$. Finally, observe the following bound on $(D(U))_{ii}$ computed using \Cref{lemma:DUbound}: for any column $i$ such that $\| U e_i \|_2 \ge 2 \sqrt{\beta}$,
\begin{align}
    (D(U))_{ii} \ge \frac{1}{\| U e_i \|_2} = \frac{1}{\sqrt{\| V_r^\perp (V_r^\perp)^T U e_i \|_2^2 + \| V_r V_r^T U e_i \|_2^2}} \ge \frac{1}{2 \| V_r^\perp (V_r^\perp)^T U e_i \|_2}.
\end{align}
Plugging this into \cref{eq:203}, we get the inequality,
\begin{align}
    \epsilon \| V_r^\perp (V_r^\perp)^T U e_i \|_2  &\ge \frac{\lambda}{2} \| V_r^\perp (V_r^\perp)^T U e_i \|_2.
\end{align}
When $\epsilon \le \frac{\lambda}{2}$, the only solution to inequality \cref{eq:203} is $\| V_r^\perp (V_r^\perp)^T Ue_i \|_2 = 0$. By the condition $\| V_r (V_r)^T U e_i \|_2 \le \| V_r^\perp (V_r^\perp)^T U e_i \|_2$, this implies that $\| U e_i \|_2 = 0$. This contradicts the initial assumption that $\| U e_i \|_2 \ge 2 \sqrt{\beta}$, thus concluding the proof of the second part of \Cref{lemma:proj-0}.
\end{proof}

\begin{theorem} \label{theorem:prune-prebound}
Consider any $(\epsilon,\gamma)$-approximate second-order stationary point of $\cL + \lambda \cR_\beta$ denoted $U$. Suppose $\| U \|_{\text{op}} \le 3$. Construct $U_{\text{prune}}$ by following the pruning condition,
\begin{align} \label{eq:pruning_condition}
    \| U e_i \|_2 \le 2\sqrt{\beta} \implies U_{\text{prune}} e_i \gets 0
\end{align}
Suppose $\epsilon \le c \lambda \beta^{1/4} / r$ and $\epsilon + \gamma \le c\lambda /r^2$ for a sufficiently small constant $c > 0$. Then,
\begin{enumerate}
    \item $U_{\text{prune}}$ has at most $r$ non-zero columns.
    \item Furthermore,
\begin{align}
    \| U_{\text{prune}} U_{\text{prune}}^T - U_\star U_\star^T \|_F \lesssim \gamma \sqrt{k} + \lambda \sqrt{\frac{k}{\beta}} + \epsilon^{2/3} k^{1/6} + r \beta + \frac{2 \epsilon^2 \sqrt{\beta}}{\lambda^2}.
    \end{align}
\end{enumerate}
\end{theorem}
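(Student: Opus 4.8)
For the first claim, write $S' = \{ i \in [k] : \| U e_i \|_2 > 2\sqrt{\beta} \}$ for the columns that survive pruning and set $m = |S'|$. The plan is to show that the surviving columns of $U$, once normalized and projected onto the range $V_r$ of $U_\star U_\star^T$, form a nearly orthonormal system in $\mathbb{R}^r$, so that there can be at most $r$ of them. Concretely, I would first invoke the contrapositive of the second part of \Cref{lemma:apx-orthogonal}: at an $(\epsilon,\gamma)$-approximate SOSP, every pair $i \ne j$ in $S'$ (both with column norm at least $2\sqrt\beta$) satisfies $|\langle U e_i, U e_j \rangle| / (\| U e_i \|_2 \| U e_j \|_2) \lesssim \sqrt{(\epsilon + \gamma)/\lambda} \lesssim \sqrt{c}/r$, using $\| U e_i \|_2 \le \| U \|_{\text{op}} \le 3$ and the hypothesis $\epsilon + \gamma \le c\lambda/r^2$. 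Next I would use \cref{eq:192} of \Cref{lemma:proj-similar} (together with \Cref{lemma:proj-0}, which says surviving columns are $V_r$-dominant): for $i \in S'$ the component of $U e_i$ orthogonal to $V_r$ has norm at most $(2\epsilon/\lambda\beta^{1/4})\| U e_i \|_2 \lesssim (c/r)\| U e_i \|_2$, so $v_i := V_r^T U e_i$ has $\| v_i \|_2 \ge \tfrac{1}{\sqrt 2}\| U e_i \|_2 > 0$ and, writing $\hat v_i = v_i / \| v_i \|_2$, $|\langle \hat v_i, \hat v_j \rangle| \lesssim \sqrt{c}/r$ for $i \ne j$. The Gram matrix $G$ of $\{ \hat v_i \}_{i \in S'}$ is then PSD, has unit diagonal, off-diagonal entries $O(\sqrt{c}/r)$, and rank at most $r$ (the vectors lie in $\mathbb{R}^r$); the inequality $\operatorname{tr}(G) \le \sqrt{r}\,\| G \|_F$ for rank-$r$ PSD matrices, with $\operatorname{tr}(G) = m$ and $\| G \|_F^2 \le m + m(m-1)O(c/r^2)$, rearranges to $m \le r$ once $c$ is small enough.

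For the second claim I would use $\| U_{\text{prune}} U_{\text{prune}}^T - U_\star U_\star^T \|_F \le \| U_{\text{prune}} U_{\text{prune}}^T - U U^T \|_F + \| U U^T - U_\star U_\star^T \|_F$. The second term is precisely the quantity bounded in \Cref{lemma:negativeeig}, which at an $(\epsilon,\gamma)$-SOSP gives $\| U U^T - U_\star U_\star^T \|_F \lesssim \epsilon^{2/3} k^{1/6} + \lambda\sqrt{k/\beta} + \gamma\sqrt{k}$ — three of the terms in the statement. The first term is $\| E \|_F$, where $E = \sum_{i : \| U e_i \|_2 \le 2\sqrt\beta} (U e_i)(U e_i)^T \succeq 0$ collects the pruned rank-one pieces. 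I would split each pruned column as $U e_i = p_i + q_i$ with $p_i = V_r V_r^T U e_i$ and $q_i = V_r^\perp (V_r^\perp)^T U e_i$, so $E = \sum p_i p_i^T + \sum (p_i q_i^T + q_i p_i^T) + \sum q_i q_i^T$ over the pruned indices. The purely-$V_r^\perp$ block obeys $\| \sum q_i q_i^T \|_F \le \sum \| q_i \|_2^2 \le 2\epsilon^2\sqrt\beta/\lambda^2$ by \cref{eq:192-0}, which is the last term in the bound; the cross block is controlled by Cauchy--Schwarz in terms of the square roots of the $V_r$- and $V_r^\perp$-masses. It remains to bound the purely-$V_r$ block $\| \sum p_i p_i^T \|_F$, and the goal is to show this is $\lesssim r\beta$: each pruned $p_i$ has $\| p_i \|_2 \le \| U e_i \|_2 \le 2\sqrt\beta$, this matrix has rank at most $r$, and — crucially — the surviving columns (at most $r$ of them by the first claim, nearly orthonormal and almost entirely inside $V_r$) already carry essentially all of the $V_r$-mass $\operatorname{tr}(V_r^T U U^T V_r) \approx \operatorname{tr}(U_\star U_\star^T)$, so only $O(r\beta)$ of $V_r$-mass is left to be distributed among the pruned columns. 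Collecting the three blocks together with the bound on $\| U U^T - U_\star U_\star^T \|_F$ yields the stated estimate.

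The routine pieces are the projection and near-orthogonality estimates already packaged in \Cref{lemma:apx-orthogonal,lemma:negativeeig,lemma:proj-similar,lemma:proj-0}, plus parameter bookkeeping forced by $\epsilon \le c\lambda\beta^{1/4}/r$ and $\epsilon + \gamma \le c\lambda/r^2$. The step I expect to be the main obstacle is the $r\beta$ bound on the pruned $V_r$-mass: a naive estimate only gives $O(\beta k)$, since there can be on the order of $k$ pruned columns, which is far too weak in the overparameterized regime $k \gg r$; getting down to $O(r\beta)$ requires genuinely exploiting that the rank-$r$ subspace $V_r$ accommodates at most $r$ nearly orthogonal directions and that the surviving columns already saturate this budget, so the leftover mass captured by pruned columns scales with $r$ rather than with $k$.
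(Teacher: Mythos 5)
Your route to the first claim is sound and slightly different from the paper's: you bound the number of \emph{surviving} columns directly via a Gram-matrix/rank argument (unit diagonal, off-diagonals $O(\sqrt{c}/r)$ from \Cref{lemma:apx-orthogonal} and \Cref{lemma:proj-similar}, rank $\le r$, then $\operatorname{tr}(G)\le\sqrt r\,\|G\|_F$ forces $m\le r$), whereas the paper proves the stronger statement (\Cref{lemma:atmostr}, via a Gershgorin argument on the projections of the top-$r$ columns) that \emph{all but at most $r$ columns of $U$ have exactly zero projection onto $V_r$}, and then uses \Cref{lemma:proj-0} to conclude those columns are pruned. Both give part 1, but the paper's exactness is not a luxury: it is precisely what is recycled in part 2.

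The genuine gap is in your bound on the pruned $V_r$-mass, i.e.\ the $r\beta$ term. Your justification --- ``the surviving columns already carry essentially all of the $V_r$-mass, so only $O(r\beta)$ is left for the pruned columns'' --- is circular as stated: since the total $V_r$-mass splits exactly as $\sum_{i\in S'}\|V_rV_r^TUe_i\|_2^2+\sum_{i\ \text{pruned}}\|V_rV_r^TUe_i\|_2^2$, the premise that the survivors carry all but $O(r\beta)$ of it is literally equivalent to the conclusion you need, and you give no independent lower bound on the survivors' share (nor can one easily be extracted from $UU^T\approx U_\star U_\star^T$, whose error $\gamma\sqrt k+\lambda\sqrt{k/\beta}+\epsilon^{2/3}k^{1/6}$ is far larger than $r\beta$; a first-order-stationarity test direction supported on the pruned columns also fails here because, unlike the $V_r^\perp$ component in \cref{eq:192-0}, the loss term does not drop out and contributes this same large error). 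The paper closes exactly this hole with \Cref{lemma:atmostr}: among \emph{all} columns, at most $r$ have nonzero $V_r$-projection, so at most $r$ pruned columns contribute to the $V_r$-block, each with $\|Ue_i\|_2^2\le 4\beta$, giving $\lesssim r\beta$; every other pruned column has its entire mass in $V_r^\perp$ and is absorbed into the $2\epsilon^2\sqrt\beta/\lambda^2$ term via \cref{eq:192-0}. You correctly identified this step as the main obstacle, but the proposal does not supply a mechanism for it --- your approximate near-orthogonality of survivors does not control how the leftover $V_r$-mass distributes over the up-to-$k$ pruned columns, and the naive estimate $O(\beta k)$ is, as you note, too weak. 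To repair the argument you would need either the paper's exact-zero counting statement (or an equivalent bound showing only $O(r)$ pruned columns carry non-negligible $V_r$-projection) before attempting the mass-budget step.
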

\begin{proof}
Recall that the columns of $U$ having $L_2$ norm at most $2 \sqrt{\beta}$ are set to $0$ in $U_{\text{prune}}$. Since $U$ is an $(\epsilon,\gamma)$-approximate second-order stationary point, the eigenvalues of the Hessian of $\cL + \lambda \cR_\beta$ at the point $U$ are all at least $-\gamma$. 

In \Cref{lemma:apx-orthogonal}, by violation of the condition \cref{eq:corrlb}, for every pair of column indices $i \ne j \in S$ where $S = \{ i \in [k] : \| U e_i \|_2 > 2 \sqrt{\beta} \}$,
\begin{align}
    |C_{ij}| = \frac{|\langle U e_i, U e_j \rangle|}{\| Z_{ij} \|_2} \lesssim \sqrt{ \frac{\epsilon + 3\gamma}{\lambda}} \min \{ \| Ue_i \|_2 , \| Ue_j \|_2 \}.
\end{align}
where we use the assumption that the spectral norm $\| U \|_{\text{op}} \le 3$, and therefore $\| Ue_i \|_2 \le 3$ for all $i \in [k]$. Note that $\| Z_{ij} \|_2 = \sqrt{\| U e_i \|_2^2 + \| Ue_j \|_2^2} \le \sqrt{2} \max \left\{ \| U e_i \|_2, \| U e_j \|_2 \right\}$, and therefore,
\begin{align}
    |\langle U e_i, Ue_j \rangle| \lesssim \sqrt{\frac{\epsilon + \gamma}{\lambda}} \| Ue_i \|_2 \| Ue_j \|_2
\end{align}
In other words, for each $i \ne j \in S$,
\begin{align}
    | \cos \theta_{ij} | \lesssim \sqrt{\frac{\epsilon + \gamma}{\lambda}} \implies \left| \theta_{ij} - \frac{\pi}{2} \right| \lesssim \sqrt{\frac{\epsilon + \gamma}{\lambda}} \label{eq:thetaij}
\end{align}
Where $\theta_{ij}$ is the angle between $U e_i$ and $U e_j$ and the implication follows by assuming that $\epsilon, \gamma \le c\lambda$ for a sufficiently small $c > 0$.

At any $\epsilon$-approximate first order stationary point $U$, by \Cref{lemma:proj-similar}, for any $i$ such that $\| U e_i \|_2 \ge 2\sqrt{\beta}$ the angle between $U e_i$ and its projection onto $V_r$ is intuitively small. By \cref{eq:thetaij} we expect $U e_i$ and $U e_j$ to have an angle close to 90$^\circ$ between them. Therefore, we expect the projections of $U e_i$ and $U e_j$ onto $V_r$ to also have an angle close to 90$^\circ$ between them. Specifically, given vectors $v_1,v_2,v_3$ and $v_4$ such that $|\angle v_1, v_2| \le \varepsilon_1$, $|\angle v_2, v_3 - 90^\circ| \le \varepsilon_2$ and $|\angle v_3, v_4 | \le \varepsilon_3$, then $|\angle v_1, v_4 - 90^\circ| \le \varepsilon_1 + \varepsilon_2 + \varepsilon_3$. This means that for any columns $Ue_i$ and $U e_j$ such that $\| U e_i \|_2$ and $\| U e_j \|_2 \ge 2\sqrt{\beta}$,
\begin{align}
    \left| \theta^{\text{proj}}_{ij} - \frac{\pi}{2} \right| \lesssim \sqrt{\frac{\epsilon + \gamma}{\lambda}} + \frac{\epsilon}{\lambda \beta^{1/4}} \le \frac{c_1}{r} \label{eq:orthproj}
\end{align}
for some small absolute constant $0 \le c_1 \le 1/3$ and where $\theta_{ij}^{\text{proj}}$ is the angle between $V_r V_r^T U e_i$ and $V_r V_r^T U e_j$. This assumes that $\epsilon \le c\lambda \beta^{1/4}/r$ and $\epsilon + \gamma \le c \lambda /r^2$ for a sufficiently small constant $c > 0$. Since $\sin(x) \le x$ for $x > 0$, this means that,
\begin{align} \label{eq:orthproj:vectors}
    \forall i \ne j \in S,\ \frac{|\langle V_r V_r^T U e_i, V_r V_r^T U e_j \rangle|}{\| V_r V_r^T U e_i \|_2 \| V_r V_r^T U e_j \|_2} \le \frac{c_1}{r}.
\end{align}
\cref{eq:orthproj:vectors} implies that the projections of the large columns of $U$ onto $V_r$ are approximately at 90$^\circ$ to each other. However, intuitively, since $V_r$ is an $r$ dimensional space, this means that at most $r$ of these projections can be large in norm. In fact it turns out that since the columns are sufficiently orthogonal to one another, \textit{exactly} $r$ columns have a non-zero projection onto $V_r$.

\begin{lemma} \label[lemma]{lemma:atmostr}
At most $r$ columns of $U$ can have a non-zero projection onto $V_r$.
\end{lemma}
The proof of this result is deferred to \Cref{app:B2}.

What is the implication of \Cref{lemma:atmostr}? At most $r$ columns of $U$ (and therefore $U_{\text{prune}}$) have non-zero projections onto $V_r$. Any of remaining columns of $U$, say $i$, therefore satisfies the condition $0 = \| V_r V_r^T U e_i \|_2 \le \| V_r^\perp (V_r^\perp)^T U e_i \|_2$. By implication of \Cref{lemma:proj-0}, the remaining columns of $U$ have $L_2$ norm at most $2 \sqrt{\beta}$ assuming that $\epsilon \le \lambda/2$. Thus these columns are pruned away in $U_{\text{prune}}$. Overall, this implies that at most $r$ columns of $U_{\text{prune}}$ are non-zero.

For the analysis of $\| U_{\text{prune}} U_{\text{prune}}^T - U_\star U_\star^T \|_F$, note that, the columns of $U$ with $L_2$ norm smaller than $2\sqrt{\beta}$ are set to $0$ in $U_{\text{prune}}$. This implies that,
\begin{align}
    &\hspace{-2em}\| U_{\text{prune}} U_{\text{prune}}^T - U_\star U_\star^T \|_F - \| UU^T - U_\star U_\star^T \|_F \nonumber\\
    &\le \| U_{\text{prune}} U_{\text{prune}}^T - U U^T \|_F \label{eq:komega1}\\
    &\le \left\| \sum_{i \in [k]} U e_i (U e_i)^T \mathbb{I} ( \| U e_i \|_2 \le 2\sqrt{\beta}) \right\|_F \\
    &\le \sum_{i \in [k]} \left\| U e_i (U e_i)^T \mathbb{I} ( \| U e_i \|_2 \le 2\sqrt{\beta}) \right\|_F \\
    &\le \sum_{i \in [k]} \| U e_i \|_2^2 \mathbb{I} ( \| U e_i \|_2 \le 2\sqrt{\beta}; \| V_r V_r^T U e_i \|_2 > 0) + \sum_{i \in [k]} \| V_r V_r^T U e_i \|_2^2 \mathbb{I} ( \| U e_i \|_2 \le 2\sqrt{\beta}) \\
    &\overset{(i)}{\le} r \beta + \sum_{i \in [k]} \| V_r V_r^T U e_i \|_2^2 \mathbb{I} ( \| U e_i \|_2 \le 2\sqrt{\beta}) \\
    &\overset{(ii)}{\le} r \beta + \frac{2 \epsilon^2 \sqrt{\beta}}{\lambda^2} \label{eq:komega2}
\end{align}
where $(i)$ uses the fact that there are at most $r$ columns of $U$ having non-zero projections onto $V_r$, and $(ii)$ follows from \cref{eq:192-0} of \Cref{lemma:proj-similar}. Combining this inequality with the guarantee on $\| UU^T - U_\star U_\star^T \|_F$ in \Cref{lemma:negativeeig} completes the proof.
\end{proof}

\subsection{Proof of \Cref{theorem:main-population}}
\label{app:B2}

\begin{proof}
From \Cref{theorem:prune-prebound}, we have that,
\begin{align} \label{eq:91103332}
    \| U_{\text{prune}} U_{\text{prune}}^T - U_\star U_\star^T \|_F \lesssim \gamma \sqrt{k} + \lambda \sqrt{\frac{k}{\beta}} + \epsilon^{2/3} k^{1/6} + r \beta + \frac{2 \epsilon^2 \sqrt{\beta}}{\lambda^2}.
\end{align}
Recall that the optimization and smoothing parameters are chosen as,
\begin{align}
    \beta &= c_\beta \frac{(\sigma_r^\star)^2}{r} \\
    \lambda &= c_\lambda \frac{(\sigma_r^\star)^3}{\sqrt{kr}} \\
    \gamma &\le c_\gamma \frac{(\sigma^\star_r)^3}{\sqrt{k} r^{5/2}} \\
    \epsilon &\le c_\epsilon \frac{(\sigma_r^\star)^{7/2}}{\sqrt{k} r^{5/2}}.
\end{align}
For sufficiently small absolute constants $c_\beta, c_\lambda, c_\gamma, c_\epsilon > 0$. Under these choices, it is easily verified from \cref{eq:91103332} that the conditions,
\begin{align}
    \epsilon \le c \frac{\lambda \beta^{1/4}}{r}; \qquad \epsilon + \gamma \le \frac{c \lambda}{r^2}.
\end{align}
which are conditions required by \Cref{theorem:prune-prebound}. The bound in \cref{eq:91103332} results in,
\begin{align} \label{eq:sigmarstar}
    \| U_{\text{prune}} U_{\text{prune}}^T - U_\star U_\star^T \|_F \le \frac{1}{2} (\sigma_r^\star)^2.
\end{align}
Note also that $U_{\text{prune}}$ has at most $r$ non-zero columns. However, by \cref{eq:sigmarstar}, this means that $U_{\text{prune}}$ has exactly $r$ columns, since if it had $r-1$ non-zero columns or fewer, the error of the best solution must be at least $(\sigma_r^\star)^2$.
\end{proof}

\subsection{Missing proofs in \Cref{app:2}}

\subsubsection{Proof of \Cref{lemma:WLLW}}
\begin{proof}
By definition of $W$ and $L$, $\tr ((W L - L W ) \cdot Z^T U)$, can be simplified to
\begin{align}
    \tr ((W L - L W ) \cdot Z^T U) &= \Delta \tr ( (e_2 e_2^T - e_1 e_1^T) U^TU) \\
    &= \Delta (\| U e_2 \|_2^2 - \| U e_1 \|_2^2) \label{eq:middleterm-3}
\end{align}
Note that $D(U)_{11} - D(U)_{22} = \Delta$ and $|\Delta| \le \frac{\epsilon}{\lambda}$, so one can expect $\| Ue_1 \|_2$ and $\| U e_2 \|_2$ to be close to each other. We bound $| \| U e_1 \|_2^2 - \| U e_2 \|_2^2 |$ in two ways depending on the relative values of $\| U e_1 \|_2$ and $\| U e_2 \|_2$.

\paragraph{Case 1: Column norms are similar.} $\max_{i \in \{1,2\}} \| U e_i \|_2 \le 2 \min_{i \in \{1,2\}} \| U e_i \|_2$.

\noindent By definition of $\Delta$,
\begin{align}
    |\Delta| &= \left| \frac{\| U e_1 \|_2^2 + 2 \beta}{(\| U e_1 \|_2^2 + \beta)^{3/2}} - \frac{\| U e_2 \|_2^2 + 2 \beta}{(\| U e_2 \|_2^2 + \beta)^{3/2}} \right| \\
    &= \left| \int_{\| U e_2 \|_2^2}^{\| U e_1 \|_2^2} \frac{3/2 \cdot x + 4 \beta}{(x + \beta)^{5/2}} \mathrm{d} x \right| \\
    &\ge \min_{i \in \{ 1,2\} }\frac{3/2 \| U e_i \|_2^2 + 4 \beta}{(\| U e_i \|_2^2 + \beta)^{5/2}} | \| U e_1 \|_2^2 - \| U e_2 \|_2^2 |. \label{eq:middleterm-4}
\end{align}
Furthermore, note that for any $i \in \{1,2\}$,
\begin{align}
    \frac{(\| Ue_i \|_2^2 + \beta)^{5/2}}{\frac{3}{2} \| U e_i \|_2^2 + 4 \beta} &\le \left( \frac{3}{2} \| U e_i \|_2^2 + 4\beta \right)^{3/2} \\
    &\overset{(i)}{\le} \sqrt{2} \left( \frac{3}{2} \| U e_i \|_2^2 \right)^{3/2} + \sqrt{2} \left( 4\beta \right)^{3/2} \\
    &\le 3 \| U e_i \|_2^3 + 12 \beta^{3/2} \\
    &\le 5 \| U e_i \|_2^3 \label{eq:middleterm-5}
\end{align}
where $(i)$ uses the convexity of $(\cdot)^{3/2}$ for positive arguments and applies Jensen's inequality, and the last inequality uses the fact that $\| U e_1 \|_2, \| U e_2 \|_2 \ge 2\sqrt{\beta}$. Overall, combining \cref{eq:middleterm-3} with \cref{eq:middleterm-4} and \cref{eq:middleterm-5} results in the bound,
\begin{align}
    \tr ((WL-LW) \cdot Z^T U) &\le 5 \Delta^2 \max_{i \in \{ 1,2 \}} \| U e_i \|_2^3 \\
    &\le 40 \Delta^2 \min_{i \in \{ 1,2 \}} \| U e_i \|_2^3 \label{eq:WLLW-1}
\end{align}
where the last inequality uses the assumption that $\max_{i \in \{ 1,2\}} \| U e_i \|_2 \le 2 \min_{i \in \{ 1,2\}} \| U e_i \|_2$.

\paragraph{Case 2: Column norms are separated.} $\max_{i \in \{1,2\}} \| U e_i \|_2 \ge 2 \min_{i \in \{1,2\}} \| U e_i \|_2$.

\noindent WLOG suppose $\| U e_1 \|_2 \ge \| U e_2\|_2$ which implies that $D(U)_{22} \ge D(U)_{11}$. The above assumption implies that $\| U e_1 \|_2 \ge 2 \| U e_2 \|_2$. Then,
\begin{align}
    D(U)_{11} &= \frac{\| U e_1 \|_2^2 + 2 \beta}{(\| U e_1 \|_2^2 + \beta)^{3/2}} \\
    &\overset{(i)}{\le} \frac{4\| U e_2 \|_2^2 + 2 \beta}{(4\| U e_2 \|_2^2 + \beta)^{3/2}} \\
    &\overset{(ii)}{\le} \frac{4}{3^{3/2}} \frac{\| U e_2 \|_2^2 + 2 \beta}{(\| U e_2 \|_2^2 + \beta)^{3/2}} \\
    &\le \frac{4}{5} D(U)_{22}
\end{align}
where $(i)$ uses the fact that $\frac{x+2\beta}{(x + \beta)^{3/2}}$ is a decreasing function in $x$ and $(ii)$ uses the fact that $\min \{ \| U e_1\|_2, \| U e_2 \|_2 \} \ge 2 \sqrt{\beta}$, and therefore, $4 \| U e_2 \|_2^2 + \beta \ge 3(\| U e_2 \|_2^2 + \beta)$. Therefore,
\begin{align}
    |D (U)_{11} - D(U)_{22}| \ge \frac{1}{5} D(U)_{22} &= \frac{1}{5} \frac{\| U e_2 \|_2^2 + 2 \beta}{(\| U e_2 \|_2^2 + \beta)^{3/2}} \\
    &\overset{(i)}{\ge} \frac{1}{5} \frac{\| U e_2 \|_2^2}{\sqrt{2} \| U e_2 \|_2^{3} + \sqrt{2} \beta^{3/2}} \\
    &\overset{(ii)}{\ge} \frac{1}{10} \frac{1}{\| U e_2 \|_2} \\
    &\ge \frac{1}{10} \frac{1}{\| U e_2 \|_2 \| U e_1 \|_2^2} | \| U e_1 \|_2^2 - \| U e_2 \|_2^2 |
\end{align}
where $(i)$ uses the convexity of $(\cdot)^{3/2}$ for positive arguments and an application of Jensen's inequality, while $(ii)$ uses the fact that $\min \{ \| U e_1 \|_2, \| U e_2 \|_2 \} \ge 2 \sqrt{\beta}$. Since $|D(U)_{11} - D(U)_{22}| \le |\Delta|$, this results in an upper bound on $|\| U e_1 \|_2^2 - \| U e_2 \|_2^2|$. Combining this with \cref{eq:middleterm-3} results in,
\begin{align} \label{eq:WLLW-2}
    \tr ((WL-LW) \cdot Z^T U) &\le 10 \Delta^2 \min_{i \in \{ 1,2 \}} \| U e_i \|_2 \max_{i \in \{ 1,2 \}} \| U e_i \|_2^2.
\end{align}
Combining \cref{eq:WLLW-1,eq:WLLW-2} completes the proof of the lemma.
\end{proof}

\subsubsection{Proof of \Cref{lemma:26}}
\begin{proof}
By definition,
\begin{align}
    &\vect(Z)^T [\nabla^2 \cR_\beta (U)] \vect(Z) - 4 \langle \cR_\beta (U), \vect(Z) \rangle \\
    &\le \tr (D(U) Z^T Z ) - \sum_{i=1}^k (G(U))_{ii} \langle U e_i, Z e_i \rangle^2 - 4 \tr ( D(U) U^T Z) \\
    &\le \tr (D(U) Z^T Z ) - 4 \tr ( D(U) U^T Z) \\
    &\le \frac{2}{\sqrt{\beta}} \| Z \|_F^2 + 4 \| D(U) \|_F \| U^T Z \|_F \\
    &\overset{(i)}{\le} \frac{2}{\sqrt{\beta}} \| Z \|_F^2 + 5 \sqrt{\frac{k}{\beta}} \| U U^T - U_\star U_\star^T \|_F \\
    &\le 8\sqrt{\frac{2k}{\beta}} \| U U^T - U_\star U_\star^T \|_F
\end{align}
where $(i)$ follows from the fact that for any $i \in [k]$, $| (D(U))_{ii} | \le \max_{x \ge 0} \frac{x+2\beta}{(x + \beta)^{3/2}} = \frac{2}{\sqrt{\beta}}$, $(ii)$ uses \cref{eq:L41} and the fact that $\| Z \|_F^4 = \| Z^T \|_F^4 \le \textsf{rank} (Z^T) \| Z Z^T \|_F^2\le 2 k \| UU^T - U_\star U_\star^T \|_F^2$ by \cref{eq:L40}.
\end{proof}

\subsubsection{Proof of \Cref{lemma:atmostr}}
\begin{proof}
Let $S_{\max}$ denote the set of $r$ column indices $i \in [k]$ such that $\| V_r V_r^T U e_i \|_2$ are maximum and non-zero. If fewer than $r$ columns of $U$ have non-zero projections onto $V_r$ (i.e. fewer than $r$ candidates for $S_{\max}$ exist), then all the remaining columns of $U$ must have zero projections onto $V_r$ and we are done. Therefore, henceforth, we will assume that $|S_{\max}| = r$. Let $V_{\max} = \textsf{span} (\{ V_r V_r^T U e_i : i \in S_{\max} \})$, a subspace contained in $V_r$.

Define $M$ as the matrix $M = \begin{bmatrix} \frac{V_r V_r^T Ue_i}{\| V_r V_r^T Ue_i \|_2} : i \in S_{\max} \end{bmatrix}$. Let $V$ be the set of non-zero eigenvectors of $MM^T$. Note that the matrix $M^TM \in \mathbb{R}^{|S_{\max}| \times |S_{\max}|}$ has $1$'s on the diagonals, and has its off-diagonal entries upper bounded in absolute value by $c_1/r$ by \cref{eq:orthproj:vectors}. Therefore, by the Gersgorin circle theorem, all eigenvalues of $M^TM$ and consequently all non-zero eigenvalues of $MM^T$ lie in the range $[1 - c_1, 1+ c_1]$. This means that $M \in \mathbb{R}^{d \times r}$ is full rank, and its columns are linearly independent.

This implies that $\textsf{span}(V) = \textsf{span} (V_r)$ and by the orthogonality condition, $VV^T = V_r V_r^T$. Therefore,
\begin{align}
    \| V_r V_r^T U e_i \|_2 &= \| V V^T U e_i \|_2 \\
    &\le \frac{1}{\lambda_{\min} (M)} \| M^T U e_i \|_2 \\
    &= \frac{1}{\lambda_{\min} (M)} \sqrt{\sum_{j \in S_{\max}} \frac{\langle V_r V_r^T U e_j, V_r V_r^T U e_i \rangle^2}{\| V_r V_r^T Ue_j \|_2^2}} \\
    &\overset{(i)}{\le} \frac{1}{\lambda_{\min} (M)} \sqrt{\sum_{j \in S_{\max}} \frac{c_1^2}{r^2} \| V_r V_r^T U e_i \|_2^2}, \label{eq:0091211}
\end{align}
where $(i)$ follows from \cref{eq:orthproj}.
Consequently by \cref{eq:0091211}, we have,
\begin{align} \label{eq:210}
    \| V_r V_r^T U e_i \|_2 \le \frac{c_1}{1 - c_1} \| V_r V_r^T U e_i \|_2.
\end{align}
For $c_1 \le 1/3$ this means that $\| V_r V_r^T U e_i \|_2 = 0$ for every $i \in [k] \setminus S_{\max}$. This proves the claim.
\end{proof}

\newpage

\section{Finite sample guarantees: Proof of \Cref{theorem:main_finite}}
\label{app:finite}

In this section, we provide guarantees on the approximate second order stationary points of the regularized loss $f_{\text{emp}}$ (\cref{eq:fregemp}) when the dataset is finite in size, and satisfies the RIP condition. We provide a formal proof of \Cref{theorem:main_finite}.

\noindent For completeness, in the finite sample setting, the empirical loss is defined as,
\begin{align} \label{eq:empirical_loss}
    \cL_{\text{emp}} (U) = \frac{1}{n} \sum_{i=1}^n (\langle U U^T - U_\star U_\star^T, A_i \rangle + \varepsilon_i)^2
\end{align}
where $\varepsilon_i \overset{\text{i.i.d.}}{\sim} \mathcal{N} (0,\sigma^2)$ is the measurement noise. Without loss of generality we assume that the measurement matrices $\{ A_i \}_{i=1}^n$ are symmetric, since the empirical loss $\cL_{\text{emp}}$ is unchanged by the symmetrization $A_i \to \frac{A_i + A_i^T}{2}$. Note that the loss can be expanded as,
\begin{align}
    \| UU^T - U_\star U_\star^T \|_{\mathcal{H}}^2 + 2 \left\langle \frac{1}{n} \sum_{i=1}^n \varepsilon_i A_i, UU^T - U_\star U_\star^T \right\rangle + \frac{1}{n} \sum_{i=1}^n \varepsilon_i^2 \label{eq:lossexpanded}
\end{align}
where $\langle X, Y \rangle_{\mathcal{H}} = \frac{1}{m} \sum_{i=1}^m \langle X, A_i \rangle \langle Y, A_i \rangle$ and $\| X \|_{\mathcal{H}}^2 = \langle X, X \rangle_{\mathcal{H}}$. \\

\noindent We will assume that the measurement matrices $\{ A_i \}_{i=1}^n$ satisfy the RIP condition. At a high level this condition guarantees that $\langle \cdot, \cdot \rangle_{\mathcal{H}} \approx \langle \cdot, \cdot \rangle$ when the arguments are low rank matrices.

\begin{definition}[RIP condition]
A set of linear measurement matrices $A_1, \ldots, A_m$ in $\mathbb{R}^{d \times d}$ satisfies the $(k, \delta)$-restricted isometry property (RIP) if for any $d \times d$ matrix $X$ with rank at most $k$,
\begin{align}
    (1-\delta)\|X\|_F^2 \leq \frac{1}{m} \sum_{i=1}^m\left\langle A_i, X\right\rangle^2 \leq(1+\delta)\|X\|_F^2 
\end{align}
\end{definition}

The crucial property we use in this section is that the RIP condition on the measurements guarantees that $\langle X, Y \rangle_{\mathcal{H}} \approx \langle X,Y \rangle$. when $X$ and $Y$ are low rank matrices.

\begin{lemma} \label{lemma:29}
Let $\left\{A_i\right\}_{i=1}^m$ be a family of matrices in $\mathbb{R}^{d \times d}$ that satisfy $(k, \delta)$ RIP. Then for any pair of matrices $X, Y \in \mathbb{R}^{d \times d}$ with rank at most $k$, we have:
\begin{align}
    \left| \langle X, Y \rangle_{\mathcal{H}} - \langle X, Y\rangle\right| \leq \delta\|X\|_F\|Y\|_F
\end{align}
\end{lemma}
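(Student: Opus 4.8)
The plan is to reduce the bilinear statement to the quadratic RIP bound via a standard polarization argument. First I would observe that the claim for rank-$k$ matrices of the form $\langle X, Y\rangle_{\mathcal H}$ follows once we know the one-sided bound $\bigl|\|Z\|_{\mathcal H}^2 - \|Z\|_F^2\bigr| \le \delta\|Z\|_F^2$ for every $Z$ of rank at most $k$; this is precisely the RIP definition, but I would want to be careful about the rank of the matrices that show up in the polarization step, since $X+Y$ and $X-Y$ can have rank up to $2k$, not $k$. So the first real step is to note that the statement should be read with $A_i$'s satisfying $(2k,\delta)$-RIP, or equivalently that $X,Y$ have rank at most $k$ so that $X\pm Y$ have rank at most $2k$; either way the quadratic bound is available for $X\pm Y$.

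The key computation is the polarization identity
\begin{align}
    \langle X, Y\rangle_{\mathcal H} &= \tfrac14\bigl(\|X+Y\|_{\mathcal H}^2 - \|X-Y\|_{\mathcal H}^2\bigr),
\end{align}
together with the analogous identity for the Frobenius inner product. Subtracting these and applying the RIP bound to each of the four quadratic terms gives
\begin{align}
    \bigl|\langle X,Y\rangle_{\mathcal H} - \langle X,Y\rangle\bigr| &\le \tfrac14\delta\bigl(\|X+Y\|_F^2 + \|X-Y\|_F^2\bigr) = \tfrac12\delta\bigl(\|X\|_F^2 + \|Y\|_F^2\bigr),
\end{align}
using the parallelogram law in the last step. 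To sharpen the constant from $\tfrac12(\|X\|_F^2+\|Y\|_F^2)$ to $\|X\|_F\|Y\|_F$, I would rescale: apply the inequality just derived to $X' = X/\|X\|_F$ and $Y' = Y/\|Y\|_F$ (assuming both are nonzero, the zero case being trivial), obtaining $\bigl|\langle X',Y'\rangle_{\mathcal H} - \langle X',Y'\rangle\bigr| \le \delta$, and then multiply through by $\|X\|_F\|Y\|_F$ using bilinearity of both forms. This yields exactly the claimed bound.

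I do not expect a genuine obstacle here — this is a textbook polarization argument — but the one point requiring care is the rank bookkeeping: the quadratic RIP bound must be invoked for matrices of rank up to $2k$ when $X,Y$ each have rank up to $k$. I would state the lemma (or its proof) with the hypothesis that $\{A_i\}$ is $(2k,\delta)$-RIP, consistent with \Cref{assump:RIP} in the paper, which indeed posits $(2k,\delta)$-RIP; alternatively one restricts to $X,Y$ of rank at most $k$ and notes $X\pm Y$ has rank at most $2k$. With that caveat the proof is three lines: polarize, apply RIP termwise, normalize.
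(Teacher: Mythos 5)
Your polarization argument is correct and is the standard proof of this fact; the paper itself states the lemma without proof, treating it as a known consequence of RIP (cf.\ the matrix-sensing literature it cites), so there is no competing argument to compare against. Your rank bookkeeping caveat is also well taken: as literally stated with $(k,\delta)$-RIP the polarization step needs the quadratic bound for $X\pm Y$, which can have rank $2k$, but this is harmless in context since \Cref{assump:RIP} grants $(2k,\delta)$-RIP and the lemma is only invoked under that assumption.
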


\begin{lemma}
Let $\{ A_i \}_{i=1}^n$ be a set of matrices which satisfy the $(2k,\delta)$-RIP for some $\delta \le 1/10$. Let $\varepsilon_i \overset{\text{i.i.d.}}{\sim} \mathcal{N} (0, \sigma^2)$. Consider any $\eta \in (0,1)$. Then,
\begin{align}
\mathbb{P}\left(\left\|\frac{1}{n} \sum_i^n A_i \epsilon_i\right\|_{\text{op}} \ge 4 \sigma \sqrt{\frac{d \log(d/\eta)}{n}}\right) \leq \eta.
\end{align}
\end{lemma}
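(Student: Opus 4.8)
The measurement matrices $A_i$ are deterministic, so the randomness is entirely in $\varepsilon = (\varepsilon_1,\dots,\varepsilon_n) \sim \mathcal{N}(0,\sigma^2 I_n)$. Set $M = \frac{1}{n}\sum_{i=1}^n \varepsilon_i A_i$; since each $A_i$ is symmetric (assumed WLOG in \Cref{app:finite}), $M$ is a symmetric matrix whose entries are linear functions of $\varepsilon$, hence jointly Gaussian. The plan is a standard $\varepsilon$-net plus Gaussian-tail argument, where the key observation is that RIP enters \emph{only through rank-one test matrices} $vv^\top$ -- exactly the form the RIP definition controls -- so no auxiliary control of quantities like $\|\sum_i A_i^2\|_{\text{op}}$ (which RIP does not supply) is needed, and no matrix Bernstein machinery is required.

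First I would fix a $1/4$-net $\mathcal{N}$ of the unit sphere $S^{d-1}$ with $|\mathcal{N}| \le 9^d$; for a symmetric matrix one has $\|M\|_{\text{op}} = \sup_{v \in S^{d-1}} |v^\top M v| \le 2\max_{v \in \mathcal{N}} |v^\top M v|$, which reduces the problem to a union bound over $\mathcal{N}$. For a fixed unit vector $v$, $v^\top M v = \frac{1}{n}\sum_{i=1}^n \varepsilon_i \langle A_i, vv^\top\rangle$ is a centered Gaussian with variance $\tau_v^2 = \frac{\sigma^2}{n^2}\sum_{i=1}^n \langle A_i, vv^\top\rangle^2$. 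Applying the $(2k,\delta)$-RIP property of \Cref{assump:RIP} to the rank-one matrix $vv^\top$ yields $\frac{1}{n}\sum_i \langle A_i, vv^\top\rangle^2 \le (1+\delta)\|vv^\top\|_F^2 = 1+\delta \le 2$, so $\tau_v^2 \le 2\sigma^2/n$ uniformly in $v$. The standard Gaussian tail bound then gives, for each fixed $v$, $\prob{|v^\top M v| \ge t} \le 2\exp(-t^2 n/(4\sigma^2))$.

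Second, union bounding over the net and combining with the net inequality, $\prob{\|M\|_{\text{op}} \ge 2t} \le 2 \cdot 9^d \exp(-t^2 n/(4\sigma^2))$. Taking $t$ to be a suitable constant multiple of $\sigma\sqrt{d\log(d/\eta)/n}$ drives the right-hand side below $\eta$, since the exponent then dominates $d\log 9 + \log 2 + \log(1/\eta)$; tracking the constants (the factor $2$ from the $1/4$-net and the $9^d$ covering number, absorbed using $\log(d/\eta) \ge \log 9$ once $d$ is a mild absolute constant, or by passing to a finer net) one arrives at the stated bound $\|M\|_{\text{op}} \le 4\sigma\sqrt{d\log(d/\eta)/n}$ with probability at least $1-\eta$. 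There is no genuine obstacle beyond this -- the only care required is the constant bookkeeping in the last step.
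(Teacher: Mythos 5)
Your proof is correct, but it takes a genuinely different route from the paper. You exploit the Gaussianity of the noise directly: condition on the fixed $A_i$'s, note that $v^\top M v$ is a centered Gaussian whose variance is controlled by applying RIP to the rank-one matrix $vv^\top$ (giving $\tau_v^2 \le (1+\delta)\sigma^2/n$), and then combine a scalar Gaussian tail bound with a $1/4$-net union bound. The paper instead applies the matrix Bernstein inequality to $\sum_i \varepsilon_i A_i$, and uses RIP to extract the two inputs Bernstein needs: $\| A_i \|_{\text{op}} \le \sqrt{2n}$ (from rank-one tests $uv^\top$) and $\| \frac{1}{n}\sum_i A_i A_i^\top \|_{\text{op}} \le 2d$ (by summing the rank-one tests over $v = e_1,\dots,e_d$) -- so, contrary to your parenthetical remark, RIP does supply control of $\sum_i A_i A_i^\top$, and the paper uses it; this does not affect your argument. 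Each approach buys something: your net argument is more elementary, avoids matrix concentration machinery, and in fact gives the slightly sharper threshold of order $\sigma\sqrt{(d+\log(1/\eta))/n}$ without the $\log d$ factor (and it extends to sub-Gaussian noise, since a fixed linear combination of sub-Gaussians is sub-Gaussian); the paper's matrix-Bernstein route packages the dimensional dependence automatically without a net. The only soft spot in your write-up is the final constant bookkeeping -- with a $1/4$-net and covering number $9^d$ the stated constant $4$ only comes out for $d$ above a modest absolute constant (or after mildly adjusting the net/threshold), as you acknowledge; this is comparable in looseness to the paper's own proof, which proves the bound with $t = 3\sigma\sqrt{d\log(d/\eta)/n}$ against a lemma statement with constant $4$.
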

\begin{proof}
Note that $\{ A_i \}_{i=1}^n$ satisfies the $(2k,\delta)$-RIP for $\delta \le 1/10$. That is, for any rank $\le 2k$ matrix $X$,
\begin{align}
    \frac{9}{10} \| X \|_F^2 \le \frac{1}{n} \sum_{i=1}^n \langle A_i, X \rangle^2 \le \frac{11}{10} \| X \|_F^2.
\end{align}
Choosing $X = v u^T$ for any pair of vectors $u,v$, we get,
\begin{align}
    \forall u,v \in \mathbb{R}^d,\quad \frac{1}{n} \sum_{i=1}^n(u^T A_i v)^2 \le \frac{11}{10} \| u \|_2^2 \| v \|_2^2.
\end{align}
This implies that for each $i \in [k]$, $\| A_i \|_{\text{op}} \le \sqrt{2n}$. Furthermore, plugging in $v = e_1,\cdots,e_d$ and summing,
\begin{align}
    \frac{1}{n} \sum_{i=1}^n \| A_i^T u \|_2^2 \le \frac{11}{10} d \| u \|_2^2
\end{align}
This implies that $\left\| \frac{1}{n} \sum_{i=1}^n A_i A_i^T \right\|_{\text{op}} = \lambda_{\max} \left( \frac{1}{n} \sum_{i=1}^n A_i A_i^T \right) \le 2 d$. Hence, we obtain,
\begin{align}
    \left\| \sum_{i=1}^n \mathbb{E}\left( (\epsilon_i A_i) ( \epsilon_i A_i )^T \right)\right\|_{\text{op}} \le \sigma^2 \left\| \sum_{i=1}^n A_i A_i^T \right\|_{\text{op}} \le 2 \sigma^2 d n.
\end{align}
By applying the matrix Bernstein inequality \cite{Tropp_2011,wainwright_2019}, for any $t > 0$,
\begin{align}
    \prob{\left\|\frac{1}{n} \sum_{i=1}^n A_i \epsilon_i\right\|_2 \geq t} \leq d \cdot \exp \left(\frac{-3 t^2 n^2}{12 d n \sigma^2+2 \sigma \sqrt{2n} n t}\right)=d \cdot \exp \left(\frac{-3 t^2 n}{12 d \sigma^2 + 4 \sigma \sqrt{n} t}\right). \nonumber
\end{align}
Choosing $t = 3\sigma \sqrt{\frac{d\log ( d/\eta)}{n}}$, the RHS is upper bounded by $\eta$.
This completes the proof.
\end{proof}

\noindent By virtue of this result, we expect that for all $U$, with probability $\ge 1 - \eta$, the loss $\cL$ in \cref{eq:lossexpanded} is $\approx \| UU^T - U_\star U_\star^T \|_F^2$ (up to additive constants) when $n$ is large. Indeed, by an application of Holder's inequality, $\tr (AB) \le \| A \|_{\text{op}} \| B \|_* \le \| A \|_{\text{op}} \sqrt{\textsf{rank} (B)} \| B \|_F$ and,
\begin{align} \label{eq:functionerror}
    \left| \cL_{\text{emp}} (U) - \| UU^T - U_\star U_\star^T \|_{\mathcal{H}}^2 - \frac{1}{n} \sum_{i=1}^n \varepsilon_i^2 \right| \lesssim \sigma  \sqrt{\frac{kd \log(d/\eta)}{n}} \| UU^T - U_\star U_\star^T \|_F.
\end{align}
Likewise,
\begin{align} \label{eq:graderror}
    \left| \langle \nabla \cL_{\text{emp}} (U), Z \rangle - \langle \nabla \| UU^T - U_\star U_\star^T \|_{\mathcal{H}}^2, Z \rangle \right| \lesssim \sigma  \sqrt{\frac{kd \log(d/\eta)}{n}} \| UZ^T + ZU^T \|_F
\end{align}
And finally,
\begin{align} \label{eq:hessianerror}
    \left| \vect(Z)^T [\nabla^2 \cL_{\text{emp}} (U)] \vect(Z) - \vect(Z)^T [\nabla^2 \| UU^T - U_\star U_\star^T \|_{\mathcal{H}}^2] \vect(Z) \right| \lesssim \sigma  \sqrt{\frac{kd \log(d/\eta)}{n}} \| ZZ^T \|_F
\end{align}
where the hidden constants in each of these inequalities are at most $16$.

In the sequel, we condition on the event that \cref{eq:functionerror,eq:graderror,eq:hessianerror} hold, which occurs with probability $\ge 1- \eta$.

\paragraph{\Cref{lemma:grad-lb-matrix-sensing,lemma:apx-orthogonal}:} The conclusion of these lemmas can still be applied since the finite sample loss function \cref{eq:empirical_loss} is still of the form $f ( UU^T)$ for a doubly differentiable $f$.

\paragraph{\Cref{lemma:negativeeig}:} This lemma is slightly modified in the finite sample setting. The new result is provided below.

\begin{lemma}[Modified \Cref{lemma:negativeeig}] \label[lemma]{lemma:negativeeig-finite}
At an $(\epsilon,\gamma)$-approximate second order stationary point of $\cL + \lambda \cR_\beta$,
\begin{align}
    \| UU^T - U_\star U_\star^T \|_F \lesssim \max \left\{ \epsilon^{2/3} k^{1/6}, \lambda \sqrt{\frac{k}{\beta}} + \sigma  \sqrt{\frac{kd \log(d/\eta)}{n}}, \sqrt{k} \gamma \right\}. \label{eq:182-finite}
\end{align}
\end{lemma}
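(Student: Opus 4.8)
The plan is to run the argument of \Cref{lemma:negativeeig} essentially unchanged, treating $\cL_{\text{emp}}$ as a low-rank-restricted perturbation of the Frobenius loss $\| UU^T - U_\star U_\star^T \|_F^2$ along the single perturbation direction that proof uses. After extending $U_\star$ to a $d \times k$ matrix by zero-padding, I would set $R_\star \in \arg\min_{RR^T = R^T R = I} \| U - U_\star R \|_F^2$ and $Z = U - U_\star R_\star$, and keep the two geometric inequalities \eqref{eq:L40}, \eqref{eq:L41} of \cite{rong}, which involve only Frobenius norms and so carry over verbatim. The point to check first is that $Z$, $ZZ^T$, $ZU^T$ and $UZ^T + ZU^T$ all have rank $O(k)$, so the RIP estimates \Cref{lemma:29}, \eqref{eq:graderror}, \eqref{eq:hessianerror} are applicable to every term that appears when these matrices are substituted into the second-order expansion.

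Next I would instantiate the Hessian inequality \eqref{eq:presimple} of \cite{rong} for $\cL_{\text{emp}}$: either by re-deriving it using that $\| X - U_\star U_\star^T \|_{\mathcal{H}}^2$ is $(1-\delta)$-restricted strongly convex and $(1+\delta)$-restricted smooth (so the leading negative coefficient only shrinks by $O(\delta)$, harmless since $\delta \le 1/10$), and then converting the isolated empirical gradient/Hessian quantities along $\vect(Z)$ to their population counterparts via \eqref{eq:graderror}, \eqref{eq:hessianerror}. Bounding $\| ZZ^T \|_F, \| ZU^T \|_F \lesssim \| UU^T - U_\star U_\star^T \|_F$ with \eqref{eq:L40}, \eqref{eq:L41}, $\| UZ^T + ZU^T \|_F \lesssim \| Z \|_F$ via $\| U \|_{\text{op}} \le 3$, and $\| Z \|_F^2 \le \sqrt{k}\, \| ZZ^T \|_F$, all the approximation errors collapse to a single term of order $\sigma \sqrt{kd \log(d/\eta)/n}\, \| UU^T - U_\star U_\star^T \|_F$; bounding the regularizer contribution exactly as in \Cref{lemma:26} then yields, for the same $Z$,
\[
\vect(Z)^T [\nabla^2 (\cL_{\text{emp}} + \lambda \cR_\beta)(U)] \vect(Z) \lesssim - \| UU^T - U_\star U_\star^T \|_F^2 + \epsilon k^{1/4} \| UU^T - U_\star U_\star^T \|_F^{1/2} + \left( \lambda \sqrt{\tfrac{k}{\beta}} + \sigma \sqrt{\tfrac{kd \log(d/\eta)}{n}} \right) \| UU^T - U_\star U_\star^T \|_F.
\]

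Finally I would close by contradiction: if $\| UU^T - U_\star U_\star^T \|_F$ exceeds a suitable constant multiple of $\max\{ \epsilon^{2/3} k^{1/6},\ \lambda \sqrt{k/\beta} + \sigma \sqrt{kd \log(d/\eta)/n} \}$ the right-hand side is at most $-\| UU^T - U_\star U_\star^T \|_F^2$, and dividing through by $\| Z \|_F^2 \le 2 \sqrt{k}\, \| UU^T - U_\star U_\star^T \|_F$ forces $\lambda_{\min}(\nabla^2 (\cL_{\text{emp}} + \lambda \cR_\beta)(U)) < - \tfrac{1}{\sqrt{2k}} \| UU^T - U_\star U_\star^T \|_F$; if additionally this is below $-\gamma$, i.e.\ $\| UU^T - U_\star U_\star^T \|_F > \sqrt{2k}\,\gamma$, this contradicts $U$ being an $(\epsilon,\gamma)$-approximate second-order stationary point (\Cref{def:init-pop}), giving \eqref{eq:182-finite}. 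I expect the main obstacle to be the bookkeeping in the transition from $\| \cdot \|_F^2$ to the empirical quadratic form — in particular verifying that every matrix fed into a RIP bound stays within the $(2k,\delta)$ regime (which may require invoking $(O(k),\delta)$-RIP, satisfied by Gaussian measurements at the stated sample size, after a polarization step), and confirming that the combined $O(\delta)$ and $O(\sigma \sqrt{kd\log(d/\eta)/n})$ perturbations do not erode the leading negative term in \eqref{eq:presimple}.
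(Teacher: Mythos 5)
Your proposal is correct and follows essentially the same route as the paper: the same choice $Z = U - U_\star R_\star$ with the zero-padded $U_\star$, the same geometric inequalities \eqref{eq:L40}--\eqref{eq:L41} and Hessian bound of \cite{rong} instantiated for the $\|\cdot\|_{\mathcal{H}}$ loss, the regularizer handled via \Cref{lemma:26}, the noise and RIP errors absorbed through \eqref{eq:graderror}, \eqref{eq:hessianerror} and the $(1\pm\delta)$ comparison with $\delta \le 1/10$, and the same contradiction with $(\epsilon,\gamma)$-second-order stationarity after dividing by $\|Z\|_F^2 \le \sqrt{2k}\,\|UU^T - U_\star U_\star^T\|_F$ to produce the $\sqrt{k}\gamma$ branch of the maximum. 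The rank bookkeeping you flag (all matrices entering RIP bounds having rank $O(k)$) is the same implicit step the paper relies on, so no new ingredient is needed.
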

\begin{proof}
From \cite[Lemma 7]{rong} and \cref{eq:hessianerror} as in \Cref{lemma:negativeeig}, for the choice $Z = U - U_\star R_\star$, with $R_\star \in \arg\min_{R : RR^T = R^T R = I} \| U - U_\star R \|_F^2$, we have the following bound,
\begin{align}
\vect(Z)^T [\nabla^2 \cL_{\text{emp}} (U) ] \vect(Z) &\le 2\| ZZ^T \|_{\mathcal{H}}^2 - 6\| UU^T - U_\star U_\star^T \|_{\mathcal{H}}^2 + 4 \langle \nabla \cL_{\text{emp}} (U), Z \rangle \nonumber\\
&\qquad +\lambda \left[\vect(Z)^T [\nabla^2 \cR_\beta (U)] \vect (Z) - 4 \lambda \langle \nabla \cR_\beta (U), Z \rangle \right] \nonumber \\
&\qquad + 16 \sigma  \sqrt{\frac{kd \log(kd/\eta)}{n}} \| Z Z^T \|_F \label{eq:presimple-finite}
\end{align}
Plugging \cref{eq:L40} into \cref{eq:presimple-finite} and using \Cref{lemma:26} and the fact that $\| \nabla \cL_{\text{emp}} (U) \|_F \le \epsilon$,
\begin{align}
    \vect(Z)^T [\nabla^2 \cL_{\text{emp}} (U)] \vect(Z) &\le 2 \| ZZ^T \|_{\mathcal{H}}^2 - 6 \| UU^T - U_\star U_\star^T \|_{\mathcal{H}}^2 + 4 \epsilon \| Z \|_F \nonumber\\
    &\quad + \left( 8\lambda \sqrt{\frac{2k}{\beta}} + 32 \sigma  \sqrt{\frac{kd \log(d/\eta)}{n}} \right) \| UU^T - U_\star U_\star^T \|_F. \label{eq:21112-finite}
\end{align}

By the RIP condition on the measurements, $\| ZZ^T \|_{\mathcal{H}}^2 \le (1 + \delta) \| ZZ^T \|_F^2$ and likewise, $\| UU^T - U_\star U_\star^T \|_{\mathcal{H}}^2 \ge (1 - \delta) \| UU^T - U_\star U_\star^T \|_F^2$. Therefore, assuming $\delta \le 1/10$ and simplifying as done in \cref{eq:1821,eq:1822}, we get,
\begin{align}
    \vect(Z)^T [\nabla^2 \cL_{\text{emp}} (U)] \vect(Z) &\le - \| UU^T - U_\star U_\star^T \|_F^2 + 8 \epsilon k^{1/4} \| UU^T - U_\star U_\star^T \|_F^{1/2} \nonumber\\
    &\quad + \left( 8\lambda \sqrt{\frac{2k}{\beta}} + 32 \sigma  \sqrt{\frac{kd \log(d/\eta)}{n}} \right) \| UU^T - U_\star U_\star^T \|_F. \nonumber
\end{align}
The rest of the analysis resembles the calculations from \cref{eq:1822} to \cref{eq:1823}.
\end{proof}

\paragraph{\Cref{lemma:ZFbound,lemma:proj-similar,lemma:proj-0}:} These lemmas are slightly changed in the finite sample setting. The new results essentially replace $\epsilon$ by a slightly larger value $\nu$ defined below,
\begin{align} \label{eq:nu-def}
    \nu = \epsilon + \delta \| UU^T - U_\star U_\star^T \|_F + \sigma  \sqrt{\frac{k d \log(d/\eta)}{n}}.
\end{align}

\begin{lemma} \label[lemma]{lemma:proj-similar-finite}
Consider an $\epsilon$-approximate first order stationary point $U$ satisfying $\| U \|_{\text{op}} \le 3$. Let $V_r$ denote the top-$r$ eigenspace of $U_\star U_\star^T$. Then, we have that,
\begin{align}
    \| V_r^\perp (V_r^\perp)^T U \|_F^3 \lesssim \nu k.
\end{align}
Let $S = \{ i \in [k] : \| U e_i \|_2 \ge 2 \sqrt{\beta} \}$ be the set of large norm columns of $U$. For any column $i \in S$,
\begin{align}
    \frac{\| V_r^\perp (V_r^\perp)^T U e_i \|_2}{\| U e_i \|_2} \le \frac{2 \nu}{\lambda \beta^{1/4}}. \label{eq:projangle90-finite}
\end{align}
In contrast, for the remaining columns,
\begin{align}
    \sum_{i \in [k] \setminus S} \| V_r^\perp (V_r^\perp)^T U e_i \|_2^2 \le \frac{2\nu^2 \sqrt{\beta}}{\lambda^2}.
\end{align}
Lastly, for any column $i \in [k]$ such that $\| V_r V_r^T U e_i \|_2 \le \| V_r^\perp (V_r^\perp)^T U e_i \|_2$, assuming $ \nu \le \lambda/2$,
\begin{align} \label{eq:proj-case-finite}
    \| U e_i \|_2 \le 2 \sqrt{\beta}.
\end{align}
\end{lemma}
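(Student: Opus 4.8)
The three conclusions of \Cref{lemma:proj-similar-finite} are the finite-sample analogs of \Cref{lemma:ZFbound,lemma:proj-similar} and \Cref{lemma:proj-0}, and the plan is to re-run those population arguments essentially verbatim, with the single change that wherever the population proof invokes first-order stationarity in the form \cref{eq:plass1} — i.e.\ tests the gradient of $\cL_{\text{pop}}+\lambda\cR_\beta$ against a perturbation $Z$ — we substitute its empirical counterpart. Concretely, $\|\nabla(\cL_{\text{emp}}+\lambda\cR_\beta)(U)\|_F\le\epsilon$ together with \cref{eq:graderror}, the RIP approximation \Cref{lemma:29} applied to the rank-$\le 2k$ matrix $UZ^T+ZU^T$, and the matrix-Bernstein bound on $\|\tfrac1n\sum_i\varepsilon_iA_i\|_{\text{op}}$, give that on the $1-\eta$ event of \cref{eq:graderror}, for every perturbation $Z$ of the forms used below,
\begin{align*}
&2\langle UU^T-U_\star U_\star^T,\, UZ^T+ZU^T\rangle + \lambda\,\tr\!\big(D(U)Z^TU\big)\\
&\qquad\le\ \epsilon\|Z\|_F + \Big(\delta\|UU^T-U_\star U_\star^T\|_F + 16\sigma\sqrt{\tfrac{kd\log(d/\eta)}{n}}\Big)\|UZ^T+ZU^T\|_F.
\end{align*}
Since $\|UZ^T+ZU^T\|_F\le 2\|U\|_{\text{op}}\|Z\|_F\le 6\|Z\|_F$ on the domain $\|U\|_{\text{op}}\le 3$, the right-hand side is $\lesssim\nu\|Z\|_F$ with $\nu$ as in \cref{eq:nu-def}; in other words the population inequality \cref{eq:plass1} holds with $\epsilon$ replaced by $O(\nu)$.

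With this substitution the four statements follow by the same choices of $Z$ as in \Cref{app:2}. For the first, take $Z=V_r^\perp(V_r^\perp)^TU$; the population computation \cref{eq:cp1}--\cref{eq:cp2} lower-bounds the left-hand side by $\tfrac{\lambda}{3}\|Z\|_F^2$ using $D(U)_{ii}\ge 1/3$ when $\|U\|_{\text{op}}\le 3$ and $\beta<1$, hence $\|V_r^\perp(V_r^\perp)^TU\|_F\lesssim \nu/\lambda$, which under the parameter choices \cref{eq:parameter} implies the (weaker, but cleaner) cubic bound $\|V_r^\perp(V_r^\perp)^TU\|_F^3\lesssim\nu k$. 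For \cref{eq:projangle90-finite}, take $Z=V_r^\perp(V_r^\perp)^TUe_ie_i^T$ for $i\in S$ and run the chain \cref{eq:00000}--\cref{eq:00001} using $D(U)_{ii}\ge 1/\|Ue_i\|_2$ when $\|Ue_i\|_2\ge 2\sqrt\beta$; this yields $\lambda\|Ue_i\|_2\,\|V_r^\perp(V_r^\perp)^TUe_i\|_2^2/\|Ue_i\|_2^2\le \nu\|Z\|_F$, and dividing by $\|Ue_i\|_2\ge 2\sqrt\beta$ and bounding $\|Z\|_F$ by the first part gives the claim. The bound on $\sum_{i\notin S}\|V_r^\perp(V_r^\perp)^TUe_i\|_2^2$ is identical, using instead $D(U)_{ii}\ge 1/(2\sqrt\beta)$ for the small columns, exactly as in \cref{eq:192-0}. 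Finally, \cref{eq:proj-case-finite} replicates \Cref{lemma:proj-0} with $Z=V_r^\perp(V_r^\perp)^TUe_ie_i^T$: the cross term $\tr(e_i^TU^TV_rV_r^TUU^TV_r^\perp(V_r^\perp)^TUe_i)$ is controlled by Cauchy--Schwarz and the hypothesis $\|V_rV_r^TUe_i\|_2\le\|V_r^\perp(V_r^\perp)^TUe_i\|_2$, leaving $\nu\|V_r^\perp(V_r^\perp)^TUe_i\|_2\gtrsim \lambda\|V_r^\perp(V_r^\perp)^TUe_i\|_2$; when $\nu\le\lambda/2$ the only solution is $\|V_r^\perp(V_r^\perp)^TUe_i\|_2=0$, whence $\|Ue_i\|_2=0\le 2\sqrt\beta$.

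The only genuinely new work beyond transcription is justifying the perturbation step uniformly: that every matrix fed to \Cref{lemma:29} has rank $\le 2k$ — which holds because $Z$ has $k$ columns, so $UZ^T+ZU^T$ has rank $\le 2k$ and $ZZ^T$ rank $\le k$ — and that the noise contribution satisfies $2\langle\tfrac1n\sum_i\varepsilon_iA_i,\,UZ^T+ZU^T\rangle\le \|\tfrac1n\sum_i\varepsilon_iA_i\|_{\text{op}}\,\|UZ^T+ZU^T\|_*\le \|\tfrac1n\sum_i\varepsilon_iA_i\|_{\text{op}}\sqrt{2k}\,\|UZ^T+ZU^T\|_F$ with the operator norm bounded by $4\sigma\sqrt{d\log(d/\eta)/n}$ on the same high-probability event. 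A secondary subtlety is the apparent circularity that $\nu$ itself contains $\|UU^T-U_\star U_\star^T\|_F$; this is harmless, since \Cref{lemma:negativeeig-finite} bounds $\|UU^T-U_\star U_\star^T\|_F$ using only second-order stationarity and the Hessian estimate \cref{eq:hessianerror}, independently of the subspace estimates proved here, so one simply substitutes that bound when a numerical value of $\nu$ is finally needed (as in the proof of \Cref{theorem:main_finite}). I expect this bookkeeping — keeping the RIP rank budget and the noise term uniformly $O(\nu\|Z\|_F)$ across all the chosen perturbations, and tracking the resulting $\epsilon\mapsto\nu$ substitution through each step — to be the main, though essentially routine, obstacle; the remainder is a line-by-line copy of \Cref{app:2}.
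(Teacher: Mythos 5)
Your proposal follows the same route as the paper: use the empirical gradient bound $\|\nabla(\cL_{\text{emp}}+\lambda\cR_\beta)(U)\|_F\le\epsilon$ together with \Cref{lemma:29} on rank-$\le 2k$ matrices and the matrix-Bernstein bound on $\|\tfrac1n\sum_i\varepsilon_iA_i\|_{\text{op}}$ to show that the population stationarity inequality \cref{eq:plass1} holds with $\epsilon$ replaced by $O(\nu)$, and then rerun \Cref{lemma:ZFbound,lemma:proj-similar,lemma:proj-0} verbatim with that substitution — which is exactly what the paper does, only stated more tersely there as "replace $\epsilon$ by $\nu$ everywhere." You also correctly flag and resolve the apparent circularity in $\nu$ via \Cref{lemma:negativeeig-finite}. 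One wrinkle worth noting: both the paper's proof and yours actually establish the linear bound $\|V_r^\perp(V_r^\perp)^TU\|_F\lesssim\nu/\lambda$, whereas the lemma statement is written as $\|V_r^\perp(V_r^\perp)^TU\|_F^3\lesssim\nu k$; your attempt to deduce the latter from the former under \cref{eq:parameter} needs the extra condition $\sqrt{k}\,r^{7/2}(\sigma_r^\star)^2\gtrsim 1$, which is not automatic when $\sigma_r^\star$ is small. This is a defect of the stated cubic form (the linear form $\nu/\lambda$ is what is actually proved and used), not of your argument.
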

\begin{proof}
From \cref{eq:graderror} and the gradient computations in \Cref{lemma:L-grad-hessian-finite} and \Cref{lemma:R-grad-hessian}, with $Z = V_r^\perp (V_r^\perp)^T U$, approximate first-order stationarity of $U$ implies that,
\begin{align}
    \langle UU^T - U_\star U_\star^T, U Z^T + ZU^T \rangle_{\mathcal{H}} + \lambda \tr ( D(U) Z^T U) - \sigma  \sqrt{\frac{kd \log(d/\eta)}{n}} \| UZ^T + ZU^T \|_F \le \epsilon \| Z \|_F
\end{align}
In the infinite sample analyses (\Cref{lemma:ZFbound,lemma:proj-similar}) the first term on the LHS is non-negative for this choice of $Z$. In the finite sample case, we instead lower bound as follows. Noting that $Z = V_r^\perp (V_r^\perp)^T U$ is rank $k-r$, this means that $UZ^T + ZU^T$ is a symmetric matrix of rank $\le 2 (k-r) \le 2k$ by the subadditivity of the rank of matrices. Likewise, $UU^T - U_\star U_\star^T$ is of rank $\le k + r \le 2 k$. Therefore, by \Cref{assump:RIP},
\begin{align}
    &\langle UU^T - U_\star U_\star^T, U Z^T + ZU^T \rangle_{\mathcal{H}} \\
    &\ge \langle UU^T - U_\star U_\star^T, U Z^T + ZU^T \rangle - \delta \| UU^T - U_\star U_\star^T \|_F \| UZ^T + ZU^T \|_F \\
    &\ge \langle UU^T - U_\star U_\star^T, U Z^T + ZU^T \rangle - 6 \delta \| Z \|_F \| UU^T - U_\star U_\star^T \|_F
\end{align}
where the last inequality uses the assumption that $\| U \|_{\text{op}} \le 3$ which results in the bound,
\begin{align}
    \| ZU^T + UZ^T \|_F \le 2 \| U \|_{\text{op}} \| Z \|_F \le 6 \| Z \|_F.
\end{align}
Therefore, in the finite sample case, instead of \cref{eq:plass2} (and likewise, \cref{eq:plass1}) we have,
\begin{align}
    &\langle UU^T - U_\star U_\star^T, U Z^T + ZU^T \rangle + \lambda \tr (D(U) Z^T U) \\
    &\lesssim \left( \epsilon + \delta \| UU^T - U_\star U_\star^T \|_F 
 + \sigma  \sqrt{\frac{kd \log(d/\eta)}{n}} \right) \| Z \|_F \\
    &= \nu \| Z \|_F.
\end{align}
The proof of \Cref{lemma:proj-similar-finite} follows directly by replacing $\epsilon$ by $\nu$ everywhere in the remainder of the proofs of \Cref{lemma:ZFbound,lemma:proj-similar,lemma:proj-0}.
\end{proof}

Finally, we extend \Cref{theorem:prune-prebound} to the finite sample setting and in combination with the choice of parameters present the main result in the finite sample setting, a restatement of \Cref{theorem:main_finite}.

\begin{theorem}[Main result in the finite sample setting] \label[theorem]{theorem:main-finite}
Suppose the parameters $\epsilon,\gamma,\lambda$ and $\beta$ are chosen as in the population setting (\Cref{theorem:main-population}) and consider the solution $U_{\text{prune}}$ returned by \Cref{alg:main}. Suppose,
\begin{align}
    n \ge c_n \frac{\sigma^2 kd \log(d/\eta)}{(\epsilon^\star)^2} = \Theta \left( \frac{\sigma^2 k^2 r^5 d \log(d/\eta)}{ (\sigma_r^\star)^4} \right) ; \qquad \delta \le c_\delta \frac{(\sigma_r^\star)^{3/2}}{\sqrt{k} r^{5/2}}.
\end{align}
for appropriate absolute constants $c_n, c_\delta > 0$. Then, with probability $\ge 1 - \eta$,
\begin{enumerate}
    \item $U_{\text{prune}}$ has exactly $r$ non-zero columns.
    \item Furthermore,
\begin{align}
    \| U_{\text{prune}} U_{\text{prune}}^T - U_\star U_\star^T \|_F \le \frac{1}{2} (\sigma_r^\star)^2.
    \end{align}
\end{enumerate}
In other words, \Cref{alg:main} results in a solution $U_{\text{prune}}$ having exactly $r$ non-zero columns, and also serving as a spectral initialization.
\end{theorem}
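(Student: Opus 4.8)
The plan is to re-run the argument behind \Cref{theorem:prune-prebound} with the population gradient and Hessian replaced by their empirical counterparts, controlling the discrepancy through the RIP condition and the noise concentration bound. Throughout I would condition on the event, of probability $\ge 1-\eta$, on which \cref{eq:functionerror,eq:graderror,eq:hessianerror} all hold and $\{A_i\}_{i=1}^n$ is $(2k,\delta)$-RIP with $\delta \le 1/10$; this is the only place randomness enters, and it is what forces the stated lower bound on $n$ (through the $\sigma\sqrt{kd\log(d/\eta)/n}$ terms) and the upper bound on $\delta$.

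First I would observe that \Cref{lemma:grad-lb-matrix-sensing,lemma:apx-orthogonal} apply verbatim in the finite-sample setting, since $\cL_{\text{emp}}$ still has the form $f(UU^T)$ for a twice-differentiable $f$; hence at an $(\epsilon,\gamma)$-approximate SOSP $U$ of $f_{\text{emp}}$ with $\|U\|_{\text{op}}\le 3$, any two columns of $U$ with $\ell_2$ norm at least $2\sqrt{\beta}$ are approximately orthogonal. Next, \Cref{lemma:negativeeig-finite} bounds $\|UU^T - U_\star U_\star^T\|_F \lesssim \max\{\epsilon^{2/3}k^{1/6},\ \lambda\sqrt{k/\beta} + \sigma\sqrt{kd\log(d/\eta)/n},\ \sqrt{k}\gamma\}$, and \Cref{lemma:proj-similar-finite} shows the large columns of $U$ lie almost entirely in the span $V_r$ of $U_\star$, with the role previously played by $\epsilon$ now taken by $\nu = \epsilon + \delta\|UU^T - U_\star U_\star^T\|_F + \sigma\sqrt{kd\log(d/\eta)/n}$ as in \cref{eq:nu-def}.

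With these in hand I would repeat the combinatorial core of \Cref{theorem:prune-prebound}: combining near-orthogonality of the large columns with their near-containment in $V_r$ shows that the projections $V_r V_r^T U e_i$ of the large columns are pairwise at angle $\approx 90^\circ$, so since $V_r$ is $r$-dimensional, \Cref{lemma:atmostr} gives that at most $r$ columns of $U$ have a nonzero projection onto $V_r$; the last clause of \Cref{lemma:proj-similar-finite} (the finite-sample analogue of \Cref{lemma:proj-0}) then forces every remaining column to have $\ell_2$ norm at most $2\sqrt{\beta}$, so it is pruned. This yields that $U_{\text{prune}}$ has at most $r$ columns, and the telescoping estimate of \cref{eq:komega1}--\cref{eq:komega2}, now with $\nu$ in place of $\epsilon$, gives $\|U_{\text{prune}}U_{\text{prune}}^T - U_\star U_\star^T\|_F \lesssim \gamma\sqrt{k} + \lambda\sqrt{k/\beta} + \sigma\sqrt{kd\log(d/\eta)/n} + \epsilon^{2/3}k^{1/6} + r\beta + \nu^2\sqrt{\beta}/\lambda^2$.

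Finally I would substitute the parameter choices of \cref{eq:parameter} together with $n \gtrsim \sigma^2 k^2 r^5 d\log(d/\eta)/(\sigma_r^\star)^4$ and $\delta \le c_\delta (\sigma_r^\star)^{3/2}/(\sqrt{k}r^{5/2})$, and check that each of the six terms above is at most $\tfrac{1}{12}(\sigma_r^\star)^2$, so that $\|U_{\text{prune}}U_{\text{prune}}^T - U_\star U_\star^T\|_F \le \tfrac12(\sigma_r^\star)^2$; since any matrix with fewer than $r$ columns incurs error at least $(\sigma_r^\star)^2$, this upgrades ``at most $r$ columns'' to ``exactly $r$ columns''. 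The main obstacle is the self-referential definition of $\nu$, which involves $\|UU^T - U_\star U_\star^T\|_F$, itself bounded only in terms containing $\sigma\sqrt{kd\log(d/\eta)/n}$; the resolution is that the sample-complexity and $\delta$ bounds are calibrated precisely so that both $\delta\|UU^T-U_\star U_\star^T\|_F$ and $\sigma\sqrt{kd\log(d/\eta)/n}$ are of the same order as $\epsilon$, so that $\nu \lesssim \epsilon$ and all the preconditions ($\nu \le \lambda/2$, $\nu \le c\lambda\beta^{1/4}/r$, $\nu + \gamma \le c\lambda/r^2$) required by the finite-sample version of \Cref{theorem:prune-prebound} continue to hold.
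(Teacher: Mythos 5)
Your proposal is correct and follows essentially the same route as the paper: it reuses \Cref{lemma:grad-lb-matrix-sensing,lemma:apx-orthogonal} unchanged, invokes \Cref{lemma:negativeeig-finite,lemma:proj-similar-finite} with $\epsilon$ replaced by $\nu$, re-runs the pruning argument of \Cref{theorem:prune-prebound}, and resolves the circularity in $\nu$ exactly as the paper does, by calibrating $n$ and $\delta$ so that $\nu$ is within a constant factor of $\epsilon^\star$ before substituting the parameter choices and upgrading ``at most $r$'' to ``exactly $r$'' via the $(\sigma_r^\star)^2$ error floor. No gaps; the minor presentational difference (bounding each error term separately rather than the sum) is immaterial.
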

\begin{proof}
Up until \cref{eq:thetaij}, the proof is unchanged. Plugging in the new upper bound on the cosine of the angle between $V_r^\perp (V_r^\perp)^T U e_i$ and $U e_i$ in \cref{eq:projangle90-finite} in \Cref{lemma:proj-similar-finite}, we get,
\begin{align} \label{eq:247}
    \left| \theta_{ij}^{\text{proj}} - \frac{\pi}{2} \right| \lesssim \sqrt{\frac{\epsilon+\gamma}{\lambda}} + \frac{\nu}{\lambda \beta^{1/4}}.
\end{align}
Assume for a sufficiently small constant $c_2 > 0$, $\epsilon + \gamma \le c_2 \lambda/r^2$. Then the first term on the RHS is upper bounded by $\frac{c_1}{2r}$ for a sufficiently small $c_1 \le 1/3$. Recall the definition,
\begin{align} \label{eq:nu-def-repeat}
    \nu = \epsilon + 2 \delta \| UU^T - U_\star U_\star^T \|_F + \sigma \sqrt{\frac{kd \log (d/\eta)}{n}}.
\end{align}
And furthermore, by \Cref{lemma:negativeeig-finite},
\begin{align} \label{eq:UUT-ub}
    \| UU^T - U_\star U_\star^T \|_F \lesssim \max \left\{ \epsilon^{2/3} k^{1/6}, \lambda \sqrt{\frac{k}{\beta}} + \sigma  \sqrt{\frac{kd \log(d/\eta)}{n}}, \sqrt{k} \gamma \right\}
\end{align}
Observe that if everywhere $\nu$ was replaced by $\epsilon$ (or even a constant factor approximation to it), the proof of \Cref{theorem:prune-prebound} essentially carries over unchanged. Let $\epsilon^\star = c_\epsilon \frac{(\sigma_r^\star)^{7/2}}{\sqrt{k} r^{5/2}}$ be the choice of $\epsilon$ in the population setting. This is the ``target'' value of $\epsilon$ in the empirical setting and we show that as long as $\delta$ is sufficiently small ($O(1/\sqrt{k})$) and $n$ is sufficiently large ($\widetilde{\Omega} (d k^2)$) $\nu$ is at most $3 \epsilon^\star$.

In particular, with the same choice of parameters ($\epsilon,\gamma,\beta$ and $\lambda$) as in the population setting, suppose that,
\begin{align}
    n \ge c_n \frac{\sigma^2 kd \log(d/\eta)}{(\epsilon^\star)^2} = \Theta \left( \frac{\sigma^2 k^2 r^5 d \log(d/\eta)}{ (\sigma_r^\star)^4} \right) ; \qquad \delta \le c_\delta \frac{(\sigma_r^\star)^{3/2}}{\sqrt{k} r^{5/2}}. \label{eq:n-bound}
\end{align}
Note that by choice of the parameters $\beta,\lambda,\gamma,\epsilon$, observe that,
\begin{align}
    2 \delta \| UU^T - Y_\star U_\star^T \|_F \lesssim 2 \delta \max \left\{ \epsilon^{2/3} k^{1/6} ,
    \lambda \sqrt{\frac{k}{\beta}} , \sqrt{k} \gamma \right\} &\le \epsilon^\star. \label{eq:delt-bound}
\end{align}
Therefore, combining \cref{eq:delt-bound,eq:n-bound} with the definition of $\nu$ in \cref{eq:nu-def-repeat}, we have that $\nu \in [\epsilon^\star, 3 \epsilon^\star]$. With this choice, 
since $\epsilon = \epsilon^\star$ and $\nu$ are within constant multiples of each other, the rest of the proof of \Cref{theorem:prune-prebound} in the population setting carries over. Subsequently plugging in the choice of $\lambda$ ,$\beta$, $\epsilon$ and $\gamma$ results in the following two statements:
\begin{enumerate}
    \item $U_{\text{prune}}$ has at most $r$ non-zero columns.
    \item $\| U_{\text{prune}} U_{\text{prune}} - U_\star U_\star^T \|_F \le \frac{(\sigma_r^\star)^2}{2}$.
\end{enumerate}
However under these two results, $U_{\text{prune}}$ must have exactly $r$ non-zero columns; if it had strictly fewer than $r$ non-zero columns, then it is impossible to satisfy $\| U_{\text{prune}} U_{\text{prune}} - U_\star U_\star^T \|_F < (\sigma_r^\star)^2$. This completes the proof of the theorem.
\end{proof}

\section{Efficiently finding approximate second order stationary points} 
\label{app:oracle}

In this section we discuss efficiently finding second order stationary points of the loss $\cL + \lambda \cR_\beta$. We establish smoothness conditions on $\cL$ and $\cR_\beta$ and show that running gradient descent with sufficiently bounded perturbations satisfies the property that all iterates are bounded in that $\| U_t \|_2 \le 3$, as long as the algorithm is initialized within this ball.

\paragraph{Perturbed gradient descent:} We consider a first order method having the following update rule: for all $t \ge 0$,
\begin{align} \label{eq:gd_repeat}
    U_{t+1} \gets U_t - \alpha ( \nabla (\cL + \lambda \cR_\beta) (U_t) + P_t)
\end{align}
where $P_t$ is a perturbation term, which for example could be the stochastic noise present in SGD. Over the course of running the update rule \cref{eq:gd}, we show that the iterates $\| U_t \|_2$ remains bounded under mild conditions.

\subsection{Proof of \Cref{theorem:lipgradhess}}
In this section we prove \Cref{theorem:lipgradhess}. Below we first show gradient Lipschitzness on the domain $\{ U : \| U \|_{\text{op}} \le 3 \}$. Observe that,
\begin{align}
    &\| \nabla \cL_{\text{pop}} (U) - \nabla \cL_{\text{pop}} (V) \|_F \\
    &= \| (UU^T - U_\star U_\star^T) U - (VV^T - U_\star U_\star^T) V \|_F \\
    &= \| UU^T (U - V) + (U(U - V)^T + (U-V)V^T) V - U_\star U_\star^T (U - V) \|_F \\
    &\overset{(i)}{\le} \| U \|_{\text{op}}^2 \| U - V \|_F + \| U \|_{\text{op}} \| U - V \|_F \| V \|_{\text{op}} + \| U-V \|_F \| V \|_{\text{op}}^2 + \| U_\star U_\star^T \|_{\text{op}} \| U - V \|_F \\
    &\lesssim \| U - V \|_F, \label{eq:Lpopgradbound}
\end{align}
where $(i)$ repeatedly uses the bound $\| AB \|_F \le \| A \|_{\text{op}} \| B \|_F$ and the last inequality follows from the fact that $\| U \|_{\text{op}} \le 3$ and $\| U_\star \|_{\text{op}} = 1$. On the other hand, by the gradient computations in \Cref{lemma:R-grad-hessian},
\begin{align}
    \| \nabla^2 \cR_\beta (U) \|_{\text{op}} &\le \sup_{Z : \| Z \|_F \le 1} \vect(Z) [\nabla^2 \cR_\beta (U) ] \vect(Z) \\
    &\le \sup_{Z : \| Z \|_F \le 1} \langle D(U), Z^T Z \rangle \\
    &= \sup_{Z : \sum_{i=1}^k \| Z e_i \|_2^2 \le 1} \sum_{i=1}^k D(U)_{ii} \| Z e_i \|_2^2 \\
    &= \max_{i \in [k]} D(U)_{ii} \le \frac{2}{\sqrt{\beta}}. \label{eq:Rb-gradbound}
\end{align}
where the last inequality uses the fact that $\frac{x + 2\beta}{(x + \beta)^{3/2}} \le \frac{2}{\sqrt{\beta}}$. This implies that,
\begin{align}
    \| \nabla \cR_\beta (U) - \nabla \cR_\beta (V) \|_{\text{op}} \le \frac{2}{\sqrt{\beta}}\| U - V \|_F.
\end{align}
Combining \cref{eq:Lpopgradbound,eq:Rb-gradbound}, by triangle inequality,
\begin{align}
    \| \nabla f_{\text{pop}} (U) - \nabla f_{\text{pop}} (V) \|_F \lesssim \| U - V \|_F
\end{align}
where the assumption $\lambda/\sqrt{\beta} \le 1$ is invoked.

Next we prove Hessian Lipschitzness of $f_{\text{pop}}$. By \cite[Theorem 3]{hessianbound}, the Hessian of $\cL_{\text{pop}}$ satisfies the condition,
\begin{align}
    \| \nabla^2 \cL_{\text{pop}} (U) - \nabla^2 \cL_{\text{pop}} (V) \|_{\text{op}} \lesssim \| U - V \|_F. \label{eq:Lpophess}
\end{align}
Although the analysis in \cite{hessianbound} is provided for the exactly specified case ($k=r$), it carries over unchanged to the overparameterized case as well. On the other hand, for any $Z, Z_U \in \mathbb{R}^{d \times k}$ such that $\| Z \|_F, \| Z_U \|_F \le 1$, the rate of change of the Hessian at $U$ evaluated along the direction $Z$,
\begin{align}
    &\lim_{t \to 0} \vect(Z)^T \left[ \frac{\nabla^2 \cR_\beta (U) - \nabla^2 \cR_\beta (U + tZ_U)}{t} \right] \vect(Z) \\
    &= \lim_{t \to 0}\frac{\langle D(U) - D(U + tZ_U), Z^T Z \rangle}{t} \\
    &\qquad\qquad + \frac{\sum_{i=1}^k G(U + tZ_U)_{ii} \langle Ze_i, (U + tZ_U)e_i \rangle^2 - G(U)_{ii} \langle Ze_i, Ue_i \rangle^2}{t}. \label{eq:limt}
\end{align}
The first term of the RHS of \cref{eq:limt} can be bounded as,
\begin{align}
    \lim_{t \to 0}\frac{\langle D(U) - D(U + tZ_U), Z^T Z \rangle}{t} &= - \sum_{i=1}^k \frac{\| U e_i \|_2^2 + 4 \beta}{( \| U e_i \|_2^2 + \beta)^{5/2}} \langle U e_i, Z_U e_i \rangle \cdot \| Z e_i \|_2^2 \\
    &\overset{(i)}{\le} \max_{i \in [k]} \left| \frac{\| U e_i \|_2^2 + 4 \beta}{( \| U e_i \|_2^2 + \beta)^{5/2}} \langle U e_i, Z_U e_i \rangle \right| \sum_{i=1}^k \| Z e_i \|_2^2 \\
    &\overset{(ii)}{\le} \max_{i \in [k]} \frac{\| U e_i \|_2^2 + 4 \beta}{( \| U e_i \|_2^2 + \beta)^{5/2}} \| U e_i \|_2 \\
    &\lesssim \frac{1}{\beta}, \label{eq:last1}
\end{align}
where $(i)$ follows by Holder inequality, and $(ii)$ uses the fact that $\| Z \|_F, \| Z_U \|_F \le 1$. The last inequality uses the fact that $\min_{x \ge 0} \frac{\sqrt{x} (x+4\beta)}{(x+\beta)^{5/2}} \lesssim \frac{1}{\beta}$.

On the other hand, by the chain rule, the second term on the RHS of \cref{eq:limt} can be computed in two parts, the first being,
\begin{align}
    \lim_{t \to 0} \frac{\left( G(U+tZ_U)_{ii} - G(U)_{ii} \right) \langle Z e_i, U e_i \rangle^2}{t} &= - \sum_{i=1}^k \frac{3(\| U e_i \|_2^2 + 6 \beta)}{(\| U e_i \|_2^2 + \beta)^{7/2}} \langle Ue_i, Z_U e_i \rangle \langle Z e_i, U e_i \rangle^2 \nonumber\\
    &\le \sum_{i=1}^k H(U)_{ii} \|  U e_i \|_2 \langle Z e_i, U e_i \rangle^2 \label{eq:004294729}
\end{align}
which uses the fact that $\| Z_U \|_F \le 1$ $H(U)_{ii} = \frac{3(\| U e_i \|_2^2 + 6 \beta) \| Ue_i \|_2}{(\| U e_i \|_2^2 + \beta)^{7/2}}$. Note that for any $c \ge 0$, the optimization problem which takes the form $\max_{v : \| v \|_2^2 \le c} \langle v, U e_i \rangle$ is maximized when $v \propto U e_i$. Therefore, fixing the solution to the remaining coordinates and optimizing over $Z$ for a single $i$, one may substitute $Z e_i = x_i \cdot U e_i / \| U e_i \|_2$ for some $x_i \in \mathbb{R}$ and bound \cref{eq:004294729} as,
\begin{align}
    \lim_{t \to 0} \frac{\left( G(U+tZ_U)_{ii} - G(U)_{ii} \right) \langle Z e_i, U e_i \rangle^2}{t}
    &\le \max_{\sum_{i=1}^k x_i^2 \le 1} \sum_{i=1}^k H(U)_{ii} x_i^2 \| U e_i \|_2^3\\
    &= \max_{i \in [k]} H(U)_{ii} \| U e_i \|_2^3\\
    &\lesssim \frac{1}{\beta}, \label{eq:last2}
\end{align}
where the last inequality uses the fact that $\min_{x \ge 0} \frac{3\sqrt{x^3}(x+6\beta)}{(x + \beta)^{7/2}} \lesssim \frac{1}{\beta}$. The second part of the second term on the RHS of \cref{eq:limt} is,
\begin{align}
    &\lim_{t \to 0} \sum_{i=1}^k \frac{G(U)_{ii} \left( \langle Z e_i, (U + tZ_U)e_i \rangle^2 - \langle Z e_i, U e_i \rangle^2 \right)}{t} \\
    &= 2 \sum_{i=1}^k G(U)_{ii} \langle Ze_i, U e_i \rangle \langle Z e_i, Z_U e_i \rangle \\
    &\lesssim \sum_{i=1}^k G(U)_{ii} \| U e_i \|_F \langle Z e_i, Z_U e_i \rangle \\
    &\lesssim \sum_{i=1}^k G(U)_{ii} \| U e_i \|_F \| Z e_i \|_2 \| Z_U e_i \|_2 \\
    &\le \sqrt{\sum_{i=1}^k (G(U)_{ii})^2 \| U e_i \|_F^2 \| Z e_i \|_2^2}  \\
    &\le \max_{i \in [k]} G(U)_{ii} \| U e_i \|_F \\
    &\lesssim \frac{1}{\beta} \label{eq:last3}
\end{align}
where in $(i)$ we use the fact that $x_i = \langle Z e_i, Z_U e_i \rangle$ satisfies $|\sum_i x_i| = |\langle Z, Z_U \rangle| \le \| Z \|_F \| Z_U \|_F \le 1$. The last inequality uses the fact that $\max_{x \ge 0} \frac{\sqrt{x} (x + 4 \beta)}{(x + \beta)^{5/2}} \lesssim 1/\beta$. Combining \cref{eq:last1,eq:last2,eq:last3} in \cref{eq:limt} results in the bound,
\begin{align}
    \max_{Z, Z_U : \| Z \|_F, \| Z_U \|_F \le 1} \lim_{t \to 0} \frac{\vect(Z)^T [\nabla^2 \cR_\beta (U) - \cR_\beta (U + t Z_U)] \vect(Z)}{t} \lesssim \frac{1}{\beta}
\end{align}
By choosing $Z_U = (U-V)/\|U-V\|_F$, this implies that,
\begin{align}
    \max_{Z : \| Z \|_F \le 1} \vect(Z)^T [\nabla^2 \cR_\beta (U) - \nabla^2 \cR_\beta (V)] \vect(Z) \lesssim \frac{1}{\beta} \| U - V \|_F.
\end{align}
This implies that,
\begin{align}
    \| \nabla^2 \cR_\beta (U) - \nabla^2 \cR_\beta (V) \|_{\text{op}} \lesssim \frac{1}{\beta} \| U - V \|_F.
\end{align}
Combining with \cref{eq:Lpophess} by triangle inequality and noting the assumption that $\lambda \le \beta$ results in the equation,
\begin{align}
    \| \nabla^2 f_{\text{pop}} (U) - f_{\text{pop}} (V) \|_{\text{op}} \lesssim \| U - V \|_F.
\end{align}
This completes the proof.

\subsection{Proof of \Cref{theorem:bounded}} 

In this section we prove \Cref{theorem:bounded} formally. By the gradient computations in \Cref{lemma:L-grad-hessian,lemma:R-grad-hessian} and triangle inequality,
\begin{align}
    \| U_{t+1} \|_{\text{op}} &\le \| (I - \alpha U_tU_t^T) U_t \|_{\text{op}} + \alpha \| U_\star U_\star^T U_t \|_{\text{op}} + \alpha \lambda \| D(U_t) \|_{\text{op}} \| U_t \|_{\text{op}} + \alpha \\
    &\le \| (I - \alpha U_tU_t^T) U_t \|_{\text{op}} + \alpha \| U_\star U_\star^T U_t \|_{\text{op}} + \alpha \frac{\lambda}{\sqrt{\beta}} \| U_t \|_{\text{op}} + \alpha
\end{align}
where the last inequality follows from the fact that $D(U_t)$ is a diagonal matrix with largest entry upper bounded by $2/\sqrt{\beta}$. Note that $\| (I - \alpha U_tU_t^T) U_t \|_{\text{op}} = \max_i \sigma_i ( 1 - \alpha \sigma_i^2)$ where $\{ \sigma_i \}_{i=1}^k$ denotes the singular values of $U_t$. For $x \in [0,1/2\alpha]$, $x ( 1 - \alpha x^2)$ is an increasing function. Therefore, assuming $\| U_t \|_{\text{op}} \in [0,1/2\alpha]$,
\begin{align}
    \| (I - \alpha U_tU_t^T) U_t \|_{\text{op}} \le \| U_t \|_{\text{op}} ( 1 - \alpha \| U_t \|_{\text{op}}^2 ).
\end{align}
Combining everything together,
\begin{align}
    \| U_{t+1} \|_{\text{op}} &\le \| U_t \|_{\text{op}} (1 - \alpha \| U_t \|_{\text{op}}^2) + \alpha \| U_t \|_{\text{op}} + \alpha \frac{\lambda}{\sqrt{\beta}} \| U_t \|_{\text{op}} + \alpha \\
    &\le \| U_t \|_{\text{op}} (1 - \alpha \| U_t \|_{\text{op}}^2) + 2\alpha \| U_t \|_{\text{op}} + \alpha
\end{align}
where the last inequality assumes $\lambda/\sqrt{\beta} \le 1$. Depending on whether $\| U_t \|_{\text{op}} \ge 2$ or $\| U_t \|_{\text{op}} \le 2$, and recalling that $\alpha \le 1/8$, we may derive two bounds from the above,
\begin{align}
    &0 \le \|U_t \|_{\text{op}} \le 2 \implies \| U_{t+1} \|_{\text{op}} \le \| U_t \|_{\text{op}} (1 + 2\alpha) + \alpha \label{eq:init_iterate}\\
    &2 \le \|U_t \|_{\text{op}} \le 4 \implies \| U_{t+1} \|_{\text{op}} \le \| U_t \|_{\text{op}} (1 - \alpha) \label{eq:final_iterate}
\end{align}
Suppose the initial iterate satisfied $\| U_0 \|_{\text{op}} \le 2$, and let $t_0$ be any iteration where $\| U_{t_0} \|_{\text{op}} \le 2$, but $\| U_{t_0+1} \|_{\text{op}} \ge 2$. Then by \cref{eq:init_iterate}, $\| U_{t_0+1} \|_{\text{op}} \le 3$. However, in the subsequent iterations, $\| U_t \|_{\text{op}}$ must decrease until it is no larger than $2$, by virtue of \cref{eq:final_iterate}. This implies the statement of the theorem.

\section{Gradient and Hessian computations}

\subsection{Population mean square error}

\begin{lemma} \label[lemma]{lemma:L-grad-hessian}
Consider the function $\cL_{\text{pop}} (U) = \| U U^T - U_\star U_\star^T \|_F^2$. Then,
\begin{enumerate}
    \item The gradient of $\cL$ satisfies $\langle \nabla \cL_{\text{pop}} (U), Z \rangle = 2 \langle (UU^T - U_\star U_\star^T), U Z^T + ZU^T \rangle$.
    \item For any $Z \in \mathbb{R}^{d \times k}$,
    \begin{align}
        \vect (Z)^T [ \nabla^2 \cL_{\text{pop}} (U)] \vect (Z) = 4 \langle Z, (UU^T - U_\star U_\star^T ) Z \rangle + 2 \| U Z^T + Z U^T \|_F^2.
    \end{align}
\end{enumerate}
\end{lemma}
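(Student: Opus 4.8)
The cleanest route is to expand $\cL_{\text{pop}}(U+tZ)$ as a polynomial in the scalar $t$ and read off the linear and quadratic coefficients. Write $M = UU^T - U_\star U_\star^T$. Since
\[
(U+tZ)(U+tZ)^T - U_\star U_\star^T = M + t\,(UZ^T + ZU^T) + t^2\, ZZ^T,
\]
expanding the Frobenius norm squared and collecting powers of $t$ gives
\[
\cL_{\text{pop}}(U+tZ) = \|M\|_F^2 + 2t\,\langle M,\, UZ^T+ZU^T\rangle + t^2\Big(\|UZ^T+ZU^T\|_F^2 + 2\langle M, ZZ^T\rangle\Big) + O(t^3),
\]
where the $O(t^3)$ terms involve $t^3\langle UZ^T+ZU^T, ZZ^T\rangle$ and $t^4\|ZZ^T\|_F^2$.

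Now identify coefficients with the Taylor expansion $g(t) = g(0) + g'(0)\,t + \tfrac12 g''(0)\,t^2 + O(t^3)$ for $g(t) = \cL_{\text{pop}}(U+tZ)$. The coefficient of $t$ is $g'(0) = \langle \nabla \cL_{\text{pop}}(U), Z\rangle$, which yields part (1): $\langle \nabla \cL_{\text{pop}}(U), Z\rangle = 2\langle M, UZ^T+ZU^T\rangle$. The coefficient of $t^2$ is $\tfrac12 g''(0) = \tfrac12 \vect(Z)^T[\nabla^2\cL_{\text{pop}}(U)]\vect(Z)$, so
\[
\vect(Z)^T[\nabla^2\cL_{\text{pop}}(U)]\vect(Z) = 2\|UZ^T+ZU^T\|_F^2 + 4\langle M, ZZ^T\rangle.
\]
Finally use the cyclic-trace identity $\langle M, ZZ^T\rangle = \tr(M ZZ^T) = \tr(Z^T M Z) = \langle Z, MZ\rangle$ to rewrite the second term as $4\langle Z, (UU^T - U_\star U_\star^T)Z\rangle$, giving part (2).

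\textbf{Main obstacle.} There is no real obstacle here — this is a routine differentiation. The only points requiring a little care are (i) keeping the factor of $2$ straight when passing between the second-order directional derivative and the Hessian quadratic form (i.e.\ remembering that the $t^2$-coefficient equals $\tfrac12 g''(0)$, not $g''(0)$), and (ii) the symmetry bookkeeping in the cross term, namely that $\langle M, UZ^T+ZU^T\rangle$ already accounts for both $UZ^T$ and $ZU^T$ so no extra symmetrization is needed. One could alternatively verify $\nabla \cL_{\text{pop}}(U) = 2(UU^T - U_\star U_\star^T + (UU^T - U_\star U_\star^T)^T)U = 4(UU^T-U_\star U_\star^T)U$ in matrix form (using symmetry of $M$) as a sanity check, but the Taylor-expansion argument already delivers both statements in the coordinate-free form used elsewhere in the paper.
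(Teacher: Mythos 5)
Your computation is correct: the expansion of $\cL_{\text{pop}}(U+tZ)$ in powers of $t$, with the identifications $g'(0)=\langle \nabla \cL_{\text{pop}}(U),Z\rangle$ and $g''(0)=\vect(Z)^T[\nabla^2 \cL_{\text{pop}}(U)]\vect(Z)$, gives exactly the two stated formulas, and the cyclic-trace step using the symmetry of $UU^T-U_\star U_\star^T$ is fine. The paper itself does not carry out this calculation but simply cites equations (59) and (61) of an external reference, so your argument is the standard direct-differentiation proof made self-contained; it matches the intended derivation and adds nothing that would conflict with how the lemma is used elsewhere.
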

\begin{proof}
The first part is proved in \cite[eq. (59)]{cong-wang-chi-chen-17}. The second part is proved shortly after in \cite[eq. (61)]{cong-wang-chi-chen-17}.
\end{proof}

\begin{lemma} \label[lemma]{lemma:genL-grad-hessian}
Consider the function $\cL (U) = f(UU^T) : \mathbb{R}^{d \times k} \to \mathbb{R}$ for some doubly differentiable function $f : \mathbb{R}^{d \times d} \to \mathbb{R}$. Then,
\begin{enumerate}
    \item For any $Z \in \mathbb{R}^{d \times k}$, $\langle \nabla \cL (U), Z \rangle = \langle (\nabla f) (UU^T), U Z^T + ZU^T \rangle$.
    \item For any $Z \in \mathbb{R}^{d \times d}$,
    \begin{align}
        &\vect (Z)^T [ \nabla^2 \cL (U)] \vect (Z) \nonumber\\
        &= \vect (UZ^T + ZU^T)^T [(\nabla^2 f) (UU^T)] \vect (UZ^T + ZU^T) + 2\langle (\nabla f) (UU^T), ZZ^T \rangle
    \end{align}
\end{enumerate}
\end{lemma}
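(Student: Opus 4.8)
The plan is to write $\cL = f \circ \phi$, where $\phi : \mathbb{R}^{d \times k} \to \mathbb{R}^{d \times d}$ is the quadratic map $\phi(U) = UU^T$, and then push the standard first- and second-order chain rules through. The key elementary fact is that along any line $t \mapsto U + tZ$, the composition with $\phi$ is an \emph{exact} quadratic in $t$:
\begin{align}
\phi(U + tZ) = UU^T + t\,(UZ^T + ZU^T) + t^2\, ZZ^T. \nonumber
\end{align}
Writing $g(t) = \phi(U + tZ)$, this gives the exact identities $g(0) = UU^T$, $g'(0) = UZ^T + ZU^T$, $g''(0) = 2 ZZ^T$, with all higher derivatives vanishing. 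These three quantities are all that enters the computation.

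For part (1), I would compute the directional derivative $\langle \nabla \cL(U), Z \rangle = \frac{d}{dt} \cL(U + tZ)|_{t=0} = \frac{d}{dt} f(g(t))|_{t=0}$ and invoke the first-order chain rule $\frac{d}{dt} f(g(t)) = \langle (\nabla f)(g(t)), g'(t) \rangle$ (with $\langle A, B \rangle = \tr(A^T B)$). Evaluating at $t = 0$ and substituting $g'(0) = UZ^T + ZU^T$ yields the claim; since this holds for every $Z$ and $\cL$ is doubly differentiable (because $f$ is and $\phi$ is polynomial), this identifies $\nabla \cL(U)$.

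For part (2), I would compute the pure second directional derivative, $\vect(Z)^T [\nabla^2 \cL(U)] \vect(Z) = \frac{d^2}{dt^2} f(g(t))|_{t=0}$. Differentiating $\frac{d}{dt} f(g(t)) = \langle (\nabla f)(g(t)), g'(t) \rangle$ once more via the product rule, together with the vectorized identity $\frac{d}{dt}\, \vect\big((\nabla f)(g(t))\big) = [(\nabla^2 f)(g(t))]\, \vect(g'(t))$, gives
\begin{align}
\frac{d^2}{dt^2} f(g(t)) = \vect(g'(t))^T [(\nabla^2 f)(g(t))]\, \vect(g'(t)) + \langle (\nabla f)(g(t)), g''(t) \rangle. \nonumber
\end{align}
Evaluating at $t=0$ and plugging in $g'(0) = UZ^T + ZU^T$ and $g''(0) = 2 ZZ^T$ produces exactly the asserted formula. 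There is no genuine obstacle here — the lemma is a routine chain-rule computation; the only point requiring a little care is the vectorization bookkeeping, namely checking that $(\nabla^2 f)(UU^T)$ acts on $\vect(UZ^T + ZU^T)$ in the layout consistent with the convention used throughout the paper, which is a matter of fixing notation rather than a real difficulty.
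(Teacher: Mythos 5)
Your proof is correct and is precisely the direct computation the paper alludes to (the paper's entire proof is the one-line remark ``This result is straightforward to prove by direct computation''). Restricting to the line $t \mapsto U + tZ$, exploiting that $\phi(U+tZ)$ is exactly quadratic in $t$, and applying the first- and second-order chain rules to $f \circ \phi$ is the natural way to carry out that computation, so you have simply spelled out what the paper leaves implicit.
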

\begin{proof}
This result is straightforward to prove by direct computation.
\end{proof}

\subsection{Empirical mean square error}

\begin{lemma} \label[lemma]{lemma:L-grad-hessian-finite}
Consider the loss $f (U) = \| UU^T - U_\star U_\star \|_{\mathcal{H}}^2$ where $\| \cdot \|_{\mathcal{H}}$ is defined in \cref{eq:lossexpanded}. Then,
\begin{enumerate}
    \item The gradient of $f$ satisfies, $\langle \nabla f (U), Z \rangle = 2 \langle UU^T - U_\star U_\star^T, U Z^T + Z U^T \rangle_{\mathcal{H}}$.
    \item For any $Z \in \mathbb{R}^{d \times d}$,
    \begin{align}
        \vect (Z)^T [ \nabla^2 f (U)] \vect (Z) = 4 \langle UU^T - U_\star U_\star^T, ZZ^T \rangle_{\mathcal{H}} + 2\| U Z^T + Z U^T \|_{\mathcal{H}}^2.
    \end{align}
\end{enumerate}
\end{lemma}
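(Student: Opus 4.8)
The plan is to obtain the two identities as immediate consequences of the composite-loss formulas already recorded in \Cref{lemma:genL-grad-hessian}, since the only new ingredient is that $\langle\cdot,\cdot\rangle_{\mathcal{H}}$ plays exactly the role of an inner product. Indeed, $\langle X,Y\rangle_{\mathcal{H}} = \frac{1}{n}\sum_{i=1}^n \langle X, A_i\rangle\langle Y, A_i\rangle$ is a symmetric, bilinear, positive semidefinite form on $\mathbb{R}^{d\times d}$, so any identity derived for $\|\cdot\|_F^2$ purely from bilinearity of $\langle\cdot,\cdot\rangle$ transfers verbatim. In particular, no RIP-type hypothesis on the $A_i$ is needed for this purely algebraic statement.

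Concretely, I would introduce $g(X) := \|X - U_\star U_\star^T\|_{\mathcal{H}}^2$, a quadratic function of $X\in\mathbb{R}^{d\times d}$, so that $f(U) = g(UU^T)$ is exactly of the form treated in \Cref{lemma:genL-grad-hessian}. Being quadratic, $g$ has gradient and Hessian that one writes down by inspection: $\langle (\nabla g)(X), Y\rangle = 2\langle X - U_\star U_\star^T, Y\rangle_{\mathcal{H}}$ for all $Y$, and $\vect(Y)^T[(\nabla^2 g)(X)]\vect(Y) = 2\|Y\|_{\mathcal{H}}^2$. Substituting these into the first bullet of \Cref{lemma:genL-grad-hessian} gives part 1, namely $\langle\nabla f(U),Z\rangle = \langle (\nabla g)(UU^T), UZ^T + ZU^T\rangle = 2\langle UU^T - U_\star U_\star^T, UZ^T + ZU^T\rangle_{\mathcal{H}}$; substituting into the second bullet gives part 2, since the first term there becomes $2\|UZ^T+ZU^T\|_{\mathcal{H}}^2$ and the second term $2\langle (\nabla g)(UU^T), ZZ^T\rangle = 4\langle UU^T - U_\star U_\star^T, ZZ^T\rangle_{\mathcal{H}}$.

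As a cross-check (and an alternative self-contained route), I would also expand directly along a line: with $\Delta(t) := (U+tZ)(U+tZ)^T - U_\star U_\star^T = (UU^T - U_\star U_\star^T) + t(UZ^T + ZU^T) + t^2 ZZ^T$, bilinearity of $\langle\cdot,\cdot\rangle_{\mathcal{H}}$ gives
\begin{align*}
f(U+tZ) = \|UU^T - U_\star U_\star^T\|_{\mathcal{H}}^2 &+ 2t\,\langle UU^T - U_\star U_\star^T,\, UZ^T + ZU^T\rangle_{\mathcal{H}} \\
&+ t^2\!\left(\|UZ^T + ZU^T\|_{\mathcal{H}}^2 + 2\langle UU^T - U_\star U_\star^T,\, ZZ^T\rangle_{\mathcal{H}}\right) + O(t^3),
\end{align*}
and reading off the coefficient of $t$ yields $\langle\nabla f(U),Z\rangle$ while twice the coefficient of $t^2$ yields $\vect(Z)^T[\nabla^2 f(U)]\vect(Z)$, matching the claimed formulas.

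There is no real obstacle here: the proof is pure bookkeeping. The only points worth a moment's care are confirming that $\langle\cdot,\cdot\rangle_{\mathcal{H}}$ is symmetric and bilinear (immediate from its definition), and checking that the $t^3$ and $t^4$ contributions arising from the quadratic term $t^2 ZZ^T$ in $\Delta(t)$ are genuinely higher order and hence do not affect the first- and second-order Taylor coefficients.
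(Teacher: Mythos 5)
Your proposal is correct. Note, though, that the paper does not actually carry out this computation: its ``proof'' is a one-line citation to \cite[eq.~(18)]{rong}, so your argument is genuinely different in that it is self-contained. Your main route---viewing $g(X)=\|X-U_\star U_\star^T\|_{\mathcal{H}}^2$ as a quadratic function whose gradient and Hessian (with respect to the Frobenius inner product) are read off from bilinearity of $\langle\cdot,\cdot\rangle_{\mathcal{H}}$, and then substituting into the chain-rule formulas of \Cref{lemma:genL-grad-hessian} for losses of the form $f(UU^T)$---is exactly the right reduction, and the line-expansion cross-check via $\Delta(t)=(UU^T-U_\star U_\star^T)+t(UZ^T+ZU^T)+t^2ZZ^T$ correctly recovers both identities from the $t$ and $t^2$ coefficients (the $t^3$ and $t^4$ terms indeed play no role). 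Your observation that no RIP hypothesis is needed, since the statement is purely algebraic and only uses symmetry and bilinearity of $\langle\cdot,\cdot\rangle_{\mathcal{H}}$, is also accurate and worth making explicit; what the citation buys the paper is brevity, while your derivation buys verifiability in place and makes the structural parallel with the population-loss computation in \Cref{lemma:L-grad-hessian} transparent.
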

\begin{proof}
These results are proved in \cite[eq. (18)]{rong}.
\end{proof}

\subsection{Regularization $\cR_\beta$} \label{sec:regularizer-auxiliary-calc}

\begin{lemma} \label[lemma]{lemma:R-grad-hessian}
Consider the function $\cR_\beta : \mathbb{R}^{d \times m} \to \mathbb{R}$ defined as $\cR_\beta (U) = \sum_{i=1}^m L_2^\beta (U e_i)$. Define,
\begin{align}
    D (U) &= \textsf{diag} \left( \left\{ \frac{(\| U e_i \|_2^2 + 2 \beta)}{(\| U e_i\|_2^2 + \beta)^{3/2}} : i \in [m] \right\} \right), \text{ and}, \label{eq:DU-def}\\
    G (U) &= \textsf{diag} \left( \left\{ \frac{(\| U e_i \|_2^2 + 4 \beta)}{(\| U e_i\|_2^2 + \beta)^{5/2}} : i \in [m] \right\} \right). \label{eq:GU-def}
\end{align}
For any $Z \in \mathbb{R}^{d \times m}$,
\begin{enumerate}
    \item The gradient of $\cR_\beta$ is, $\nabla \cR_\beta (U) = U D(U)$.
    \item The Hessian of $\cR_\beta$ satisfies,
    \begin{align}
         \vect(Z)^T [\nabla^2 \cR_\beta (U)] \vect (Z) = \langle D(U), Z^T Z \rangle - \sum_{i=1}^m G(U)_{ii} \langle U e_i, Z e_i \rangle^2
    \end{align}
\end{enumerate}
\end{lemma}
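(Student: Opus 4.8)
The plan is to reduce everything to a one-dimensional calculus computation by exploiting separability: $\cR_\beta(U)=\sum_{i=1}^m \Lsm(Ue_i)$ is a sum over the columns of $U$, so the gradient is a column-wise stack of $\nabla_v\Lsm$ evaluated at the columns, and the Hessian is block diagonal with the $i$-th block equal to the $\mathbb{R}^d$-Hessian $\nabla^2_v\Lsm(Ue_i)$. It therefore suffices to differentiate the single map $v\mapsto\Lsm(v)=\|v\|_2^2/\sqrt{\|v\|_2^2+\beta}$. Writing $\Lsm(v)=g(\|v\|_2^2)$ with the scalar profile $g(s)=s(s+\beta)^{-1/2}$, the first step is the elementary quotient-rule computation
\begin{align}
    g'(s)=\frac{s+2\beta}{2(s+\beta)^{3/2}},\qquad g''(s)=-\frac{s+4\beta}{4(s+\beta)^{5/2}},
\end{align}
valid for all $s\ge 0$ since $\beta>0$ (so $g$, and hence $\Lsm$, is $C^\infty$ on all of $\mathbb{R}^d$, with no issue at $v=0$).

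Second, I would apply the chain rule. Since $\nabla_v\|v\|_2^2=2v$, we get $\nabla_v\Lsm(v)=2g'(\|v\|_2^2)\,v$, and plugging in $g'$ gives $2g'(\|v\|_2^2)=(\|v\|_2^2+2\beta)/(\|v\|_2^2+\beta)^{3/2}$, which is exactly the diagonal entry of $D(U)$ at $v=Ue_i$. Summing $\nabla_v\Lsm(Ue_i)\,e_i^T$ over $i\in[m]$ yields $\nabla\cR_\beta(U)=\sum_i 2g'(\|Ue_i\|_2^2)\,Ue_i e_i^T=U D(U)$, establishing part 1.

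Third, differentiating $\nabla_v\Lsm(v)=2g'(\|v\|_2^2)\,v$ once more gives
\begin{align}
    \nabla^2_v\Lsm(v)=2g'(\|v\|_2^2)\,I+4g''(\|v\|_2^2)\,vv^T=\frac{\|v\|_2^2+2\beta}{(\|v\|_2^2+\beta)^{3/2}}\,I-\frac{\|v\|_2^2+4\beta}{(\|v\|_2^2+\beta)^{5/2}}\,vv^T,
\end{align}
where the two scalar coefficients are precisely the $D(U)$- and $-G(U)$-entries. By the block-diagonal structure of the Hessian of the separable function $\cR_\beta$, for any $Z\in\mathbb{R}^{d\times m}$,
\begin{align}
    \vect(Z)^T[\nabla^2\cR_\beta(U)]\vect(Z)=\sum_{i=1}^m (Ze_i)^T\nabla^2_v\Lsm(Ue_i)\,(Ze_i)=\sum_{i=1}^m\Big(D(U)_{ii}\|Ze_i\|_2^2-G(U)_{ii}\langle Ue_i,Ze_i\rangle^2\Big),
\end{align}
and recognizing $\sum_i D(U)_{ii}\|Ze_i\|_2^2=\sum_i D(U)_{ii}(Z^TZ)_{ii}=\langle D(U),Z^TZ\rangle$ (as $D(U)$ is diagonal) completes part 2. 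This argument is entirely routine; the only thing requiring care is the bookkeeping—getting the coefficients $2\beta$ versus $4\beta$ and the sign of $g''$ correct, and being explicit that separability makes the Hessian block diagonal so that no cross-column terms appear.
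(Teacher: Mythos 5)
Your proof is correct and follows essentially the same route as the paper's: reduce to a per-column computation for $\Lsm$, differentiate twice, and use the block-diagonal structure of the Hessian of the separable function. The only cosmetic difference is that you factor the calculation through the scalar profile $g(s)=s(s+\beta)^{-1/2}$, whereas the paper differentiates $\Lsm(v)$ directly; the resulting formulas and conclusions are identical.
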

\begin{proof}
Note that the gradient of $\Lsm (v)$ as a function of $v$ is, 
\begin{align}
    \nabla L_2^\beta (v) = \frac{2v}{\sqrt{\| v \|_2^2 + \beta}} - \frac{1}{2}\frac{\| v \|_2^2}{(\| v \|_2^2 + \beta)^{3/2}} \cdot (2v) = \frac{(\| v \|_2^2 + 2\beta) v}{(\| v \|_2^2 + \beta)^{3/2}}.
\end{align}
Therefore,
\begin{align}
     \langle Z, \nabla \cR_\beta (U) \rangle &= \sum_{i=1}^d \frac{(\| U e_i \|_2^2 + 2 \beta)}{(\| U e_i\|_2^2 + \beta)^{3/2}} \langle U e_i, Z e_i \rangle \\
     &= \sum_{i=1}^d \frac{(\| U e_i \|_2^2 + 2 \beta)}{(\| U e_i\|_2^2 + \beta)^{3/2}} \tr \left( Z e_i e_i^T U^T \right) \\
    &= \tr \left( D(U) \cdot U^T Z  \right),
\end{align}
where $D (U) = \textsf{diag} \left( \left\{ \frac{(\| U e_i \|_2^2 + 2 \beta)}{(\| U e_i\|_2^2 + \beta)^{3/2}} : i \in [m] \right\} \right)$ is as defined in \cref{eq:DU-def}.

On the other hand, the Hessian of $\Lsm (v)$ as a function of $v$ is,
\begin{align}
    \nabla^2 \Lsm (v) &= \frac{(\| v \|_2^2 + 2 \beta)}{(\| v\|_2^2 + \beta)^{3/2}} I - \frac{3 (\| v \|_2^2 + 2\beta)}{( \| v \|_2^2 + \beta)^{5/2}} vv^T + \frac{2}{(\| v \|_2^2 + \beta)^{3/2}} vv^T \\
    &= \frac{(\| v \|_2^2 + 2 \beta)}{(\| v\|_2^2 + \beta)^{3/2}} I - \frac{\| v \|_2^2 + 4 \beta}{( \| v \|_2^2 + \beta)^{5/2}} vv^T.
\end{align}
The Hessian of $\cR_\beta$ is block diagonal with the $i^{th}$ block equal to $\nabla^2 \Lsm (U e_i)$. Therefore,
\begin{align}
    \vect(Z)^T [\nabla^2 \cR_\beta (U)] \vect (Z) &= \sum_{i = 1}^m \frac{(\| U e_i \|_2^2 + 2 \beta)}{(\| U e_i \|_2^2 + \beta)^{3/2}} \| Z e_i \|_2^2 - \frac{\| Ue_i \|_2^2 + 4 \beta}{( \| U e_i \|_2^2 + \beta)^{5/2}} \langle U e_i, Z e_i \rangle^2 \\
    &= \sum_{i = 1}^m D(U)_{ii} \| Z e_i \|_2^2 - G(U)_{ii} \langle U e_i, Z e_i \rangle^2 \\
    &= \tr ( D(U) \cdot ZZ^T ) - \sum_{i = 1}^m G(U)_{ii} \langle U e_i, Z e_i \rangle^2
\end{align}
\end{proof}

Finally we introduce a lemma bounding the entries of $D(U)$ and $G(U)$.

\begin{lemma} \label{lemma:DUbound}
Suppose for some $i \in [k]$, $\| U e_i \|_2 \ge 2 \sqrt{\beta}$. Then the corresponding diagonal entries of $D(U)$ and $G(U)$ satisfy,
\begin{align}
    (D(U))_{ii} &\ge \frac{1}{\| U e_i \|_2} \label{eq:DUlb} \\
    (G(U))_{ii} &\ge \frac{1}{\| U e_i \|_2^3}. \label{eq:GUlb}
\end{align}
On the other hand, for any $U$ such that $\| U \|_{\text{op}} \le 3$,
\begin{align}
    (D(U))_{ii} \le \frac{\| U e_i \|_2^2 + 2 \beta}{(\| U e_i \|_2^2 + \beta)^{3/2}}
\end{align}
\end{lemma}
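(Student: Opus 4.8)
The plan is to reduce the whole lemma to two elementary scalar inequalities. Write $x = \|Ue_i\|_2^2$; the hypothesis $\|Ue_i\|_2 \ge 2\sqrt{\beta}$ becomes $x \ge 4\beta > 0$, and by the definitions in \cref{eq:DU-def} and \cref{eq:GU-def} we have $(D(U))_{ii} = (x+2\beta)/(x+\beta)^{3/2}$ and $(G(U))_{ii} = (x+4\beta)/(x+\beta)^{5/2}$. Thus \cref{eq:DUlb} is the claim $(x+2\beta)/(x+\beta)^{3/2} \ge x^{-1/2}$ and \cref{eq:GUlb} is the claim $(x+4\beta)/(x+\beta)^{5/2} \ge x^{-3/2}$; since all quantities involved are positive, cross-multiplying and squaring turns these into the polynomial inequalities
\begin{align}
    x(x+2\beta)^2 \ge (x+\beta)^3 \qquad \text{and} \qquad x^3(x+4\beta)^2 \ge (x+\beta)^5,
\end{align}
to be verified for every real $x \ge 4\beta$.

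First I would prove the $D$-bound. Expanding, $x(x+2\beta)^2 - (x+\beta)^3 = \beta\bigl(x^2 + \beta x - \beta^2\bigr)$, which is strictly positive as soon as $x \ge \beta$ --- in particular for $x \ge 4\beta$ --- because then $x^2 \ge \beta^2$ and $\beta x > 0$. This gives \cref{eq:DUlb}. The stated upper bound $(D(U))_{ii} \le (\|Ue_i\|_2^2 + 2\beta)/(\|Ue_i\|_2^2+\beta)^{3/2}$ needs no argument at all: it holds with equality directly from \cref{eq:DU-def}, and the hypothesis $\|U\|_{\text{op}} \le 3$ plays no role in it (it is stated merely to match the context in which the bound is later invoked).

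For the $G$-bound I would avoid expanding the degree-five polynomials by bootstrapping from the inequality just established. Multiplying $(x+\beta)^3 \le x(x+2\beta)^2$ by $(x+\beta)^2 > 0$ gives $(x+\beta)^5 \le x(x+2\beta)^2(x+\beta)^2$, so it suffices to check $x(x+2\beta)^2(x+\beta)^2 \le x^3(x+4\beta)^2$, i.e., after dividing by $x>0$ and taking square roots of the two (positive) sides, $(x+2\beta)(x+\beta) \le x(x+4\beta)$. But $x(x+4\beta) - (x+2\beta)(x+\beta) = \beta(x - 2\beta) \ge 2\beta^2 > 0$ when $x \ge 4\beta$, which closes the chain $(x+\beta)^5 \le x(x+2\beta)^2(x+\beta)^2 \le x^3(x+4\beta)^2$ and yields \cref{eq:GUlb}. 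The only step requiring any thought here is the choice of this bootstrap (a direct expansion of $x^3(x+4\beta)^2 - (x+\beta)^5$ also works, but is messier); everything else is routine algebra and there is no genuine obstacle.
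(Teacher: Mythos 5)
Your proof is correct, and it takes a different (purely algebraic) route from the paper's. The paper treats $(D(U))_{ii}$ and $(G(U))_{ii}$ as functions of the smoothing parameter $\beta$ alone: it differentiates $\frac{v+2\beta}{(v+\beta)^{3/2}}$ and $\frac{v+4\beta}{(v+\beta)^{5/2}}$ in $\beta$ (with $v=\|Ue_i\|_2^2$ fixed), shows each is increasing in $\beta$ on the range permitted by the hypothesis $\beta \le \tfrac14 v$, and hence lower bounds each by its value at $\beta=0$, namely $1/\|Ue_i\|_2$ and $1/\|Ue_i\|_2^3$. You instead cross-multiply and verify the polynomial inequalities $x(x+2\beta)^2 \ge (x+\beta)^3$ and $x^3(x+4\beta)^2 \ge (x+\beta)^5$ directly, the second by a clean bootstrap from the first via $(x+2\beta)(x+\beta) \le x(x+4\beta)$. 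Both arguments are sound; the paper's monotonicity viewpoint makes transparent why a threshold of the form $\|Ue_i\|_2 \gtrsim \sqrt{\beta}$ is what keeps one in the increasing regime, while yours avoids calculus entirely and in fact shows the lower bounds under slightly weaker hypotheses ($x\ge\beta$ for \cref{eq:DUlb}, $x\ge 2\beta$ for \cref{eq:GUlb}). You are also right that the final upper bound on $(D(U))_{ii}$ is an identity from \cref{eq:DU-def} and that $\|U\|_{\text{op}}\le 3$ is not needed for it; the paper's proof likewise does not argue that part. The only cosmetic point is your claim of strict positivity, which presumes $\beta>0$; since $\beta>0$ throughout the paper (and equality is all the lemma asserts anyway), this is immaterial.
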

\begin{proof}
The proof for \cref{eq:DUlb} follows by observing that $D(U)_{ii} = \frac{\| U e_i \|_2^2 + 2 x}{(\| U e_i \|_2^2 + x)^{3/2}}$ where $x = \beta$. Differentiating we observe that the derivative of function in $x$ is,
\begin{align}
    \frac{2 (\| U e_i \|_2^2 + x) - 3/2 ( \| U e_i \|_2^2 + 2x)}{(\| U e_i \|_2^2 + x)^{5/2} } = \frac{ \frac{1}{2} \| U e_i \|_2^2 - x}{(\| U e_i \|_2^2 + x)^{5/2} }
\end{align}
which is increasing as long as $x \le \frac{1}{2} \| U e_i \|_2^2$. Note that when $\| U e_i \|_2 \ge 2 \sqrt{\beta} \implies x = \beta \le \frac{1}{4} \| U e_i \|_2^2$, the function is increasing in $x$ and therefore the minimum is achieved with $\beta = 0$. This results in the lower bound,
\begin{align}
    (D(U))_{ii} \ge \frac{1}{\| U e_i \|_2}.
\end{align}
Likeiwse, $G(U) = \frac{\| U e_i \|_2^2 + 4 x}{(\| U e_i \|_2^2 + x)^{5/2}}$ where $x = \beta$, and differentiating in $x$, we get,
\begin{align}
    \frac{4 (\| U e_i \|_2^2 + x) - 5/2 ( \| U e_i \|_2^2 + 2x)}{(\| U e_i \|_2^2 + x)^{5/2} } = \frac{ \frac{3}{2} \| U e_i \|_2^2 - x}{(\| U e_i \|_2^2 + x)^{5/2} }
\end{align}
which is increasing as long as $x \le \frac{3}{2} \| U e_i \|_2^2$. Yet again, the function is in the increasing regime in $x = \beta$ under the constraint $2 \sqrt{\beta} \le \| U e_i \|_2$; the minimum is achieved at $\beta = 0$, which results in \cref{eq:GUlb}.
\end{proof}

\section{Implications for shallow neural networks with quadratic activation functions}\label{sec:quad}

The extension of results from the matrix sensing model to the training of quadratic neural networks was previously carried out in \cite[Section 5]{li_17} and originally in \citep{nn-quadratic-ms}, which we explain in more detail below. Indeed, when the measurement matrices are of the form $A_i = x_i x_i^T$ for some vector $x_i \in \mathbb{R}^d$, the functional representation of the output can be written as $\langle A_i, UU^T \rangle = \sum_{i=1}^k \sigma(\langle x_i, U e_i\rangle)$, where $\sigma (\cdot) = (\cdot)^2$ takes the form of a $1$-hidden layer shallow network with quadratic activations, with the output layer frozen as all $1$'s. The columns of $U$ correspond to the weight vectors associated with individual neurons of the network. Likewise, sparsity in the column domain corresponds to learning networks with only a few non-zero neurons. In this section we provide a high level sketch of how results for matrix sensing can be extended for the training of neural networks with quadratic activations. We avoid going through the formal details for the sake of simplicity and ease of exposition.

Why can the results proved for matrix sensing not directly be applied here? It turns out that when the $x_i$ are i.i.d. Gaussian vectors, even if $n \to \infty$, rank $1$ measurements do not satisfy the restricted isometry property. In particular, $\frac{1}{n} \sum_{i=1}^n \langle A_i , X \rangle^2 \not\approx \| X \|_F^2$. However, these measurements satisfy a different form of low rank approximation, as established in \cite[Lemma 5.1]{li_17} and as we informally state below. In particular, when $x_i \sim \mathcal{N} (0,I)$ are Gaussian, for any matrix $X \in \mathbb{R}^{d \times d}$,
\begin{align}
    \frac{1}{n} \sum_{i=1}^n \langle A_i , X \rangle^2 \approx 2 \| X \|_F^2 + (\text{Tr} (X))^2
\end{align}
where the approximation becomes exact as $n \to \infty$.
In particular, with the choice of $X = UU^T - U_\star U_\star^T$, the mean squared error of the learner training a quadratic neural network takes the form,
\begin{align}\label{eq:LNN}
    \cL_{\text{NN}} (U) = \frac{1}{n} \sum_{i=1}^n \left( \langle A_i , U U^T \rangle - \langle A_i, U_\star U_\star^T \rangle \right)^2 &\approx 2 \| UU^T - U_\star U_\star^T \|_F^2 + \left( \| U \|_F^2 - \| U_\star \|_F^2 \right)^2 \nonumber
\end{align}
Notice that the RHS is, up to scaling factors, the mean squared error for matrix sensing, with an additional loss $(\| U \|_F^2 - \| U_\star \|_F^2)^2$ added to it. This loss can easily be estimated since it only relies on estimating a scalar, $\| U_\star \|_F^2$, which is approximated by $\frac{1}{n} \sum_{i = 1}^n y_i = \frac{1}{n} \sum_{i=1}^n \langle x_i x_i^T , U_\star U_\star^T \rangle + \varepsilon_i \approx \| U_\star \|_F^2$. In the sequel, we assume that $\| U_\star \|_F$ was known exactly and consider the loss,
\begin{align}
    f_{\text{NN}} (U) = \cL_{\text{NN}} (U) - \left( \| U \|_F^2 - \| U_\star \|_F^2 \right)^2 + \cR_\beta (U)
\end{align}
which subtracts the ``correction term'' $(\| U \|_F^2 - \| U_\star \|_F^2)^2$ from the mean squared error $\cL_{\text{NN}} (U)$ and also adds back the group Lasso regularizer on $U$. Overall $f_{\text{NN}} (U)$ approximately equals $2 \| UU^T - U_\star U_\star^T \|_F^2 + \cR_\beta (U)$ and therefore, running perturbed gradient descent on this loss and reusing the analysis for matrix sensing shows that the algorithm eventually converges to a solution $UU^T \approx U_\star U_\star^T$ and such that $U$ has approximately $r$ columns.

\end{document}